\documentclass[11pt]{article} %

\usepackage{newtxtext}
\usepackage[left=1.25in, top=1in, bottom=1in, right=1.25in]{geometry}

\usepackage[USenglish]{babel}
\usepackage[utf8]{inputenc} %
\usepackage[T1]{fontenc}    %
\usepackage{csquotes}
\usepackage{url}            %
\usepackage{booktabs}       %
\usepackage{amsfonts}       %
\usepackage{amsmath,amssymb}
\usepackage{nicefrac}       %
\usepackage{microtype}      %
\usepackage{xcolor}         %
\usepackage{enumitem}

\usepackage{amsmath}
\usepackage{amssymb}
\usepackage{amsthm}
\usepackage{thmtools}
\usepackage{mathtools, nccmath}
\usepackage{dsfont}
\usepackage{wrapfig}
\usepackage{comment}
\usepackage{thm-restate}

\definecolor{myblue}{rgb}{0,0.08,0.75}
\usepackage[colorlinks,citecolor=myblue,linkcolor=orange]{hyperref}

\usepackage{caption}
\usepackage{graphicx}
\usepackage{subcaption}
\captionsetup{compatibility=false}

\usepackage{amsmath,amsfonts,bm}

\def\floor#1{\lfloor #1 \rfloor}
\def\1{\bm{1}}

\def\eps{{\epsilon}}

\def\vx{{\bm{x}}}

\DeclareMathAlphabet{\mathsfit}{\encodingdefault}{\sfdefault}{m}{sl}
\SetMathAlphabet{\mathsfit}{bold}{\encodingdefault}{\sfdefault}{bx}{n}

\def\gA{{\mathcal{A}}}

\def\gG{{\mathcal{G}}}

\def\gX{{\mathcal{X}}}

\def\sN{{\mathbb{N}}}

\def\sR{{\mathbb{R}}}
\def\sS{{\mathbb{S}}}

\def\sU{{\mathbb{U}}}

\def\sZ{{\mathbb{Z}}}

\DeclareMathOperator*{\E}{\mathbb{E}}

\newcommand{\R}{\mathbb{R}}

\DeclareMathOperator*{\argmax}{arg\,max}

\DeclareMathOperator{\Tr}{Tr}

\DeclareMathOperator*{\SE}{SE}
\DeclareMathOperator*{\SG}{SG}

\newcommand{\ind}{\mathbf{1}}  %

\newcommand*{\one}{{\bm 1}}

\newcommand{\rank}{\operatorname{rank}}
\newcommand{\Unif}{\mathrm{Unif}}

\newcommand{\inner}[2]{\left\langle #1,#2 \right\rangle}

\newcommand{\dist}{\mathsf{dist}}

\newcommand{\nxy}{\left\{(\vx_i,y_i)\right\}_{i=1}^n}

\newcommand{\ngxy}{\left\{(T(\vx_i),y_i)\right\}_{i=1}^n}

\def\cA{\mathcal{A}}

\def\cD{\mathcal{D}}

\def\cM{\mathcal{M}}
\def\cN{\mathcal{N}}

\def\cP{\mathcal{P}}

\newcommand{\id}{\mathsf{id}}

\usepackage{placeins}

\graphicspath{{arxiv/}{../}} %

\usepackage{cancel}

\usepackage[textsize=footnotesize,disable]{todonotes}

\usepackage{relsize}

\usepackage{hyperref,tablefootnote,footnotehyper}

\usepackage[capitalize,noabbrev]{cleveref}

\usepackage[natbib,style=authoryear,sortcites=ynt,sorting=nyt,maxcitenames=2,maxbibnames=60,useprefix,uniquelist=minyear,dashed=false,doi=false,
    backend=bibtex,
    ]{biblatex}
\addbibresource{arxiv/references.bib}
\renewbibmacro{in:}{} %
\DeclareFieldFormat[unpublished,misc]{title}{\mkbibquote{#1}}  %
\DeclareNameAlias{sortname}{given-family}

\newcommand{\zhiyuan}[2][noinline]{\todo[color=blue!30,#1]{Zhiyuan: #2}}

\newcommand{\amin}[2][noinline]{\todo[color=yellow!20,#1]{Amin: #2}}
\newcommand{\danica}[2][noinline]{\todo[color=violet!20,#1]{Danica: #2}}

\DeclareMathOperator{\bigO}{\mathcal{O}}

\newcommand{\dset}{\mathcal{D}}
\newcommand{\train}{\mathrm{train}}

\DeclareMathOperator{\diag}{diag}

\newcommand{\tp}{^\mathsf{T}}

\newcommand{\cG}{\mathcal{G}}
\newcommand{\cH}{\mathcal{H}}
\newcommand{\cL}{\mathcal{L}}
\newcommand{\cR}{\mathcal{R}}
\newcommand{\cV}{\mathcal{V}}
\newcommand{\cW}{\mathcal{W}}
\newcommand{\cX}{\mathcal{X}}
\newcommand{\cY}{\mathcal{Y}}
\newcommand{\cZ}{\mathcal{Z}}

\newcommand{\N}{\mathcal{N}}

\newcommand{\abs}[1]{\lvert #1 \rvert}

\newcommand{\norm}[1]{\left\lVert #1 \right\rVert}

\usepackage{xspace}
\makeatletter
\DeclareRobustCommand\onedot{\futurelet\@let@token\@onedot}
\def\@onedot{\ifx\@let@token.\else.\null\fi\xspace}
\makeatother

\newcommand\pcref[1]{(\cref{#1})}

\theoremstyle{plain}
\newtheorem{theorem}{Theorem}[section]
\newtheorem{proposition}[theorem]{Proposition}
\newtheorem{lemma}[theorem]{Lemma}
\newtheorem{corollary}[theorem]{Corollary}
\newtheorem{remark}[theorem]{Remark}
\theoremstyle{definition}
\newtheorem{definition}[theorem]{Definition}

\definecolor{almost_white}{rgb}{1.0,1.0,0.95}

\numberwithin{equation}{section}

\title{Why Do You Grok? \\ A Theoretical Analysis of Grokking Modular Addition}

\makeatletter
\def\@fnsymbol#1{\ensuremath{\ifcase#1\or \dagger\or \ddagger\or
   \mathsection\or \mathparagraph\or \|\or **\or \dagger\dagger
   \or \ddagger\ddagger \else\@ctrerr\fi}}
    \makeatother

\date{}
\author{
Mohamad Amin Mohamadi\thanks{%
    Toyota Technological Institute at Chicago
    \; $^\ddagger$University of British Columbia
    \; $^\mathsection$Peking University
    \; $^\mathparagraph$Amii
\\Correspondence to \texttt{\{mohamadamin,zhiyuanli\}@ttic.edu}, \texttt{dsuth@cs.ubc.ca}.%
    }\;\,\footnotemark[2]
\quad Zhiyuan Li\footnotemark[1]
\quad Lei Wu\footnotemark[3]
\quad Danica J.\ Sutherland\footnotemark[2]\;\,\footnotemark[4]
}

\begin{document}

\maketitle

\begin{abstract}
We present a theoretical explanation of the “grokking” phenomenon \citep{power2022grokking},
where a model generalizes long after overfitting,
for the originally-studied problem of modular addition.
First, we show that 
early in gradient descent, %
when the ``kernel regime'' approximately holds,
no permutation-equivariant model
can achieve small population error on modular addition unless it sees at least a constant fraction of all possible data points.
Eventually, however, models escape the kernel regime. 
We show that two-layer quadratic networks
that achieve zero training loss with bounded $\ell_\infty$ norm
generalize well with substantially fewer training points,
and further show such networks exist
and can be found by gradient descent with small $\ell_\infty$ regularization.
We further provide empirical evidence that these networks
as well as simple Transformers,
leave the kernel regime only after initially overfitting.
Taken together, our results strongly support the case for grokking as a consequence of the transition from kernel-like behavior to limiting behavior of gradient descent on deep networks.
\end{abstract}

\section{Introduction} \label{sec:introduction}
Understanding the generalization patterns of modern over-parameterized neural networks has been a long-standing goal of the deep learning community. \citet{power2022grokking} demonstrated an intriguing phenomenon they called ``grokking'' when learning transformers on small algorithmic tasks: neural networks can find a generalizing solution long after overfitting to the training dataset with poor generalization. This observation has lead to a stream of recent works aimed at uncovering the mechanisms that can lead a network to ``grok,'' and properties of the final solutions, on various algorithmic tasks. Later, it was discovered that grokking can happen in tasks beyond modular arithmetic: in learning sparse parities \citep{barak2022hidden,bhattamishra2023simplicity}, image classifiers \citep{liu2022omnigrok}, greatest common divisors \citep{charton2023transformers}, matrix completion \citep{lyu2023dichotomy}, and $k$-sparse linear predictors \citep{lyu2023dichotomy}.

Grokking has been variously attributed to difficulty of representation learning \citep{liu2022towards}, the ``slingshot'' mechanism \citep{thilak2022slingshot}, weight norm \citep{liu2022omnigrok,varma2023explaining}, properties of the loss landscape \citep{notsawo2023predicting}, simplicity of the learned solution \citep{nanda2023progress} and other feature learning mechanisms \citep{levi2023grokking, rubin2024grokking}. 
Theoretically, \citet{gromov2023grokking} presented an analytical construction for a two-layer MLP that solves modular addition is compatible with a grokking pattern.\footnote{\Citet{gromov2023grokking} claims this solution is the one found by gradient descent, but this did not seem to be the case in our experience.}
\citet{kumar2023grokking} demonstrated grokking when training a two-layer MLP on a polynomial regression problem,
as did \citet{xu2023benign} for XOR data.
The notion of delayed generalization was perhaps earlier observed by \citet{li2022happens} when training diagonal linear networks with label noise SGD and through sharpness minimization, before it was known as grokking \citep{power2022grokking}.

\citet{lyu2023dichotomy} present a rigorous theoretical framework in which grokking can be provably demonstrated through a dichotomy of early and late implicit biases of the training algorithm. More specifically, they attribute the overfitting stage of grokking to the initial behavior of gradient descent being similar to a kernel predictor \citep{jacot2018neural,arora2019exact,lee2019wide}, and the generalization stage is to different late-phase implicit biases such as sharpness minimization \citep{blanc2020implicit,li2022happens,damian2021label,haochen2020shape}, margin maximization \citep{soudry2022implicit,nacson2019lexicographic,wei2020regularization,lyu2019gradient}, or parameter norm minimization \citep{gunasekar2017implicit,gunasekar2020characterizing,arora2019implicit}. This transition from kernel to rich regime has been widely observed \citep{chizat2018lazy,moroshko2020implicit,Geiger_2020,telgarsky2022feature,lyu2023dichotomy}.
Consistent with this framework, \citet{kumar2023grokking} hypothesized that grokking can be explained through the transition from the kernel regime to the ``rich'' regime, as long as the size of the training dataset is neither too small (where generalization would be impossible) nor too large (where generalization would be easy). 
They provided empirical support by considering scaling the model output, which is a rough proxy for the rate of feature learning in modular addition on two-layer MLPs. 

\begin{figure*}[!t]
    \centering
    \begin{subfigure}[b]{0.32\textwidth}
        \includegraphics[width=\textwidth]{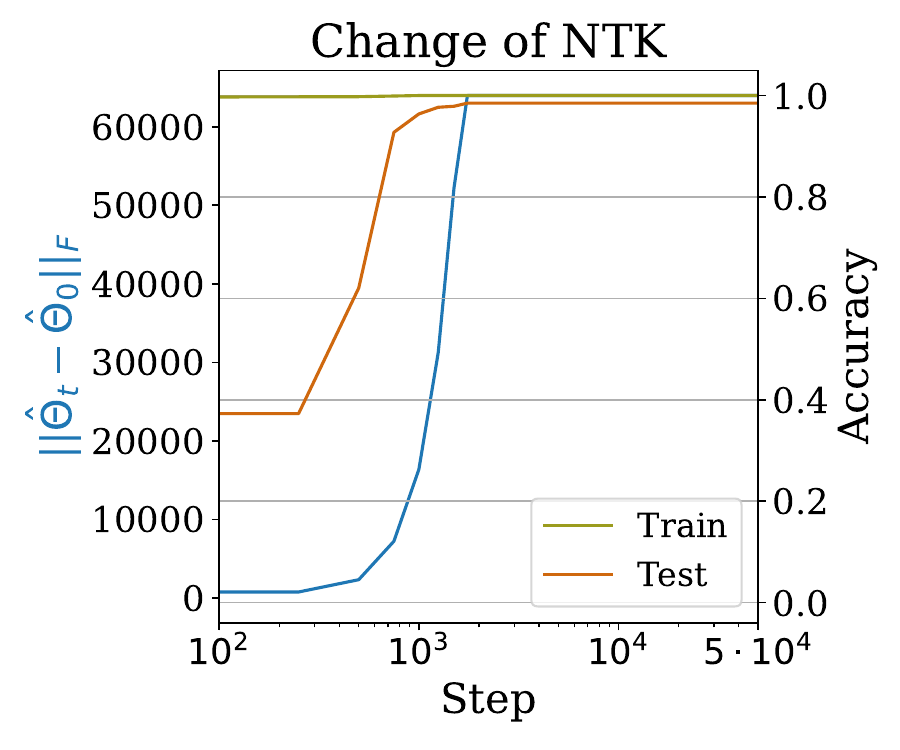}
        \label{fig:summary_entk_p113}
    \end{subfigure}
    \hfill
    \begin{subfigure}[b]{0.32\textwidth}
        \includegraphics[width=\textwidth]{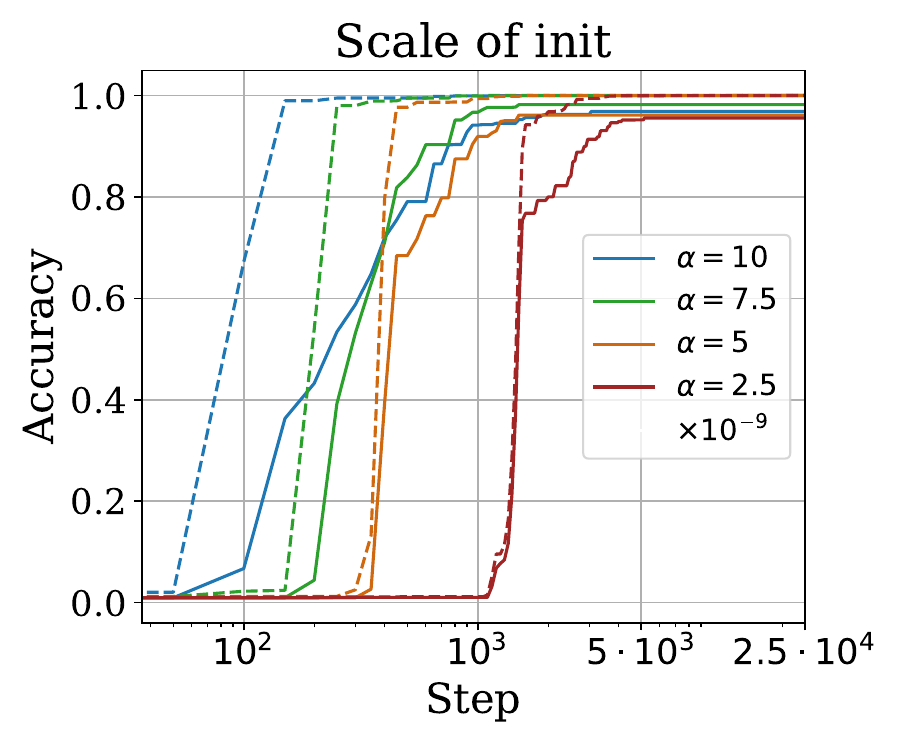}
        \label{fig:smmary_scale_p113}
    \end{subfigure}
    \hfill
    \begin{subfigure}[b]{0.32\textwidth}
        \includegraphics[width=\textwidth]{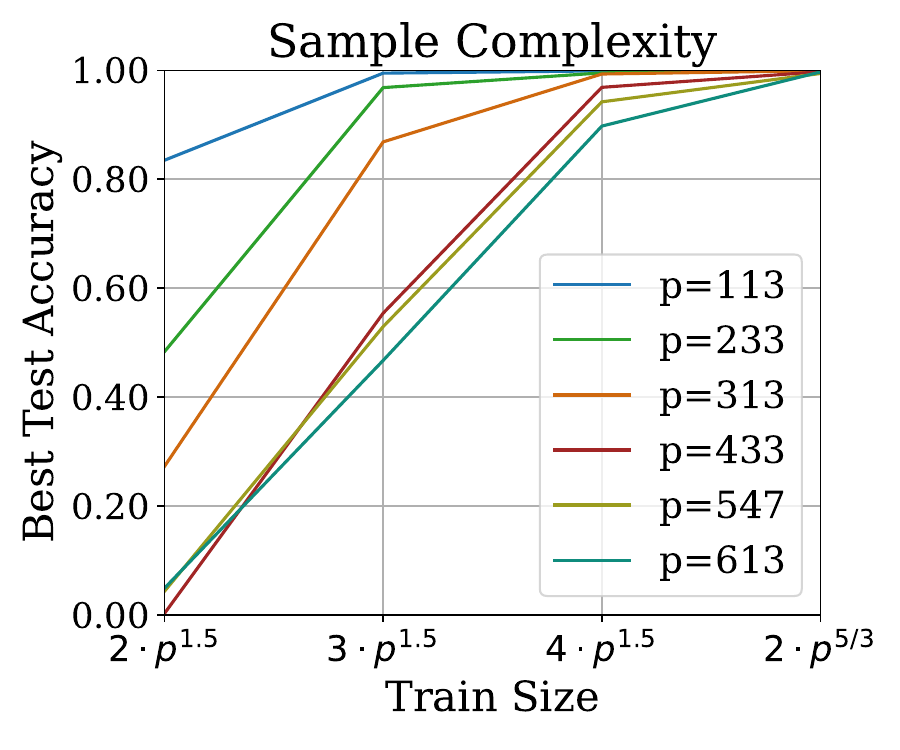}
        \label{fig:best_test_accs}
    \end{subfigure}
    \vspace*{-4mm}
    \caption[12]{%
        \small Empirical investigation into grokking modular addition on two-layer networks in the classification task with cross-entropy loss. \textbf{Left:} Change of empirical NTK\protect\footnotemark[1] ($\lVert \hat\Theta_t - \hat\Theta_0 \rVert_F$) is negligible before fitting the training data. NTK changes drastically after overfitting, implying that the delayed generalization might be caused by delayed a transitioning from kernel to rich regime.
        \textbf{Middle:} \textbf{Reducing initialization scale can mitigate grokking}, to the point of completely eliminating the gap between train and test curves. $\alpha$ denotes scale multiplied by $\theta_0$, the initial weights according to default PyTorch initialization~\citep{he2015delving}. The dashed lines indicate train set statistics, and the solid lines correspond to the test set.
        \textbf{Right:} Empirical evaluations support a sample complexity of $\tilde\bigO(p^{5/3})$ on the classification task with cross-entropy loss. More details in \cref{sec:classification}.
    }
    \label{fig:summary}
\end{figure*}\footnotetext[1]{$\hat\Theta_t$ is the NTK of  the model output on the training data using the parameter at step $t$. When the model output $f$ is a vector, we use the NTK of its first coordinate as an approximation, following \citep{mohamadi2023fast}, where    $\hat\Theta_t \triangleq [\nabla_{\theta}f_1(\theta_t; \cX_{\train})][\nabla_{\theta}f_1(\theta_t; \cX_{\train})]^\top$.}

This dichotomy between early kernel regime and late feature learning triggered by weak implicit or explicit regularization (i.e.\ the transition from lazy to rich regime) seems to be the most promising theory to explain grokking.
Even so, two fundamental questions as to \textit{\textbf{why}} grokking occurs on the original problem of \textbf{modular addition} have remained unanswered:
\begin{enumerate}[itemsep=1ex]
    \item \textbf{\textit{Why}} do a wide variety of architectures all fail to generalize in the initial phase of training, i.e.\ in the kernel regime? Is it because their kernels accidentally share some common property, or it is indeed a property of the {modular addition} task itself?
    \item \textbf{\textit{How}} does weak regularization encourage the network to learn generalizable features later in training? 
\end{enumerate}

\textbf{Our Contributions.} In this work, we address these questions with rigorous theoretical analyses of learning modular addition with gradient descent on two-layer MLPs.

We specifically focus on the problem
\begin{equation} \label{eq:modualr_addition}
    a + b \equiv c \pmod p
\end{equation}
where $a, b, c \in \sZ_p$ for a fixed $p \in \sN$. We use (regularized) gradient descent on a two-layer MLP with quadratic activations, the same architecture as \citet{gromov2023grokking}.
We consider two different tasks:
it is somewhat easier to analyze a ``regression'' task
where we use square loss to learn a function of $(a, b, c)$ that indicates whether \eqref{eq:modualr_addition} holds,
but we also study the ``classification'' task
in which we use cross-entropy loss to learn a $p$-way classifier to predict which $c$ value satisfies \eqref{eq:modualr_addition} for a given $(a, b)$.
This discrete, noiseless setting
has only a finite number of possible distinct data points:
$p^3$ for regression, $p^2$ for classification.

To address the first question,
we prove in \cref{subsec:regression_kernel,subsec:classification_kernel}
that this task is \emph{fundamentally hard} for permutation-equivariant kernel methods, due to inherent symmetries.
Thus, permutation-equivariant networks which are well-approximated by their neural tangent kernel approximation cannot generalize well.
We also prove that, although no such method can generalize,
neural tangent kernel approaches based on our architecture with realistic widths \emph{can} achieve zero training error.
This result is highly suggestive of why drastic overfitting with poor generalization has been empirically observed on this problem across a wide variety of architectures, losses, and learning algorithms
\citep[e.g.][]{power2022grokking,liu2022towards,gromov2023grokking}.

To prove the generalization lower bounds, we develop a novel general technique to analyze the population $\ell_2$ loss of learning general function classes with predictors of intrinsic dimension $n$ (for instance kernel predictors with $n$ training points), presented in \cref{app:sec:general_lower_bound}. This framework allows us to prove lower bounds on population $\ell_2$ loss for the general case of learning modular addition on $m$ summands (rather than 2 or 3) with kernels, and might be of independent interest.

To address the second question -- \emph{why} this occurs --
we identify $\ell_\infty$ norm as an effective and practically-relevant complexity measure for generalization,
which is closely related to the implicit bias of Adam~\citep{xie2024implicit,zhang2024implicit},
and show that networks with small $\ell_\infty$ norm in the regression setting (\Cref{subsec:regression_rich}) and large $\ell_\infty$-normalized margin in the classification setting (\Cref{subsec:classification_rich})
can both provably generalize with far fewer samples than required in the kernel regime.
Thus, models with corresponding implicit biases can generalize well,
answering the second question.  
In regression, our proofs are based on ``optimistic rates'' \citep{srebro2010smoothness} for the smooth $\ell_2$ loss in terms of the Rademacher complexity of networks with bounded $\ell_\infty$ parameter norm.
In classification, our proof applies the PAC-Bayesian framework of \citet{McAllester2003SimplifiedPM}  to networks with bounded $\ell_\infty$ parameter norm. 

In summary, our \textbf{main contributions} are as follows:
\begin{enumerate}
    \item We prove that networks in the kernel regime can only generalize if trained on \textbf{at least a constant fraction} of all possible data points, i.e.\ an $\Omega(1)$ portion, for regression (\cref{subsec:regression_kernel}) and classification (\cref{subsec:classification_kernel}).
    \item We prove that networks with appropriate regularization can generalize with many fewer samples:
    $\boldsymbol{\bigO(1 / p)}$ portion of all possible data points for square loss generalization on the regression task (\cref{subsec:regression_rich}),
    and $\boldsymbol{\tilde{\bigO}(1 / p^{1/3})}$ for zero-one loss generalization on classification \pcref{subsec:classification_rich}.
\end{enumerate}

\section{Preliminaries and Setup} \label{sec:setup}
\paragraph{Notations.} We use $[p]$ to denote the set $\{1,\ldots,p\}$.
We use $e_i$ to denote the $i$ standard basis vector, \emph{i.e.}, the vector with $1$ in its $i$th component and $0$ elsewhere. For vector $a$, we use $a^{\odot 2}$ denotes the element-wise square of the vector $a$
For any nonempty set $\cX$, a symmetric function $K:\cX\times \cX\to \mathbb{R}$ is called a  positive semi-definite kernel (p.s.d.) kernel on $\cX$ if for all $n\in\mathbb{N}$, all $\vx_1,\ldots,\vx_n \in \cX$, and all $\lambda_1,\ldots,\lambda_n\in\mathbb{R}$, it holds that $\sum_{i=1}^n\sum_{j=1}^n\lambda_i\lambda_jK(\vx_i,\vx_j)\ge 0$. Given two non-negative functions \(f, g\), we say that \(f(n) = O(g(n))\) (resp. \(f(n) = \Omega(g(n))\)) iff. there exists absolute constant \(C > 0\), such that for all \(n \geq 0\), \(f(n) \leq Cg(n)\) (resp. \(f(n) \geq Cg(n)\)). We use \(f(n) = \omega(g(n))\) to denote that for all \(C > 0\), there exists \(n_0 > 0\) such that for all \(n \geq n_0\), \(f(n) > Cg(n)\).

\paragraph{Setup.} We focus on learning modular addition, \eqref{eq:modualr_addition}, with a two-layer network with no biases and quadratic activation, following \citet{gromov2023grokking}.
More specifically, given parameters $\theta = (W,V)$, the model $f$ maps the pair of integers $(a, b)$ -- represented as the vector $(e_a, e_b) \in \R^{2 p}$ -- to a vector in $\R^p$.
We use the form
\begin{equation} \label{eq:two_layer_classification_net}
f(\theta; (e_a, e_b)) = V \left(W (e_a, e_b) \right)^{\odot 2},
\end{equation}
where $W \in \R^{h \times 2 p}$, $V \in \R^{p \times h}$ for some integer $h$. We call $h$ the width of the hidden layer and it will be set later.

In the classification setting (\Cref{sec:classification}),
we train $f$ with cross-entropy loss to identify the $c$ such that $a + b \equiv c \pmod p$ in a multi-classification setting. Letting $\cZ = \{ e_i : i \in [p] \}$,
the set of all possible inputs is $\cX = \cZ \times \cZ$
and outputs is $\cY = \cZ$;
there are $N = p^2$ distinct data points.

In the regression setting (\Cref{sec:regression}), we instead aim to learn for each tuple $(a,b,c)$, if $a + b \equiv c \pmod p$, that is, map $\vx = (e_a, e_b, e_c)$
to the scalar $y = p \, \mathbf{1}(a + b \equiv c \pmod p)$ using $g(\theta;(e_a,e_b,e_c)))\triangleq e_c^\top f(\theta;(e_a,e_b))$.\footnote{%
This scaling implies that the predictor $\Psi_{\boldsymbol{0}}(\cdot) = 0$ has population square loss $p$; this scaling allows bounded $\norm\theta_\infty$.}
Here $\cX = \cZ^3$,
$\cY = \{0, p \}$,
and $N = p^3$;
we train the model
$g(\theta; \vx) = \langle e_c, f(\theta; (e_a, e_b)) \rangle$
to minimize the square loss.

In either setting,
we use $\dset$ to denote the distribution over $\cX \times \cY$ which is uniform over all $N$ possible input-output pairs,
while $\dset_\train = \nxy$ is the training dataset of size $n$.
Given parameters $\theta$, we use $\Psi(\theta; \cdot) : \cX \to \hat\cY$ to denote the predictor ($g(\theta;\cdot)$ for regression, $f(\theta;\cdot)$ for classification),
we train the model with gradient descent with tiny $\ell_\infty$ regularization,\footnote{More details about the regularization strength can be found in \Cref{app:experiment_setup}.} which is intended to emulate the implicit bias of Adam (AdamW) which might lead to grokking. In general, we define loss as
\begin{equation} \label{eq:loss}
\!\!\!
    \cL^{\lambda}_\ell(\Psi, \theta, \dset_{\operatorname{train}}) \triangleq \frac{1}{n} \!\!\sum_{(\vx, y) \in \dset_{\operatorname{train}}}\!\!\!\!\!
    \ell(\Psi(\theta; \vx), y) + \lambda \norm{\theta}_\infty 
,\end{equation}
where
$\ell : \hat \cY \times \cY \to \R_{\ge 0}$ denotes either square ($\ell_2$) or cross-entropy loss and $\lambda \ge 0$ controls the strength of regularization. Omitting $\lambda$ refers to the case where no regularization is used and omitting $\dset_\train$ means the whole population is used in evaluating the loss. For non-parametric functions (like $\Psi_{\boldsymbol{0}}$ denoted as the predictor which returns zero on any input) we drop $\theta$ as well.

\paragraph{Connection between $\ell_\infty$-norm Regularization and Grokking.} \citet{power2022grokking} shows that grokking phenomenon happens for Adam and gets more prominent with decoupled weight decay (AdamW). Recent works by \citet{xie2024implicit,zhang2024implicit} show that Adam implicitly regularizes the $\ell_\infty$ norm of the weights. In detail, \citet{xie2024implicit} shows that AdamW can only converge to KKT points of $\ell_\infty$-norm constrained optimization. \citet{zhang2024implicit} shows that Adam converges to max-margin solutions w.r.t. $\ell_\infty$ norm for linear models on separable datasets and conjecture this result could be generalized to general homogeneous models (including our model $f$, which is 3-homogeneous). This connection motivates us to use the explicit $\ell_\infty$-norm regularization on top of gradient descent to understand grokking in the modular addition setting.
\begin{definition}
	A deterministic supervised \emph{learning algorithm} $\cA$ is a mapping from a sequence of training data, $\dset_\train \in (\cX \times \cY)^n$,
    to a hypothesis
    $\cA(\dset_\train) : \cX \to \hat\cY$.
    The algorithm $\cA$ could also be randomized, in which case the output
    $\cA(\dset_\train)$
    is a distribution over hypotheses. Two randomized algorithms $\cA$ and $\cA'$ are the same if for any input, their outputs have the same distribution in function space, 
    written as $\cA(\dset_\train)\overset{d}{=}\cA'(\dset_\train)$.
\end{definition}

We further define equivariance of learning algorithms which is the main component of our analysis in deriving lower bounds. This definition is used in \cref{th:permutation_equivaraince_3d,prop:permutation_equivaraince_2d} to prove equivariance of GD in learning modular addition.
\begin{definition}[Equivariant Algorithms]\label{defi:eqvariance}
	A learning algorithm is \emph{equivariant} under group $\cG_\cX$ (or $\cG_\cX$-equivariant) if %
    for any dataset $\dset_\train \in (\cX\times\cY)^n$
    and for all $T \in\cG_\cX$, %
    it holds that
    $\cA(\ngxy) \,\circ\, T \overset{d}{=} \cA(\nxy)$.
    For deterministic learning algorithms,
    this is equivalent to saying that
    for all $\vx \in \cX$,
    $ \cA(\ngxy)(T(\vx)) = [\cA(\nxy)](\vx)$.
\end{definition}

\begin{definition}[Kernel Methods]\label{defi:kernel_methods}
For $\hat\cY\subseteq\mathbb{R}$, we say a learning algorithm $\cA$ is a \emph{kernel method} if it first picks a (potentially random) positive semi-definite kernel
$K$ on $\cX$
before seeing the data,\footnote{Our definition of kernel methods does not cover learning algorithms that choose the kernel based on the training data. \emph{Any} learning algorithm could be framed as a kernel method with a data-dependent kernel $K(x, x') = \Psi(x) \Psi(x')$.} and then outputs some hypothesis $\Psi$ such that there exist $\{\lambda_i\}_{i=1}^n\in\mathbb{R}$ for which $\Psi(\cdot) = \sum_{i=1}^n K(\cdot,\vx_i)\lambda_i$, where $\lambda_i$ can depend on the training dataset $\dset_\train$. 

In particular, when $\bm{\lambda}(\dset_\train) = \mathbf{K}^{\dagger}\mathbf{y}$, where $\mathbf{K} = \left(K(\vx_i,\vx_j)\right)_{i,j=1}^n$, $\mathbf{y} = \left(y_i\right)_{i=1}^n,$ the kernel method is called a \emph{(ridgeless) kernel regression} method.
\end{definition}

\begin{theorem}\label{thm:kernel_methods_equivariance}
    For any p.s.d. kernel $K:\gX\times\gX\to \mathbb{R}$ and transformation group $\gG_gX$, kernel regression (\Cref{defi:kernel_methods}) with respect to kernel $K$ is $\gG_{\gX}$-equivariant if and only if kernel $K$ is equivariant to $\gG_gX$, \emph{i.e.},  $K(T(x),T(x')) = K(x,x')$ for any $T\in\gG_gX$ and $x,x'\in \gX$.    
\end{theorem}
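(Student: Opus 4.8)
The two directions need quite different tools, so I would treat them separately; the forward implication is short and the converse carries the real content.

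\textbf{Equivariant kernel $\Rightarrow$ equivariant kernel regression.} Fix $T \in \gG_{\gX}$ and a dataset $\{(x_i,y_i)\}_{i=1}^n$, and write $\mathbf{K} = (K(x_i,x_j))_{ij}$ and $\widetilde{\mathbf{K}} = (K(T(x_i),T(x_j)))_{ij}$ for the Gram matrices of the original and transformed points. Equivariance of $K$ gives $\widetilde{\mathbf{K}} = \mathbf{K}$, so the ridgeless coefficient vector $\bm{\lambda} = \mathbf{K}^\dagger \mathbf{y}$ is the same for both datasets. Hence, for every $x \in \gX$,
\[
\mathcal A(\{(T(x_i),y_i)\}_{i=1}^n)(T(x)) = \sum_{i=1}^n \lambda_i\, K(T(x),T(x_i)) = \sum_{i=1}^n \lambda_i\, K(x,x_i) = \mathcal A(\{(x_i,y_i)\}_{i=1}^n)(x),
\]
which is exactly the deterministic form of $\gG_{\gX}$-equivariance in \Cref{defi:eqvariance}. (The argument only uses that the coefficient rule depends on the data through $(\mathbf{K},\mathbf{y})$, so it in fact shows any kernel method with an equivariant kernel is equivariant.)

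\textbf{Equivariant kernel regression $\Rightarrow$ equivariant kernel.} I would recover $K$ by probing the algorithm with singleton datasets $\{(x',1)\}$. Fix $T$. If $K(x',x') = 0$, positive semidefiniteness forces $K(\cdot,x')\equiv 0$, so the predictor is identically zero; equivariance evaluated at $T(x')$ then forces $K(T(x'),T(x')) = 0$ as well (otherwise kernel regression on $\{(T(x'),1)\}$ outputs the value $1$ at $T(x')$), and applying this reasoning to $T^{-1}$ shows $T$ preserves $\{x : K(x,x) > 0\}$. For $x'$ in that set, kernel regression on $\{(x',1)\}$ returns $z \mapsto K(z,x')/K(x',x')$ and on the transformed singleton returns $z \mapsto K(z,T(x'))/K(T(x'),T(x'))$; equating these at $z = T(x)$ gives, for all $x$,
\[
K(T(x),T(x')) = \rho(x')\,K(x,x'), \qquad \rho(x') := \frac{K(T(x'),T(x'))}{K(x',x')} > 0 .
\]
Symmetry of $K$ then yields $\rho(x) = \rho(x')$ whenever $K(x,x')\neq 0$, i.e.\ $\rho$ is constant on each connected component of the graph on $\{x : K(x,x) > 0\}$ with an edge between $x$ and $x'$ whenever $K(x,x')\neq 0$. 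Restricting the displayed identity to the diagonal gives $K(T(x),T(x)) = \rho(x)K(x,x)$; summing over a finite orbit $O$ of $T$ on which $\rho$ is constant (value $\rho_O$) and using that $T$ permutes $O$ gives $\sum_{x\in O}K(x,x) = \rho_O\sum_{x\in O}K(x,x)$, hence $\rho_O = 1$. Thus $\rho\equiv 1$ and $K(T(x),T(x')) = K(x,x')$ everywhere, the pairs with a vanishing self-kernel being covered by the first step.

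\textbf{Where the difficulty lies.} The forward direction is a two-line computation. In the converse the two delicate points are (i) handling the degenerate points with $K(x,x) = 0$, and (ii) upgrading the \emph{local} constancy of the scaling factor $\rho$ to \emph{global} constancy. Point (ii) is the real obstacle: it requires a mild non-degeneracy assumption, and goes through cleanly when the ``kernel graph'' above is connected within each orbit (e.g.\ $K$ strictly positive, or the RKHS of $K$ separating the relevant points) and $\gG_{\gX}$ has finite orbits --- the regime relevant here, where $\gG_{\gX}$ is a finite permutation group and $K$ is a neural tangent kernel, so I would expect the final write-up to lean on this structure. An equivalent way to organize the converse: $\gG_{\gX}$-equivariance of ridgeless kernel regression with $K$ is the same as ridgeless kernel regression with $K$ and with the pulled-back kernel $K'(x,x') := K(T(x),T(x'))$ producing the same predictor on \emph{every} dataset, after which one argues that two p.s.d.\ kernels always yielding the same ridgeless predictor must coincide --- again the step where the non-degeneracy enters.
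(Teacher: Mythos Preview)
Your forward direction is exactly what the paper does: kernel invariance makes the Gram matrix and hence the coefficients $\bm{\lambda}=\mathbf{K}^\dagger\mathbf{y}$ the same on the transformed dataset, and one more use of invariance on $K(T(\vx),T(\vx_i))$ finishes it. That is in fact \emph{all} the paper proves---its argument covers only the ``if'' implication and says nothing about the converse, despite the ``if and only if'' in the statement.

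You are right to flag the converse as needing extra hypotheses, because without them it is \emph{false}. Take $\gX=\{1,2\}$, $\gG_\gX=\{\id,T\}$ with $T$ the swap, and the diagonal kernel $K(1,1)=1$, $K(2,2)=4$, $K(1,2)=0$. This $K$ is not $T$-invariant, yet ridgeless regression with it is $T$-equivariant: on any dataset the predictor simply interpolates (or averages repeated labels at) each of the two points, and swapping the inputs swaps the two output values. In your notation $\rho(1)=4$ and $\rho(2)=1/4$; the orbit $\{1,2\}$ spans two components of the kernel graph since $K(1,2)=0$, so your orbit-sum step cannot be invoked---precisely the failure mode you anticipated in caveat (ii). The paper neither proves nor uses the ``only if'' direction, so that half of the statement should be read either as an overstatement or as carrying an implicit non-degeneracy assumption of the kind you isolate (e.g.\ $K(x,x')\neq 0$ throughout, or connectivity of the kernel graph within each orbit).
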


\begin{proof}[Proof of \Cref{thm:kernel_methods_equivariance}]
    For any $\dset_\train = \{(\vx_i,y_i)\}_{i=1}^n$ and transformation $T\in \gG_\gX$, let $\dset_\train^T$ be the transformed dataset $\{(T(\vx_i),y_i)\}_{i=1}^n$, and $\gA_K$ be the kernel regression algorithm w.r.t.\ $K$. For any $\vx$, we have that
    \vspace{-0.5\baselineskip}
    \begin{align*}
        \gA(T(\dset_\train))(T(\vx)) = \sum_{i=1}^n K(T(\vx),T(\vx_i))\lambda_i (\dset_\train^T) = \sum_{i=1}^n K(\vx,\vx_i)\lambda_i (\dset_\train) = \gA(\dset_\train)(\vx),
   \end{align*}
   \vspace{-0.5\baselineskip}
   where the second equality follows from the equivariance of the kernel $K$ w.r.t. $\gG_\gX$.
\end{proof}

\section{Regression Task} \label{sec:regression}

We will first present our theoretical analysis of the grokking phenomenon on modular arithmetic with two-layer quadratic networks in the regression setting, where we learn a function from $(e_a, e_b, e_c)$ to $p \mathbf{1}(a + b = c \pmod p)$.
Although this is perhaps a less natural way to model modular arithmetic than the classification task,
it admits some useful theoretical tools. 

In this section, we show that networks in the kernel regime will provably fail to generalize as long as they do not have access to nearly all the points in the dataset \pcref{th:kernel_lower_bound_3d},
although they can achieve zero training error (\cref{thm:mainbody-ntk}).
However, the network eventually leave the kernel regime because of the weak $\ell_\infty$ norm regularization. 
We further establish that if the network manages to achieve zero training error with small $\norm{\theta}_\infty$,
it will generalize with only a $\frac n N = \tilde{\omega}( 1 / p )$ portion of the overall dataset, as long as the width of the network is larger than $4p$~\pcref{thm:main_regression_generalization_upper_bound}. We also prove that such networks exist \pcref{thm:small_inf_norm_exist}, and demonstrate that gradient descent can find them with a small amount of explicit regularization \pcref{fig:regression_norms}.

\subsection{Kernel Regime} \label{subsec:regression_kernel}
\begin{figure}[!t]
    \centering
    \begin{subfigure}[b]{0.35\textwidth}
        \includegraphics[width=\textwidth]{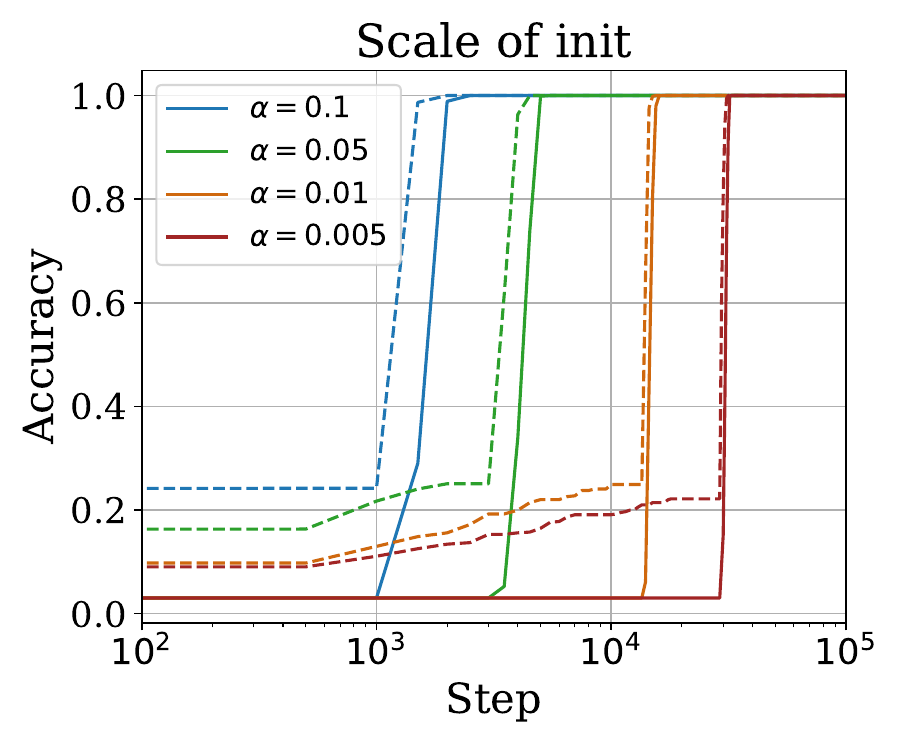}
        \label{fig:regression:acc_vs_scale}
    \end{subfigure}
    \hspace{0.05\textwidth}
    \begin{subfigure}[b]{0.35\textwidth}
        \includegraphics[width=\textwidth]{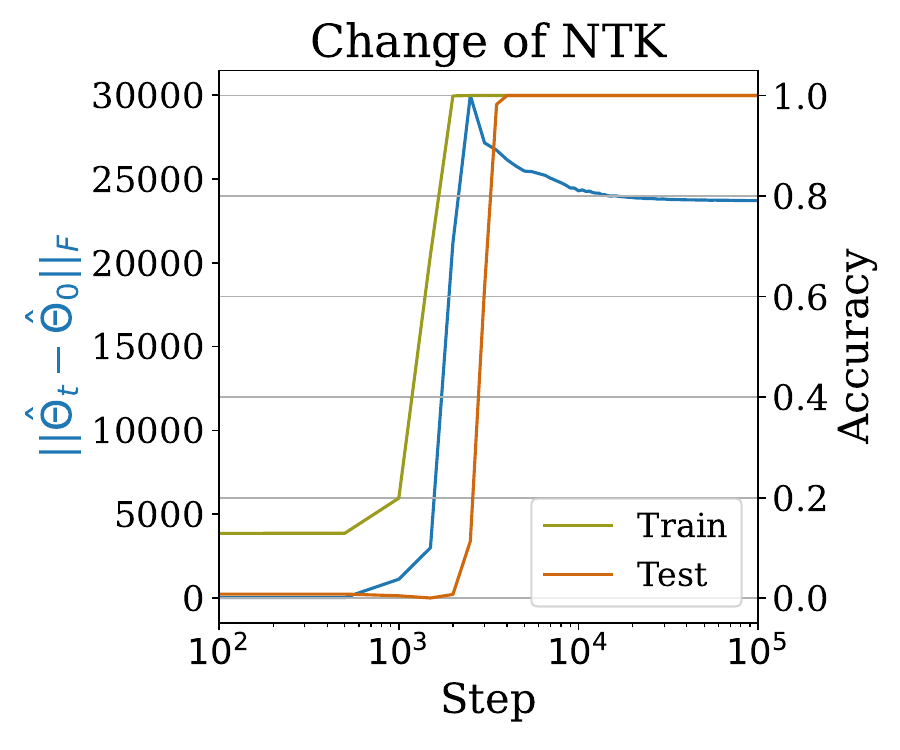}
        \label{fig:regression:scale}
    \end{subfigure}
    \vspace*{-5mm}
    \caption{%
        Empirical evidence for kernel regime in early training.
        \textbf{Left:} Train (dashed) and test (solid) accuracy while training in the regression setting
        with various initialization scales $\alpha$ and a fixed $p=47$. \textbf{Shrinking the scale of initialization can mitigate grokking} in the regression task, eventually eliminating of the gap between train and test accuracies (at the cost of slower improvement in each). \textbf{Right:} eNTK continues to significantly change after overfitting. 
    }
    \vspace*{-3mm}
    \label{fig:regression_scale}
\end{figure}

A recent line of work on the \emph{neural tangent kernel} (NTK) framework
\citep{jacot2018neural,arora2019exact,lee2019wide,chizat2018lazy,pmlr-v139-yang21c}
has shown that with typical initialization schemes,
gradient descent in over-parameterized neural networks 
locally approximates the behavior of a kernel model
using the empirical neural tangent kernel (eNTK)
$K_\theta(\vx, \vx') \triangleq \nabla g(\theta; \vx) \nabla g(\theta; \vx')\tp$.
In the ``kernel regime,''
the change in $\theta$ over the course of gradient descent
does not substantially change the eNTK $K_\theta$,
and hence the neural network behaves similarly to a kernel predictor trained with $K_{\theta_0}$.
With square loss, as here,
these kernel predictors follow a particularly simple optimization path
for which a closed form (corresponding to kernel regression) is available.

For networks of finite width (and in certain infinite-width cases),
the eNTK will stay roughly constant and the network will closely track the kernel model
for the first phase of optimization, but the tiny regularization will eventually lead it to depart the kernel regime. Thus, bounds on kernel models are informative about deep networks in the first part of optimization.

In the first phase of grokking, the model overfits to the training data, achieving very low training loss but retaining high loss on test points.We establish that both phenomena occur in the kernel regime.

\subsubsection{Empirical Neural Tangent Kernels Can Achieve Zero Training Error}
Our first result
shows
that kernel regression with the empirical neural tangent kernel of our quadratic network
achieves zero training loss if the network is mildly wide, for instance, $n = \tilde\Theta(p^2)$ and width $h = \tilde\Theta(p)$.
This implies, for example, that in a ``lazy training'' setting \citep{chizat2018lazy},
or alternatively when the first layer is initialized with huge weights and the second layer with tiny ones,
gradient descent can achieve zero training loss.
Informally, this also strongly suggests that networks with more typical initializations can achieve very small training error without needing to leave the ``kernel regime.''
\begin{theorem} \label{thm:mainbody-ntk}
    Initialize the network of \cref{sec:setup} 
    with any values for $V$,
    and entries of $W$ all independently $\operatorname{Uniform}([-s_W, s_W])$ for some $s_W > 0$.
    Let the most frequent value of $c \in [p]$ appear $\rho_c$ times.
    Then, if $h > 36 \rho_c \log(n / \delta)$,
    it holds with probability at least $1 - \delta$
    over the random initialization of $W$
    that kernel regression using the network's empirical neural tangent kernel
    can achieve zero training loss for \emph{any} target labels such that if $\vx_i = \vx_j$ then $y_i = y_j$.
    Conversely, if $h < n / (3 p)$,
    there exist target labels (with $y_i = y_j$ when $\vx_i = \vx_j$) for which this method cannot achieve zero training loss.
\end{theorem}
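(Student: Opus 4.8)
The plan is to make the empirical NTK feature map completely explicit and then reduce both halves of the claim to controlling the rank of a structured sum of i.i.d.\ rank-one matrices. Write $w_1,\dots,w_h\in\R^{2p}$ for the (i.i.d., uniform) rows of $W$ and $u(a,b)=W(e_a,e_b)\in\R^h$, so $u(a,b)_k=w_{k,a}+w_{k,p+b}$. A direct computation gives, for $\vx=(e_a,e_b,e_c)$,
\begin{equation*}
\nabla_V g(\theta;\vx)=e_c\big(u(a,b)^{\odot 2}\big)^{\!\top}\ (\text{nonzero only in row }c),\qquad \nabla_W g(\theta;\vx)=2\big(V_{c,:}\odot u(a,b)\big)(e_a,e_b)^{\!\top}.
\end{equation*}
Hence the training Gram matrix splits as $\mathbf K=\mathbf K^V+\mathbf K^W$ with both summands PSD, so $\mathbf K\succeq\mathbf K^V$. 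Ridgeless kernel regression (with $\bm\lambda=\mathbf K^\dagger\mathbf y$) attains zero training loss iff $\mathbf y\in\operatorname{range}(\mathbf K)$, and since the hypothesis forces $\mathbf y$ to be constant on repeated points, it suffices to show $\mathbf K^V$ is positive definite when restricted to the \emph{distinct} training points. As the $\nabla_V$-features on row $c$ are orthogonal across distinct values of $c$, this restricted $\mathbf K^V$ is block-diagonal in the third coordinate, so the positive claim reduces to: for each fixed $c$, writing $m\le\rho_c$ for the number of distinct training points with third coordinate $c$ and $(a_1,b_1),\dots,(a_m,b_m)$ for the corresponding (pairwise distinct) pairs, the matrix $\mathbf K^V_c=\sum_{k=1}^h X_kX_k^{\!\top}$ is invertible, where $X_k:=\big((w_{k,a_i}+w_{k,p+b_i})^2\big)_{i=1}^m\in\R^m$ and the $X_k$ are i.i.d.

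I would prove invertibility with a matrix Chernoff bound on $\mathbf K^V_c$, which requires an upper bound on $\|X_k\|^2$ and a lower bound on $\lambda_{\min}\big(\E[X_kX_k^{\!\top}]\big)$. The first is immediate: each coordinate of $X_k$ is at most $(2s_W)^2$, so $\|X_k\|^2\le 16\,m\,s_W^4$ surely. For the second, fix a unit vector $v$ and write $v^{\!\top}X_k=S+C$ with $S=\sum_i v_i(w_{k,a_i}^2+w_{k,p+b_i}^2)$ and $C=2\sum_i v_i\,w_{k,a_i}w_{k,p+b_i}$. Flipping the signs of all second-block coordinates $w_{k,p+1},\dots,w_{k,2p}$ negates $C$ and fixes $S$, so $\E[SC]=0$; and because the $(a_i,b_i)$ are pairwise distinct, for $i\ne j$ the monomial $w_{k,a_i}w_{k,p+b_i}w_{k,a_j}w_{k,p+b_j}$ has some coordinate to an odd power and hence mean zero. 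Therefore $v^{\!\top}\E[X_kX_k^{\!\top}]v\ge\E[C^2]=4\sum_i v_i^2(\E w^2)^2=\tfrac49 s_W^4$. The ratio of the two bounds is exactly $36\,m\le 36\rho_c$, so matrix Chernoff gives $\Pr[\mathbf K^V_c\text{ singular}]\le m\,e^{-h/(36\rho_c)}$; summing over the values of $c$, using $\sum_c m\le n$, yields overall failure probability $\le n\,e^{-h/(36\rho_c)}\le\delta$ once $h>36\rho_c\log(n/\delta)$, which is the positive direction.

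For the converse the argument is a dimension count: $W$ has $2ph$ entries and $V$ has $ph$, so the eNTK feature map $\vx\mapsto\nabla g(\theta;\vx)$ takes values in a $3ph$-dimensional space and $\operatorname{rank}(\mathbf K)\le 3ph$. If $h<n/(3p)$ then $\operatorname{rank}(\mathbf K)\le 3ph<n$, so once the number of distinct training points exceeds $3ph$ (e.g.\ when all $n$ points are distinct) $\operatorname{range}(\mathbf K)$ is a proper subspace of the space of label vectors that are constant on repeat classes; any label vector outside it — which exists — cannot be interpolated, regardless of $V$.

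The step I expect to be the crux is the lower bound on $\lambda_{\min}(\E[X_kX_k^{\!\top}])$: it is where the distinctness of the pairs $(a_i,b_i)$ is actually used, and where the constant $36$ (hence the exact width threshold, together with the $\sum_c m\le n$ bookkeeping) comes from, via the parity/orthogonality structure that lets one discard the "square part" and retain only the cross terms. The remaining ingredients — the gradient formulas, the reduction to $\mathbf K^V$ and then to a single value of $c$, the matrix-concentration step, and the parameter count for the converse — are routine once that estimate is in hand.
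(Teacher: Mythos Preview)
Your proposal is correct and follows essentially the same route as the paper: split the eNTK Gram matrix as $\mathbf K=\mathbf K^V+\mathbf K^W$, drop $\mathbf K^W$ since it is PSD, show the $\mathbf K^V$ part is positive definite on the distinct training points via matrix Chernoff (with exactly the same $L=16s_W^4\rho_c$ and $\mu_{\min}=4h\sigma_W^4=\tfrac{4}{9}hs_W^4$), and for the converse count parameters to get $\rank(\mathbf K)\le 3ph$. The only cosmetic differences are that the paper applies matrix Chernoff once to the full $n\times n$ matrix $\sum_k\mathbf L_{\theta_k}$ (bounding $\|\mathbf L_{\theta_k}\|$ via the factorization $\diag(\mathbf w)\mathbf C\diag(\mathbf w)$ with $\|\mathbf C\|=\rho_c$) whereas you work block-by-block in $c$ and union-bound, and the paper obtains $\lambda_{\min}(\E[X_kX_k^{\!\top}])\ge 4\sigma_W^4$ by explicitly computing $\E[(Q_{ka}+R_{kb})^2(Q_{ka'}+R_{kb'})^2]$ in all four cases and peeling off a $4\sigma_W^4\,\ind(x=x')$ term, while your sign-flip/parity argument isolating the cross term $C$ reaches the same constant with less bookkeeping.
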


Uniform initialization is the default behavior e.g.\ in PyTorch \citep{pytorch2}. Note that $\rho_c$ is always between $ n / p $ and $n$.
Moreover, in the usual setting with randomly selected $\dset_\train$ and $n = \Omega\bigl( p \log p \bigr)$
-- recall that full knowledge of a single $(a, b)$ pair requires $n = p$ in the regression setting --
we have that $\rho_c = \Theta( n / p )$ with high probability~\citep[Theorem 1]{balls-into-bins}.
Thus in this usual setting,
the threshold for interpolation is at a network width $h = \tilde\Theta(n / p)$.

\begin{proof}[Proof sketch of \Cref{thm:mainbody-ntk}]
    Kernel regression can achieve all possible target labels iff its kernel matrix is strictly positive definite (\cref{thm:kernel_interpolating,thm:krr-noninterp}).
    Evaluating the expected neural tangent kernel,
    we can see it is strictly positive definite
    under only mild assumptions on the weights
    (\cref{prop:expected-kern-interp}).
    With bounded weights, a matrix Chernoff bound controls the convergence of the lowest eigenvalue of the kernel matrix to that of the expected kernel matrix (\cref{prop:entk-interp}).
    The lower bound follows from the rank of the kernel matrix (\cref{thm:entk-singular}).
\end{proof}

\subsubsection{Permutation-Equivariant Kernel Methods Cannot Generalize}
We now show
that for any permutation-equivariant kernel method (\Cref{defi:kernel_methods})
(and hence for networks trained by gradient descent close enough to initialization),
generalization is possible only when training on $n = \Omega(p^3)$ samples,
i.e.\ the portion of all possible data points is $\frac{n}{N} = \Omega(1)$.

The key component of our analysis is the permutation equivariance of learning modular addition in this setting.
We first define the permutation group on the modular addition data:

\begin{definition} %
\label{def:permutation_3d}
Let $\sS_p$ denote the set of all permutations on $[p]$.
We define the \emph{permutation group} $\cG_\cX$ on $\cX$
as the group
    \[
    \left\{
    (e_a, e_b, e_c) \mapsto (e_{\sigma_1(a)}, e_{\sigma_2(b)}, e_{\sigma_3(c)}): \sigma_1, \sigma_2, \sigma_3 \in \sS_p
    \right\}
    ,\] 
with the group operation being composition of each permutation.
\end{definition}

The following theorem establishes that our learning algorithm is
equivariant (\Cref{defi:eqvariance}) under this permutation group,
which we call \emph{permutation-equivariant} for short.
Note that this result applies to the actual process of gradient descent on our neural network,
not only to its NTK approximation.
(This result is not particularly specific to the architecture defined in \cref{sec:setup};
it holds broadly.) 

\begin{restatable}[Permuation Equivariance in Regression]{theorem}{regressionpermeq}
\label{th:permutation_equivaraince_3d}
     Training a two-layer neural network with quadratic activations (such as $g(\theta; x))$ initialized with random i.i.d. parameters using gradient-based update rules (such as gradient descent) on the modular addition problem in the regression setting is an equivariant learning algorithm (as defined in \Cref{defi:eqvariance}) with respect to the permutation group of \cref{def:permutation_3d} applied on the input data.
\end{restatable}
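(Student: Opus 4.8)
The plan is to realize each input permutation $T\in\cG_\cX$ by a companion transformation $\phi_T$ of parameter space that (a) preserves the i.i.d.\ initialization law, (b) satisfies the intertwining identity $g(\phi_T(\theta);T(x)) = g(\theta;x)$ for every $\theta$ and $x$, and (c) is itself a permutation of the coordinates of $\theta$; then I would push $\phi_T$ through the entire gradient-descent trajectory by a coupling and invoke \Cref{defi:eqvariance}.

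\emph{Constructing $\phi_T$.} Write $T=(\sigma_1,\sigma_2,\sigma_3)$ with permutation matrices $\Pi_i\in\R^{p\times p}$ defined by $\Pi_i e_j = e_{\sigma_i(j)}$, and split $W=[\,W_1\ \ W_2\,]$ with $W_1,W_2\in\R^{h\times p}$, so that $g(\theta;(e_a,e_b,e_c)) = e_c^\top V\,(W_1 e_a + W_2 e_b)^{\odot 2}$. I would take
\[
\phi_T(W,V) \;\triangleq\; \bigl(\,[\,W_1\Pi_1^{-1}\ \ W_2\Pi_2^{-1}\,],\ \Pi_3 V\,\bigr),
\]
which permutes columns within each $p$-block of $W$ and rows of $V$ but never touches the hidden units, so (c) is immediate. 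Then (a) holds because a coordinate permutation leaves the joint law of i.i.d.\ entries unchanged, giving $\phi_T(\theta_0)\overset{d}{=}\theta_0$; and (b) is a one-line check using $\Pi_i^{-1}e_{\sigma_i(j)} = e_j$ and $e_{\sigma_3(c)}^\top\Pi_3 = e_c^\top$ — the point being that the elementwise square is applied to the \emph{same} pre-activation vector $W_1 e_a + W_2 e_b$, with no relabeling of hidden units needed.

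\emph{Propagating $\phi_T$ through training.} Let $\dset_\train = \nxy$ and $\dset_\train^T = \ngxy$. The additively-separable objective \eqref{eq:loss} inherits equivariance from (b): each per-example term is unchanged and the regularizer $\norm{\theta}_\infty$ is invariant under coordinate permutations, so $\cL^{\lambda}_\ell(g,\phi_T(\theta),\dset_\train^T) = \cL^{\lambda}_\ell(g,\theta,\dset_\train)$ for all $\theta$. Writing $\phi_T$ as multiplication by a permutation matrix $M$ on the flattened parameter vector and differentiating this identity gives $\nabla\cL(\cdot;\dset_\train^T)\big|_{M\theta} = M\,\nabla\cL(\cdot;\dset_\train)\big|_{\theta}$ (using $M^{-\top}=M$). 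I would then couple the two runs of gradient descent by initializing the $\dset_\train^T$-run at $\theta_0' \triangleq M\theta_0$, which has the correct marginal by (a), and show by induction that $\theta_t' = M\theta_t$ for all $t$: the update gives $\theta_{t+1}' = M\theta_t - \eta\,\nabla\cL(\cdot;\dset_\train^T)\big|_{M\theta_t} = M\bigl(\theta_t - \eta\,\nabla\cL(\cdot;\dset_\train)\big|_{\theta_t}\bigr) = M\theta_{t+1}$. The same step handles any update rule commuting with coordinate permutations (momentum, Adam, or mini-batch SGD under a matched coupling of the sampled batches), carrying the optimizer state along by $M$ as well. Applying (b) once more to the final iterate, the output predictors satisfy $\Psi(\theta_t';T(x)) = g(M\theta_t;T(x)) = g(\theta_t;x) = \Psi(\theta_t;x)$ for every $x\in\cX$, so under this coupling the two random hypotheses coincide as functions on $\cX$; hence $\cA(\dset_\train^T)\circ T \overset{d}{=}\cA(\dset_\train)$, which is exactly \Cref{defi:eqvariance}.

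\emph{Where the difficulty lies.} There is essentially one nontrivial ingredient, the intertwining identity (b), and it is clean precisely because of the one-hot encoding: the permuted input coordinates are exactly the first-layer weight columns they multiply, so the pre-activations — hence the quadratic activations — are literally unchanged by $\phi_T$. The only point to watch is step (a), which requires the initialization law to be invariant under permuting columns within each $p$-block of $W$ and rows of $V$; i.i.d.\ entries supply this, whereas a structured initialization (e.g.\ a fixed second layer) would need a separate check. Nothing above uses quadraticity: the same plan works for any architecture in which the summands enter via one-hot selection of weight columns and the output via one-hot selection of rows.
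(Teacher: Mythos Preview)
Your proposal is correct and follows essentially the same approach as the paper: construct a companion parameter-space permutation $\phi_T$ for each input permutation $T$, verify the intertwining identity $g(\phi_T(\theta);T(x))=g(\theta;x)$, deduce that gradients commute with $\phi_T$, and use invariance of the i.i.d.\ initialization under coordinate permutations to couple the two trajectories. The paper packages the last three steps into a general criterion (its Theorem~C.2 from \citet{li2020convolutional}) and checks the forward and gradient identities separately (Lemmas~C.3--C.4), whereas you give the coupling argument directly and obtain the gradient identity by differentiating the loss invariance; these are only presentational differences.
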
 
More details and the full proof are in \cref{app:sec:permutation_equivariance}. 

Roughly speaking, this indicates that learning the modular addition task on this setup is exactly the same difficulty as learning any permuted version of the dataset. 
Following the same argument, we can show that the kernel method corresponding to the neural nets in the early phase of training is also permutation-equivariant.

Further, as the distribution $\cD$ is uniform and thus invariant under permutation,
\Cref{th:permutation_equivaraince_3d} shows that the difficulty of learning the original ground-truth is the same as simultaneously learning all the permuted versions of the ground-truths, which turns out to be difficult for any kernel method. The following theorem formalizes this idea, establishing that any such kernel predictor needs at least $n = \Omega(p^3)$, or equivalently $\frac n N = \Omega(1)$, to generalize on the modular addition task;
otherwise it cannot do substantially better than the trivial all-zero predictor.

\begin{restatable}[Lower Bound]{theorem}{kernellowerbound}
\label{th:kernel_lower_bound_3d}
    There exists a constant $C > 0$ such that for any $p \ge 2$, training data size $n < Cp^3$, and any permutation-equivariant kernel method $\cA$, it holds that
    \begin{equation*}
    \!\!
        \E_{ (\vx_i,y_i)_{i=1}^n\sim \cD^n}\E_{\cA}\cL_{\ell_2}  \left(\cA\left( \{(\vx_i,y_i)\}_{i=1}^n\right) \right)
        \ge \frac{w}{2} \; \cL_{\ell_2} \left(\Psi_{\boldsymbol{0}}\right)= \frac{p}{2},
    \end{equation*}
    where $\E_{\cA}$ takes the mean over the randomness in algorithm $\cA$. 
\end{restatable}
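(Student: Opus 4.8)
The plan is to exploit permutation equivariance to reduce the problem to an averaging argument over the orbit of ground-truth functions. First, I would observe that a permutation-equivariant kernel method $\cA$, when applied to a dataset drawn from $\cD^n$, produces a predictor whose population loss is unchanged if we replace the ground truth by any of its permuted versions: concretely, for $T\in\cG_\cX$, learning the dataset $\{(T(\vx_i),y_i)\}$ and evaluating at $T(\vx)$ gives exactly the same values as learning $\{(\vx_i,y_i)\}$ and evaluating at $\vx$, by \Cref{defi:eqvariance}. Since $\cD$ is the uniform distribution and hence $T$-invariant, the expected population loss of $\cA$ on the true labeling function $y^\star(\vx)=p\,\mathbf 1(a+b\equiv c)$ equals its expected population loss on $y^\star\circ T^{-1}$ for every $T$. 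Averaging over all $T\in\cG_\cX$ (equivalently, over a random relabeling $\sigma_1,\sigma_2,\sigma_3$ of the three coordinates), the expected loss of $\cA$ on $y^\star$ equals the expected loss, over a uniformly random permuted ground truth, of $\cA$ fed data labeled by that same random ground truth.

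Next I would invoke the general lower-bound framework of \Cref{app:sec:general_lower_bound}: a kernel predictor with $n$ training points lies in an (at most) $n$-dimensional subspace of $L^2(\cD)$ determined by the kernel and the $\vx_i$'s — namely $\spn\{K(\cdot,\vx_i)\}$ — chosen \emph{before} seeing the labels (and the $\vx_i$'s reveal no label information here, since $y^\star$ is determined by $\vx$, but the relevant point is that the random permutation is independent of which subspace gets selected). The best $L^2$ approximation of a target $g$ from an $n$-dimensional subspace has squared error at least $\|g\|_2^2$ minus the energy $g$ can place in $n$ coordinates of any orthonormal system. The key quantitative input is that the family $\{y^\star\circ T^{-1}: T\in\cG_\cX\}$ of permuted ground truths is "spread out" in $L^2(\cD)$: each has the same norm, and for a uniformly random member $g$, the expected squared projection onto any fixed $n$-dimensional subspace is at most $n$ times the average, over the orbit, of $\|g\|_2^2$ divided by the dimension of the span of the whole orbit. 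Since the orbit of $y^\star$ under all coordinate permutations spans a space of dimension $\Theta(p^3)$ (the permuted indicator functions $\mathbf 1(\sigma_1(a)+\text{...})$ are nearly orthogonal and there are $\asymp (p!)^3/p$ of them but they live in a $\Theta(p^3)$-dimensional space of "low-degree" functions), the expected captured energy is $O(n/p^3)\cdot\|y^\star\|_2^2$. Choosing the constant $C$ small enough that $n<Cp^3$ forces the captured fraction below $1/2$, so the residual is at least $\tfrac12\|y^\star\|_2^2 = \tfrac12\cL_{\ell_2}(\Psi_{\boldsymbol 0})$, and $\cL_{\ell_2}(\Psi_{\boldsymbol 0})=\E_\cD[(y^\star)^2]=p$ since $y^\star$ equals $p$ on a $1/p$ fraction of inputs.

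The main obstacle is the second step: precisely controlling how much $L^2$-energy an adversarially chosen $n$-dimensional subspace can capture from a uniformly random element of the permutation orbit of $y^\star$. This requires (i) identifying the linear span of the orbit and computing its dimension — showing it is $\Theta(p^3)$ rather than something larger, which is where the structure of modular addition enters (the relevant functions are "degree-2" trigonometric polynomials / they decompose via Fourier characters on $\sZ_p^3$); and (ii) a clean averaging inequality stating that for \emph{any} subspace $\mathcal V$ with $\dim\mathcal V=n$, $\E_g[\|\Pi_{\mathcal V} g\|_2^2]\le \frac{n}{\dim(\spn\,\text{orbit})}\,\E_g\|g\|_2^2$, which follows from writing $\|\Pi_{\mathcal V}g\|^2 = \tr(\Pi_{\mathcal V}\,gg^\top)$, taking expectations, using that $\E_g[gg^\top]$ is (by symmetry of the orbit under $\cG_\cX$) a scalar multiple of the projection onto $\spn\,\text{orbit}$, and bounding $\tr(\Pi_{\mathcal V}\Pi_{\spn\,\text{orbit}})\le n$. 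I would relegate the dimension count and the Fourier-analytic details to the appendix (\cref{app:sec:general_lower_bound}), and in the main text present the equivariance reduction plus this averaging bound, then optimize constants to extract $C$.
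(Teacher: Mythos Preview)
Your first step---the equivariance reduction to an average over the permutation orbit of the ground truth---is correct and matches the paper exactly.

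The gap is in your second step. The claim that $\Sigma \triangleq \E_g[gg^\top]$ is ``a scalar multiple of the projection onto $\spn(\text{orbit})$'' is false: the representation of $\cG_\cX=\sS_p^3$ on $\spn(\text{orbit})$ is not irreducible, so Schur's lemma does not apply. Concretely, every $h_\sigma$ in the orbit has the same coordinate sum $\sum_x h_\sigma(x)=p^2$, so the all-ones vector $\mathbf 1$ lies in $\spn(\text{orbit})$, and one checks that the eigenvalue of $\Sigma$ along $\mathbf 1$ is $\Theta(p)$ while $\Tr(\Sigma)=p^2$. Your inequality $\E_g\|\Pi_{\mathcal V}g\|^2\le \tfrac{n}{\dim(\spn\text{orbit})}\E_g\|g\|^2$ therefore fails already at $n=1$: an adversary who puts $\mathcal V=\spn\{\mathbf 1\}$ captures $\Theta(p)$ energy, not the $O(p^2/p^3)=O(1/p)$ your bound predicts. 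With only the naive bound $\tr(\Pi_{\mathcal V}\Sigma)\le n\lambda_1(\Sigma)\approx np$ you would need $n=O(p)$, not $n=O(p^3)$.

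The paper's fix is precisely to isolate this bad part. It introduces the low-dimensional ``slice'' subspace $V_s=\spn\{s_{i,a}\}\supset\mathbf 1$, shows $\Tr(P_{V_s}\Sigma P_{V_s})=p^{m-2}$ is a fixed ($n$-independent) contribution, and then proves the key estimate $\lambda_1\big((I-P_{V_s})\Sigma(I-P_{V_s})\big)\le \tfrac1p\exp(\tfrac{m-1}{p-1})$ via an explicit combinatorial computation of $\E_\sigma\langle h_\sigma,v\rangle^2$ for $v\perp V_s$ (\Cref{lem:bounded_vh_inner_prod}). Combining these through $\sum_{i\le n}\lambda_i(\Sigma)\le \Tr(P_{V_s}\Sigma P_{V_s})+n\,\lambda_1((I-P_{V_s})\Sigma(I-P_{V_s}))$ gives the $n=\Omega(p^3)$ threshold. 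So the structure you propose is right, but the spectral fact you need is this two-scale eigenvalue bound for $\Sigma$, not a flat spectrum on the orbit span.
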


A full proof of this result is deferred to \cref{app:sec:kernel_lower_bound_3d}.

\cref{th:kernel_lower_bound_3d} is in fact a special case of a more general theorem that lower bounds the population $\ell_2$ loss of learning modular addition with $m$-summands using permutation-equivariant kernels, showing that poor generalization is inevitable. We present an informal version of this result below and refer the reader to \cref{lem:m_dim_modular_addition_loss_lower_bound} for the formal version.

\begin{theorem}[Informal] Consider the problem of learning modular addition over $m$-summands with a training set of size $n$ using a permutation equivariant kernel method $\cA$. For any $p \ge 2$ and training data size $n < Cp^m$ the expected population $\ell_2$ loss is at least $\Omega(p)$, which is of the same magnitude as the trivial all-zero predictor.
\end{theorem}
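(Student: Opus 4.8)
The plan is to lift the symmetrization-plus-intrinsic-dimension argument of \Cref{th:kernel_lower_bound_3d} from three one-hot slots to $m$. Formalize the $m$-summand regression problem as $\cX = \{e_i : i\in[p]\}^m$, target $y^*(\vx) = p\,\mathbf{1}(a_1+\dots+a_m\equiv 0 \pmod p)$ for $\vx = (e_{a_1},\dots,e_{a_m})$, uniform $\cD$ over $\cX$, and $N = p^m$; then $\cL_{\ell_2}(\Psi_{\boldsymbol 0}) = \E_{\vx\sim\cD}[y^*(\vx)^2] = p$. The relevant symmetry is the product permutation group $\cG_\cX = \{\,(e_{a_j})_j \mapsto (e_{\sigma_j(a_j)})_j : \sigma_1,\dots,\sigma_m\in\sS_p\,\}$, under which $\cD$ is invariant; by \Cref{thm:kernel_methods_equivariance} a permutation-equivariant kernel method has a $\cG_\cX$-invariant kernel, and the proof of \Cref{th:permutation_equivaraince_3d} carries over unchanged to show GD on the quadratic net is $\cG_\cX$-equivariant (it never used that there were exactly three slots). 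Write $\cL_{\ell_2}^{f}(\Psi)$ for the population square loss of $\Psi$ against target $f$, so that $\cL_{\ell_2}=\cL_{\ell_2}^{y^*}$.

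I would first symmetrize. By \Cref{defi:eqvariance}, $\cA(\ngxy) = \cA(\nxy)\circ T^{-1}$ for deterministic $\cA$ (and with $\E_\cA$, $\overset{d}{=}$ in general); since the labels satisfy $y_i = y^*(\vx_i) = g_T(T\vx_i)$ for $g_T := y^*\circ T^{-1}$, the dataset $\ngxy$ has exactly the law of a size-$n$ sample with i.i.d.\ uniform inputs and target $g_T$. Changing variables via the invariance of $\cD$ gives $\cL_{\ell_2}^{g_T}(\cA(\ngxy)) = \cL_{\ell_2}^{y^*}(\cA(\nxy))$ pathwise, so in expectation learning $y^*$ is exactly as hard as learning $g_T$, for every $T$. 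Averaging over $T\sim\Unif(\cG_\cX)$,
\[
\E_{\dset_\train}\E_\cA\,\cL_{\ell_2}^{y^*}(\cA(\dset_\train)) \;=\; \E_{T}\,\E_{\dset_\train}\E_\cA\,\cL_{\ell_2}^{g_T}(\cA(\dset_\train)),
\]
where on the right the training set carries target $g_T$. Now I invoke the intrinsic-dimension bound of \Cref{app:sec:general_lower_bound}: any kernel predictor lies in $\mathcal{V} := \spn\{K(\cdot,\vx_i)\}_{i=1}^n$, a subspace of dimension at most $n$ determined by the kernel and the training \emph{inputs} but not the labels, so $\cL_{\ell_2}^{g_T}(\cA(\dset_\train)) \ge \|g_T\|^2 - \|P_{\mathcal{V}}\,g_T\|^2 = p - \|P_{\mathcal{V}}\,g_T\|^2$ in $L^2(\cD)$. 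Since the law of $\mathcal{V}$ is independent of $T$, it suffices to show $\E_{T}\|P_{\mathcal{V}}\,g_T\|^2 \le p/2$ uniformly over all subspaces $\mathcal{V}$ with $\dim\mathcal{V} \le n$, provided $n < Cp^m$.

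The crux is a spectral analysis of $M := \E_{T\sim\Unif(\cG_\cX)}[\,g_T\otimes g_T\,]$ on $L^2(\cD)$: for an orthonormal basis $u_1,\dots,u_{\dim\mathcal{V}}$ of $\mathcal{V}$ we have $\E_T\|P_{\mathcal{V}}\,g_T\|^2 = \sum_j\langle u_j, M u_j\rangle \le (\dim\mathcal{V})\,\lambda_{\max}(M) \le n\,\lambda_{\max}(M)$, but this alone is far too lossy, since $\E_T\,g_T = \mathbf 1$ makes the constant function an eigenvector of $M$ with eigenvalue of order $1$. The fix is that $M$ commutes with the $\cG_\cX$-action (because $g_T\circ T'^{-1} = g_{T'T}$ and Haar measure is translation-invariant), hence is diagonalized by the isotypic decomposition $L^2([p]^m) = \bigoplus_{S\subseteq[m]} W_S$, where $W_S$ is the external tensor product of the standard $\sS_p$-representation in the slots of $S$ and the trivial representation elsewhere, $\dim W_S = (p-1)^{|S|}$, and the $W_S$ are pairwise non-isomorphic irreducibles of $\sS_p^{\,m}$. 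Expanding $\mathbf 1(\sum a_j\equiv 0)$ in the additive characters of $\sZ_p$ shows that marginalizing $g_T$ over any slot outside $S$ produces a factor $\sum_b\omega^{tb}$ vanishing unless $t=0$, while at $t=0$ a standard-representation factor in any slot of $S$ integrates to zero; therefore $M$ annihilates every $W_S$ with $0<|S|<m$, acts as $\mu_\emptyset = N$ on the constants, and --- by the trace identity $\Tr M = Np$ --- acts as $\mu_{[m]} = p^m/(p-1)^{m-1}$ on $W_{[m]}$. Writing $g_T = \mathbf 1 + h_T$ with $h_T\in W_{[m]}$ and $\mathbf 1\perp h_T$ yields $\E_T\|P_{\mathcal{V}}\,g_T\|^2 \le 2\|P_{\mathcal{V}}\mathbf 1\|^2 + 2\E_T\|P_{\mathcal{V}}\,h_T\|^2 \le 2 + 2n\,\mu_{[m]}/N = 2 + 2n/(p-1)^{m-1}$, which is at most $p/2$ as soon as $n \le C(m)\,p^m$ for a suitable $C(m)>0$ and $p$ above an absolute constant. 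Feeding this back gives $\E_{\dset_\train}\E_\cA\,\cL_{\ell_2}(\cA(\dset_\train)) \ge p - 2 - 2n/(p-1)^{m-1} \ge p/2 = \tfrac12\cL_{\ell_2}(\Psi_{\boldsymbol 0})$, which is the $\Omega(p)$ bound of \cref{lem:m_dim_modular_addition_loss_lower_bound}.

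The main obstacle is the representation-theoretic computation of the spectrum of $M$ --- in particular, showing that \emph{all} intermediate blocks $W_S$ with $0<|S|<m$ are killed, so that the only non-constant eigenvalue is the tiny $1/(p-1)^{m-1}$; this is exactly what makes the sample threshold scale like $p^m$, and it hinges on $y^*$ being identically $1$ after marginalizing over any proper subset of its coordinates. The remaining ingredients are routine: checking the symmetrization survives for randomized kernel methods (carry $\E_\cA$ through and replace predictor identities by equalities in distribution), verifying $\cG_\cX$-equivariance of GD in the $m$-slot setting (a verbatim generalization of \Cref{th:permutation_equivaraince_3d}), and tracking how $C$ depends on $m$ (the crude estimate $(p-1)^{m-1}\ge(p/2)^{m-1}$ already gives $C = \Theta(2^{-m})$, improvable to an absolute constant when $p\gtrsim m$).
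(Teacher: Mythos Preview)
Your plan is correct and matches the paper's strategy at the structural level: both first use permutation equivariance to replace the fixed target $y^*$ by a uniformly random $g_T$, then lower-bound the loss by $\|g_T\|^2 - \|P_{\cV}g_T\|^2$ for the $n$-dimensional span $\cV$ of kernel sections, and finally control $\E_T\|P_{\cV}g_T\|^2$ via the spectrum of the second-moment operator $M=\E_T[g_Tg_T^\top]$ (the paper's $\Sigma_\cD$, \Cref{lem:lower_bounded_loss_top_eigvals}). The genuine difference is in how the spectrum of $M$ is obtained. The paper projects onto the ``slice'' space $V_s=\spn\{s_{i,a}\}=\bigoplus_{|S|\le 1}W_S$ via \Cref{lem:bound_top_eigvals_vs_first_projected_eigval}, and then bounds the top eigenvalue on $V_s^\perp$ by a direct combinatorial computation of $\E_\sigma[\Psi_\sigma(x)\Psi_\sigma(x')]$, which depends only on $d=\Delta(x,x')$ and satisfies a two-term recursion in $d$ (\Cref{lem:equiv_group_on_m_indices_exp_bound,lem:equiv_group_interaction_equality}); this yields $\lambda_1\le \tfrac{1}{p}\exp\!\big(\tfrac{m-1}{p-1}\big)$. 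You instead observe that $M$ commutes with the $\sS_p^{\,m}$-action, invoke Schur on the isotypic decomposition $\bigoplus_{S\subseteq[m]}W_S$, and use the character expansion of the indicator to see $P_{W_S}g_T=0$ for all $0<|S|<m$, which gives the \emph{exact} nontrivial eigenvalue $\mu_{[m]}=1/(p-1)^{m-1}$ in $L^2(\cD)$. Your route is more conceptual and slightly sharper structurally (you kill all intermediate $W_S$, not just $|S|\le 1$); the paper's route is more elementary and avoids any representation theory.

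Two small points worth tidying. First, your normalizations drift: the values $\mu_\emptyset=N$ and $\mu_{[m]}=p^m/(p-1)^{m-1}$ are in the counting inner product, whereas the line $\|g_T\|^2=p$ and the final bound are in $L^2(\cD)$; the division by $N$ you insert does repair this, but it would be cleaner to fix one inner product throughout. Second, the step $\|P_\cV g_T\|^2\le 2\|P_\cV\mathbf 1\|^2+2\|P_\cV h_T\|^2$ costs you a factor of $2$; the paper's \Cref{lem:bound_top_eigvals_vs_first_projected_eigval} (which bounds $\sum_{i\le n}\lambda_i(\Sigma)$ by $\Tr(P_V\Sigma P_V)+n\lambda_1((I-P_V)\Sigma(I-P_V))$ and does not require orthogonality of the pieces after projection) avoids this and gives the cleaner constant in \Cref{lem:m_dim_modular_addition_loss_lower_bound}. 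Neither affects the informal $\Omega(p)$ conclusion.
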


Poor population loss is thus inevitable for models which are well-approximated by a permutation-equivariant kernel;
since our quadratic network can also achieve zero training error in the kernel regime,
this strongly suggests drastic overfitting in early training.
Fortunately, however, regularized gradient descent on finite networks will eventually leave the kernel regime and begin learning features.

\subsection{Rich Regime} \label{subsec:regression_rich}

\begin{figure*}[!t]
    \centering
    \begin{subfigure}[b]{0.35\textwidth}
        \includegraphics[width=\textwidth]{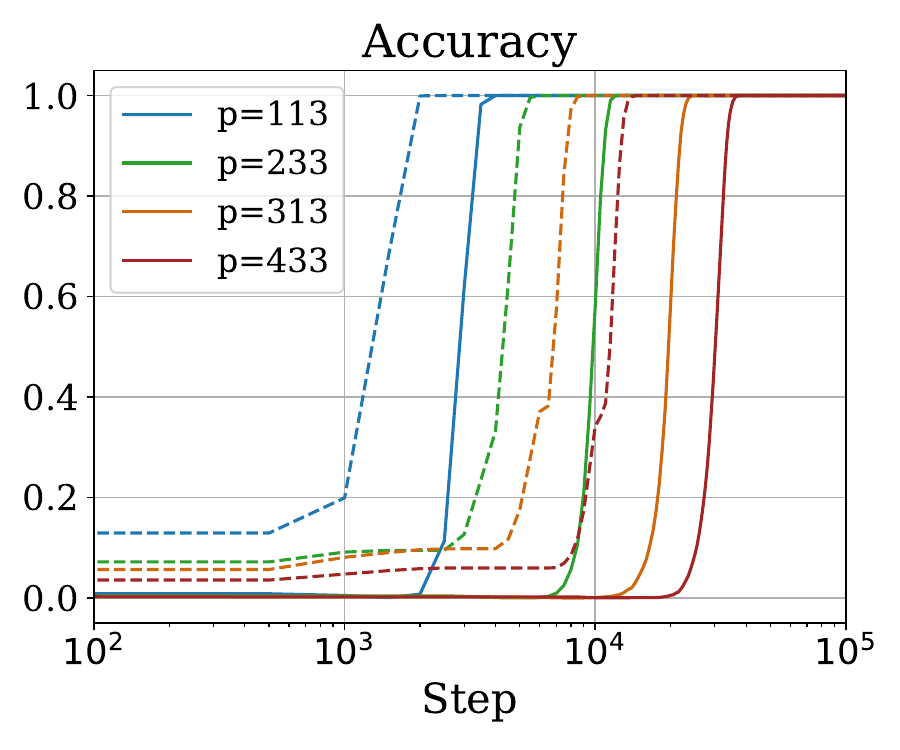}
        \label{fig:regression:acc}
    \end{subfigure}
    \hspace{0.05\textwidth}
    \begin{subfigure}[b]{0.35\textwidth}
        \includegraphics[width=\textwidth]{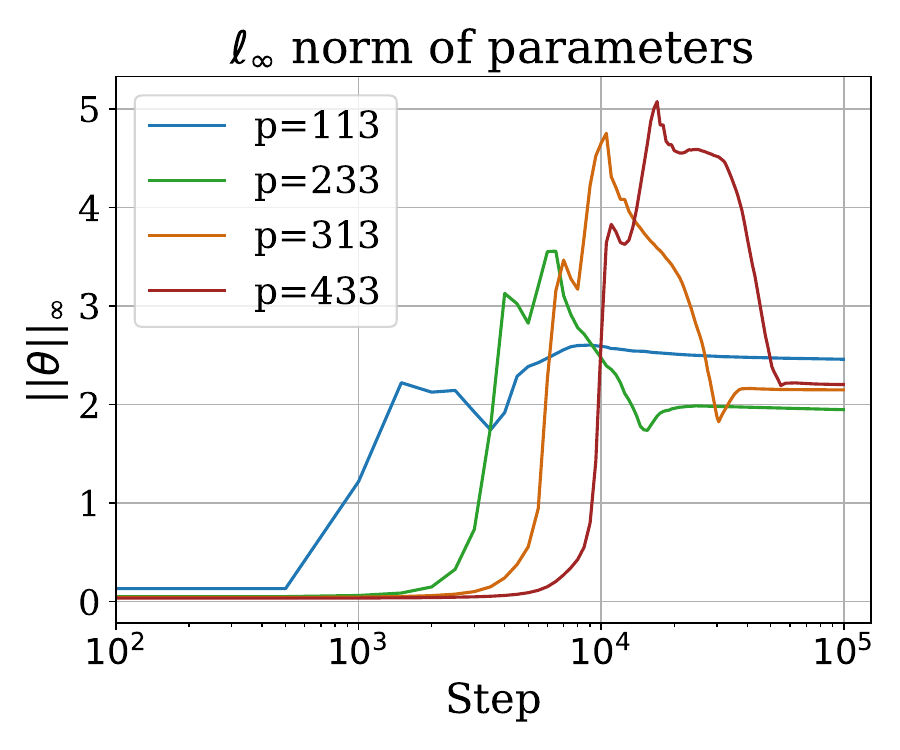}
        \label{fig:regression:theta_norm}
    \end{subfigure}
    \vspace*{-5mm}
    \caption{%
        Empirical verification of our theoretical explanation for generalization. We train a network of width $h=4p$ with gradient descent on $\ell_2$ loss and $10^{-4}$ $\ell_\infty$-regularization on $2 \times p^{2.25}$ training samples  (out of $p^3$) in the regression setting.
        \textbf{Left:} \textbf{Generalization happens when the number of samples are more than $\Omega(p^2)$}, as predicted by \cref{th:generalization_bound_3d}.  The dashed and solid lines indicate train and test set statistics respectively. \textbf{Right:} $\ell_\infty$ norm of parameters after grokking remains the same for different problem dimensions $p$, as predicted by }
    \label{fig:regression_norms}
\end{figure*}

While the transient behavior of gradient descent after leaving the kernel regime may be complicated,
it is often the case that the limiting behavior can be better-understood based on analyses of implicit bias.
Motivated by this,
we present a generalization analysis of networks that achieve zero training error
(as we expect in the long-term optimization limit)
with bounded $\norm{\theta}_\infty$.
This assumption is motivated by recent work of \citet{xie2024implicit}, who show that full-batch AdamW
can only converge to KKT points of the $\ell_\infty$ norm-constrained optimization problem, $\min_{\norm{\theta}_\infty\le \frac{1}{\lambda}} \cL(\Psi, \theta, \dset_\train)$, where $\lambda$ is the weight decay coefficient.
We will discuss the feasibility of this assumption more afterwards.

\begin{theorem}[Upper Bound]\label{thm:main_regression_generalization_upper_bound}
    For any width $h\ge 8p$, with probability at least $1-\delta$ over the randomness of training dataset $\dset_{\operatorname{train}}$ of size $n$, define the set of interpolating solutions as $\tilde\Theta^* = \{\theta\mid \cL_{\ell_2}(\Psi, \theta, \dset_{\operatorname{train}}) = 0\}$.
    For any interpolating solution $\theta^*\in \tilde\Theta^*$ with small $\ell_\infty$ norm, \emph{i.e.}, satisfying that $\|\theta^*\|_\infty = \bigO(\min_{\theta\in\tilde\Theta^*}\norm{\theta}_\infty)$, it holds that 
    \begin{align*}
        \cL_{\ell_2}(g, \theta^*)
        &= \bigO\left(\frac{ p^2}{n} \left(\log^3 n + \frac{1}{p}\log \frac1\delta \right)\right)\cdot  \cL_{\ell_2} \left(\Psi_{\boldsymbol{0}}\right)
    .\end{align*}
\end{theorem}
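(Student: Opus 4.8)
The plan is a uniform-convergence argument built on the ``optimistic rate'' bound of \citet{srebro2010smoothness} for smooth nonnegative losses, applied to the class of quadratic networks with $\ell_\infty$-bounded parameters. Throughout, fix $B>0$ and write $\cG_B=\{x\mapsto g(\theta;x):\norm{\theta}_\infty\le B\}$. First I would observe that the statement reduces to three ingredients: (i) the minimum-$\ell_\infty$-norm interpolator of $\dset_{\operatorname{train}}$ has norm $O(1)$, so that under the theorem's hypothesis $\theta^*\in\cG_B$ for a \emph{universal} constant $B$; (ii) a bound on the Rademacher complexity $\cR_n(\cG_B)$; and (iii) the optimistic-rate inequality, which for a predictor with zero empirical square loss (our $\theta^*$) and an $H$-smooth square loss bounded by $b$ on $\cG_B$ gives, with probability $\ge 1-\delta$ over $\dset_{\operatorname{train}}$, $\cL_{\ell_2}(g,\theta^*)=\bigO\!\big(H\,\cR_n(\cG_B)^2\log^3 n + b\,n^{-1}\log(1/\delta)\big)$. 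The $\log^3 n$ appearing in the theorem is exactly the one coming from the scale-peeling in \citet{srebro2010smoothness}.

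For (i): the target $y=p\,\mathbf 1(a+b\equiv c)$ is realized \emph{exactly} on all of $\cX$ by the explicit cosine/Fourier construction --- this is the content of \Cref{thm:small_inf_norm_exist}, whose width requirement is precisely why we assume $h\ge 8p$; that construction has $\norm\cdot_\infty=O(1)$, lies in $\tilde\Theta^*$ (it fits every point, in particular the training points), and can be padded with zero neurons for any $h\ge 8p$. Hence $\min_{\theta\in\tilde\Theta^*}\norm\theta_\infty=O(1)$, so the hypothesis $\norm{\theta^*}_\infty=\bigO(\min_{\theta\in\tilde\Theta^*}\norm\theta_\infty)$ forces $\norm{\theta^*}_\infty=O(1)=:B$. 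Then $|g(\theta^*;x)|\le\sum_j|V_{c,j}|(|W_{j,a}|+|W_{j,p+b}|)^2\le 4hB^3=\bigO(p)$ (for $h=\Theta(p)$), so the square loss is bounded on $\cG_B$ by $b=\bigO(p^2)$, and $H=2$. Also $\cL_{\ell_2}(\Psi_{\boldsymbol 0})=\E[y^2]=p$, the scaling factor in the theorem.

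The crux is (ii). Write $g(\theta;x)=\sum_{j=1}^{h}V_{c,j}\,(W_{j,a}+W_{j,p+b})^2$ and use subadditivity of Rademacher complexity over hidden units: $\cR_n(\cG_B)\le\sum_{j=1}^h\cR_n(\cG_B^{(j)})$ with $\cG_B^{(j)}=\{x\mapsto V_{c,j}(W_{j,a}+W_{j,p+b})^2\}$, and all $\cR_n(\cG_B^{(j)})$ equal by symmetry. For one unit the function is a product of a ``coordinate-selection'' factor $c\mapsto V_{c,j}$ (an $\ell_\infty$-ball-bounded vector read off at coordinate $c$), whose Rademacher complexity is $\le B\,n^{-1}\sum_{c'}\sqrt{n_{c'}}\le B\sqrt{p/n}$ by Cauchy--Schwarz, times the factor $(a,b)\mapsto(W_{j,a}+W_{j,p+b})^2$, an $O(B)$-Lipschitz function of $W_{j,a}+W_{j,p+b}$ on $[-2B,2B]$, which by Talagrand contraction and the same coordinate-selection bound (applied once for $a$, once for $b$) has complexity $\bigO(B^2\sqrt{p/n})$. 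Combining the two factors with the product rule for Rademacher complexities (sup-norms $O(B)$ and $O(B^2)$) gives $\cR_n(\cG_B^{(j)})=\bigO(B^3\sqrt{p/n})$, hence $\cR_n(\cG_B)=\bigO(h\,B^3\sqrt{p/n})=\bigO(p^{3/2}/\sqrt n)$ for $h=\Theta(p)$ and $B=O(1)$, so $\cR_n(\cG_B)^2=\bigO(p^3/n)$.

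Plugging (i) and (ii) into (iii) gives $\cL_{\ell_2}(g,\theta^*)=\bigO\!\big(\tfrac{p^3}{n}\log^3 n+\tfrac{p^2}{n}\log\tfrac1\delta\big)=\bigO\!\big(\tfrac{p^2}{n}(\log^3 n+\tfrac1p\log\tfrac1\delta)\big)\cdot p$, which is the claimed bound once we substitute $\cL_{\ell_2}(\Psi_{\boldsymbol 0})=p$. The main obstacle is ingredient (ii): a naive estimate (parameter count $\Theta(p^2)$ times the $\bigO(p)$ range of the predictors) is polynomially too large, so one genuinely needs the per-unit decomposition together with the sharp $\sqrt{p/n}$ bounds for the coordinate-selection sub-classes; one must also make sure the \emph{smooth}-loss form of the optimistic rate is invoked, so that $\cR_n$ of the \emph{predictor} class (not the $\bigO(p)$-Lipschitz-inflated loss class) enters --- this is what keeps the exponent of $p$ at $3$ rather than $5$ --- and one must track the width dependence, since $\cR_n(\cG_B)$ grows linearly in $h$ and the bound is therefore really a statement for $h=\Theta(p)$, consistent with the construction in (i).
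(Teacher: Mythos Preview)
Your overall architecture matches the paper's --- existence of a small-$\ell_\infty$ interpolator, a Rademacher bound for the $\ell_\infty$-constrained class, and the optimistic rate of \citet{srebro2010smoothness} --- but there is a genuine gap in step (i) that prevents you from proving the theorem as stated. The theorem claims a bound with \emph{no dependence on $h$}, for any $h\ge 8p$. You pad the width-$8p$ Fourier construction with zero neurons, which only shows $\min_{\theta\in\tilde\Theta^*}\norm\theta_\infty = O(1)$ independently of $h$. Feeding $B=O(1)$ into your Rademacher bound $\cR_n(\cG_B)=O(hB^3\sqrt{p/n})$ and into the loss range $b=O((hB^3)^2)$ yields a final bound of order $h^2p\log^3n/n + h^2\log(1/\delta)/n$, which matches the theorem only when $h=\Theta(p)$ (as you yourself concede at the end). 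The paper instead \emph{duplicates} each of the $8p$ neurons $\lfloor h/(8p)\rfloor$ times and rescales all parameters by $\lfloor h/(8p)\rfloor^{-1/3}$, exploiting $3$-homogeneity; this is the actual content of \Cref{thm:small_inf_norm_exist}, which gives $\min_{\theta\in\Theta^*}\norm\theta_\infty\le\lfloor h/(8p)\rfloor^{-1/3}$. With $B$ of this size, $B^6h^2=O(p^2)$ and $hB^3=O(p)$ regardless of $h$, and the bound from \Cref{th:generalization_bound_3d} becomes width-independent as required.

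A secondary point on (ii): the ``product rule for Rademacher complexities'' you invoke is not a standard lemma, and product classes do not in general satisfy $\cR_n(FG)\lesssim \|F\|_\infty\cR_n(G)+\|G\|_\infty\cR_n(F)$. The paper avoids this via an algebraic trick (\Cref{lemma:transform_vwx_to_ux}): each width-$1$ unit $x\mapsto\langle e_c,V\rangle\,\langle W,(e_a,e_b)\rangle^2$ is rewritten as a sum of four cubes $\sum_{i=1}^4\langle U_i,x\rangle^3$ with $\|U\|_\infty\le\max(\|W\|_\infty,\|V\|_\infty)$, after which a single Talagrand contraction on $t\mapsto t^3$ over $[-3B,3B]$ plus the linear-class bound gives $\cR_n=O(B^3\sqrt{p/n})$ directly. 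Your target order for the single-unit complexity is correct, but the step as written needs either this cubic reduction or a careful vector-contraction argument to be rigorous.
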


Comparing \cref{thm:main_regression_generalization_upper_bound} to \cref{th:kernel_lower_bound_3d}, we see that when $n=\tilde{\omega}(p^2)$ and $h = \Omega(p)$, there is a strict separation between generalization in the kernel and rich regimes. Note that the classifier $\phi(e_a, e_b) \triangleq \argmax_{c} f_c(\theta^*;(e_a,e_b))$ 
has population error rate at most $\frac{2}{p} \cL_{\ell_2}(\phi)$ (as shown in \cref{lem:bounded_l2_loss_bounded_error})
and thus, its population error goes to zero when $n = \tilde\omega(p^2)$.
The proof of \Cref{thm:main_regression_generalization_upper_bound} consists of
showing all networks with small training error and small $\ell_\infty$ norms generalize (\cref{th:generalization_bound_3d}),
and at least one such network exists (\cref{thm:small_inf_norm_exist}).

\begin{restatable}{proposition}{generalizationbound}
\label{th:generalization_bound_3d}
For any $R > 0, \delta \in (0, 1)$, $\dset_{\operatorname{train}}$ of size $n$, and $\theta^* \in \{\theta = (W, V): \cL_{\ell_2}(g, \theta, \dset_{\operatorname{train}}) = 0 \wedge \norm{\theta}_\infty \le R\}$, there exists a positive constant $C > 0$ such that with probability at least $1 - \delta$ over the randomness of $\dset_{\operatorname{train}}$,
    \begin{equation*}
        \cL_{\ell_2}(g, \theta^*) \le \frac{C R^6 h^2}{n} \left(p \log^3 n + \log \frac1\delta \right).
    \end{equation*}
\end{restatable}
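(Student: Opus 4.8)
The plan is to prove a uniform-convergence bound over the whole $\ell_\infty$-ball of parameters and then specialize it to the data-dependent interpolant $\theta^*$. Since the $\ell_2$ loss $(z,y)\mapsto (z-y)^2$ is nonnegative and $2$-smooth in $z$, I would invoke an ``optimistic rate'' / self-bounding uniform-convergence theorem (the smooth-loss bound of \citet{srebro2010smoothness}), which controls the population loss in terms of the \emph{function}-class Rademacher complexity of $\mathcal F_R := \{\, g(\theta;\cdot) : \theta=(W,V),\ \norm{\theta}_\infty\le R\,\}$ rather than a loss-class complexity. It states that with probability at least $1-\delta$ over $\dset_{\operatorname{train}}$, simultaneously for every $\theta$ with $\norm{\theta}_\infty\le R$,
\[
  \cL_{\ell_2}(g,\theta)\ \le\ \cL_{\ell_2}(g,\theta,\dset_{\operatorname{train}})\;+\;K\!\left(\sqrt{\cL_{\ell_2}(g,\theta)}\,\log^{3/2}\!n\cdot \mathfrak R_n(\mathcal F_R)\;+\;\log^{3}\!n\cdot \mathfrak R_n(\mathcal F_R)^2\;+\;\frac{b\log(1/\delta)}{n}\right),
\]
where $\mathfrak R_n$ is the worst-case Rademacher complexity and $b$ a uniform bound on the loss over $\mathcal F_R$. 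Because this holds uniformly over $\mathcal F_R$, it applies to the (data-dependent) interpolant $\theta^*$; plugging in $\cL_{\ell_2}(g,\theta^*,\dset_{\operatorname{train}})=0$ and solving the resulting quadratic inequality for $\cL_{\ell_2}(g,\theta^*)$ gives $\cL_{\ell_2}(g,\theta^*)=O\!\big(\log^3\!n\cdot \mathfrak R_n(\mathcal F_R)^2 + b\log(1/\delta)/n\big)$. For $b$: one-hot inputs and $\norm{\theta}_\infty\le R$ give $\abs{g(\theta;\vx)}\le h\cdot R\cdot(2R)^2=4hR^3$ and $\abs{y}\le p$, so $b\le(4hR^3+p)^2=O(h^2R^6+p^2)$; since an interpolating $\theta^*$ must output the value $p$ on some training point, $4hR^3\ge p$, so $b=O(h^2R^6)$ and the two error terms become $O(R^6h^2p\log^3\!n/n)$ and $O(R^6h^2\log(1/\delta)/n)$ --- exactly the claimed bound.

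It therefore remains to show $\mathfrak R_n(\mathcal F_R)=O\!\big(hR^3\sqrt{p/n}\big)$. Using $(W(e_a,e_b))^{\odot 2}=(We_a)^{\odot 2}+2(We_a)\odot(We_b)+(We_b)^{\odot 2}$, write
\[
 g(\theta;(e_a,e_b,e_c))=\sum_{j=1}^h V_{c,j}W_{j,a}^2\;+\;2\sum_{j=1}^h V_{c,j}W_{j,a}W_{j,b}\;+\;\sum_{j=1}^h V_{c,j}W_{j,b}^2 ,
\]
so $\mathcal F_R$ is contained in the Minkowski sum of three classes and $\mathfrak R_n(\mathcal F_R)\le\sum_{k=1}^3\mathfrak R_n(\mathcal F^{(k)}_R)$; the first and third are related by swapping the two input blocks. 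Each $\mathcal F^{(k)}_R$ is itself a Minkowski sum of $h$ single-hidden-unit classes whose parameter blocks are \emph{disjoint} (unit $j$ only touches column $j$ of $V$ and row $j$ of $W$), so its Rademacher complexity is $h$ times that of one unit. A single-unit function, on one-hot inputs, is a product of at most three coordinates of a parameter vector in $[-R,R]^{d}$ with $d\le 3p$; it is $O(R^2)$-Lipschitz in those parameters in $\norm{\cdot}_\infty$ and bounded by $O(R^3)$, so a volumetric cover of $[-R,R]^{d}$ gives metric entropy $\log\mathcal N(\varepsilon)=O(p\log(R^3/\varepsilon))$, and Dudley's entropy integral bounds the one-unit complexity by $O(R^3\sqrt{p/n})$. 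Summing over units and over the three terms yields $\mathfrak R_n(\mathcal F_R)=O(hR^3\sqrt{p/n})$, and substituting into the display above finishes the proof.

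The main obstacle is getting the dependence on the width $h$ and on $p$ right in $\mathfrak R_n(\mathcal F_R)$: a naive covering of the full $\Theta(ph)$-parameter network would give an $h^{3/2}$ factor and a $\sqrt{ph}$ factor, whereas the bound needs a linear $h$ and a clean $\sqrt p$. Obtaining these requires the two-level decomposition above --- first into the three trilinear/quadratic pieces, then unit by unit --- together with additivity of Rademacher complexity across disjoint parameter blocks, so that chaining is only ever done over one $O(p)$-parameter unit at a time. Secondary care is needed to (i) check the smoothness and self-boundedness hypotheses so the optimistic rate can be run against the function-class complexity, (ii) track the loss range $b$ feeding the $\log(1/\delta)$ term (done above, but degenerate training sets with no solution, forcing a trivial interpolant, must be treated separately), and (iii) verify all constants are independent of $p,n,h,R,\delta$ so they absorb into $C$.
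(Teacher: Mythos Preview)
Your proposal is correct and follows the same overall architecture as the paper: apply the optimistic-rate theorem of \citet{srebro2010smoothness} to the $\ell_\infty$-ball $\mathcal F_R$, then bound $\mathfrak R_n(\mathcal F_R)=O(hR^3\sqrt{p/n})$ by first reducing to a single neuron (disjoint parameter blocks across units) and multiplying by $h$. The difference lies entirely in how the single-neuron complexity is bounded. You expand $(W_a+W_b)^2$ into three monomial pieces and cover each $O(p)$-parameter trilinear class via Dudley's entropy integral. The paper instead uses a polarization-type identity to write $b\,a^2$ as a signed sum of four cubes $\langle U_i,x\rangle^3$ with $\norm{U_i}_\infty\le\max(r,r')$, thereby embedding the single-neuron class into $\cG_r=\{\sum_{i=1}^4\langle U_i,\cdot\rangle^3:\norm{U}_\infty\le r\}$; since the cube is $27r^2$-Lipschitz on $[-3r,3r]$, Talagrand's contraction lemma reduces $\cG_r$ to a linear class, and Cauchy--Schwarz finishes. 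The paper's route avoids chaining and gives explicit constants; yours is more elementary (no algebraic trick) and makes the $O(p)$ effective parameter dimension per unit completely transparent. Both yield the same rate, and your remark that any interpolating $\theta^*$ on a training set containing a positive example forces $4hR^3\ge p$, hence $b=O(h^2R^6)$, is a clean way to control the loss range that the paper leaves implicit.
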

\begin{proof}[Proof sketch of \Cref{th:generalization_bound_3d}]
We bound the Rademacher complexity of the set of networks with small $\ell_\infty$ weights,
and then apply
\cref{prop:smoothness_risk_ub}, due to \citet{srebro2010smoothness},
which gives an ``optimistic'' bound on the excess risk of smooth loss functions. %
Details in \cref{app:sec:generalization_bound_3d}.
\end{proof}
\begin{restatable}[]{proposition}{smallinfnormexist}\label{thm:small_inf_norm_exist}
    Let the set of models with zero population loss be $\Theta^*\triangleq\{\theta\mid \cL_{\ell_2}(g, \theta)=0\}$.
    For any $p\ge 2$ and $h\ge8p$, $\Theta^*$ is nonempty and $\min_{\theta\in\Theta^*}\norm{\theta}_{\infty} \le \left\lfloor \frac{h}{8p}\right\rfloor^{-\frac{1}{3}}$.  
\end{restatable}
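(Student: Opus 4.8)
The plan is to exhibit an explicit interpolating network via the discrete Fourier transform, and then use the $3$-homogeneity of $g$ (quadratic in $W$, linear in $V$) to rescale it into a width-$h$ network of small $\ell_\infty$ norm. The starting point is the character identity $p\,\mathbf 1(a+b\equiv c \pmod p)=\sum_{k=0}^{p-1}\cos\!\big(\tfrac{2\pi k}{p}(a+b-c)\big)$. Writing $\alpha_k=2\pi k/p$ and expanding $\cos(\alpha_k(a+b-c))=\cos(\alpha_k c)\cos(\alpha_k(a+b))+\sin(\alpha_k c)\sin(\alpha_k(a+b))$, it suffices to have the hidden layer produce, for every frequency $k$, the two functions $(a,b)\mapsto\cos(\alpha_k(a+b))$ and $(a,b)\mapsto\sin(\alpha_k(a+b))$; the output row $V_{c,\cdot}$ then merely multiplies these by $\cos(\alpha_k c)$ and $\sin(\alpha_k c)$ and sums over $k$.

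\textbf{Step 1: base construction with $8p$ neurons.} A hidden unit computes $(u[a]+v[b])^2=u[a]^2+2u[a]v[b]+v[b]^2$, i.e.\ the desired cross term plus spurious single-variable terms, so I pair units to cancel the latter exactly: with $u[a]=v[b]=\cos(\alpha_k\cdot)$ versus $u[a]=\cos(\alpha_k\cdot),\,v[b]=-\cos(\alpha_k\cdot)$ the difference of squares is $4\cos(\alpha_k a)\cos(\alpha_k b)$, and similarly the $\sin$ pair gives $4\sin(\alpha_k a)\sin(\alpha_k b)$; one quarter of their difference is exactly $\cos(\alpha_k(a+b))$. Four analogous units with mixed $\cos/\sin$ blocks realize $\sin(\alpha_k(a+b))$. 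This uses $8$ units per $k$, hence $8p$ in total, with every entry of $W_0$ in $[-1,1]$ and every entry of $V_0$ (which are $\pm\tfrac14\cos(\alpha_k c)$ or $\pm\tfrac14\sin(\alpha_k c)$) in $[-\tfrac14,\tfrac14]$; thus $\norm{W_0}_\infty\le1$ and $\norm{V_0}_\infty\le\tfrac14$. Summing over $k$ and invoking the character identity shows $g((W_0,V_0);\,\cdot)\equiv p\,\mathbf 1(a+b\equiv c)$ on all of $\cX$, so this $\theta$ lies in $\Theta^*$ and $\Theta^*\ne\emptyset$.

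\textbf{Step 2: rescaling.} Put $m=\lfloor h/(8p)\rfloor\ge1$ (using $h\ge 8p$). Place $m$ stacked copies of the base network inside the width-$h$ model, zeroing the remaining $h-8pm$ units (which affects neither the function nor the $\ell_\infty$ norm), and rescale the weights of each copy to $(\lambda W_0,\mu V_0)$. By homogeneity each copy computes $\lambda^2\mu\cdot g_0$ with $g_0=g((W_0,V_0);\cdot)$, so taking $m\lambda^2\mu=1$ makes the full network compute $g_0=p\,\mathbf 1(a+b\equiv c)$. Minimizing $\max(\lambda\norm{W_0}_\infty,\mu\norm{V_0}_\infty)\le\max(\lambda,\mu/4)$ subject to $m\lambda^2\mu=1$ (the optimum balances the two terms) yields $\lambda=\mu/4=(4m)^{-1/3}$, giving an interpolating $\theta$ with $\norm{\theta}_\infty=(4m)^{-1/3}\le m^{-1/3}=\big\lfloor h/(8p)\big\rfloor^{-1/3}$, as claimed.

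\textbf{Main obstacle.} There is no deep difficulty here: this is the quadratic-activation analogue of the familiar Fourier/``clock'' construction \citep{gromov2023grokking,nanda2023progress}, and both ingredients are standard. The only points needing care are (i) choosing the paired units so that the single-variable terms cancel \emph{exactly} (so the fit is exact, not approximate), in particular double-checking the degenerate frequency $k=0$ — where several units collapse to constants or to the zero function — contributes precisely the constant $1$; and (ii) the norm bookkeeping: one must verify $\norm{W_0}_\infty^2\norm{V_0}_\infty\le1$ so that the rescaling in Step 2 produces the bound $\lfloor h/(8p)\rfloor^{-1/3}$ with no stray constant.
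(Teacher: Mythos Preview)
Your proposal is correct and follows essentially the same approach as the paper: an explicit $8p$-neuron Fourier/``clock'' construction with paired units to cancel the unwanted single-variable terms, followed by neuron duplication and rescaling via $3$-homogeneity. Your treatment is in fact slightly tidier than the paper's --- you track the $1/4$ factor in $V_0$ explicitly (the paper's construction outputs $4p\,\mathbf 1(\cdot)$ rather than $p\,\mathbf 1(\cdot)$ and leaves the rescaling implicit) and you optimize $\lambda,\mu$ separately rather than scaling uniformly by $m^{-1/3}$, yielding the marginally sharper $(4m)^{-1/3}$; but these are cosmetic differences, not a different argument.
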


\begin{proof}[Proof sketch of \Cref{thm:small_inf_norm_exist}] %
    The main difficulty
    is a manual Fourier-based construction of a zero-loss solution with $h=8p$ and $\ell_\infty$ norm at most one; this is inspired by similar constructions of \citet{gromov2023grokking} and results of \citet{nanda2023progress}. In a concurrent work, \citet{morwani2023feature} presented a similar construction. Once we have that, because our model is 3-homogeneous in its parameters, we can easily reduce the $\ell_\infty$ norm by duplicating neurons, without changing the input-output function.
    Details in \cref{app:manual_construction}.
\end{proof}

We know that an interpolating solution with small $\ell_\infty$ norm exist, and that any such solution will generalize.
Does gradient descent find such a solution?
In \cref{fig:regression_norms}, we empirically evaluate the $\ell_\infty$ norm of weights learned by running gradient descent on the regression task, with varying $p$,
and a very small $\ell_\infty$ regularizer.\footnote{We used a regularization weight of $10^{-4}$. Without explicit regularization, a small number of network weights do grow with $p$, but we believe this phenomenon is not important to the overall behavior of gradient descent on this task. }
Consistent with our manual construction of weights, the $\ell_\infty$ norm of the network weights does not grow with the problem dimension $p$.
This supports the applicability of \cref{thm:main_regression_generalization_upper_bound} and a far better sample complexity compared to the kernel regime.

\vspace*{3mm}
\zhiyuan{check later}
\textbf{Empirical Evidence that Kernel Lower Bounds Lead to Overfitting:} One way to mitigate the grokking effect in learning modular addition is by preventing the network from overfitting the training set in the kernel regime. To do so, one can reduce the \textit{scale of initialization} (e.g.\ smaller variance scales for weight initializations in the scheme of \citet{he2015delving}). Roughly speaking, networks initialized with a very small scale of parameters need to undergo a significant growth in the norm of parameters to be able to fit the training data. This norm growth will eventually lead the network to leave the kernel regime and begin learning features before overiftting the data. \Cref{fig:regression_scale} confirms that in our setting:
using a very small weight initialization can substantially mitigate the grokking effect by preventing the network from overfitting in the kernel regime, incidacting that grokking is indeed caused by a delayed transition from kernel (overiftting) to rich (generalizing) regime. However, this comes at a cost: training becomes increasingly more difficult as the scale of initialization decreases \citep{chizat2018lazy,moroshko2020implicit,telgarsky2022feature}.

In \cref{fig:regression_scale}, we also evaluate the change of the empirical NTK during training, by computing the difference in eNTK matrices through training.
To make this empirical investigation computationally feasible,
we evaluated the eNTK approximation of \citet{mohamadi2023fast}
on 20,000 random data points,
similar to previous schemes \citep{fort2020deep,wei2020regularization,mohamadi2023fast}.
We see that the change in empirical NTK is orders of magnitude larger after overfitting the training set, implying that most feature learning happens only later,
supporting our hypothesis that the initial overfitting occurs roughly in the kernel regime.

\section{Classification Task} \label{sec:classification}

We now move onto the multi-class classification setting established in \cref{sec:setup},
where we train with cross-entropy loss.
Similar to the regression problem, we first analyze the early stage of training where the network operates like a kernel machine, and then move onto the rich regime and focus on the weights learned through margin-maximization implicit bias of gradient descent. We prove that a sample complexity gap between kernel \pcref{th:kernel_lower_bound_2d} and rich \pcref{th:generalization_bound_2d} regime exists when learning a two-layer neural network with gradient descent on modular addition modeled as a multi-class classification task. Our results imply that as long as the max-margin implicit bias of gradient descent on exponential-type loss functions drives the net to leave the kernel regime and $\hat{\Omega}(p^{5/3})$ samples are used for training, generalization to the whole population is guaranteed.

\subsection{Kernel Regime} \label{subsec:classification_kernel}

In the multi-class setting, the output is $p$-dimensional, and thus the eNTK for each pair of points is a $p\times p$ psd matrix \citep[see][]{alvarez2012kernels}. Our notion of kernel methods (\cref{defi:multi_out_kernel_methods}) slightly changes due to account for multi-output functions, and the main lower bound result is similar to that in the regression case: kernel methods must see a constant fraction of data before learning. 

\begin{definition} \label{defi:multi_out_kernel_methods}
For $\cY\subseteq\R^p$, we say a learning algorithm $\cA$ is a \emph{kernel method} if before seeing the data it first picks a (potentially random) kernel $K$ on $\cX$ such that for any $x, x' \in \cX$, $K(x, x')$ is a $p$ by $p$ positive semi-definite matrix, and then outputs some hypothesis $h$ based on $\dset_\train$ such that there exist $\{\lambda_i\}_{i=1}^n\in\R^p$ for which $h(\cdot) = \sum_{i=1}^n K(\cdot,\vx_i)\lambda_i$.
\end{definition}

We next establish the permutation-equivariance of gradient-based learning algorithms in the classification setting. They key difference is that in the multi-output setting, gradient-based learning algorithms are equivariant to permutations on \emph{both input data and output labels}. More details are available in \Cref{app:sec:permutation_equivariance}.

\begin{restatable}[Permuation Equivariance in Classification]{proposition}{classificationpermeq}
\label{prop:permutation_equivaraince_2d}
    We define the \emph{permutation group} $\cG_{\cX, \cY}$ on $\cX \times \cY$ (defined in \Cref{sec:setup}) as the group
    \[
    \left\{
    (e_a, e_b), e_c \mapsto (e_{\sigma_1(a)}, e_{\sigma_2(b)}), e_{\sigma_3(c)}: \sigma_1, \sigma_2, \sigma_3 \in \sS_p
    \right\}.
    \]
     Training a two-layer neural network with quadratic activations (such as $f(\theta; x))$ initialized with random i.i.d. parameters using gradient-based update rules (as defined in \Cref{def:gradient_based_training_alg})\footnote{GD is an example of such algorithm.} on the modular addition problem in the classification setting is an equivariant learning algorithm (as defined in \Cref{defi:eqvariance}) with respect to $\cG_{\cX, \cY}$.\footnote{A similar result holds for adaptive algorithms like Adam as well, as discussed in \cref{lem:adam_nn_permutation_invaraince_3d}.}
\end{restatable}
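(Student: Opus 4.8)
The plan is to exhibit an explicit linear action of $\cG_{\cX,\cY}$ on the parameter space that (i) intertwines with the network map $f$, (ii) leaves the (regularized) training objective invariant, (iii) commutes with the gradient-based update rule, and (iv) preserves the i.i.d.\ initialization distribution; the equivariance claim then drops out by coupling the two training runs. This mirrors the proof of \Cref{th:permutation_equivaraince_3d}, the one genuinely new wrinkle being that here $\sigma_3$ relabels the \emph{outputs} rather than part of the input, so it must be tracked through the second layer.

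Concretely, I would fix $g=(\sigma_1,\sigma_2,\sigma_3)\in\cG_{\cX,\cY}$, let $Q_{\sigma_i}$ be the permutation matrix with $Q_{\sigma_i}e_a = e_{\sigma_i(a)}$, set $T_g \triangleq \diag(Q_{\sigma_1},Q_{\sigma_2})\in\R^{2p\times 2p}$ and $R_{\sigma_3}\in\R^{p\times p}$, so the action on a labeled point is $\big((e_a,e_b),e_c\big)\mapsto\big(T_g(e_a,e_b),\,R_{\sigma_3}e_c\big)$, and define the induced parameter map $\Phi_g(W,V)\triangleq(W T_g^{\top},\,R_{\sigma_3}V)$. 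Since $\Phi_g$ merely permutes the columns of $W$ and the rows of $V$, it is an orthogonal coordinate permutation. A one-line computation using $T_g^{\top}T_g=I$ and the coordinate-wise action of the square gives the intertwining identity $f(\Phi_g(\theta);T_g x)=R_{\sigma_3}f(\theta;x)$. Next, softmax cross-entropy satisfies $\ell(R_{\sigma_3}z,R_{\sigma_3}e_c)=\ell(z,e_c)$ (permuting logits and the one-hot target together cancels), and $\norm{\Phi_g(\theta)}_\infty=\norm{\theta}_\infty$, so writing $\dset_\train^{\,g}$ for the transformed training set we get $\cL^{\lambda}_\ell(f,\Phi_g(\theta),\dset_\train^{\,g})=\cL^{\lambda}_\ell(f,\theta,\dset_\train)$.

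Granting these two facts, the rest is routine. Because $\Phi_g$ is linear and orthogonal and the objective is invariant, the chain rule yields $\nabla_\theta\cL^{\lambda}_\ell(f,\cdot,\dset_\train^{\,g})\big|_{\Phi_g(\theta)}=\Phi_g\!\big(\nabla_\theta\cL^{\lambda}_\ell(f,\cdot,\dset_\train)\big|_{\theta}\big)$, and any update rule built from past (possibly stochastic) gradients in the sense of \Cref{def:gradient_based_training_alg} therefore commutes with $\Phi_g$; by induction on the step count, the iterate produced from initialization $\theta_0$ on $\dset_\train$ and the iterate produced from $\Phi_g(\theta_0)$ on $\dset_\train^{\,g}$ remain $\Phi_g$-related at every step (coupling the minibatch indices between the two runs when the algorithm is stochastic). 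Finally, i.i.d.\ initialization of the entries of $W$ and $V$ is exchangeable under coordinate permutations, so $\Phi_g(\theta_0)\overset{d}{=}\theta_0$; coupling the two runs through a common draw of $\theta_0$ and pushing the intertwining identity through the output predictor gives $\cA(\dset_\train^{\,g})(T_g x)\overset{d}{=}R_{\sigma_3}\,\cA(\dset_\train)(x)$, which is exactly $\cG_{\cX,\cY}$-equivariance (for any equivariant stopping rule, e.g.\ a fixed step count).

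The one place that needs care rather than ideas is step (iii): checking that the abstract gradient-based update rule of \Cref{def:gradient_based_training_alg} — including accumulator/momentum states and the minibatch sampling — genuinely commutes with the coordinate permutation $\Phi_g$, and keeping the directions of the permutations straight (rows vs.\ columns, $\sigma$ vs.\ $\sigma^{-1}$) throughout. The full argument, including the adaptive-optimizer variant alluded to in the statement, is given in \Cref{app:sec:permutation_equivariance}.
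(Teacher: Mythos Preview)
Your proposal is correct and follows essentially the same route as the paper: define the parameter action $\Phi_g$ that permutes the columns of $W$ (via $\sigma_1,\sigma_2$) and the rows of $V$ (via $\sigma_3$), verify the intertwining $f(\Phi_g(\theta);T_g x)=R_{\sigma_3}f(\theta;x)$, check loss and initialization invariance, and conclude by induction on the update step. The paper packages this by reducing to the regression lemmas through the identity $f_k(U;(i,j))=g(U;(i,j,k))$ rather than re-deriving the intertwining for $f$ directly, but the content is identical; your explicit treatment of the cross-entropy invariance under joint logit/label permutation and of the $\ell_\infty$ regularizer is a welcome bit of extra care.
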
 

We say a learning algorithm is \emph{input-output permutation-equivariant} it is equivariant with respect to $\cG_{\cX, \cY}$. The following theorem establishes that any such kernel learning algorithm needs at least $n = \Omega(p^2)$, or equivalently $\frac n N = \Omega(1)$, to generalize on the modular addition task in the classification setting.
\begin{restatable}[Kernel Lower Bound]{theorem}{kernellowerboundclass}
\label{th:kernel_lower_bound_2d}
    There exists a constant $C > 0$ such that for any training data size $n < C p^2$ and any kernel method $\cA$ which is input-output permutation-equivariant (with respect to $\cG_{\cX, \cY}$), it holds that 
    \begin{equation*}
        \E_{ (\vx_i,y_i)_{i=1}^n\sim \cD^n}\E_{\cA}\cL_{\ell_2}  \left(\cA\left( \{\vx_i,y_i\}_{i=1}^n\right) \right) \ge \frac{1}{2} \; \cL_{\ell_2} \left(\Psi_{\boldsymbol{0}}\right),
    \end{equation*}
    where $\E_{\cA}$ takes expectation over the randomness in algorithm $\cA$. 
\end{restatable}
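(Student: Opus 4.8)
The plan is to run the same argument as for \Cref{th:kernel_lower_bound_3d}, now in the multi-class setting, where two things change: the relevant symmetry group $\cG_{\cX,\cY}$ permutes output labels as well as the two inputs, and a multi-output kernel predictor on $n$ points lies in a subspace of $\R^{\cX\times[p]}\cong\R^{p^3}$ of dimension at most $np$ rather than $n$. Identify a hypothesis $h:\cX\to\R^p$ with the vector $(h(e_a,e_b)_c)_{a,b,c\in[p]}\in\R^{p^3}$, and let $y^\star$ be the ground-truth vector $y^\star_{(a,b,c)}=\mathbf 1[a+b\equiv c]$, so that $\cL_{\ell_2}(h)=p^{-2}\|h-y^\star\|_2^2$ and $\cL_{\ell_2}(\Psi_{\boldsymbol 0})=p^{-2}\|y^\star\|_2^2=1$; the claim is thus that $\E\cL_{\ell_2}(\cA(\cdot))\ge\tfrac12$ whenever $n<Cp^2$.

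First I would perform the equivariance reduction. For $\pi=(\sigma_1,\sigma_2,\sigma_3)\in\sS_p^3$ let $y^\star_\pi$ be the permuted ground truth, $y^\star_{\pi,(a,b,c)}=\mathbf 1[\sigma_3(\sigma_1^{-1}(a)+\sigma_2^{-1}(b))=c]$. Input-output permutation-equivariance of $\cA$ (which is exactly the theorem's hypothesis, and which for gradient descent is supplied by \Cref{prop:permutation_equivaraince_2d}), together with permutation-invariance of the uniform $\cD$, implies that the expected population $\ell_2$ loss of $\cA$ is the same whether the ground truth is $y^\star$ or any $y^\star_\pi$; hence it equals the average of these over $\pi\in\sS_p^3$. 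Coupling all tasks by drawing one sequence of i.i.d.\ uniform inputs $x_1,\dots,x_n$ and conditioning on them and on the (possibly random) kernel $K$, every predictor $h_\pi=\cA(\{(x_i,y^\star_\pi(x_i))\}_i)=\sum_i K(\cdot,x_i)\lambda_i$ lies in $V=\spn\{x\mapsto K(x,x_i)e_j:i\in[n],\,j\in[p]\}$, a subspace of dimension $\le np$ that does not depend on $\pi$ (note $K$ need not itself be equivariant, unlike in the ridgeless characterization of \Cref{thm:kernel_methods_equivariance}; all we use is that $\cA$ is a kernel method). Since $\|h_\pi-y^\star_\pi\|_2^2\ge\dist(y^\star_\pi,V)^2$, averaging over $\pi$ gives
\[
\tfrac1{|\sS_p^3|}\sum_{\pi}\dist(y^\star_\pi,V)^2=\|y^\star\|_2^2-\Tr(P_V M)\ge\|y^\star\|_2^2-\sum_{k=1}^{np}\lambda_k(M),
\]
where $M\triangleq\tfrac1{|\sS_p^3|}\sum_\pi y^\star_\pi(y^\star_\pi)^{\!\top}$ has eigenvalues $\lambda_1(M)\ge\lambda_2(M)\ge\cdots$\,. (This is precisely the situation handled by the general framework of \Cref{app:sec:general_lower_bound}, instantiated with intrinsic dimension $np$.) It therefore suffices to find an absolute $C>0$ with $\sum_{k=1}^{np}\lambda_k(M)\le\tfrac12\|y^\star\|_2^2=\tfrac12 p^2$ whenever $n<Cp^2$.

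Second I would compute the spectrum of $M$, which I regard as the heart of the argument. The key observation is that the law of $y^\star_\pi$ under uniform $\pi$ is invariant under coordinate-wise translations $(a,b,c)\mapsto(a+r,b+s,c+t)$ of $\sZ_p^3$, because left-multiplying a uniform $\sigma_i$ by a fixed shift is again uniform; hence $M$ commutes with this abelian group and is diagonal in the Fourier basis $\chi_{\vec k}(a,b,c)=\omega^{k_1a+k_2b+k_3c}$, $\omega=e^{2\pi i/p}$, with eigenvalues $\mu_{\vec k}=p^{-3}\E_\pi|\langle\chi_{\vec k},y^\star_\pi\rangle|^2$. Substituting $u=\sigma_1^{-1}(a),\,v=\sigma_2^{-1}(b)$ gives $\langle\chi_{\vec k},y^\star_\pi\rangle=\sum_{u,v}\omega^{-k_1\sigma_1(u)-k_2\sigma_2(v)-k_3\sigma_3(u+v)}$, and a short case analysis shows: $\mu_{\vec k}=0$ whenever $\vec k\ne0$ has a zero coordinate (the sum collapses to one containing a complete sum of $p$-th roots of unity), $\mu_0=p$, and $\mu_{\vec k}=\tfrac{p}{(p-1)^2}$ for each of the $(p-1)^3$ fully supported frequencies $\vec k$ (a two-permutation-image second-moment computation; as a sanity check, $\Tr M=p+(p-1)^3\cdot\tfrac{p}{(p-1)^2}=p^2=\|y^\star\|_2^2$). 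Hence, as long as $np\le(p-1)^3$,
\[
\sum_{k=1}^{np}\lambda_k(M)=p+(np-1)\,\tfrac{p}{(p-1)^2}\le p+\tfrac{np^2}{(p-1)^2}\le p+4n,
\]
which is at most $\tfrac12 p^2$ once, say, $n<\tfrac1{16}p^2$ and $p$ exceeds an absolute constant; the remaining finitely many $p$ then force $n=0$, making the bound trivial. This is the multi-output, output-permutation-aware analogue of \Cref{lem:m_dim_modular_addition_loss_lower_bound}.

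I expect the main obstacle to be getting the equivariance reduction exactly right in the multi-output setting: one must track how an element of $\cG_{\cX,\cY}$ acts on a function $\cX\to\R^p$ (pre-composition of the input together with an orthogonal relabeling of the $p$ output coordinates), check that this action preserves the $\ell_2$ population loss, and verify that $V$ is genuinely $\pi$-independent. The spectral computation, by contrast, is a bounded amount of Fourier bookkeeping once one notices the translation invariance, and everything downstream mirrors the regression proof of \Cref{th:kernel_lower_bound_3d}.
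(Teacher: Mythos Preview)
Your proposal is correct and complete, but it takes a genuinely different route from the paper's own proof. The paper does not recompute anything: it observes that one classification datum $((e_a,e_b),e_c)$ is equivalent to a block of $p$ regression data $\{((e_a,e_b,e_k),\mathbf 1[k=c])\}_{k\in[p]}$, defines the scalar kernel $K'\big((i,j,k),(i',j',k')\big)=K\big((i,j),(i',j')\big)_{k,k'}$, and then literally invokes the regression lower bound (\Cref{thm:kernel_lower_bound}, the $m=3$ instance of \Cref{lem:m_dim_modular_addition_loss_lower_bound}) with $np$ points instead of $n$. The equivariance reduction is the same as yours, but the spectral step is outsourced: the paper never diagonalizes $M$; it bounds $\sum_{i\le np}\lambda_i(M)$ by projecting onto the ``slice'' subspace $V_s$ (\Cref{lem:bound_top_eigvals_vs_first_projected_eigval}) and controlling $\lambda_1\big((I-P_{V_s})M(I-P_{V_s})\big)$ via the combinatorial identity of \Cref{lem:bounded_vh_inner_prod}.

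Your direct Fourier diagonalization is cleaner here and yields the exact spectrum (one eigenvalue $p$, $(p-1)^3$ eigenvalues $\tfrac{p}{(p-1)^2}$, the rest zero), whereas the paper's projection argument only gives an upper bound. The paper's approach, on the other hand, is more modular (it reuses the regression machinery verbatim) and scales uniformly to $m$ summands without having to redo the character-sum case analysis. Either way, the two proofs compute or bound the same quantity $\sum_{i\le np}\lambda_i(\Sigma_\cD)$; your translation-invariance observation is simply a sharper way to do so in the $m=3$ case.
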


\begin{figure*}[!t]
    \centering
    \begin{subfigure}[b]{0.35\textwidth}
        \includegraphics[width=\textwidth]{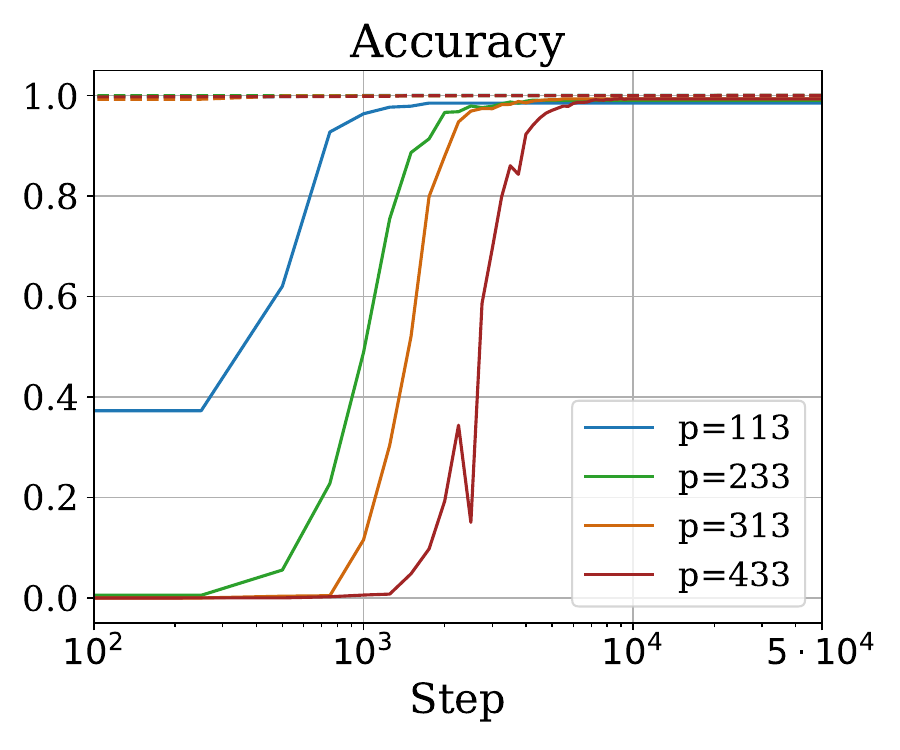}
        \label{fig:class:acc}
    \end{subfigure}
    \hspace{0.05\textwidth}
    \begin{subfigure}[b]{0.35\textwidth}
        \includegraphics[width=\textwidth]{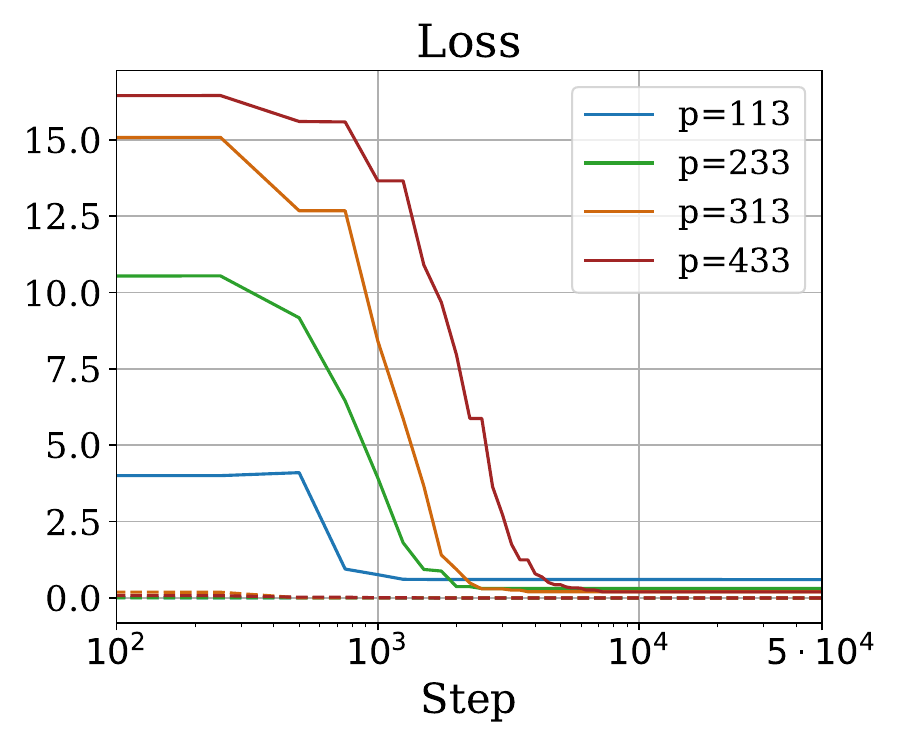}
        \label{fig:class:w_norm}
    \end{subfigure}
    \vspace*{-5mm}
    \caption{
        \textbf{Empirical investigation of grokking in the classification setting with tiny $\ell_\infty$ regularization on different problem dimensions $p$}. Networks are trained with normalized gradient descent\protect\footnotemark[8] on cross-entropy loss and an $\ell_\infty$ regularization of $10^{-20}$ strength. The dashed lines in the indicate train set statistics, and the solid lines correspond to the test set.
    }
    \vspace*{-2mm}
    \label{fig:classification_experiments}
\end{figure*}

Intuitively, the classification setting is not very different from the regression setting. They share the same goal and the amount of knowledge contained in each data in the classification setting is exactly equal to $p$ data in the regression setting, where these $p$ data share the same first two coordinates and only differ in the last coordinate.\footnotetext[8]{We refer the reader to \cref{app:experiment_setup} for the definition of normalized GD.}

More formally, we can define the following correspondence: given one data point in classification $(\vx,y)\in\mathbb{R}^{2p}\times\mathbb{R}^p$, we can view it as $p$ data points in regression, $\{((\vx,e_i),y_i)\}_{i=1}^p$, denoted by $F(\vx,y)$. Similarly, any function $\Psi$ mapping from $\vx = \{(e_i,e_j)\}$ to $\mathbb{R}^p$ can be viewed as a function mapping $\{(e_i,e_j,e_k)\}$ to $\mathbb{R}$ by defining $\Psi'(\vx,e_k)=[\Psi(\vx)]_k$. Moreover, these two functions $\Psi$ and $\Psi'$ share the same population $\ell_2$ loss. Under this view, matrix-valued kernel learning with $n$ classification data points is exactly the same as scalar-valued kernel learning with $np$ regression data points.

The only obstacle to directly applying \Cref{th:kernel_lower_bound_3d} is that the data distribution is different. For the regression setting, each data is sampled independently and uniformly, and the number of data points can be any integer; in the classification setting, the data points are sampled in independent groups and the number of data must be a multiple of $p$. However, it is easy to see this new distribution is still invariant under permutations as in \cref{def:permutation_3d}. Thus, we can directly apply \Cref{th:kernel_lower_bound_3d} on this new distribution to get \Cref{th:kernel_lower_bound_2d}. A formal version of this argument is available in \Cref{app:sec:classification_lower_bound}.

Therefore, for networks operating in the kernel regime, generalization to unseen data in $\ell_2$ loss is impossible unless $n / N = \Omega(1)$,
i.e.\ we have observed a constant fraction of all the $N = p^2$ possible data points. It is worth emphasizing, however, that a large $\ell_2$ loss does not necessarily imply a large classification error. It remains open to prove a classification error lower bound for permutation-equivariant kernel methods.

\subsection{Rich Regime} \label{subsec:classification_rich}

To analyze the behaviour of the network in the rich regime, we first introduce the notion of margin and elaborate on the margin-maximization implicit bias of gradient descent. For a multi-class classification problem with $p$ classes and a fixed network $f$, the margin of a data point $(\vx, y)$ is
\[
q(\theta; \vx, y) \triangleq f_y(\theta; \vx) - \max_{y' \ne y} f_{y'}(\theta; \vx)
.\]
The margin for a dataset $\dset$ is defined as the minimum margin of all points on the dataset: 
\[ 
q_{\min}(\theta;\dset) \triangleq \min_{(\vx, y) \in \dset} q(\theta; \vx, y).
\]

When the network $f$ is homogeneous with respect to its parameters (as is the case in our setup), one can observe that as long as $\theta$ linearly separates the dataset $\dset$ , \emph{i.e.}, $q_{\min}(\theta; \dset) > 0$, it is possible to arbitrary scale the minimum margin, through scaling the parameters of the network. Hence, in such homogenous networks, one is usually concerned with a \textbf{normalized margin} $q_{\min}(\theta/\norm{\theta}; \dset)$ according to some norm.

\citet{lyu2019gradient} proved that gradient descent on homogeneous models with the cross-entropy (or similar) losses on dataset $\dset$,
in the absence of explicit regularization,
maximizes the normalized margin.
Specifically, although $\lVert \theta \rVert_2 \to \infty$ as $t \to \infty$,
$\theta / \lVert \theta \rVert_2$ converges to
a solution (or more generally, a KKT point) of the following problem
when one exists:
\begin{equation} \label{eq:max_margin_def}
    \min \; \frac{1}{2} \norm{\theta}_2^2 \quad \text{s.t.} \quad q_{\min}(\theta; \dset) \ge 1
.\end{equation}

To establish our results in the classification setting, we borrow the following theorem from \citet{wei2020regularization}, who prove that when the strength of the regularization used in \eqref{eq:loss} is small enough, the maximum normalized margin of the regularized loss converges to that of the unregularized loss.

\begin{proposition}[\citealp{wei2020regularization}, Theorem 4.1] \label{prop:wei_convergence}
    Consider a positively homogeneous function $f$ with respect to parameters $\theta$ and a dataset $\dset_\train$ separable with $f$. Let $\lVert \cdot \rVert$ be any norm. Let $\gamma^*$ be the maximum normalized margin of the unregularized loss and $\gamma^\lambda$ be normalized margin of the minimizer of the regularized loss with strength $\lambda$. As $\lambda \to 0$, $\gamma^\lambda \to \gamma^*$.
\end{proposition}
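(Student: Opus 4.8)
The plan is to prove $\gamma^\lambda\to\gamma^*$ by separately establishing $\limsup_{\lambda\to0}\gamma^\lambda\le\gamma^*$ and $\liminf_{\lambda\to0}\gamma^\lambda\ge\gamma^*$. I will argue for the exponential surrogate $\ell(q)=e^{-q}$; cross-entropy is handled identically, since it is pinched between constant multiples of $e^{-q(\theta;\vx,y)}$ and that two-sided exponential control is all we use. Fix any norm $\norm{\cdot}$ and write the regularized objective (taking $\norm{\cdot}=\norm{\cdot}_\infty$ and $r=1$ recovers \eqref{eq:loss}) as $F_\lambda(\theta)=\tfrac1n\sum_{i=1}^n\ell\big(q(\theta;\vx_i,y_i)\big)+\lambda\norm{\theta}^{r}$, $r>0$; since $f$, hence each margin $q_i(\theta):=q(\theta;\vx_i,y_i)$, is positively $L$-homogeneous, $q_i(c\theta)=c^{L}q_i(\theta)$ and $\norm{c\theta}=c\norm{\theta}$ for $c>0$. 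Two preliminaries I would record first: $F_\lambda$ is continuous and coercive (the term $\lambda\norm{\theta}^{r}\to\infty$ as $\norm{\theta}\to\infty$, the data term being $\ge0$), so a global minimizer $\theta^\lambda$ exists in the finite-dimensional parameter space; and separability gives $\gamma^*=\max_{\norm{\hat\theta}=1}q_{\min}(\hat\theta;\dset_\train)>0$, attained at a unit vector $\theta^*$ because $\hat\theta\mapsto q_{\min}(\hat\theta;\dset_\train)$ is a minimum of finitely many continuous functions on the unit sphere.

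Next I would pin down the scale. Plug the trial point $\rho_\lambda\theta^*$ into $F_\lambda$ with the calibrated choice $\rho_\lambda^{L}=\tfrac1{\gamma^*}\log\tfrac1\lambda$: using $L$-homogeneity, $\norm{\theta^*}=1$, monotonicity of $\ell$, and $q_i(\theta^*)\ge\gamma^*$,
\[ F_\lambda(\rho_\lambda\theta^*)\;\le\;e^{-\rho_\lambda^{L}\gamma^*}+\lambda\rho_\lambda^{r}\;=\;\lambda+\lambda\big(\tfrac1{\gamma^*}\log\tfrac1\lambda\big)^{r/L}\;\xrightarrow[\lambda\to0]{}\;0. \]
Since $F_\lambda(\theta^\lambda)\le F_\lambda(\rho_\lambda\theta^*)$ and $F_\lambda(0)$ is a fixed positive constant (a homogeneous $f$ of positive degree vanishes at $\theta=0$), this forces $\theta^\lambda\ne0$ for small $\lambda$, so $\gamma^\lambda=q_{\min}(\theta^\lambda/\norm{\theta^\lambda};\dset_\train)$ is well defined and, by maximality of $\gamma^*$, satisfies $\gamma^\lambda\le\gamma^*$. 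Moreover $F_\lambda(\theta^\lambda)\to0$ forces the data term $\tfrac1n\sum_i e^{-q_i(\theta^\lambda)}\to0$, hence $q_{\min}(\theta^\lambda)\to\infty$ (so $\gamma^\lambda>0$ eventually), and a short compactness argument — the data term has a strictly positive minimum on any bounded ball — gives $\norm{\theta^\lambda}\to\infty$.

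The crux is the matching lower bound. From the displayed inequality, $e^{-q_{\min}(\theta^\lambda)}\le\sum_i e^{-q_i(\theta^\lambda)}\le n\,F_\lambda(\theta^\lambda)\le n\big(e^{-\rho_\lambda^{L}\gamma^*}+\lambda\rho_\lambda^{r}\big)$, while separately $\lambda\norm{\theta^\lambda}^{r}\le F_\lambda(\theta^\lambda)\le e^{-\rho_\lambda^{L}\gamma^*}+\lambda\rho_\lambda^{r}$. Taking $-\log$ in the first chain, using $q_{\min}(\theta^\lambda)=\norm{\theta^\lambda}^{L}\gamma^\lambda$ (homogeneity), and dividing by the bound on $\norm{\theta^\lambda}^{L}$ from the second chain gives
\[ \gamma^\lambda\;=\;\frac{q_{\min}(\theta^\lambda)}{\norm{\theta^\lambda}^{L}}\;\ge\;\frac{-\log\!\big(n\,(e^{-\rho_\lambda^{L}\gamma^*}+\lambda\rho_\lambda^{r})\big)}{\big(\rho_\lambda^{r}+\tfrac1\lambda e^{-\rho_\lambda^{L}\gamma^*}\big)^{L/r}}. \]
With $\rho_\lambda^{L}=\tfrac1{\gamma^*}\log\tfrac1\lambda$ we have $e^{-\rho_\lambda^{L}\gamma^*}=\lambda$, so the numerator is $\log\tfrac1\lambda-\log\!\big(n(1+\rho_\lambda^{r})\big)=(1-o(1))\log\tfrac1\lambda$ (as $\log(1+\rho_\lambda^{r})=O(\log\log\tfrac1\lambda)$) and the denominator is $(\rho_\lambda^{r}+1)^{L/r}=(1+o(1))\,\rho_\lambda^{L}=(1+o(1))\tfrac1{\gamma^*}\log\tfrac1\lambda$; hence $\liminf_{\lambda\to0}\gamma^\lambda\ge\gamma^*$. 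Combined with $\gamma^\lambda\le\gamma^*$, this yields $\gamma^\lambda\to\gamma^*$.

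The hard part — essentially the only place real work happens — is the calibration of $\rho_\lambda$: it must balance the data-fitting term $e^{-\rho_\lambda^{L}\gamma^*}$ against the regularizer $\lambda\rho_\lambda^{r}$ so precisely that the $-\log$ in the numerator retains the clean leading term $\log\tfrac1\lambda$ while the regularizer perturbs the denominator only by a $1+o(1)$ factor, and the whole argument genuinely exploits the exponential tail of the surrogate (for a polynomially-tailed loss the normalized margin would \emph{not} converge to $\gamma^*$), so the estimates must be written using only the two-sided exponential control that cross-entropy provides rather than a closed form. Everything else — existence and non-vanishing of $\theta^\lambda$, attainment of $\gamma^*$, the divergence $\norm{\theta^\lambda}\to\infty$ — is routine coercivity and compactness.
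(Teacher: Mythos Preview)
The paper does not supply a proof of this proposition; it is quoted verbatim as Theorem~4.1 of \citet{wei2020regularization} and used as a black box. There is therefore nothing in the paper to compare against.

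That said, your argument is correct and is essentially the proof given in \citet{wei2020regularization}: plug in the max-margin direction $\theta^*$ at a scale $\rho_\lambda$ chosen so that the data-fitting and regularization terms balance, use optimality of $\theta^\lambda$ to upper-bound both $\lVert\theta^\lambda\rVert$ and the training loss, and read off the normalized margin. Your calibration $\rho_\lambda^{L}=\tfrac{1}{\gamma^*}\log\tfrac{1}{\lambda}$ and the resulting $(1+o(1))$ computation are exactly the standard ones. The only cosmetic differences are that you work with a general exponent $r$ on the norm penalty (the paper uses $r=1$), and you treat cross-entropy by sandwiching it between constant multiples of the exponential surrogate rather than working with it directly; both are harmless.
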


That is, if we use a small enough regularization weight $\lambda$ with any norm in \eqref{eq:loss},
we will obtain approximately the same solution as the unregularized problem. This confirms that training our two-layer network using gradient descent with a small enough $\ell_\infty$ regularizer can lead to weights close to the solution of the max $\ell_\infty$-normalized margin problem. 
Next we present the main result of this subsection, a upper bound on test error, showing that all parameters whose $\ell_\infty$-normalized margin is close enough to the max $\ell_\infty$-normalized margin solution will generalize with a sample complexity of $\tilde{\bigO}(p^{5/3})$

\begin{theorem}[Upper Bound]\label{thm:final_margin_bound} Let $\delta \in (0,1)$ be positive constants such that $8p \le h = O(p)$ independent of $p$ and $n$. For any $n$ representing the size of the training set $\dset_\train$ it holds with probability at least $1-\delta$ over randomness of $\dset_\train$, for all interpolating solutions $\theta$ with normalized $\ell_\infty$-margin $\Omega(p)$, it holds that 
\begin{align}\label{eq:margin_test_error}
L_0(f, \theta, \dset) \le \tilde\bigO \left( \sqrt{\frac{p^{5/3}}{n}} \right).
\end{align}
In other words, any interpolating solution $\theta$ with approximately maximal normalized  $\ell_\infty$-margin, \emph{i.e.}, $  q_{\min}(\theta/\norm{\theta}_\infty; \dset_{\train}) \ge \Omega(\max_{\|\theta'\|_\infty \le 1} q_{\min}(\theta'; \dset_{\train}))$, has a sample complexity of $\tilde{\bigO}(p^{5/3})$, since \Cref{thm:small_inf_norm_exist} shows that the maximal normalized $\ell_\infty$-margin is at least $\Omega(p)$.  
\end{theorem}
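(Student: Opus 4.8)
The plan is to control the population $0$--$1$ error of any such $\theta$ through a margin-based PAC-Bayes argument in the spirit of \citet{McAllester2003SimplifiedPM}, applied to the $3$-homogeneous quadratic network with $\ell_\infty$-bounded weights. First I would normalize: since $f$ is $3$-homogeneous and $L_0(f,\theta,\dset)$ depends only on $\argmax_c f(\theta;\cdot)_c$, rescaling $\theta$ leaves $L_0$ unchanged, while $q_{\min}(\theta/\norm{\theta}_\infty;\dset_\train)$ is exactly the margin of $\bar\theta := \theta/\norm{\theta}_\infty$. So I may assume $\norm{\theta}_\infty = 1$ and $q_{\min}(\theta;\dset_\train) \ge \gamma$ for some $\gamma = \Omega(p)$; in particular every training point is classified with margin at least $\gamma$, so the empirical $\gamma$-margin loss vanishes. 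Then I would set up the PAC-Bayes skeleton with a data-independent Gaussian prior $P = \N(\theta_0,\sigma^2 \mathbf{I})$ --- with $\theta_0$ either $0$ or, if it helps the KL, the explicit Fourier construction of \Cref{thm:small_inf_norm_exist}, which is a fixed function of $p$ --- and posterior $Q = \N(\theta,\sigma^2\mathbf{I})$. Invoking the McAllester bound together with the standard fact that if $u\sim\N(0,\sigma^2\mathbf{I})$ satisfies $\prob_u[\sup_{x\in\cX}\norm{f(\theta+u;x)-f(\theta;x)}_\infty \le \gamma/4] \ge \tfrac12$ then $L_0(f,\theta,\dset)$ is controlled by the empirical $\gamma$-margin loss plus $\sqrt{(\KL(Q\|P)+\log\tfrac{2n}{\delta})/(2(n-1))}$, and noting $\KL(Q\|P) = \norm{\theta-\theta_0}_2^2/(2\sigma^2)$, the problem reduces to (i) taking $\sigma$ as large as the perturbation constraint allows and (ii) bounding $\norm{\theta-\theta_0}_2$.

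The technical heart is the output-perturbation bound for the \emph{quadratic} activation. Writing $z = W(e_a,e_b) = W_{:,a}+W_{:,p+b}$ (so $\norm{z}_\infty\le 2$ since $\norm{W}_\infty\le 1$) and $\delta_j := (U_W)_{j,a}+(U_W)_{j,p+b}$, the per-coordinate change decomposes as $f(\theta+u;x)_c - f(\theta;x)_c = \sum_j V_{cj}(2z_j\delta_j+\delta_j^2) + \sum_j (U_V)_{cj}(z_j+\delta_j)^2$. The terms linear in the noise are sub-Gaussian with variance $\bigO(\sigma^2 h)$ (using $\norm{z}_\infty\le 2$, $\norm{V_{c,:}}_2^2\le h$, $h = \bigO(p)$), hence at most $\bigO(\sigma\sqrt h\,\polylog(pn))$ uniformly over the $\le p^2$ distinct inputs and $p$ output coordinates after a union bound; the genuinely second-order term $\sum_j V_{cj}\delta_j^2$, which has no analogue for Lipschitz activations, is the delicate piece, since its mean $2\sigma^2\sum_j V_{cj}$ need not be small. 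The target is to show this second-order term is controlled well enough that $\sigma$ may be taken on the order of $p^{1/6}$ up to logarithmic factors (the linear constraint $\sigma\sqrt h\,\polylog(pn) \lesssim \gamma$ permits $\sigma$ as large as roughly $\sqrt p$, so it is not binding). Then, using $\norm{\theta-\theta_0}_2 \le \sqrt{\dim\theta}\cdot\bigO(1) = \bigO(p)$ since $\dim\theta = 3ph = \bigO(p^2)$ when $h = \bigO(p)$, we get $\KL(Q\|P) = \norm{\theta-\theta_0}_2^2/(2\sigma^2) = \tilde\bigO(p^2/p^{1/3}) = \tilde\bigO(p^{5/3})$, which plugged into the PAC-Bayes bound yields \eqref{eq:margin_test_error}. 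Since $\sigma$ depends on $\theta$ through $\gamma$ and the structural quantities above, I would finally take a union bound over a logarithmic grid of noise scales so the bound holds simultaneously for all qualifying $\theta$, at the cost of an extra $\log$ factor. The closing statement about approximately maximal $\ell_\infty$-normalized margin is then immediate: \Cref{thm:small_inf_norm_exist} shows the maximal normalized $\ell_\infty$-margin is $\Omega(p)$, so any $\theta$ achieving a constant fraction of it has $\gamma = \Omega(p)$ and falls under the above, and \Cref{prop:wei_convergence} guarantees that gradient descent with a small enough $\ell_\infty$-regularizer converges to such a $\theta$.

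I expect step (i), the quadratic-activation perturbation bound, to be the main obstacle. Unlike ReLU or any Lipschitz nonlinearity, the square produces a term that is quadratic in the injected noise with per-neuron mean $\Theta(\sigma^2)$; a naive treatment forces $\sigma = \bigO(1)$ and blows the KL up to $\Theta(p^2)$, giving only the weaker $\tilde\bigO(\sqrt{p^2/n})$ rate. Squeezing this down to $p^{5/3}$ requires exploiting the structure of large-$\ell_\infty$-margin, $\ell_\infty$-bounded, width-$\Theta(p)$ solutions (and possibly a structured prior centered at the Fourier construction) to keep $\sum_j V_{cj}$ and its fluctuations under control while still allowing $\sigma$ to grow polynomially in $p$. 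Everything else --- the multiclass bookkeeping, the union bounds over inputs, output coordinates, and noise scales, tracking the $\log n$ factors, and checking that the empirical $\gamma$-margin loss is exactly zero --- is routine in comparison.
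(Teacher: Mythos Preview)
Your overall architecture is exactly the paper's: normalize to $\|\theta\|_\infty = 1$ using $3$-homogeneity, apply the PAC-Bayes margin lemma of \citet{neyshabur2018a} with isotropic Gaussian prior and posterior, and balance the perturbation constraint against the KL. Your arithmetic is also right: once the perturbation bound scales like $\sqrt{h}\,\sigma^3$ (up to logs), the choice $\sigma \asymp p^{1/6}$ and the dimension count $\dim\theta = \Theta(p^2)$ give $\KL = \tilde O(p^{5/3})$ and hence \eqref{eq:margin_test_error}.

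The genuine gap is precisely the obstacle you flag in your last paragraph, and your proposed fixes will not work. The offending term is $\sum_j V_{cj}\delta_j^2$, whose mean is $2\sigma^2\sum_j V_{cj}$. You cannot hope that ``large-$\ell_\infty$-margin, $\ell_\infty$-bounded'' solutions automatically have small $\sum_j V_{cj}$; nothing in the hypotheses forces this, and one can add a constant to every entry of a row of $V$ (shifting all logits equally) without changing the margin. Centering the prior at the Fourier construction does not help either: the KL depends on $\|\theta - \theta_0\|_2^2$, not on $\sum_j V_{cj}$, and the quadratic-mean term in the perturbation bound involves the \emph{posterior} parameters $V$, not the prior.

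The paper's resolution is a clean architectural trick that you are missing. Given any $\theta' = (W',V')$ of width $h'$, form $\theta = (W,V)$ of width $h = 2h'$ by setting $W = \begin{bmatrix}W'\\ W'\end{bmatrix}$ and $V = \begin{bmatrix}V' & -V'\end{bmatrix}$. This new network computes the same function (hence has the same $L_0$ and the same margin), still satisfies $\|\theta\|_\infty \le r$, and now every row of $V$ sums to \emph{exactly} zero. With $\sum_j V_{cj} = 0$ the quadratic-noise term has mean zero, and a sub-exponential concentration bound gives it size $\tilde O(r\sigma^2\sqrt{h})$; combined with the other pieces the total perturbation is $\tilde O(\sqrt{h}\max(\sigma^3, r\sigma^2, r^2\sigma))$, which in the regime $\sigma > r = 1$ is governed by $\sqrt{h}\,\sigma^3$. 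From there your computation goes through verbatim. Note also that the paper fixes $\gamma$ and $r$ in advance (the bound in \Cref{th:generalization_bound_2d} is already uniform over $\theta'$ with $\|\theta'\|_\infty \le r$), so your grid-over-$\sigma$ union bound is unnecessary.
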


Our proof of \Cref{thm:final_margin_bound} is
based on the PAC-Bayesian framework \citep{McAllester2003SimplifiedPM},
specifically using
Lemma 1 of \citet{neyshabur2018a}.
This result provides a margin-based high probability generalization bound for any predictor based on the \emph{margin loss},
which counts a prediction as correct only if it predicts with a margin at least $\gamma$: 
\begin{equation} \label{eq:margin_loss_def}
    L_\gamma \left(f, \theta, \dset \right) \triangleq \Pr_{(\vx, y) \sim \dset} \Big[ q(\theta, \vx, y) > \gamma \Big]. \vspace*{-1mm}
\end{equation}
$L_0$ is simply the misclassification rate, or 0-1 error.
The following statement uses this result to prove an upper bound on the population 0-1 error of networks using margin loss on the training dataset. 

\begin{restatable}{theorem}{classgeneralizationbound}
\label{th:generalization_bound_2d}
For any $p \ge 2$, training set size $n \ge 1, \delta \in (0, 1)$, norm $r > 0$, width $h' > 4 \log \frac{2}{\delta}$ and normalized margin $\gamma/r^3 = \tilde\Omega(\sqrt{h})$ it holds with probability at least $1-\delta$ over the randomness of the training set $\cD_{\operatorname{train}}$ that for any $\theta'$ of width $h'$ with $\norm{\theta'}_\infty \le r$ 
\begin{align}\label{eq:margin_generalization}
L_0(f, \theta', \cD) \le L_{\gamma} (f, \theta', \dset_{\train}) + \tilde\bigO\left(\sqrt{\frac{p}{n}} \cdot \sqrt[3]{\frac{h^2}{\gamma / r^3}}\right)
\end{align}
where $\cD$ denotes the population and $f, L$ are defined in \Cref{eq:two_layer_classification_net,eq:margin_loss_def} respectively.
\end{restatable}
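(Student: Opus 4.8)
The plan is to instantiate the PAC--Bayesian margin bound of \citet{neyshabur2018a} (their Lemma~1, itself built on the framework of \citet{McAllester2003SimplifiedPM}) with an isotropic Gaussian prior and posterior over the network parameters. Fix the target $\theta' = (W',V')$ of width $h'$ with $\norm{\theta'}_\infty \le r$. For a scale $\sigma>0$ to be chosen, take the posterior $Q = \N(\theta', \sigma^2\mathbf{I})$ and the prior $P = \N(0, \sigma^2\mathbf{I})$, so that $\KL(Q\|P) = \norm{\theta'}_2^2/(2\sigma^2)$. Since $\theta'$ has at most $\Theta(ph')$ entries, each of magnitude at most $r$, this gives $\KL(Q\|P) = \bigO(ph' r^2/\sigma^2)$. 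The best $\sigma$ depends on $r$, and hence implicitly on the data through $\theta'$; this is handled in the usual way by a union bound over a countable grid of candidate values of $\sigma$, costing only a logarithmic factor that is absorbed into the $\tilde\bigO(\cdot)$.

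The core of the proof is verifying the perturbation hypothesis of \citet{neyshabur2018a}: $\sigma$ must be chosen so that, with probability at least $\tfrac12$ over $u = (U_W, U_V)\sim\N(0,\sigma^2\mathbf{I})$, we have $\max_{x\in\cX}\norm{f(\theta'+u;x) - f(\theta';x)}_\infty < \gamma/4$, with $f$ as in \eqref{eq:two_layer_classification_net}. Here I would exploit the extreme sparsity of the inputs: every $x = (e_a,e_b)$ has exactly two nonzero (unit) coordinates, so the preactivation $W'x$ has entries of magnitude at most $2r$, the first-layer noise $U_W x$ has each coordinate distributed exactly as $\N(0,2\sigma^2)$ (independently of which $(a,b)$ is plugged in), and expanding the square gives
\begin{equation*}
    f(\theta'+u;x) - f(\theta';x) = V'\bigl(2(W'x)\odot(U_W x) + (U_W x)^{\odot 2}\bigr) + U_V\bigl(W'x + U_W x\bigr)^{\odot 2}.
\end{equation*}
Applying Gaussian and $\chi^2$ tail bounds to each of the three resulting terms, with union bounds over the $p^2$ inputs, the $p$ output coordinates, and the $h'$ hidden units (this last is where the assumption $h' > 4\log\frac{2}{\delta}$ enters, ensuring a $\chi^2_{h'}$ deviation stays controlled with probability above $\tfrac12$), produces a perturbation bound of the schematic form $\tilde\bigO\bigl(r^2\sqrt{h}\,\sigma + r h\,\sigma^2\bigr)$. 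Setting this to $\gamma/4$ and solving for the largest admissible $\sigma$ fixes the rate: the linear-in-$\sigma$ term is what should give rise to the normalized-margin hypothesis $\gamma/r^3 = \tilde\Omega(\sqrt h)$, and the cube-root structure of the final bound is a signature of the $3$-homogeneity of $f$ (two factors of $W$ against one of $V$), which is also why the relevant complexity measure is $\gamma/r^3$ rather than $\gamma$ alone.

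Feeding the resulting $\sigma$ into $\KL(Q\|P) = \bigO(ph' r^2/\sigma^2)$ and then into the PAC--Bayes inequality $L_0(f,\theta',\cD) \le L_\gamma(f,\theta',\dset_\train) + 4\sqrt{\bigl(\KL(Q\|P) + \ln\tfrac{6n}{\delta}\bigr)/(n-1)}$, and simplifying, yields \eqref{eq:margin_generalization}. I expect the main obstacle to be the perturbation bound itself, and specifically the second-order term $V'(U_Wx)^{\odot 2}$: it is quadratic in the noise and has a nonzero, input-independent mean of size $\sim r h\sigma^2$, and taken at face value this forces a stronger requirement on $\gamma/r^3$ than $\tilde\Omega(\sqrt h)$. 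Getting around it --- e.g.\ by absorbing its constant per-coordinate mean shift into the posterior mean of $V$, by using anisotropic or structured noise on $W$, or by passing to the linear-in-$M_c$ reparametrization $f_c(\theta;x) = \langle W^\top\!\diag(V_{c,:})W,\, xx^\top\rangle$ and perturbing there --- together with tracking the $(\sigma,r,h,\gamma)$ dependence precisely enough that a cube-root rather than a square-root appears, is where the real work lies. The remaining ingredients, namely the crude bound $\norm{\theta'}_2^2 = \bigO(ph'r^2)$ and the grid union bound over $\sigma$, are routine.
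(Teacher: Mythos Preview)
Your overall plan---PAC--Bayes via \citet{neyshabur2018a} with isotropic Gaussian prior and posterior, plus a perturbation analysis exploiting the two-sparse inputs---is exactly the route the paper takes. You have also correctly identified the crux: the term $V'(U_W x)^{\odot 2}$ has a nonzero mean of order $r h\sigma^2$ per output coordinate, and naively this would force $\gamma/r^3 = \tilde\Omega(h)$ rather than $\tilde\Omega(\sqrt h)$, destroying the cube-root rate. Where your proposal has a genuine gap is that none of your suggested workarounds actually kills this mean. Absorbing a constant-in-$x$ shift into the posterior does not help because the shift $2\sigma^2 \sum_j V'_{c,j}$ depends on the output coordinate $c$ and therefore alters margins; anisotropic noise or reparametrization would change the KL term in ways you have not tracked.

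The paper's fix is a one-line trick you are missing: before doing any analysis, replace $\theta'=(W',V')$ of width $h'$ by the equivalent network $\theta=(W,V)$ of width $h=2h'$ with $W=\begin{bmatrix}W'\\W'\end{bmatrix}$ and $V=\begin{bmatrix}V'&-V'\end{bmatrix}$. This leaves every output unchanged, preserves $\norm{\theta}_\infty\le r$, and forces every row of $V$ to sum to zero. Now $\E\bigl[V_c^\top(U_W x)^{\odot 2}\bigr]=2\sigma^2\sum_j V_{c,j}=0$, so that term concentrates at scale $\sqrt h\,r\sigma^2$ instead of $h\,r\sigma^2$. With this in hand, the perturbation bound becomes $\tilde\bigO\bigl(\sqrt h\max(\sigma^3,r\sigma^2,r^2\sigma)\bigr)$; under the hypothesis $\gamma/r^3=\tilde\Omega(\sqrt h)$ the admissible $\sigma$ satisfies $\sigma>r$, so the $\sqrt h\,\sigma^3$ term dominates, and solving $\sqrt h\,\sigma^3\asymp\gamma$ gives $\sigma\asymp(\gamma/\sqrt h)^{1/3}$. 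Plugging into $\KL=\bigO(phr^2/\sigma^2)$ yields exactly the $\sqrt{p/n}\cdot(h^2/(\gamma/r^3))^{1/3}$ rate. Your schematic perturbation bound also omits the pure-noise $\sigma^3$ term $\tilde V^\top(U_W x)^{\odot 2}$, which is in fact the dominant one in the regime of interest and the source of the cube root.
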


\begin{proof}[Proof Sketch of \Cref{th:generalization_bound_2d}]
Through a series of concentration inequalities, we show that as long as the assumptions of \Cref{th:generalization_bound_2d} are met,  the output logits under gaussian perturbation on parameters, $f(\theta' + \tilde\theta'; \cdot)$,  where $\tilde\theta'$ is entry-wise Gaussian noise of mean zero and variance $\sigma^2$, are close to that of $f(\theta'; \cdot)$ up to an absolute difference of $\tilde\bigO(\sqrt{h}\sigma^3)$. The proof is completed by applying a PAC-bayesian generalization bound, e.g., using Lemma 1 of \citet{neyshabur2018a} and setting $\sigma$ based on the value of $\ell_\infty$-normalized margin $\gamma / r^3$ and width $h$. The full proof is available in \Cref{app:gen_bound}.
\end{proof}

Now we show how to derive \Cref{thm:final_margin_bound} using \Cref{th:generalization_bound_2d}. Because \Cref{th:generalization_bound_2d} only relies on the normalized margin and our model is 3-homogeneous, without loss of generality, we can fix norm bound $r=1$. Then it suffices to set $8p<h=O(p)$ and $\gamma = \Omega(p)$ in \Cref{eq:margin_generalization}, where the first term becomes $0$ because \Cref{thm:final_margin_bound} assumes $\theta$ has normalized margin $\Omega(p)$ and the second term becomes $\tilde O(\sqrt{\frac{p^{5/3}}{n}})$.
\Cref{thm:final_margin_bound} confirms and explains previous observations \citep[e.g.][]{power2022grokking,gromov2023grokking,nanda2023progress,liu2022omnigrok} on the minimum threshold for the fraction of data used to achieve generalization.
In combination with Theorem 4.2 of \citet{lyu2019gradient}, this shows that with enough training data, gradient descent will eventually find a generalizing solution for this setting of the modular addition problem.

\begin{figure*}[t]
    \centering
    \begin{subfigure}[b]{0.32\textwidth}
        \includegraphics[width=\textwidth]{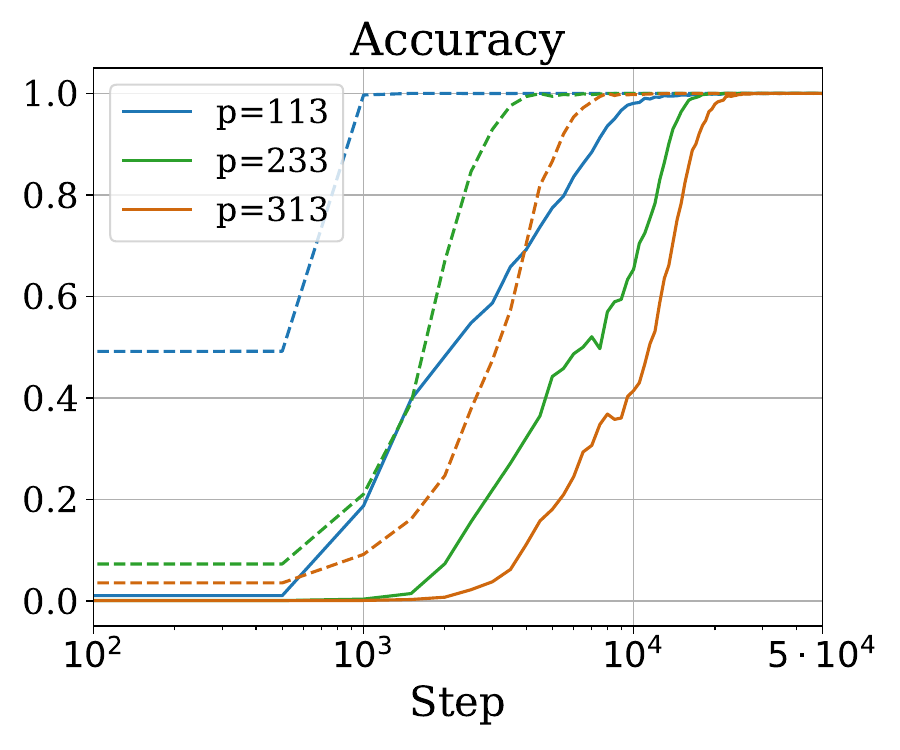}
        \label{fig:transformer:acc}
    \end{subfigure}
    \hfill
    \begin{subfigure}[b]{0.32\textwidth}
        \includegraphics[width=\textwidth]{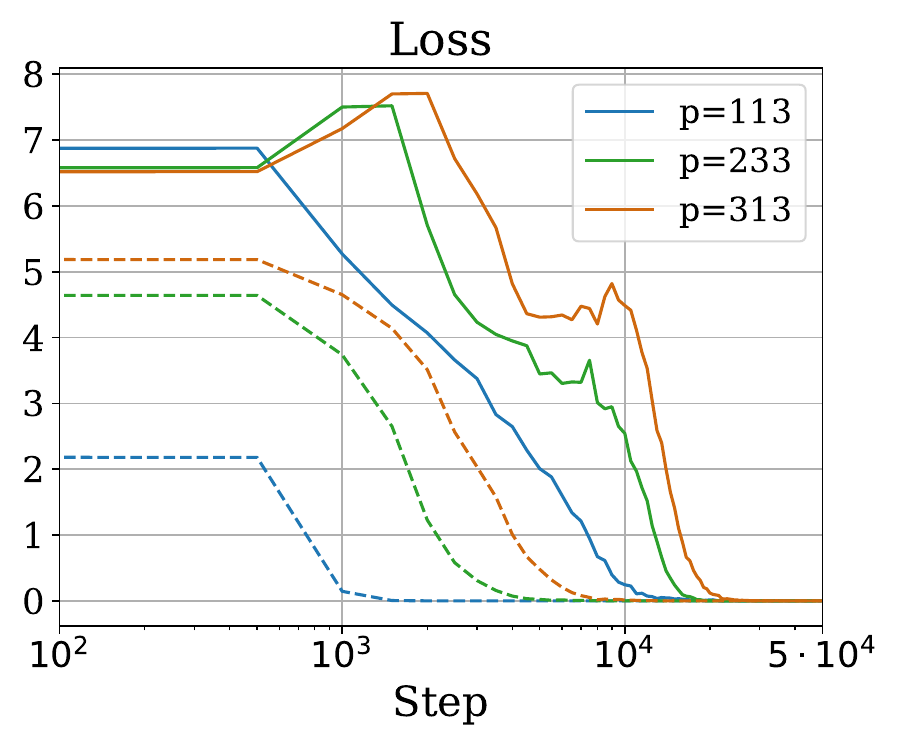}
        \label{fig:transformer:loss}
    \end{subfigure}
    \hfill
    \begin{subfigure}[b]{0.32\textwidth}
        \includegraphics[width=\textwidth]{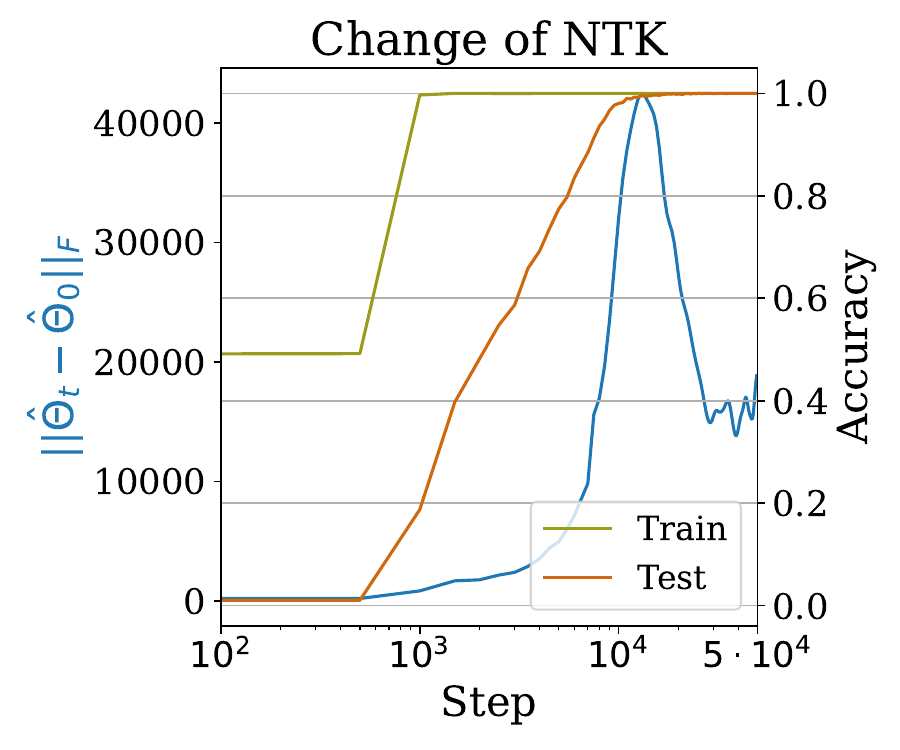}
        \label{fig:transformer:entk}
    \end{subfigure}
\vspace*{-3mm}
    \caption{%
        \textbf{Grokking in transformers happens after a delayed transition from kernel to rich regime.}
        A one-layer transformer is trained with gradient descent using cross-entropy loss and a tiny $\ell_\infty$ regularization of $10^{-20}$ strength on $2 \times p^{5/3}$ training samples from the modular addition problem with various $p$s. Change of eNTK up to the point of fitting the training set is negligible. The eNTK has a drastic change only after fitting the whole training set, implying minimal feature learning until past overfitting. The dashed lines in the middle and left figures indicate train set statistics, and the solid lines correspond to the test set. 
    }
    \vspace*{-2mm}
    \label{fig:transformer}
\end{figure*}

\vspace*{3mm}
\textbf{Empirical Verification of Our Theory:} In terms of whether gradient descent finds a model satisfying these conditions: \cref{fig:classification_experiments} empirically evaluates the $\ell_\infty$ norm of the learned weights through gradient descent on the classification task, with tiny $\ell_\infty$ regularization (weight of $10^{-20}$), across different values of $p$. Again, it can be seen that the $\ell_\infty$ norm of the weights of the network do not grow with the problem dimension. These experiment use $ n = 2 \, p^{5/3}$ data points for each $p$, with a learning rate of 10.

Similar to the regression setting, \cref{fig:summary} presents empirical evidence on the impact of kernel regime on the poor generalization capabilities in the early phase of training by showing the minimal change of empirical NTK until after overfitting in the classification setting. 

Through changing the \emph{scale of initialization}, in \cref{fig:summary} we demonstrate that by decreasing the scale of initialization in the classification task one can mitigate grokking such that the gap between overfitting and generalization becomes larger or smaller. This further supports that as long as the network lies in the kernel regime, generalization without having access to the a constant fraction of the dataset is impossible.

\vspace*{3mm}
\textbf{Grokking Modular Addition in Transformers:} \label{sec:experiments}\Cref{fig:transformer} suggests that, similarly to the two-layer network, grokking in the original Transformer studied by \citet{power2022grokking} might be explainable through the same mechanism. In fact, \cref{th:permutation_equivaraince_3d}, albeit with small modifications to incorporate the shared embeddings in the transformer architecture, applies to transformers as well.

\section{Additional Related Works}
\textbf{Limitations imposed by Equivariance of Learning Algorithms.} \citet{abbe2022nonuniversality} study the impact of equivariance of the training algorithms on the efficiency of learning different functions on different networks. In particular, they consider two main setups: \textbf{a)} learning with FCNs using noisy GD with clipped gradients throughout the training, and \textbf{b)} learning a specific instance of the modular addition task ($p=2$ with noisy inputs) with FCNs using SGD. Although their approach in studying lower bounds for efficient learning shares some high level similarity with ours in using equivariance of the training algorithm, the settings considered are significantly differs from ours. Moreover, in \Cref{app:sec:general_lower_bound} we present a novel abstract framework for analyzing lower bounds on population $\ell_2$ loss for general function classes. \citet{JMLR:v23:20-940} present another technique in analysis of population loss lower bounds, which shares some high-level similarity with our framework, albeit their analysis is more restrictive on function classes. \citet{ng2004feature} also discusses rotational equivariance of many learning algorithms and presents a general lower bound on the 0-1 population error of such algorithms in the general case.

\textbf{Margin Maximization as the Late Phase Implicit Bias.}
\citet{morwani2023feature} present an analytical solution for the max-margin solution of learning modular addition in a classification setting similar to \Cref{sec:classification} when using all of the dataset in training the network. Similar to our analysis of the rich regime in this setting, they face difficulties in proving results under the assumption of bounded $\ell_2$ norm for weights of the network, and assume an $\ell_{2,3}$ bound instead.

\section{Conclusion} \label{sec:discussion}

In this work, we studied the phenomenon of grokking in learning modular addition with gradient descent on two-layer networks, modeled as regression or classification. 
We showed that learning modular addition as presented is fundamentally a difficult task for kernel models (for example neural networks in kernel regime) due to the inherent symmetry and permutation-equivariance of the task.
We theoretically established this difficulty by presenting sample complexity lower bounds of order of constant fraction of the whole dataset.
We further showed that networks satisfying certain conditions generalize far better than those in the kernel regime,
that such networks exist,
and showed empirical evidence that simple regularized gradient descent can eventually find them,
once it escapes the kernel regime.
These results, in combination, attempt to address \textbf{\textit{why}} grokking is observed when learning modular addition.
We provide strong evidence to the hypothesis \citep{lyu2023dichotomy,kumar2023grokking}
that, on this important problem,
it is indeed due to a separation between kernel and non-kernel behavior of gradient descent.

\textbf{Future Work.} We have guaranteed large $\ell_2$ population loss for networks in kernel regime on both regression and classification settings.
It is possible, however, for networks to have arbitrarily high $\ell_2$ loss 
while having perfect classification accuracy.
We conjecture that impossibility results for accuracy-based generalization may also be possible based on permutation equivariance in the early phase of training, but leave it as future work.
Moreover, we only study the cause of grokking in these settings, but do not analyze possible training techniques to enable quick generalization on this task.
Although we are able to eliminate grokking through changing the scale of initialization, doing so actually slows down the time to final generalization; finding practical methods to enable quick generalization would be more useful.

\section*{Acknowledgments}
We would like to thank Kaifeng Lyu and Wei Hu for helpful discussions. This work was enabled in part by support provided by the Natural Sciences and Engineering Research Council of Canada,
the Canada CIFAR AI Chairs program,
Advanced Research Computing at the University of British Columbia,
Calcul Québec,
the BC DRI Group,
and the Digital Research Alliance of Canada. 

\printbibliography

\newpage
\appendix
\section{Experimental Setup} \label{app:experiment_setup}

In this section, we briefly explain the setup used for our experimental evaluations. 

\subsection{Regression}
We use vanilla gradient descent with squared loss and tiny $\ell_\infty$ regularization for 50,000, 100,000 or 200,000 steps for each experiment. In regression, our learning rate has been fixed to 1, and the regularization strength has been set to $10^{-4}$. The network has been initialized according to \citet{he2015delving}. The amount of data used for training in regression task has been set to $2 \times p^{2.25}$.

\subsection{Classification}
We use vanilla gradient descent with cross-entropy loss and tiny $\ell_\infty$ regularization, for up to 100,000 steps. The learning rate in the presented experiments was set to 10 and was kept constant during the training. The regularization strength of $\ell_\infty$ regularizer has been set to $10^{-20}$. 
To accelerate the training with cross-entropy loss, we use the "normalized" GD trick, where the learning rate of each step is scaled by the inverse of the norm of the gradient: 
\begin{equation}
    \theta_{t+1} = \theta_t - \eta \, \frac{\nabla_\theta \ell(\theta_t)}{\norm{\nabla_\theta \ell(\theta_t)}_2} 
\end{equation}
where $\ell$ denotes the loss function and $\eta$ denotes the learning rate. 
The network has been initialized according to \citet{he2015delving}. The amount of data used for training in regression task has been set to $2 \times p^{5/3}$.

\subsection{Transformer}
To train the one-layer transformer we have used full-batch gradient descent with a learning rate of $\eta = 0.25$ and a tiny $\ell_\infty$ regularization with the strength of $1.0e-20$. The network has been initialized according to default PyTorch initialziation (migrated ot JAX). We have used $2 \times p^{5/3}$ of the data for training.

\subsection{Logistics}
We used the JAX framework \citep{jax2018github} to implement and run the experiments on machines using NVIDIA V100 or A100 GPUs.

\section{Ability of Neural Tangent Kernel Models to Interpolate} \label{app:ker-interpolate}

In this section,
we prove \cref{thm:mainbody-ntk},
that neural tangent kernel models for our one-hidden-layer quadratic network are able to exactly interpolate their training data,
in the regression setting.

If the training set contains duplicate $\vx$,
say $\vx_i = \vx_j$ for $i \ne j$,
then $\mathbf K$ cannot be full rank -- the $i$th and $j$th rows are necessarily identical.
Of course, there also exist unattainable targets;
just set $y_i \ne y_j$.
Call a labeling \emph{consistent}
if for all $i$ and $j$ such that $\vx_i = \vx_j$,
$y_i = y_j$.
A learning algorithm can achieve any consistent labeling
if and only if
that algorithm applied to the largest distinct subset
(e.g.\ removing $j$ but keeping $i$)
can achieve any labeling.
Thus, we assume without loss of generality in the remainder of this section
that the training set contains no duplicates.

\subsection{Full-Rank Kernels are Expressive}

Recalling that neural tangent kernel models correspond to ``ridgeless'' kernel regression \citep[e.g.][]{jacot2018neural,lee2019wide},
we first notice that this method can interpolate \emph{any} set of training labels when the kernel is strictly positive definite.

Specifically,
empirical neural tangent kernel regression corresponds to ridgeless regression with a prior mean corresponding to the network at initialization $f_0$,
\[
    \operatornamewithlimits{argmin}_{f : \forall i,\; f(\vx_i) = y_i} \lVert f - f_0 \rVert_{K}
    = f_0 + 
    \operatornamewithlimits{argmin}_{f : \forall i,\; f(\vx_i) = y_i - f_0} \lVert f \rVert_{K}
    = f_0 + \sum_{i=1}^n K(\cdot, \vx_i) \lambda_i
    \;\; \text{ for } \lambda = \mathbf K^\dagger \mathbf{y'}
,\]
where
$\mathbf K^\dagger$ is the Moore-Penrose pseudoinverse of the $n \times n$ kernel matrix $[K(\vx_i, \vx_j)]_{ij}$,
and $\mathbf{y'}$ has $i$th entry $y_i - f_0(\vx_i)$.
By changing variables between $\mathbf{y'}$ and $\mathbf y$,
we need not explicitly consider $f_0$ in the following two results.

\begin{lemma}\label{thm:kernel_interpolating}
    Let $\dset_\train = \{ (x_i, y_i) \}_{i \in [n]}$,
    and let $K$ be a kernel such that the kernel matrix
    $\mathbf K = [K(x_i, x_j)]_{ij}$ is strictly positive definite.
    Then kernel ridgeless regression achieves zero training error on $\dset_\train$.
\end{lemma}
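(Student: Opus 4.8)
The plan is to verify directly that the kernel ridgeless regression predictor interpolates the training data whenever $\mathbf K$ is invertible, which under strict positive definiteness it is.

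First I would recall the closed form of the ridgeless kernel regression solution: the predictor is $\Psi(\cdot) = \sum_{i=1}^n K(\cdot, x_i)\lambda_i$ with coefficient vector $\lambda = \mathbf K^{\dagger}\mathbf y$, where $\mathbf K = [K(x_i,x_j)]_{ij}$ and $\mathbf y = (y_1,\dots,y_n)^\top$. Evaluating this predictor at a training point $x_j$ gives $\Psi(x_j) = \sum_{i=1}^n K(x_j, x_i)\lambda_i = (\mathbf K \lambda)_j$, so the vector of predictions on the training set is exactly $\mathbf K \lambda = \mathbf K \mathbf K^{\dagger}\mathbf y$.

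Next I would use the hypothesis that $\mathbf K$ is strictly positive definite, which means all its eigenvalues are strictly positive, hence $\mathbf K$ is invertible and $\mathbf K^{\dagger} = \mathbf K^{-1}$. Therefore $\mathbf K \mathbf K^{\dagger}\mathbf y = \mathbf K \mathbf K^{-1}\mathbf y = \mathbf y$, i.e.\ $\Psi(x_j) = y_j$ for every $j \in [n]$. This shows the training error is zero. (One could alternatively avoid invoking invertibility and just note that $\mathbf K \mathbf K^{\dagger}$ is the orthogonal projection onto the column space of $\mathbf K$, and $\mathbf y$ lies in that column space precisely when $\mathbf K$ is full rank — but since strict positive definiteness already gives invertibility, the direct argument is cleanest.)

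There is essentially no main obstacle here: the statement is a one-line linear-algebra fact once the closed form for ridgeless regression is in hand. The only thing to be careful about is the bookkeeping around the prior mean $f_0$ from the empirical NTK setup — but as the surrounding text already explains, by changing variables from $\mathbf y$ to $\mathbf y' = \mathbf y - (f_0(x_i))_i$ we may assume $f_0 = 0$ without loss of generality, so the lemma as stated is exactly the clean linear-algebra claim. I would therefore keep the proof to the two displayed computations above.
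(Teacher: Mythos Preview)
Your proposal is correct and takes essentially the same approach as the paper: write the training predictions as $\mathbf K \mathbf K^{\dagger}\mathbf y$, observe that strict positive definiteness gives $\mathbf K^{\dagger}=\mathbf K^{-1}$, and conclude the predictions equal $\mathbf y$. The paper's own proof is the same two-line argument, with the same remark about absorbing $f_0$ into $\mathbf y$.
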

\begin{proof}%
    This well-known result follows from the fact that
    $[\hat f(\vx_i)]_{i} = \mathbf K \mathbf K^\dagger \mathbf y$.
    If $\mathbf K$ is strictly positive definite,
    $\mathbf K^\dagger = \mathbf K^{-1}$,
    and so $[\hat f(x_i)]_{i} = \mathbf y$.
\end{proof}

The following result is a partial converse.
\begin{lemma} \label{thm:krr-noninterp}
    Let $\{ x_i \}_{i \in [n]}$
    and $K$ be a kernel
    such that the kernel matrix
    $\mathbf K = [K(x_i, x_j)]_{ij}$ is singular.
    Then there exists an assignment of $y_i \in \R$
    such that kernel ridgeless regression
    achieves nonzero training error on
    $\dset_\train = \{ (x_i, y_i) \}_{i \in [n]}$.
\end{lemma}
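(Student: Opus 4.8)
The plan is to read the bad label vector directly off the null space of the singular kernel matrix. First I would recall from \Cref{defi:kernel_methods} that ridgeless kernel regression sets $\bm\lambda = \mathbf K^\dagger \mathbf y$, so the vector of predictions on the training inputs is $[\Psi(\vx_i)]_{i} = \mathbf K \bm\lambda = \mathbf K \mathbf K^\dagger \mathbf y$. Since $\mathbf K$ is symmetric and positive semidefinite, $\mathbf K \mathbf K^\dagger$ is the orthogonal projection onto $\operatorname{range}(\mathbf K)$, and the null space of $\mathbf K$ equals $\operatorname{range}(\mathbf K)^\perp$ (equivalently, $\mathbf K^\dagger$ annihilates $\ker \mathbf K$, as one sees from the spectral decomposition).

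Next, because $\mathbf K$ is singular, $\ker \mathbf K \neq \{\vzero\}$, so I would pick any $v \in \ker \mathbf K$ with $v \neq \vzero$ and define the labels by $y_i = v_i$ for $i \in [n]$. Then $\bm\lambda = \mathbf K^\dagger v = \vzero$, hence the fitted predictor is identically zero on the training inputs, and the training (square) error is $\tfrac{1}{n}\sum_{i=1}^n (0 - v_i)^2 = \tfrac{1}{n}\norm{v}_2^2 > 0$. This establishes the statement. Since this section assumes the training inputs are distinct, the labeling $y = v$ is automatically consistent, so there is nothing extra to check on that front (and, if one preferred, any $\mathbf y$ with a nonzero component in $\ker\mathbf K$ would work equally well, giving error equal to the squared norm of that component divided by $n$).

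I do not expect a genuine obstacle here: the whole argument is the one-line fact that $\mathbf K \mathbf K^\dagger$ is the projector onto $\operatorname{range}(\mathbf K)$, which fails to be the identity exactly when $\mathbf K$ is rank-deficient. The only points worth stating carefully in the write-up are (i) the identification of the ridgeless-regression fit with that projector applied to $\mathbf y$, and (ii) that the hypothesis is a genuine rank deficiency of $\mathbf K$ — which, for our empirical neural tangent kernel, is precisely what happens when $h < n/(3p)$ by the converse half of \Cref{thm:mainbody-ntk}, so this lemma is exactly what is needed to convert that rank bound into the claimed non-interpolation.
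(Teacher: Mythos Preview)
Your proposal is correct and follows essentially the same approach as the paper: pick a nonzero vector $\mathbf y$ in the null space of $\mathbf K$ (which exists by singularity), observe that $\mathbf K^\dagger \mathbf y = \vzero$ so the training predictions $\mathbf K \mathbf K^\dagger \mathbf y$ vanish, and conclude nonzero training error. The paper's proof is just the terse version of what you wrote.
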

\begin{proof}
    Let $\mathbf y$ be any nonzero vector in the null space of $\mathbf K$; such vectors exist since $\mathbf K$ is singular.
    Then $\mathbf y$ is also in the null space of $\mathbf K^\dagger$,
    and the training set predictions
    are $\mathbf K \mathbf K^\dagger \mathbf y = 0 \ne \mathbf y$.
\end{proof}

\subsection{Expected NTK is Full-Rank}

We next show that, in the regression setting,
the \emph{expected} neural tangent kernel is strictly positive definite.
While perhaps of interest of its own accord,
this will be a key component in our analysis of finite-width networks that follows.

\begin{proposition} \label{prop:expected-kern-interp}
    Let the entries of $W$ follow a distribution $\mathcal P_W$,
    all mutually independent.
    Assume
    $\mathcal P_W$
    has mean zero,
    variance $\sigma_W^2 > 0$,
    skewness zero,
    and kurtosis $\kappa_W$ such that $\E_{w \sim \mathcal P_W} w^4 = \kappa_W \sigma_W^4$.
    The entries of $V$ can be arbitrary.
    Then the expected neural tangent kernel for the regression network with any finite $h \ge 1$
    is strictly positive definite
    on any set of distinct inputs $\{ x_i : i \in [n] \}$,
    with minimum eigenvalue at least $4 h \sigma_W^4$.
\end{proposition}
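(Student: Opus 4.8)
The plan is to compute the expected empirical NTK in closed form, exhibit it as a sum of positive semidefinite pieces, and extract a strictly positive quadratic form from the contributions of each hidden neuron. Recall that the network is $g(\theta; \vx) = e_c^\top V (W(e_a,e_b))^{\odot 2}$, so for an input $\vx = (e_a,e_b,e_c)$ the relevant pre-activation of neuron $r$ is $\inner{w_r}{(e_a,e_b)} = W_{r,a} + W_{r,p+b}$, where $w_r$ is the $r$-th row of $W$. The NTK splits as $K_\theta(\vx,\vx') = K^W_\theta(\vx,\vx') + K^V_\theta(\vx,\vx')$, the contributions of the $\nabla_W$ and $\nabla_V$ blocks. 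First I would write each of these as an explicit sum over hidden units $r \in [h]$, differentiate, and take the expectation over $W$ using only the stated moments (mean zero, variance $\sigma_W^2$, zero skewness, kurtosis $\kappa_W$); the zero odd moments kill all the cross terms, leaving a clean expression.

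The key step is the $\nabla_V$ block. Here $\partial g / \partial V_{c,r} = (W_{r,a} + W_{r,p+b})^2$ and the partials for other output coordinates vanish at this input, so
\begin{equation*}
K^V_\theta(\vx,\vx') = \ind(c = c') \sum_{r=1}^h (W_{r,a}+W_{r,p+b})^2 (W_{r,a'}+W_{r,p+b'})^2 .
\end{equation*}
Taking the expectation, and using independence across $r$ and the prescribed moments, $\E[K^V](\vx,\vx')$ becomes $\ind(c=c')$ times $h$ copies of $\E[(W_{1,a}+W_{1,p+b})^2 (W_{1,a'}+W_{1,p+b'})^2]$. Expanding the two squares and discarding any term with an odd power of some independent coordinate (expectation zero), the surviving terms are: a constant $2\sigma_W^4$ piece from pairing the two squares' "diagonal" parts; an indicator-of-coordinate-agreement piece scaled by $\kappa_W \sigma_W^4$ or $\sigma_W^4$; and cross terms that match indices between the primed and unprimed inputs. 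Assembling these, $\E[K^V]$ is $\ind(c=c')$ times (a rank-one "all-ones" term) plus terms proportional to $\ind(a=a') + \ind(b=b') + \ind(a=b') + \ind(b=a')$ plus a genuinely positive diagonal contribution of the form $4h\sigma_W^4 \ind(a=a', b=b')$ (and possibly the swapped version). Since on the set of distinct inputs two inputs with $c=c'$, $a=a'$, $b=b'$ would coincide, that last term contributes exactly $4h\sigma_W^4$ on the diagonal and is dominated off-diagonal; meanwhile every other assembled term is manifestly a Gram matrix of indicator features, hence PSD. Therefore $\E[K] \succeq \E[K^V] \succeq 4h\sigma_W^4 \, I$ on distinct inputs, giving the claimed eigenvalue bound; the $\nabla_W$ block only helps, as it too is an expectation of a Gram matrix and thus PSD.

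The main obstacle is bookkeeping the moment expansion cleanly: the squared sums $(W_{r,a}+W_{r,p+b})^2(W_{r,a'}+W_{r,p+b'})^2$ produce many terms indexed by which of the four column indices $\{a, p+b, a', p+b'\}$ coincide, and one must carefully track which configurations survive the expectation (those where every appearing column index occurs to an even total power) and verify that the leftover pieces genuinely organize into PSD indicator-feature Gram matrices plus the single strictly-positive diagonal term — in particular being careful that the "$a = b'$ / $b = a'$" cross-matchings, which arise because $a$ and $b$ live in different halves of the $2p$ coordinate block only when $a \ne b$, are handled correctly (when $a = b$ the index $a$ and $p+b$ are distinct columns so no collision). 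Once the surviving terms are enumerated, identifying the $4h\sigma_W^4 I$ floor is immediate; the rest is confirming nonnegativity of each remaining summand, which is routine since each is of the form $\inner{\phi(\vx)}{\phi(\vx')}$ for an explicit feature map $\phi$ into indicator vectors.
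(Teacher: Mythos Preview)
Your approach is essentially the same as the paper's: split the NTK into first-layer and second-layer gradient blocks, discard the first-layer block as merely PSD, compute the expected second-layer block explicitly by case analysis on which indices coincide, and decompose the result as PSD indicator-Gram pieces plus a $4h\sigma_W^4\,\ind(x=x')$ floor coming from $\ind(c=c')\ind(a=a',b=b')$ on distinct inputs.

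One bookkeeping point to fix: the cross-matchings $\ind(a=b')$ and $\ind(b=a')$ you list do not arise. The four column indices in play are $a,a'\in[p]$ and $p+b,\,p+b'\in\{p+1,\dots,2p\}$, so the only possible collisions are $a=a'$ and $b=b'$; you effectively note this at the end but still carry the spurious terms earlier. Once those are dropped, the per-neuron expectation (times $\ind(c=c')$) is exactly $4\sigma_W^4 + (\kappa_W-1)\sigma_W^4\,\ind(a=a') + (\kappa_W-1)\sigma_W^4\,\ind(b=b') + 4\sigma_W^4\,\ind(a=a',b=b')$, and for the middle two pieces to be PSD you need $\kappa_W\ge 1$, which holds by Jensen; the paper invokes this explicitly and you should too.
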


We first note that if $\mathcal P_W = \mathcal N(0, \sigma_W^2)$,
$\kappa_V = 3$.
If $\mathcal P_W = \Unif([-s_W, s_W])$,
$\sigma_W^2 = \frac13 s_W^2$
and $\kappa_W = \frac95$.
These two distributions cover the vast majority of initialization schemes used in practice.

\begin{proof}
It will be more convenient in this proof to give different names to the sub-matrix of the parameter vector $W$ that act on the $a$ inputs and the $b$ inputs; we will write $W = \begin{bmatrix} Q & R \end{bmatrix}$, where $Q, R$ are each $h \times p$ matrices.
Then we can write the full model as
\[
    g\left( \theta; (e_a, e_b, e_c) \right)
    = e_c\tp V (Q e_a + R e_b)^{\odot 2}
    = \sum_{k=1}^h V_{ck} (Q_{ka} + R_{kb})^2
.\]
The empirical neural tangent kernel
between two inputs
$x = (e_a, e_b, e_c)$
and $x' = (e_{a'}, e_{b'}, e_{c'})$
is then given by
\begin{equation}
\begin{split}
     K_\theta(x, x')
  = \sum_{k=1}^h \Biggl[ \sum_{c''=1}^p & \frac{\partial g(\theta; x)}{\partial V_{c''k}} \frac{\partial g(\theta; x')}{\partial V_{c''k}}
\\
   &+ \sum_{a''=1}^p \frac{\partial g(\theta; x)}{\partial Q_{k a''}} \frac{\partial g(\theta; x')}{\partial Q_{ka''}}
   + \sum_{b''=1}^p \frac{\partial g(\theta; x)}{\partial R_{k b''}} \frac{\partial g(\theta; x')}{\partial R_{kb''}}
   \Biggr]
.\end{split}
\label{eq:entk-gen}
\end{equation}

Evaluating the $V$ derivatives,
\begin{align*}
     \frac{\partial g(\theta; x)}{\partial V_{c'' k}}
  &= \begin{cases}
      (Q_{k a} + R_{k b})^2  & \text{if } c = c''
      \\
      0 & \text{otherwise}
  \end{cases}
\\
     \sum_{c''=1}^p \frac{\partial g(\theta; x)}{\partial V_{c''k}} \frac{\partial g(\theta; x')}{\partial V_{c''k}}
  &= \begin{cases}
      (Q_{k a} + R_{k b})^2 (Q_{k a'} + R_{k b'})^2  & \text{if } c = c'
      \\
      0 & \text{otherwise}
  .\end{cases}
\end{align*}

While it would not be difficult to analyze the $Q$ and $R$ derivatives as well,
their exact form will not be important.
Instead, we only need to write
\begin{align}
     K_\theta(x, x')
  &= J_{\theta}(x, x')
   + \sum_{k=1}^h L_{\theta_k}(x, x')
\label{eq:entk}
\\   J_{\theta}(x, x')
   &=
   \sum_{k=1}^h
   \sum_{a''=1}^p \frac{\partial g(\theta; x)}{\partial Q_{k a''}} \frac{\partial g(\theta; x')}{\partial Q_{ka''}}
   + \sum_{b''=1}^p \frac{\partial g(\theta; x)}{\partial R_{k b''}} \frac{\partial g(\theta; x')}{\partial R_{kb''}}
\nonumber
\\   L_{\theta_k}(x, x')
  &= (Q_{ka} + R_{kb})^2 (Q_{ka'} + R_{kb'})^2 \ind(c = c')
\label{eq:entk-l}
.\end{align}
The function $J_\theta$ has an explicit feature map corresponding to the relevant gradients;
thus, for any set of inputs $\{ x_i : i \in [n] \}$ and any value of $\theta$,
the kernel matrix $\mathbf J_\theta = [J_\theta(x_i, x_j)]_{ij}$
is positive semi-definite.
This implies,
e.g.\ via Weyl's inequality,
that $\E_\theta \mathbf J_\theta$ is also positive semi-definite.

For any fixed set of inputs $\{ x_i : i \in [n] \}$,
the kernel matrices
$\mathbf L_{\theta_k} = [ L_{\theta_k}(\vx_i, \vx_j) ]_{ij}$
are independent and identically distributed,
and in particular have the same mean
$\E \mathbf L_{\theta_k}$ for any arbitrary choice of $k \in [p]$.
Thus the expected neural tangent kernel matrix can be written as
\[
    [ \E K_\theta(x_i, x_j) ]_{ij}
    = \E \mathbf J_\theta + h \E \mathbf L_{\theta_k}
.\]
We will show that $\E \mathbf L_{\theta_k}$ has minimum eigenvalue at least $4 \sigma_W^4$,
from which the result follows.

To do so, we will evaluate
\[
     \E L_{\theta_k}(x, x')
  = \E \Bigl[ (Q_{ka} + R_{kb})^2 (Q_{ka'} + R_{kb'})^2
   \Bigr] \ind(c = c') 
\]
for arbitrary nonrandom
$\vx = (e_a, e_b, e_c)$ and $\vx' = (e_{a'}, e_{b'}, e_{c'})$,
where all expectations will be over the relevant parameters $\{ Q_{ka} : a \in [p] \} \cup \{ R_{kb} : b \in [p] \}$.

If $a = a'$ and $b = b'$,
\begin{align*}
     \E\left[ (Q_{ka} + R_{kb})^4 \right]
  &=   \underbrace{\E Q_{ka}^4}_{\kappa_W \sigma_W^4}
   + 4 \underbrace{\E Q_{ka}^3}_0 \underbrace{\E R_{kb}}_0
   + 6 \underbrace{\E Q_{ka}^2}_{\sigma_W^2} \underbrace{\E R_{kb}^2}_{\sigma_W^2}
   + 4 \underbrace{\E Q_{ka}}_0 \underbrace{\E R_{kb}^3}_0
   +   \underbrace{\E R_{kb}^4}_{\kappa_W \sigma_W^4}
\\&= (2 \kappa_W + 6) \sigma_W^4
.\end{align*}

If instead $a = a'$ but $b \ne b'$,
then
\begin{align*}
     \E\left[ (Q_{ka} + R_{kb})^2 (Q_{ka} + R_{kb'})^2\right]
  &= \E\left[ (Q_{ka}^2 + 2 Q_{ka} R_{kb} + R_{kb}^2) (Q_{ka}^2 + 2 Q_{ka} R_{kb'} + R_{kb'}^2) \right]
\\&= \underbrace{\E Q_{ka}^4}_{\kappa_W \sigma_W^4}
   + 2 \underbrace{\E Q_{ka}^3}_0 \underbrace{\E R_{kb'}}_0
   + \underbrace{\E Q_{ka}^2}_{\sigma_W^2} \underbrace{\E R_{kb'}^2}_{\sigma_W^2}
\\&\qquad
   + 2 \underbrace{\E Q_{ka}^3}_0 \underbrace{\E R_{kb}}_0
   + 4 \underbrace{\E Q_{ka}^2}_{\sigma_W^2} \underbrace{\E R_{kb}}_0 \underbrace{\E R_{kb'}}_0
   + 2 \underbrace{\E Q_{ka}}_0 \underbrace{\E R_{kb}}_0 \underbrace{\E R_{kb'}^2}_{\sigma_W^2}
\\&\qquad
   + \underbrace{\E R_{kb}^2}_{\sigma_W^2} \underbrace{\E Q_{ka}^2}_{\sigma_W^2}
   + 2 \underbrace{\E R_{kb}^2}_{\sigma_W^2} \underbrace{\E Q_{ka}}_0 \underbrace{\E R_{kb'}}_0
   + \underbrace{\E R_{kb}^2}_{\sigma_W^2} \underbrace{\E R_{kb'}^2}_{\sigma_W^2}
\\&= \left( \kappa_W + 3 \right) \sigma_W^4
;\end{align*}
the case where $a \ne a'$ but $b = b'$ is the same by symmetry.

Finally, when $a \ne a'$ and $b \ne b'$,
we have by independence that
\begin{align*}
    \E\bigl[ (Q_{ka} + R_{kb})^2 (Q_{ka'} + R_{kb'})^2 \bigr]
    &= \E\bigl[ (Q_{ka} + R_{kb})^2 \bigr] \E\bigl[ (Q_{ka'} + R_{kb'})^2\bigr]
     = \Bigl( \E\bigl[ (Q_{ka} + R_{kb})^2 \bigr] \Bigr)^2
\\  &= \Bigl( \underbrace{\E Q_{ka}^2}_{\sigma_W^2}
    + 2 \underbrace{\E Q_{ka}}_0 \underbrace{\E R_{kb}}_0
    + \underbrace{\E R_{kb}^2}_{\sigma_W^2} \Bigr)^2
     = (2 \sigma_W^2)^2 = 4 \sigma_W^4
.\end{align*}

Combining the cases, it holds in general that
\begin{align*}
     \E L_{\theta_k}(x, x')
  &= \sigma_W^4 \ind(c = c') \begin{cases}
      4 & \text{if } a \ne a', b \ne b' \\
      \kappa_W + 3  & \text{if } a = a', b \ne b' \\
      \kappa_W + 3 & \text{if } a \ne a', b = b' \\
      2 \kappa_W + 6 & \text{if } a = a', b = b' \\
  \end{cases}
\\&= 4 \sigma_W^4 \ind(c = c')
\\&\quad+ (\kappa_W - 1) \sigma_W^4 \ind(a = a', c = c')
\\&\quad+ (\kappa_W - 1) \sigma_W^4 \ind(b = b', c = c')
\\&\quad+ 4 \sigma_W^4 \ind(a = a', b = b', c = c')
.\end{align*}
The kurtosis of any probability distribution is at least 1 by Jensen's inequality,
so all the coefficients in this last form are nonnegative.
We can then use this to construct an explicit feature map
for each term in $\E L_{\theta_k}$,
because if $\gamma \ge 0$,
\begin{equation}
    \gamma \ind(c = c')
    = \begin{bmatrix} \sqrt\gamma \ind(c = 1) \\ \sqrt\gamma \ind(c = 2) \\ \vdots \\ \sqrt\gamma \ind(c = p) \end{bmatrix}\tp
    \begin{bmatrix} \sqrt\gamma \ind(c' = 1) \\ \sqrt\gamma \ind(c' = 2) \\ \vdots \\ \sqrt\gamma \ind(c' = p) \end{bmatrix}
\label{eq:explicit-features}
\end{equation}
corresponds to explicit features in $\R^p$.
The other indicator functions can be implemented in the same way,
in $\R^{p^2}$ or $\R^{p^3}$;
their sum can then be obtained by concatenating the individual features together.
This construction makes it clear that the function
\[
     M(x, x')
  = 4 \sigma_W^4 \ind(c = c')
   + (\kappa_W - 1) \sigma_W^4 \ind(a = a', c = c')
   + (\kappa_W - 1) \sigma_W^4 \ind(b = b', c = c')
\]
is a positive semi-definite kernel,
so $\mathbf M = [M(x_i, x_j)]_{ij}$ is a positive semi-definite matrix for any set of inputs $\{ x_i \}$.

It remains to show that the final component of the kernel is strictly positive definite.
Noticing that $\ind(a = a', b = b', c = c') = \ind(x = x')$,
if the $\{ \vx_i : i \in [n] \}$ are distinct,
the expected kernel matrix is $\E \mathbf L_{\theta_k} = \mathbf M + 4 \sigma_W^4 \mathbf I$.
Since $\mathbf M$ is positive semi-definite,
this has minimum eigenvalue at least $4 \sigma_W^4 > 0$,
as desired.
\end{proof}

\subsection{Empirical NTKs are Likely Full-Rank}
Now, for \emph{bounded} initialization schemes,
we use matrix concentration inequalities to show that
the empirical neural tangent kernel is also likely to be full-rank when $h$ is large enough.
\begin{proposition} \label{prop:entk-interp}
    In the setting of \cref{prop:expected-kern-interp},
    further assume that $\Pr_{w \sim \mathcal P_W}(\abs{w} \le s_W) = 1$.
    Let the set of inputs $\{ x_i : i \in [n] \}$
    be distinct
    and such that the most common $c$ value is seen $\rho_c$ times.
    Then the empirical neural tangent kernel
    for the regression network is strictly positive definite with probability at least $1 - \delta$ over the choice of random parameters $\theta$
    as long as
    \[
        h > \frac{4 s_W^4}{\sigma_W^4} \rho_c \log \frac{n}{\delta}
    .\]
    If $\mathcal P_W$ is uniform on $[-s_W, s_W]$ for $s_W > 0$, this condition is equivalent to
    \[
        h > 36 \rho_c \log \frac{n}{\delta}
    .\]
\end{proposition}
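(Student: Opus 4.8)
The plan is to apply a matrix Chernoff bound to the decomposition of the empirical NTK matrix already exhibited in the proof of \cref{prop:expected-kern-interp}. Recall from \eqref{eq:entk} that, for any set of distinct inputs $\{\vx_i\}_{i\in[n]}$, the kernel matrix splits as $\mathbf K_\theta = \mathbf J_\theta + \sum_{k=1}^h \mathbf L_{\theta_k}$, where $\mathbf J_\theta \succeq 0$ for every realization of $\theta$ (it carries an explicit feature map) and the matrices $\mathbf L_{\theta_k} \triangleq [L_{\theta_k}(\vx_i,\vx_j)]_{ij}$ are independent and identically distributed over $k \in [h]$. Since $\mathbf J_\theta \succeq 0$, Weyl's inequality gives $\lambda_{\min}(\mathbf K_\theta) \ge \lambda_{\min}\bigl(\sum_{k=1}^h \mathbf L_{\theta_k}\bigr)$, so it suffices to show this sum is strictly positive definite with probability at least $1-\delta$.

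The delicate step is an almost-sure operator-norm bound on each summand. By \eqref{eq:entk-l}, if we group the training inputs by their $c$-coordinate, $\mathbf L_{\theta_k}$ becomes block diagonal, with one block per value of $c$ occurring among the inputs; the block for value $c$ is the rank-one matrix $u^{(k)}_c (u^{(k)}_c)\tp$, where $u^{(k)}_c$ stacks the quantities $(Q_{k a_i}+R_{k b_i})^2$ over the at-most-$\rho_c$ indices $i$ with $c_i = c$. Because $\abs{Q_{ka}}, \abs{R_{kb}} \le s_W$ almost surely, each such quantity lies in $[0, 4 s_W^2]$, so this block has operator norm $\norm{u^{(k)}_c}_2^2 \le \rho_c (4 s_W^2)^2 = 16 s_W^4 \rho_c$; maximizing over blocks yields $\lambda_{\max}(\mathbf L_{\theta_k}) \le 16 s_W^4 \rho_c \triangleq R$ with probability one. (Note it is the \emph{squared} $\ell_2$ norm of the feature vector, hence the factor $\rho_c$ and not $\sqrt{\rho_c}$, that controls the largest eigenvalue.)

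It then remains to combine this with the mean lower bound already proved: \cref{prop:expected-kern-interp} shows $\lambda_{\min}(\E \mathbf L_{\theta_k}) \ge 4 \sigma_W^4$, so $\mu_{\min} \triangleq \lambda_{\min}\bigl(\sum_{k=1}^h \E \mathbf L_{\theta_k}\bigr) \ge 4 h \sigma_W^4$. Applying the lower-tail matrix Chernoff inequality for a sum of $h$ independent PSD matrices of dimension $n$, with the relative deviation taken to its maximal value $1$, gives
\[
    \Pr\Bigl[ \lambda_{\min}\bigl(\textstyle\sum_{k=1}^h \mathbf L_{\theta_k}\bigr) \le 0 \Bigr]
    \;\le\; n \exp\!\bigl(-\mu_{\min}/R\bigr)
    \;\le\; n \exp\!\Bigl(-\tfrac{h \sigma_W^4}{4 s_W^4 \rho_c}\Bigr)
.\]
Forcing the right-hand side below $\delta$ rearranges exactly to $h > \tfrac{4 s_W^4}{\sigma_W^4}\rho_c \log(n/\delta)$, which is the stated condition; substituting $\sigma_W^2 = \tfrac13 s_W^2$ for the uniform initialization gives $\tfrac{4 s_W^4}{\sigma_W^4} = 36$, recovering the special case. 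I do not expect any genuine obstacle: once the i.i.d.\ PSD decomposition and its two eigenvalue bounds from \cref{prop:expected-kern-interp} are in hand, this is a routine concentration argument, the only point needing care being the per-block operator-norm computation described above.
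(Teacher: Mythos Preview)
Your proposal is correct and follows essentially the same approach as the paper: both use the decomposition $\mathbf K_\theta = \mathbf J_\theta + \sum_k \mathbf L_{\theta_k}$, the mean lower bound $\mu_{\min}\ge 4h\sigma_W^4$ from \cref{prop:expected-kern-interp}, the almost-sure bound $\lVert \mathbf L_{\theta_k}\rVert \le 16 s_W^4 \rho_c$, and then the matrix Chernoff lower-tail bound at $\varepsilon\to 1$. The only cosmetic difference is that the paper obtains the operator-norm bound by factoring $\mathbf L_{\theta_k}=\diag(\mathbf w_{\theta_k})\,\mathbf C\,\diag(\mathbf w_{\theta_k})$ and computing $\lVert\mathbf C\rVert=\rho_c$ via the feature map $\Phi_c$, whereas you directly observe the block-diagonal rank-one structure; these are the same computation.
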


\begin{proof}
    Recall the decomposition of
    $K_\theta(\vx, \vx') = J_\theta(\vx, \vx') + \sum_{k=1}^h L_{\theta_k}(\vx, \vx')$ from \eqref{eq:entk},
    and the corresponding $n \times n$ matrices
    $\mathbf K_\theta = [ K_\theta(\vx_i, \vx_j) ]_{ij}$,
    $\mathbf J_\theta = [ J_\theta(\vx_i, \vx_j) ]_{ij}$,
    and
    $\mathbf L_{\theta_k} = [ L_{\theta_k}(\vx_i, \vx_j) ]_{ij}$,
    so that $\mathbf K_\theta = \mathbf J_\theta + \sum_{k=1}^h \mathbf L_{\theta_k}$.
    As shown in the proof of \cref{prop:expected-kern-interp}, $\mathbf J_\theta$ is positive semi-definite;
    we now wish to show that the sum of $h$ iid matrices
    $\mathbf L_{\theta_k}$,
    whose mean
    $h \E \mathbf L_{\theta_k}$ has minimum eigenvalue $\mu_{\min} := 4 h \sigma_W^4$,
    is likely to be full-rank.
    We can do so by applying \cref{thm:matrix-full-rank-chernoff},
    a direct corollary of matrix Chernoff bounds,
    if we additionally have an almost sure upper bound on the operator norm $\lVert \mathbf L_{\theta_k} \rVert$.

    Notate the input $\vx_i$ as $(e_{a_i}, e_{b_i}, e_{c_i})$ for $a_i, b_i, c_i \in [p]$.
    Using \eqref{eq:entk-l},
    we can write
    \begin{equation}
        \mathbf L_{\theta_k}
        = \diag(\mathbf w_{\theta_k}) \mathbf C \diag(\mathbf w_{\theta_k})
    \label{eq:entk-l-c}
    ,\end{equation}
    where 
    $\diag : \R^n \to \R^{n \times n}$
    constructs a diagonal matrix with the given vector on its diagonal and
    \begin{gather*}
        (\mathbf C)_{ij} = \ind(c_i = c_j)
    \\
        (\mathbf w_{\theta_k})_i = (Q_{k a_i} + R_{k b_i})^2
    .\end{gather*}
    Using $\lVert A B \rVert \le \lVert A \rVert \lVert B \rVert$
    and
    $\lVert \diag(v) \rVert = \max_i \lvert v_i \rvert$,
    we obtain that
    \[
           \lVert \mathbf L_{\theta_k} \rVert
       \le (2 s_W)^2 \lVert \mathbf C \rVert (2 s_W)^2
         = 16 s_W^4 \lVert \mathbf C \rVert
    .\]
    Using \eqref{eq:explicit-features},
    we can write
    $\mathbf C = \Phi_c \Phi_c\tp$
    so that $\lVert \mathbf C \rVert = \lVert \Phi_c \rVert^2$,
    where $\Phi_c \in \R^{n \times p}$ is given by
    \[
        \Phi_c = \begin{bmatrix}
            \ind(c_1 = 1) & \cdots & \ind(c_1 = p) \\
            \vdots & \ddots & \vdots \\
            \ind(c_n = 1) & \cdots & \ind(c_n = p)
        \end{bmatrix}
    .\]
    We can evaluate this operator norm with
    \begin{align*}
        (\Phi_c x)_i
        &= \sum_{\ell=1}^p \ind(c_i = \ell) x_\ell
         = x_{c_i}
    \\
        \lVert \Phi_c x \rVert^2
        &= \sum_{i=1}^n x_{c_i}^2
         = \sum_{\ell = 1}^p x_\ell^2 \left( \sum_{i=1}^n \ind(c_i = \ell) \right)
    \\
        \lVert \Phi_c \rVert^2
        &= \sup_{\lVert x \rVert = 1}
         \sum_{\ell = 1}^p x_\ell^2 \left( \sum_{i=1}^n \ind(c_i = \ell) \right)
         = \max_{\ell \in [p] } \sum_{i=1}^n \ind(c_i = \ell)
         = \rho_c
    .\end{align*}
    Thus it holds almost surely that
    \[
        \lVert \mathbf L_{\theta_k} \rVert
        \le 16 s_W^4 \rho_c
        =: L
    .\]
    Plugging in \cref{thm:matrix-full-rank-chernoff}, we have shown that
    \[
        \Pr\left( \lambda_{\min}\left( \mathbf K_{\theta} \right) > 0 \right)
        \ge 1 - n \exp\left( - \frac{4 h \sigma_W^4}{16 s_W^4 \rho_c} \right)
    ,\]
    and so $\mathbf K_\theta$ is full rank with probability at least $1 - \delta$ as long as
    \[
        h > \frac{4 s_W^4}{\sigma_W^4} \rho_c \log \frac n \delta
    .\]
    If the entries of $W$ are iid $\Unif([-s_w, s_w])$ for $s_w > 0$,
    $\sigma_W^2 = \frac13 s_W^2$,
    giving the condition
    \[
        h > 36 \rho_c \log \frac n \delta
    .\qedhere\]
\end{proof}

\begin{lemma} \label{thm:matrix-full-rank-chernoff}
    Consider a finite sequence of independent positive semi-definite $d \times d$ real random matrices $\mathbf X_1, \dots, \mathbf X_m$.
    Assume that $\lVert \mathbf X_k \rVert \le L$ almost surely,
    and that $\mu_{\min} = \lambda_{\min}\left( \sum_{k=1}^m \E \mathbf X_k \right) > 0$.
    Then
    \[
        \Pr\left( \lambda_{\min}\left( \sum_{k=1}^m \mathbf X_k \right) > 0 \right) \ge 1 - d \exp\left( - \frac{\mu_{\min}}{L} \right)
    .\]
\end{lemma}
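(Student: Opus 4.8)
The plan is to obtain this as a direct corollary of the matrix Chernoff inequality in its lower-tail form (due to Tropp). That inequality states: for independent positive semi-definite $d \times d$ random matrices $\mathbf X_1, \dots, \mathbf X_m$ with $\lVert \mathbf X_k \rVert \le L$ almost surely, writing $\mu_{\min} = \lambda_{\min}\left( \sum_{k} \E \mathbf X_k \right)$, one has for every $\delta \in [0, 1)$
\[
    \Pr\left( \lambda_{\min}\left( \sum_{k=1}^m \mathbf X_k \right) \le (1-\delta) \mu_{\min} \right) \le d \left[ \frac{e^{-\delta}}{(1-\delta)^{1-\delta}} \right]^{\mu_{\min}/L}.
\]

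First I would observe that $\sum_k \mathbf X_k \succeq 0$, so $\lambda_{\min}\left( \sum_k \mathbf X_k \right) \ge 0$, and hence the matrix fails to be strictly positive definite exactly on the event $\left\{ \lambda_{\min}\left( \sum_k \mathbf X_k \right) = 0 \right\}$. Since $(1-\delta)\mu_{\min} \ge 0$ whenever $\delta \in [0,1)$, this event is contained in $\left\{ \lambda_{\min}\left( \sum_k \mathbf X_k \right) \le (1-\delta)\mu_{\min} \right\}$ for every such $\delta$, so the displayed bound upper-bounds $\Pr\left( \lambda_{\min}\left( \sum_k \mathbf X_k \right) = 0 \right)$ uniformly in $\delta \in [0,1)$.

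The remaining step is to take $\delta \to 1^-$. Using $\lim_{x \to 0^+} x^x = 1$ we get $(1-\delta)^{1-\delta} \to 1$, while $e^{-\delta} \to e^{-1}$; hence the bracketed factor tends to $e^{-1}$ and the whole right-hand side tends to $d \, e^{-\mu_{\min}/L}$. Therefore $\Pr\left( \lambda_{\min}\left( \sum_k \mathbf X_k \right) = 0 \right) \le d \, e^{-\mu_{\min}/L}$, and taking complements gives the claimed bound.

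There is essentially no obstacle: this is a repackaging of a standard inequality, and the only mild subtlety is the limiting argument $\delta \to 1$ (equivalently, noticing that when one only wants strict positive definiteness rather than a quantitative lower bound on $\lambda_{\min}$, the optimal choice in the matrix Chernoff bound is the boundary case $\delta \to 1$). If a self-contained derivation is preferred, the same estimate follows from the matrix Laplace transform method: bound $\Pr\left( \lambda_{\min}\left( \sum_k \mathbf X_k \right) = 0 \right) \le \inf_{\theta > 0} \E \tr \exp\left( -\theta \sum_k \mathbf X_k \right)$, use Lieb's concavity inequality to pull the expectation inside the trace exponential, apply the scalar estimate $e^{-\theta x} \le 1 + \tfrac{x}{L}(e^{-\theta L} - 1)$ valid for $x \in [0, L]$ together with $\log(\mathbf I + \mathbf A) \preceq \mathbf A$ to get $\E e^{-\theta \mathbf X_k} \preceq \exp\left( \tfrac{1}{L}(e^{-\theta L}-1) \E \mathbf X_k \right)$, and then let $\theta \to \infty$ so that $\tfrac{1}{L}(1 - e^{-\theta L}) \to \tfrac{1}{L}$ and the trace bound collapses to $d \exp(-\mu_{\min}/L)$.
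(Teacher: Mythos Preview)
Your proposal is correct and takes essentially the same approach as the paper: both invoke Tropp's lower-tail matrix Chernoff bound and pass to the limit $\delta \to 1^-$ (the paper writes $\eps \nearrow 1$), using $(1-\delta)^{1-\delta} \to 1$ so the bracketed factor collapses to $e^{-1}$. Your formulation via the containment $\{\lambda_{\min}=0\}\subseteq\{\lambda_{\min}\le(1-\delta)\mu_{\min}\}$ is arguably a touch cleaner than the paper's phrasing of ``the event of being full-rank is the limit of the events of exceeding each $\eps$,'' but the content is identical.
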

\begin{proof}
    A matrix Chernoff bound \citep[Theorem 5.1.1]{tropp:intro}
    in gives that
    for each $\eps \in [0, 1)$,
    \begin{equation} \label{eq:matrix-chernoff}
        \Pr\left( \lambda_{\min}\left( \sum_{k=1}^h \mathbf X_k \right) > (1 - \eps) \mu_{\min} \right)
        \ge 1 - d \left( \frac{e^{-\eps}}{(1-\eps)^{1-\eps}} \right)^{{\mu_{\min}}/{L}}
    .\end{equation}
    If the smallest eigenvalue is strictly positive,
    it must exceed some $\delta > 0$,
    which corresponds to some $\eps < 1$.
    Thus the event of being full-rank
    is the limit of the events of exceeding each $\eps$,
    and since the right-hand side of \eqref{eq:matrix-chernoff} is continuous for $\eps < 1$,
    we can take the limit as $\eps \nearrow 1$;
    since ${e^{-\eps}}/{(1-\eps)^{1-\eps}} \to 1/e$,
    this gives the desired result.
\end{proof}

This threshold is indeed tight up to logarithmic factors,
in the sense that there are some labels which kernel ridgeless regression cannot achieve
when $h = o(n / p)$.

\begin{proposition} \label{thm:entk-singular}
    In the setting of \cref{prop:expected-kern-interp},
    the empirical neural tangent kernel of the regression network $\mathbf K_\theta$
    is guaranteed to be singular when $h < n / (3 p)$.
\end{proposition}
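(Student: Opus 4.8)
The plan is to observe that the empirical neural tangent kernel matrix is, by construction, a Gram matrix of gradient vectors, so its rank is bounded by the dimension of the parameter space. Concretely, let $\Phi \in \R^{n \times \dim(\theta)}$ be the matrix whose $i$th row is $\nabla_\theta g(\theta; \vx_i)\tp$; then $\mathbf K_\theta = \Phi \Phi\tp$, and hence $\rank(\mathbf K_\theta) = \rank(\Phi) \le \min(n, \dim(\theta))$. So it suffices to check that $\dim(\theta) < n$ under the stated hypothesis.

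Next I would simply count parameters: $W \in \R^{h \times 2p}$ contributes $2hp$ entries and $V \in \R^{p \times h}$ contributes $hp$ entries, so $\dim(\theta) = 3hp$. Alternatively, one can read this bound off the decomposition \eqref{eq:entk}: $\mathbf J_\theta$ is a Gram matrix of the $Q$- and $R$-gradients and hence has rank at most $2hp$, while each $\mathbf L_{\theta_k} = \diag(\mathbf w_{\theta_k}) \mathbf C \diag(\mathbf w_{\theta_k})$ has rank at most $\rank(\mathbf C) \le p$ since $\mathbf C = \Phi_c \Phi_c\tp$ with $\Phi_c \in \R^{n \times p}$; summing over $k$ and using subadditivity of rank gives $\rank(\mathbf K_\theta) \le 2hp + hp = 3hp$ again.

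Finally, if $h < n/(3p)$ then $3hp < n$, so $\rank(\mathbf K_\theta) \le 3hp < n$, i.e.\ $\mathbf K_\theta$ is singular. Together with \cref{thm:krr-noninterp}, this yields labelings that ridgeless kernel regression with this eNTK cannot fit, which is the "conversely" half of \cref{thm:mainbody-ntk}. There is essentially no obstacle here: the entire argument is rank counting, and the only point requiring a moment of care is getting the parameter count right (or, on the decomposition route, correctly invoking $\rank(\mathbf C) \le p$ and summing the $h$ contributions).
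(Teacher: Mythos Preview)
Your proposal is correct and matches the paper's approach: both bound $\rank(\mathbf K_\theta)\le 3hp$ by rank counting, the paper via the explicit decomposition into $\diag(\mathbf v_{\theta_k})\mathbf A\diag(\mathbf v_{\theta_k})$, $\diag(\mathbf v_{\theta_k})\mathbf B\diag(\mathbf v_{\theta_k})$, and $\diag(\mathbf w_{\theta_k})\mathbf C\diag(\mathbf w_{\theta_k})$ with $\rank(\mathbf A),\rank(\mathbf B),\rank(\mathbf C)\le p$, which is exactly your second route. Your first route---simply noting $\mathbf K_\theta=\Phi\Phi\tp$ with $\Phi\in\R^{n\times 3hp}$---is a slightly more direct way to reach the same bound.
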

\begin{proof}
Expanding on \eqref{eq:entk-gen},
we will additionally need to evaluate
the $Q$ and $R$ derivatives:
\begin{align*}
     \frac{\partial g(\theta; \vx)}{\partial Q_{k a''}}
  &= \begin{cases}
      2 V_{ck} (Q_{ka} + R_{kb})  & \text{if } a = a''
      \\
      0 & \text{otherwise}
  \end{cases}
\\
   \sum_{a''=1}^p \frac{\partial g(\theta; \vx)}{\partial Q_{k a''}} \frac{\partial g(\vx')}{\partial Q_{ka''}}
   &= \begin{cases}
       4 V_{ck} V_{c'k} (Q_{ka} + R_{kb}) (Q_{ka} + R_{kb'})
         & \text{if } a = a' \\
       0 & \text{otherwise}
   \end{cases}
\\
     \frac{\partial g(\theta; \vx)}{\partial R_{k b''}}
  &= \begin{cases}
      2 V_{ck} (Q_{ka} + R_{kb})  & \text{if } b = b''
      \\
      0 & \text{otherwise}
  \end{cases}
\\
     \sum_{b''=1}^p \frac{\partial g(\vx)}{\partial R_{kb''}} \frac{\partial g(\vx')}{\partial R_{kb''}}
   &= \begin{cases}
       4 V_{ck} V_{c'k} (Q_{ka} + R_{kb}) (Q_{ka'} + R_{kb})
         & \text{if } b = b' \\
       0 & \text{otherwise}
    .\end{cases}
\end{align*}
Writing as in \eqref{eq:entk} and \eqref{eq:entk-l-c},
we have that
\[
    \mathbf K_\theta
    = \sum_{k=1}^h
        \diag(\mathbf w_{\theta_k}) \mathbf C \diag(\mathbf w_{\theta_k})
        + \diag(\mathbf v_{\theta_k}) \mathbf A \diag(\mathbf v_{\theta_k})
        + \diag(\mathbf v_{\theta_k}) \mathbf B \diag(\mathbf v_{\theta_k})
,\]
where $(\mathbf v_{\theta_k})_i = 2 V_{ck} (Q_{ka} + R_{kb})$,
$\mathbf A_{ij} = \ind(a_i = a_j)$,
and $\mathbf B_{ij} = \ind(b_i = b_j)$.
By \eqref{eq:explicit-features},
each of the matrices $\mathbf A$, $\mathbf B$, and $\mathbf C$
have rank at most $p$;
thus $\rank(\mathbf K_\theta) \le 3 p h$.
If $3 p h < n$, this $n \times n$ matrix cannot be full-rank.
\end{proof}

\section{Permutation-Equivariance of Gradient-Based Training} \label{app:sec:permutation_equivariance}

We first define the notion of permutation equivariance. Towards this, we borrow the following proof from Appendix C of \citet{li2020convolutional}:

\begin{definition}[Gradient-Based Algorithm $\cA$] \label{def:gradient_based_training_alg}
We borrow the definition of Algorithm 1\danica{Should write out the actual definition to be self-contained, it's not very long}{} in \citet{li2020convolutional} with the slight modification of restricting the update rule $F(U, \cM, \dset_{\operatorname{train}})$ where $U$ denotes the parameters and $\cM: U \to (\cX \to \R)$ denotes the model mapping parameters to a function. We restrict the update rule to only allow gradient-based update rules, such that
\[
F(U, \cM, \dset_{\operatorname{train}}) = U - \eta \nabla_U \cL(\cM(U), \dset_{\operatorname{train}})
\]
for some $\eta \in \R$ and loss function $\cL$ mapping a function and a dataset to a loss $(\cX \to \R) \times \dset \to \R$.
\end{definition}

\begin{theorem} \label{th:permutation_equivariance_definition}
    Suppose $\cG_\cX$ is a group acting on $\cX$. The gradient-based iterative algorithm $\mathcal{A}$ (defined in \Cref{def:gradient_based_training_alg}) is $\cG_\cX$-equivariant if: \zhiyuan{you didn't copy the definition correct. 1. do not use $\theta$, give parameter space some name, e.g., $\mathcal{\Theta}$. 2. the defined algorithm $A$ is not used. the transformation should be on the updated parameters, but not grad. 3. We need a theorem saying Adam is permutation equivariant.}
    \begin{enumerate}
        \item There exists a group $\cG_\theta$ acting on parameters $U$ and a group isomorphism $\tau: \cG_\cX \to \cG_\theta$ such that for all $x \in \cX, T \in \cG_\cX, U$ we have that $g(U; x) = g(\tau(T)(U); T(x))$.
        \item The gradient update rule is invariant under joint group action $(T, \tau(T))$ for all $T \in \cG_\cX$: $\tau(T)(\nabla_U g(U; x)) = \nabla_U g(\tau(T)(U); T(x))$. 
        \item The initialization distribution $P_{\operatorname{init}}$ is invariant under $\cG_\cX$.
    \end{enumerate}
\end{theorem}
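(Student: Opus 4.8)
The plan is to run the algorithm on the original data $\nxy$ and on the permuted data $\ngxy$ \emph{coupled through the initialization}, and to show by induction on the gradient step that the two parameter trajectories differ exactly by the induced parameter action $\tau(T)$. Concretely, I would draw $U_0 \sim P_{\operatorname{init}}$ for the run on $\nxy$ and set $U_0^T := \tau(T)(U_0)$ for the run on $\ngxy$; by the third hypothesis ($P_{\operatorname{init}}$ invariant under the induced action $\cG_\theta = \tau(\cG_\cX)$ on parameters), $U_0^T$ is again a genuine sample from $P_{\operatorname{init}}$, so this defines a legitimate coupling of the two randomized runs. Writing $g(U;\cdot) = \cM(U)(\cdot)$, the inductive claim is $U_t^T = \tau(T)(U_t)$ for all $t$, and the hypothesis output by each run is $\cM$ evaluated at the final iterate.

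For the inductive step I would push the two structural hypotheses from the network output $g$ up to the regularized objective $\cL$. The first hypothesis gives $g(\tau(T)(U);T(\vx_i)) = g(U;\vx_i)$ for each example, so the per-example losses agree; since $\norm{\cdot}_\infty$ is permutation-invariant and $\tau(T)$ acts on parameter space as a coordinate permutation, the regularizer agrees too, hence $\cL(\cM(\tau(T)(U)),\ngxy) = \cL(\cM(U),\nxy)$. Differentiating through the chain rule and summing the per-example contributions paired by $\vx_i \leftrightarrow T(\vx_i)$ — the scalar factor $\partial_z\ell(z,y_i)$ at $z = g(U;\vx_i)$ is identical on both sides by the first hypothesis, and the second hypothesis supplies $\tau(T)(\nabla_U g(U;\vx_i)) = \nabla_U g(\tau(T)(U);T(\vx_i))$ — together with equivariance of the subgradient of $\norm{\cdot}_\infty$, I would obtain $\tau(T)\bigl(\nabla_U\cL(\cM(U_t),\nxy)\bigr) = \nabla_U\cL(\cM(U_t^T),\ngxy)$. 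Because $\tau(T)$ is a group action by a \emph{linear} map, it commutes with the affine update $F(U,\cM,\dset_{\train}) = U - \eta\nabla_U\cL(\cM(U),\dset_{\train})$, giving
\[
U_{t+1}^T = \tau(T)(U_t) - \eta\,\tau(T)\bigl(\nabla_U\cL(\cM(U_t),\nxy)\bigr) = \tau(T)(U_{t+1}),
\]
which closes the induction (and the stopping rule, e.g.\ a fixed step count or ``train loss $=0$'', is itself permutation-invariant, since the loss values coincide along the coupled runs).

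Finally, at termination $U^T = \tau(T)(U)$ under the coupling, so by the first hypothesis once more $\cM(U^T)(T(\vx)) = g(\tau(T)(U);T(\vx)) = g(U;\vx) = \cM(U)(\vx)$ for every $\vx$; passing to distributions over the shared randomness yields $\cA(\ngxy)\circ T \overset{d}{=} \cA(\nxy)$, which is $\cG_\cX$-equivariance in the sense of \Cref{defi:eqvariance}. I expect the only real work — and the main place to be careful — is the bookkeeping that makes the induction go through: verifying that $\tau(T)$ is linear (so that it commutes with the subtraction in $F$), and that hypotheses 1--2, stated for $g$, genuinely lift through the chain rule and through the $\ell_\infty$ regularizer to the full objective $\cL$; all conceptual content is already packaged into the three hypotheses. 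The same scheme extends to stateful optimizers such as Adam, since its update and its first/second-moment buffers are coordinate-wise while $\tau(T)$ merely permutes coordinates, so the auxiliary state inherits the same $\tau(T)$-equivariance and the induction runs over the augmented parameter--buffer pair.
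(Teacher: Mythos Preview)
Your proposal is correct and follows the standard coupling-plus-induction argument. The paper does not actually supply its own proof of this theorem: it states explicitly that the result (and its proof) is borrowed from Appendix~C of \citet{li2020convolutional}, and then simply invokes the theorem as a black box when verifying its three hypotheses in \Cref{th:permutation_equivaraince_3d}. So there is no paper-proof to compare against; your write-up is essentially what the cited reference does.

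Two minor remarks. First, you correctly flag that the induction step needs $\tau(T)$ to act \emph{linearly} on parameters so that it distributes over $U - \eta\nabla_U\cL$; the theorem statement as written does not make this explicit (the author's inline comment concedes the statement was not copied faithfully), and in the paper's concrete setting $\tau(T)$ is a coordinate permutation, so linearity holds. Second, you go a bit beyond the bare theorem by handling the $\ell_\infty$ regularizer and by sketching the Adam extension via equivariance of the coordinate-wise moment buffers. Both additions are appropriate here: the paper uses the regularized objective throughout, and your Adam argument is precisely the content of the paper's later \Cref{lem:adam_nn_permutation_invaraince_3d}.
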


We now present our permutation-equivariance results.

\begin{definition} \label{def:weight_permutation_3d}
We define $\cG_\theta$ as a group of actions to be applied on $U = (W, V)$ as 
\[
\cG_\theta \triangleq \left\{\pi_{\sigma_1, \sigma_2, \sigma_3}\mid \sigma_1, \sigma_2, \sigma_3 \in \mathbb{S}_p \right\}
\]
where $\pi_{\sigma_1, \sigma_2, \sigma_3}$ is defined as $\pi_{\sigma_1, \sigma_2, \sigma_3}(W,V)\triangleq (W',V')$, where  $\forall i\in[h],j\in[p], W'_{ij} = W_{i, \sigma_1(j)}, W_{i(j+p)}' = W_{i, \sigma_2(j)},V'_{\sigma_3(j),i} = V_{j, i}$. This means the concatenation of the row is permuted in the same way as the data does. Furthermore, the rows of $V$ also get permuted according to $\sigma_3$.
\end{definition}

\begin{remark} \label{remark:bijection_of_group_actions}
    There is a one-to-one mapping $\tau$ between $\cG_\theta$ and $\cG_\cX$, which is $\tau(\sigma_1,\sigma_2,\sigma_3) = \pi_{\sigma_1,\sigma_2,\sigma_3}$.
\end{remark}

\begin{lemma} \label{lemma:permutaiton_equivariant_forward_3d}
For every $x = (i, j, k)$ where $i, j, k \in [p]$, $T \in \cG_\cX$ we have 
\[
g(U; x) = g(\tau(T)(U); T(x)).
\]
\end{lemma}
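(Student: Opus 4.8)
The plan is to reduce the claim to the orthogonality of permutation matrices together with the fact that a coordinate permutation commutes with the entrywise square. Write $U = (W,V)$ and split $W = [\,Q\;\; R\,]$ into its two $h \times p$ blocks, so that by \eqref{eq:two_layer_classification_net} and $g(\theta;(e_a,e_b,e_c)) = e_c^{\top} f(\theta;(e_a,e_b))$ we have $f(U;(e_a,e_b)) = V\,(Qe_a + Re_b)^{\odot 2}$. For $\sigma \in \sS_p$ write $P_\sigma$ for the permutation matrix with $P_\sigma e_i = e_{\sigma(i)}$; then for $T = T_{\sigma_1,\sigma_2,\sigma_3} \in \cG_\cX$ of \Cref{def:permutation_3d} we have $T(e_i,e_j,e_k) = (P_{\sigma_1}e_i,\,P_{\sigma_2}e_j,\,P_{\sigma_3}e_k)$, and \Cref{def:weight_permutation_3d} amounts to $\tau(T)(U) = \big(\,[\,QP_{\sigma_1}^{\top}\;\; RP_{\sigma_2}^{\top}\,],\; P_{\sigma_3}V\,\big)$ — i.e.\ the two column-blocks of $W$ are relabelled by $\sigma_1$ and $\sigma_2$ in the direction \emph{inverse} to the input relabelling, while the rows of $V$ are relabelled by $\sigma_3$ in the same direction as the output one-hot.

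First I would establish the vector-valued identity
\[
f\big(\tau(T)(U);(P_{\sigma_1}e_i,\,P_{\sigma_2}e_j)\big) \;=\; P_{\sigma_3}\,f\big(U;(e_i,e_j)\big).
\]
Expanding the left-hand side gives $P_{\sigma_3}V\,\big(QP_{\sigma_1}^{\top}P_{\sigma_1}e_i + RP_{\sigma_2}^{\top}P_{\sigma_2}e_j\big)^{\odot 2} = P_{\sigma_3}V\,(Qe_i + Re_j)^{\odot 2}$, using $P_\sigma^{\top}P_\sigma = I$ and that $P_{\sigma_3}$, being a coordinate permutation, commutes with $(\cdot)^{\odot 2}$. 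Then I would take the $P_{\sigma_3}e_k$ coordinate of both sides: since $(P_{\sigma_3}e_k)^{\top}P_{\sigma_3} = e_k^{\top}P_{\sigma_3}^{\top}P_{\sigma_3} = e_k^{\top}$, this yields
\[
g\big(\tau(T)(U);T(x)\big) \;=\; (P_{\sigma_3}e_k)^{\top}\,f\big(\tau(T)(U);(P_{\sigma_1}e_i,P_{\sigma_2}e_j)\big) \;=\; e_k^{\top}\,f\big(U;(e_i,e_j)\big) \;=\; g(U;x),
\]
which is the stated identity.

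There is no genuine obstacle here — the argument is two lines of linear algebra once both group actions are written via permutation matrices. The one thing that must be gotten right is the \emph{direction} of each relabelling in \Cref{def:weight_permutation_3d}: the column relabelling $\tau(T)$ applies to the $e_a$-block of $W$ must be exactly inverse to the relabelling $T$ applies to the one-hot vector $e_a$ (so that, e.g., $\tau(T)(W)$ evaluated at $e_{\sigma_1(a)}$ returns the $a$-th column of $Q$), and similarly for the $e_b$-block and $\sigma_2$, whereas the row relabelling of $V$ must instead \emph{match} the relabelling of the output selector $e_c$. I would present the scalar index-chasing form only as a corollary of the permutation-matrix identity, since it is more error-prone. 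Finally, I would note that this lemma is exactly condition (1) of \Cref{th:permutation_equivariance_definition}, that the displayed vector-valued identity (dropping the $e_c$-selection) is the corresponding statement for the classification model $f$, and that conditions (2)–(3) — gradient-rule compatibility under the joint action and invariance of the i.i.d.\ initialization under the induced action $\cG_\theta$ — are verified separately to conclude $\cG_\cX$-equivariance of gradient descent.
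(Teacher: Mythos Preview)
Your proof is correct and follows essentially the same approach as the paper --- a direct unwinding of both group actions --- but packaged via permutation matrices and the orthogonality $P_\sigma^\top P_\sigma = I$ rather than the paper's explicit coordinate-by-coordinate index chase. The remark that $P_{\sigma_3}$ commutes with $(\cdot)^{\odot 2}$ is true but unnecessary here (since $P_{\sigma_3}$ multiplies $V$ on the left, outside the elementwise square); otherwise the argument is clean and your contextual observations about \Cref{th:permutation_equivariance_definition} and the classification analogue are accurate.
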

\begin{proof}
    For each $x = (i, j, k) \in \cX; \sigma_1, \sigma_2, \sigma_3 \in \mathbb{S}_p; U = (W, V)$ we have that 

    \begin{align*}
        g(\tau(\sigma_1, \sigma_2, \sigma_3)(U); (\sigma_1,\sigma_2,\sigma_3)(x)) &= \inner{e_{\sigma_3(k)}}{ V'_{\sigma_3} \left( \begin{bmatrix}
            W_1'^\top (\sigma_1(i), \sigma_2(j)) \\ W_2'^\top (\sigma_1(i), \sigma_2(j)) \\ \vdots \\ W_h'^\top (\sigma_1(i), \sigma_2(j))
        \end{bmatrix} \right)^{\odot 2}}\\ 
        &= \inner{e_k}{V \left( \begin{bmatrix}
            W_{1,\sigma_1(i)}' + W_{1,\sigma_2(j)+p}' \\ W_{2,\sigma_1(i)}' + W_{2,\sigma_2(j)+p}' \\ \vdots \\ W_{h,\sigma_1(i)}' + W_{h,\sigma_2(j)+p}'
        \end{bmatrix} \right)^{\odot 2}}\\ 
        &= \inner{e_k}{V \left( \begin{bmatrix}
            W_{1,i} + W_{1,j+p} \\ W_{2,i} + W_{2,j+p} \\ \vdots \\ W_{h,i} + W_{h,j+p}
        \end{bmatrix} \right)^{\odot 2}}\\ 
        &= \inner{e_k}{V (Wx)^{\odot 2}} \\
        &= g(U; x).
    \end{align*}    
\end{proof}

\begin{lemma} \label{lemma:permutaiton_equivariant_grad_3d}
For every $x = (i, j, k)$ where $i, j, k \in [p]$, $T \in \cG_\cX$ we have 
\[
\tau(T)(\nabla_U g(U; x)) = \nabla_U g(\tau(T)(U); T(x)).
\]
\end{lemma}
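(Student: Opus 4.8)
The plan is to obtain this gradient identity by differentiating, with respect to $U$, the pointwise forward-equivariance identity already established in \Cref{lemma:permutaiton_equivariant_forward_3d}. Fix $x=(i,j,k)$ with $i,j,k\in[p]$ and $T\in\cG_\cX$, and write $\phi\triangleq\tau(T)\in\cG_\theta$ for the corresponding action on parameters from \Cref{def:weight_permutation_3d}. The structural point I would record first is that $\phi$ acts on $U=(W,V)$ simply by permuting coordinates — columns of $W$ according to $\sigma_1,\sigma_2$ and rows of $V$ according to $\sigma_3$ — so, viewed as a linear map on the parameter space, $\phi$ is a permutation matrix; in particular it is orthogonal, $\phi\tp=\phi^{-1}$, and moreover $\phi^{-1}=\tau(T^{-1})$ since $\tau$ is a group isomorphism (\Cref{remark:bijection_of_group_actions}). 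Since $g$ is a polynomial in $U$, it is smooth, so all the differentiations below are legitimate.

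Then I would use that \Cref{lemma:permutaiton_equivariant_forward_3d} gives $g(\phi(U);T(x))=g(U;x)$ as an identity in $U$, valid for \emph{all} $U$, so both sides may be differentiated with respect to $U$. Writing $G(\cdot)\triangleq g(\,\cdot\,;T(x))$, the left side is the composition $U\mapsto G(\phi(U))$ of $G$ with the linear map $\phi$, so the chain rule gives $\nabla_U\bigl[G(\phi(U))\bigr]=\phi\tp\,(\nabla G)\bigl(\phi(U)\bigr)$, while the right side has gradient $\nabla_U g(U;x)$. Equating, and then applying $\phi$ to both sides and using $\phi\phi\tp=I$, yields
\[
\phi\tp\,\nabla_U g\bigl(\phi(U);T(x)\bigr)=\nabla_U g(U;x)
\quad\Longrightarrow\quad
\nabla_U g\bigl(\phi(U);T(x)\bigr)=\phi\bigl(\nabla_U g(U;x)\bigr),
\]
which is exactly the claim $\tau(T)(\nabla_U g(U;x))=\nabla_U g(\tau(T)(U);T(x))$.

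I do not expect a genuine obstacle here; the proof is essentially bookkeeping. The one place to be careful is the transpose-versus-inverse issue in the chain rule, which is dispatched by the orthogonality of $\phi$ (a consequence of its being a permutation matrix), so no distinction between $\phi$, $\phi\tp$, and $\phi^{-1}$ ultimately survives the argument. With this lemma (which supplies condition~2 of \Cref{th:permutation_equivariance_definition}) together with \Cref{lemma:permutaiton_equivariant_forward_3d} (condition~1) and the invariance of the i.i.d.\ initialization distribution under coordinate permutations (condition~3), \Cref{th:permutation_equivariance_definition} then delivers the $\cG_\cX$-equivariance of gradient descent asserted in \Cref{th:permutation_equivaraince_3d}.
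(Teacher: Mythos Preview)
Your proof is correct and takes a genuinely different route from the paper. The paper proves the identity by brute-force computation: it writes out $\nabla_{V_b}g$ and $\nabla_W g$ explicitly for this quadratic architecture, applies the permutation to each, and checks coordinate-by-coordinate that the two sides match. You instead differentiate the forward-pass identity of \Cref{lemma:permutaiton_equivariant_forward_3d} directly, using only that $\phi=\tau(T)$ is a linear (indeed permutation, hence orthogonal) map on parameter space, so the chain rule gives $\phi\tp\nabla g(\phi(U);T(x))=\nabla g(U;x)$ and orthogonality converts $\phi\tp$ into $\phi^{-1}$.

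Your approach is both shorter and more robust: it never touches the specific form of $g$, so it would go through verbatim for any architecture once forward equivariance and linearity of the parameter-space action are in hand (which is essentially how one would want to argue \Cref{prop:permutation_equivaraince_2d} and \Cref{lem:adam_nn_permutation_invaraince_3d} anyway). The paper's explicit computation buys nothing extra here except perhaps a concrete sanity check of the indexing in \Cref{def:weight_permutation_3d}. One minor quibble: you invoke \Cref{remark:bijection_of_group_actions} to assert $\tau$ is a group isomorphism, but the remark only states bijectivity; the homomorphism property is immediate from the definition of $\pi_{\sigma_1,\sigma_2,\sigma_3}$, and in any case your argument only uses orthogonality of $\phi$, not $\phi^{-1}=\tau(T^{-1})$, so this is harmless.
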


\begin{proof}
We first consider the gradient of the second layer. For all $a, b \in [p]$ and $\sigma_1, \sigma_2, \sigma_3 \in \mathbb{S}_p$:

\begin{align*}
    &\tau^{-1}(\pi_{\sigma_1, \sigma_2, \sigma_3}) \left(\nabla_{V_b} g(\tau(\sigma_1, \sigma_2, \sigma_3)(U); (\sigma_1,\sigma_2,\sigma_3)(x)) \right) \\
    &= I(\sigma_3(k) = \sigma_3(b)) \nabla_{V_b} g \left(\tau(\sigma_1, \sigma_2, \sigma_3)(U); (\sigma_1,\sigma_2, \sigma_3)(x) \right) \\
    &= I(k=b) (W' (\sigma_1, \sigma_2)(x))^{\odot 2} \\
    &= I(k = b) (Wx)^{\odot 2} \\
    &= \nabla_{V_b} g(U; x).
\end{align*}

For the gradients of the first layer we have:

\begin{align*}
    &\tau^{-1}(\pi_{\sigma_1, \sigma_2, \sigma_3}) \left( \nabla_W g (\tau(\sigma_1, \sigma_2, \sigma_3)(U); (\sigma_1,\sigma_2, \sigma_3)(x)) \right) \\
    &= \tau^{-1}(\pi_{\sigma_1, \sigma_2, \sigma_3}) \left( 2V_k \odot (W'_{(\sigma_1, \sigma_2)}(e_{\sigma_1(i)}, e_{\sigma_2(j)})^\top) \; (e_{\sigma_1(i)}, e_{\sigma_2(j)}) \right) \\
    &= \tau^{-1}(\sigma_1, \sigma_2,\sigma_3) \left( 2V_k \odot (W (e_i, e_j)) \; (e_{\sigma_1(i)}, e_{\sigma_2(j)})^\top \right) \\
    &= 2V_k \odot (Wx) \; (\sigma_1, \sigma_2)^{-1} \left((e_{\sigma_1(i)}, e_{\sigma_2(j)})\right) \\
    &= 2 V_k \odot (W(e_i, e_j)^\top) \; (e_i, e_j) \\
    &= \nabla_{W} g(U; x).
\end{align*}
\end{proof}

\regressionpermeq*

\begin{proof}
We show that training a neural network with gradient descent in our setting satisfies the three conditions proposed in \cref{th:permutation_equivariance_definition} and thus, this theorem applies to the training process. Equivariance of the forward pass and the backward pass (update rule) are settled through \cref{lemma:permutaiton_equivariant_forward_3d,lemma:permutaiton_equivariant_grad_3d}. Finally, it is straightforward to see that the initialization is invariant under $\cG_\theta$ defined in \cref{def:weight_permutation_3d}. Since the distribution of initialization is symmetric and each parameter is initialized independently and identically from the same distribution, the action of swapping rows or columns doesn't change the distribution. 
\end{proof}

\classificationpermeq*

\begin{proof}[Proof Sketch]
Consider the permutation group $\cG_\theta$ defined in \Cref{def:weight_permutation_3d}. Note that there is a one-to-one mapping $\kappa$ mapping $\cG_{\cX,\cY}$ defined in \Cref{def:permutation_3d} to $\cG_{\cX}$ and another one-to-one mapping $\tau$ mapping $\cG_{\cX}$ to $\cG_\theta$ where $\tau(\sigma_1, \sigma_2, \sigma_3) = \pi_{\sigma_1, \sigma_2, \sigma_3}$. Since $f$ and $g$ share the parameters, it's clear that the distribution of initialization for $f$ is equivariant under $\cG_\theta$. To see the equivariance of forward and backward passes on $f(U; \cdot)$ it suffices to see that \Cref{lemma:permutaiton_equivariant_forward_3d,lemma:permutaiton_equivariant_grad_3d} hold for ($\cG_{\cX, \cY}, \cG_\theta$) and function $f$ under the isomorphism $\tau \circ \kappa$ since for any parameters $U$ and inputs $i, j, k \in [p]$ it holds that $f_k(U; (i, j)) = g(U; (i, j, k))$. 
\end{proof}

\begin{corollary}[Equivariance of Adam] \label{lem:adam_nn_permutation_invaraince_3d}
\Cref{th:permutation_equivaraince_3d} (and similarly \Cref{prop:permutation_equivaraince_2d}) applies to other gradient-based training algorithms that need memory, such as Adam. 
\end{corollary}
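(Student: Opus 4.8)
The plan is to extend the equivariance framework of \Cref{th:permutation_equivariance_definition} from memoryless updates $F(U,\cM,\dset_\train)=U-\eta\nabla_U\cL$ to stateful optimizers, and then check that Adam (and likewise AdamW) fits the extension. Concretely, for Adam the iterate at step $t$ is not just $U_t=(W_t,V_t)$ but the triple $(U_t,m_t,v_t)$, where the first- and second-moment buffers $m_t,v_t$ have exactly the shape of $U_t$. I would therefore enlarge the ``parameter'' to this triple and extend the group $\cG_\theta$ of \Cref{def:weight_permutation_3d} to act on it diagonally, $\pi_{\sigma_1,\sigma_2,\sigma_3}(U,m,v)\triangleq(\pi_{\sigma_1,\sigma_2,\sigma_3}(U),\pi_{\sigma_1,\sigma_2,\sigma_3}(m),\pi_{\sigma_1,\sigma_2,\sigma_3}(v))$: the relabeling of the entries of $W$ and $V$ induced by $(\sigma_1,\sigma_2,\sigma_3)$ is applied identically to both buffers. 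Since Adam initializes $m_0=v_0=\mathbf 0$, which is fixed by every $\pi$, condition (3) of \Cref{th:permutation_equivariance_definition} (permutation-invariant initialization) continues to hold for the augmented state, the parameter part already being invariant as in the proof of \Cref{th:permutation_equivaraince_3d}.

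The core step is to show one Adam step commutes with this action, so that by induction on $t$ an entire Adam trajectory started from a $\pi$-related state stays $\pi$-related; projecting onto the $U$-coordinate and combining with forward-pass equivariance (\Cref{lemma:permutaiton_equivariant_forward_3d}) then yields $\cA(\ngxy)\circ T\overset{d}{=}\cA(\nxy)$ as required by \Cref{defi:eqvariance}. The per-step claim rests on two facts. First, the loss-gradient map is equivariant: because $g(\pi U;T\vx)=g(U;\vx)$ (\Cref{lemma:permutaiton_equivariant_forward_3d}) the training loss on the transformed data at $\pi U$ equals the loss on the original data at $U$ (the $\ell_\infty$ regularizer, if included, is itself permutation-invariant since $\norm{\pi\theta}_\infty=\norm\theta_\infty$), and \Cref{lemma:permutaiton_equivariant_grad_3d} then gives $\nabla_U\cL(\cM(\pi U),\ngxy)=\pi\,\nabla_U\cL(\cM(U),\nxy)$. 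Second, every remaining operation in the Adam update is built only from coordinate-wise maps and scalar multiples: the convex combinations $m_{t+1}=\beta_1 m_t+(1-\beta_1)g_t$ and $v_{t+1}=\beta_2 v_t+(1-\beta_2)g_t^{\odot 2}$, the bias corrections (multiplication by the scalars $1/(1-\beta_1^{t+1})$, $1/(1-\beta_2^{t+1})$), and the final step $U_{t+1}=U_t-\eta\,\hat m_{t+1}/(\sqrt{\hat v_{t+1}}+\epsilon)$, all with division, square root and addition of $\epsilon$ taken entrywise. Coordinate-wise functions commute with any relabeling of coordinates, so each line of the update commutes with $\pi$, closing the induction. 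For the classification statement (\Cref{prop:permutation_equivaraince_2d}) nothing changes except that $\cG_{\cX,\cY}$ also permutes output labels; this is already accounted for by $\sigma_3$ acting on the rows of $V$ via the isomorphism $\tau\circ\kappa$ used in the proof of \Cref{prop:permutation_equivaraince_2d}, and the argument applies verbatim with $g$ replaced by $f$.

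The only genuinely non-mechanical part is the bookkeeping: \Cref{th:permutation_equivariance_definition} is stated for a one-line memoryless update, so one must either restate it for iterative algorithms carrying an internal state $S_t$ with a jointly equivariant update $(U_{t+1},S_{t+1})=F(U_t,S_t,\nabla_U\cL)$, or (equivalently) absorb $S_t$ into the augmented parameter as above and observe that the augmented update still has the required form. Once that is in place the per-step check is immediate, precisely because Adam---unlike a generic function of the flattened weight vector---only ever combines a scalar-indexed family of entrywise arithmetic operations with a single equivariant gradient evaluation. The same observation shows the corollary extends without change to AdamW (extra term $(1-\eta\lambda)U_t$, a scalar multiple of $U_t$), RMSProp, AdaGrad, and SGD with (Nesterov) momentum, since all of them share exactly this structure.
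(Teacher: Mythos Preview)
Your proposal is correct and follows essentially the same approach as the paper: both argue by induction on the step number, using that $m_0=v_0=0$ is fixed by every permutation, that the gradient is equivariant (via \Cref{lemma:permutaiton_equivariant_forward_3d,lemma:permutaiton_equivariant_grad_3d}), and that Adam's remaining operations are coordinate-wise (squaring, linear combinations, etc.) and hence commute with the action of $\cG_\theta$. Your write-up is in fact more careful than the paper's sketch---you make explicit the augmentation of the state to $(U,m,v)$ with the diagonal group action, spell out the bias-correction and $\sqrt{\hat v}+\epsilon$ steps, and note the bookkeeping needed because \Cref{th:permutation_equivariance_definition} is phrased for memoryless updates---but the underlying idea is identical.
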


\begin{proof}[Proof Sketch for \Cref{lem:adam_nn_permutation_invaraince_3d}]
To prove that other gradient-based algorithms, particularly Adam, are also equivariant to the permutation group $\cG_\cX$ we just need to ensure that the update rule of these algorithms is equivariant under the joint group action $(T, \tau(T))$. First, note that linear operations (such as weight decay) on gradients and parameters equivariant. To track the momentum at different steps of the algorithm we can apply an induction on equivariance of these variables on the step number. At $t=0$ they're both zero. $m_t$ is a linear combination $m_{t-1}$ and $g_t$ which both are equivarint under the joint action $(T, \tau(T))$. Since the gradient is equivariant, the coordinate-wise squared gradient is also equivariant and the linear combination of it with $v_{t_1}$ is also equivariant. This settles the equivariance of the update rule of Adam and other similar gradient-based algorithms that need memory and buffers. 
\end{proof}

\section{Lower Bound of Population Loss for Kernel Methods} \label{app:sec:kernel_lower_bound_3d}

In this section we present the formal version of \cref{th:kernel_lower_bound_3d} alongside a proof of it. Note that a kernel-based predictor $h$ on a training data $\left\{(\vx_i, y_i)\right\}_{i=1}^n$ can be expressed as $h(x) = \sum_{i=1}^n \lambda_i K(\vx_i, x)$ where $\lambda_i; i \! \in  \! [n]$ are constants. Assuming that the kernel's feature maps are of dimension $d$, the predictions are linear combinations of $d$-dimensional feature maps. We first present a general proof that every permutation invariant kernel requires $\Omega(p^2)$ training points to outperform the null predictor in terms of $\ell_2$ loss, and then show that this theorem applies to the distribution of empirical NTKs at initialization. 

\subsection{Notation}
We use $[p]$ to denote the set $\{1,\ldots,p\}$. We use $\mathbb{S}_p$ to denote the permutation groups over $p$ elements, $\sS_m$ as permutation group over $m$ elements and $\id$ is the identity mapping. For any nonempty set $\cX$, a symmetric function $K:\cX\times \cX\to \mathbb{R}$ is called a  positive semi-definite kernel (p.s.d) kernel on $\cX$ if for any $n\in\mathbb{N}$, any $\vx_1,\ldots,\vx_n$ and $\lambda_1,\ldots,\lambda_n\in\mathbb{R}$, it holds that $\sum_{i=1}^n\sum_{j=1}^n\lambda_i\lambda_jK(\vx_i,\vx_j)\ge 0$. For a subspace $V$ of $\mathbb{R}^n$ and vector $x\in\mathbb{R}^n$, we define $\dist(x,V)\triangleq \min_{v\in V}\norm{x-v}_2$.

For any $p^m$-dimensional vector $v \in \R^{p^m}$ we denote by $v(x)$ the vector $v$ indexed by an $m$-dimensional index vector $x \in [p]^m$. We define the vector $s_{i, a} \in \R^{p^m}$ whose entries are 
\begin{align}\label{eq:defi_s}
s_{i,a}(x) \triangleq \begin{cases}
    1, & x[i] = a; \\
    0, & \text{otherwise.}
\end{cases}
\end{align}
which helps us denote the all-once slices in this vector space, $V_{s} \triangleq \operatorname{span} \{s_{i,a}\}_{i \in [m], a \in [p]}$. We further define $\Delta(x, x')$ where $x, x'$ are two index vectors of size $m$ as the number of equal indices between them, formally:
\[
\Delta(x, x') = \sum_{i=1}^m \mathbf{1}_{x[i] = x'[i]}.
\]
We also define $\cX_{a,b}$ as the set of all $x, x' \in [p]^b$ index vector pairs such that $\dist(x, x') = a$, formally:
\[
\cX_{a,b} \triangleq \{(x, x'): x, x' \in [p]^b \wedge \Delta(x, x') = a\}.
\]
When $b=m$, we drop the second index and write $\cX_a$ (instead of $\cX_{a, m}$) for simplicity. It's clear that the collection of all $\cX_d$ for $0\le d \le m$ is a partitioning of the set of all pairs of index vectors. Unless stated otherwise, $x$ refers to the $m$-dimensional index vector. Finally, we define $\mathbb{U}_m \triangleq [\operatorname{Unif}(\sS_p)]^m$ as the product of $m$ Uniform distribution on $\sS_p$. 

\subsection{Loss Lower Bound: Regression Setting}

For convenience of notation, we will use the $(i,j, k)$ and $e_i+e_{j+p} + e_{k+2p}$ interchangebly for $i,j, k\in [p]$.  We denote the function $K(\vx_t,\cdot):[p]\times[p] \times [p] \to \mathbb{R}$ as a tensor on $\mathbb{R}^{p\times p \times p}$ by $v_t(\cdot)$ for each $t\in[n]$. We also define function $\Psi_{\sigma_1,\sigma_2, \sigma_3}(i,j, k)\triangleq \one \bigg(\sigma_1(i)+\sigma_2(j)\equiv \sigma_3(k)\ (\!\!\!\! \mod p)\bigg)$. We can view a function mapping from $[p]\times[p] \times [p] \to \mathbb{R}$ as a vector of size $p^3$ and define inner products and $\dist$ on the function space, \emph{i.e.}, $\inner{\Psi}{\Psi'}\triangleq\sum_{i,j,k\in [p]} \Psi(i,j,k)\Psi'(i,j,k)$ and $\norm{h}_2^2\triangleq \inner{h}{h}$.

\begin{theorem}\label{thm:kernel_lower_bound}
For any integers $n\ge 1$, $p\ge 2$ and kernel $K: \left([p]\times [p] \times [p]\right) \times \left([p]\times [p] \times [p]\right)\to \mathbb{R}$, for any $\vx_t = (i_t,j_t, k_t) \in [p]^3$ for each $t\in[n]$, it holds that 
\begin{align}\label{eq:kernel_loewr_bound}
\min_{\vx_1,\ldots,\vx_n}\E_{\sigma_1,\sigma_2,\sigma_3\sim \Unif(\mathbb{S}_p)}\inf_{\lambda_1,\ldots,\lambda_n\in\mathbb{R}} \norm{\sum_{t=1}^n \lambda_t K(\vx_t,\cdot)- \Psi_{\sigma_1,\sigma_2,\sigma_3}(\cdot)}_2^2 \ge p^2 \bigg( 1 - \frac{1}{p} - \frac{n}{p^3} \exp \left(\frac{2}{p-1} \right) \bigg)
\end{align}

In other words, if $n\le (1-\Omega(1))p^3$, then the expected population $\ell_2$ loss is at least $\Omega(p^2)$, which is of the same magnitude as the trivial all-zero predictor.
\end{theorem}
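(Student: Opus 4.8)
The plan is to reduce the inner infimum to a subspace--distance and then bound it by a standard ``random target versus low-dimensional subspace'' argument. Fix any p.s.d.\ kernel $K$ on $[p]^3$ and any $\vx_1,\dots,\vx_n\in[p]^3$; put $v_t=K(\vx_t,\cdot)\in\R^{p^3}$, $V=\spn\{v_1,\dots,v_n\}$ (so $\dim V\le n$), and abbreviate $\Psi_\sigma=\Psi_{\sigma_1,\sigma_2,\sigma_3}$. Then $\inf_{\lambda\in\R^n}\norm{\sum_t\lambda_t v_t-\Psi_\sigma}_2^2=\dist(\Psi_\sigma,V)^2$, so it suffices to lower-bound $\E_\sigma\dist(\Psi_\sigma,V)^2$ uniformly over all subspaces with $\dim V\le n$. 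Two quantities will control this: the handful of directions that every target $\Psi_\sigma$ shares (projected away for free), and the top eigenvalue of the second-moment matrix $M:=\E_\sigma[\Psi_\sigma\Psi_\sigma^\top]$ on what is left.

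First I would discard the slice space $V_s=\spn\{s_{i,a}\}$, which contains $\one=\sum_a s_{1,a}$. A one-line count shows $\inner{s_{i,a}}{\Psi_\sigma}=p$ for every $i,a,\sigma$ (fixing one coordinate of a triple leaves exactly $p$ solutions of $\sigma_1(\cdot)+\sigma_2(\cdot)\equiv\sigma_3(\cdot)$) while $\inner{s_{i,a}}{\one}=p^2$, so $\bar\Psi_\sigma:=\Psi_\sigma-\tfrac{1}{p}\one$ is orthogonal to $V_s$, with $\norm{\bar\Psi_\sigma}_2^2=\norm{\Psi_\sigma}_2^2-\tfrac{1}{p^2}\norm{\one}_2^2=p^2-p$ independent of $\sigma$ (here $\norm{\Psi_\sigma}_2^2=p^2$). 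Let $\Pi$ denote orthogonal projection onto $V_s^\perp$. For any $w\in V$, splitting $w$ along $V_s\oplus V_s^\perp$ and using $\tfrac{1}{p}\one\in V_s$ gives $\norm{\Psi_\sigma-w}_2^2\ge\norm{\bar\Psi_\sigma-\Pi w}_2^2$, and $\Pi w$ ranges over $\bar V:=\Pi(V)$ with $\dim\bar V\le n$; hence $\dist(\Psi_\sigma,V)^2\ge\norm{\bar\Psi_\sigma}_2^2-\norm{P_{\bar V}\bar\Psi_\sigma}_2^2$ with $P_{\bar V}$ the projection onto $\bar V$. Expanding $P_{\bar V}$ in an orthonormal basis $\bar u_1,\dots,\bar u_d\in V_s^\perp$ ($d\le n$), and noting that $\E_\sigma[\bar\Psi_\sigma\bar\Psi_\sigma^\top]$ differs from $M$ by a multiple of $\one\one^\top$ (which annihilates $V_s^\perp$) while $M s_{i,a}=p\,\E_\sigma\Psi_\sigma=\one$ forces $M(V_s)\subseteq V_s$ and hence $M(V_s^\perp)\subseteq V_s^\perp$, we obtain
\[
\E_\sigma\dist(\Psi_\sigma,V)^2\ \ge\ (p^2-p)-\sum_{r=1}^d\E_\sigma\inner{\bar u_r}{\bar\Psi_\sigma}^2\ \ge\ (p^2-p)-n\,\lambda_{\max}\!\big(M|_{V_s^\perp}\big).
\]

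It remains to evaluate $\lambda_{\max}(M|_{V_s^\perp})$, which is the core of the argument. Since the law of $\sigma$ is invariant under coordinatewise permutation and $\Psi_\sigma$ transforms equivariantly, $M$ commutes with the $\sS_p\times\sS_p\times\sS_p$-action on $\R^p\otimes\R^p\otimes\R^p$; equivalently $M$ lies in the Bose--Mesner algebra of the Hamming scheme $H(3,p)$, so $M_{x,x'}$ depends only on the number $\Delta(x,x')$ of coordinates on which $x$ and $x'$ agree. A short counting argument over random permutations --- for instance $\Delta(x,x')=2$ would force a permutation to map two distinct inputs to the same value, so that entry is $0$ --- gives $M_{x,x'}=\tfrac{1}{p},\ 0,\ \tfrac{1}{p(p-1)},\ \tfrac{p-2}{p(p-1)^2}$ according as $\Delta(x,x')=3,2,1,0$. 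Consequently $M$ is diagonalized by the scheme's eigenspaces, with eigenvalue on the weight-$w$ eigenspace the corresponding Krawtchouk combination of these four numbers; a direct evaluation (or testing $M$ against rank-one tensors $u\otimes v\otimes z$ with mean-zero factors) yields $\mu(0)=p$ (eigenvector $\one$), $\mu(1)=\mu(2)=0$, and $\mu(3)=p/(p-1)^2$. The weight-$0$ and weight-$1$ eigenspaces together span exactly $V_s$, so $\lambda_{\max}(M|_{V_s^\perp})=\max\{\mu(2),\mu(3)\}=p/(p-1)^2$. Finally $\tfrac{p}{(p-1)^2}=\tfrac{1}{p}\big(1+\tfrac{1}{p-1}\big)^2\le\tfrac{1}{p}\exp\!\big(\tfrac{2}{p-1}\big)$ since $2\ln(1+\tfrac{1}{p-1})\le\tfrac{2}{p-1}$, so the display gives $\E_\sigma\dist(\Psi_\sigma,V)^2\ge p^2-p-\tfrac{n}{p}\exp(\tfrac{2}{p-1})=p^2\big(1-\tfrac{1}{p}-\tfrac{n}{p^3}\exp(\tfrac{2}{p-1})\big)$; as this held for an arbitrary kernel and arbitrary $\vx_1,\dots,\vx_n$, taking the minimum over the $\vx_t$ completes the proof.

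The hard part will be the spectral computation of $M$: pinning down the four overlap-pattern entries from the random-permutation counting (in particular that $\Delta=2$ contributes $0$) and then extracting the eigenvalues $\mu(w)$ --- which is exactly where the Hamming-scheme / $(\sS_p)^3$-representation structure does the real work --- together with the bookkeeping ensuring that projecting onto $V_s^\perp$ discards precisely the dominant $\mu(0)=p$ direction and the identically-zero $\mu(1)$ directions, leaving only the $O(1/p)$-size eigenvalues $\mu(2),\mu(3)$.
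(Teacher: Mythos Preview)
Your proposal is correct and shares the paper's skeleton exactly: reduce to a subspace distance, strip off the slice space $V_s$, and bound $\E_\sigma\dist^2(\Psi_\sigma,V)\ge(p^2-p)-n\,\lambda_{\max}(M|_{V_s^\perp})$ with $M=\E_\sigma[\Psi_\sigma\Psi_\sigma^\top]$. The paper arrives at the same inequality via two abstract lemmas (Courant--Fischer giving $\E\dist^2\ge\sum_{i>n}\lambda_i(M)$, then $\sum_{i\le n}\lambda_i(M)\le \Tr(P_{V_s}MP_{V_s})+n\,\lambda_1((I-P_{V_s})M(I-P_{V_s}))$), and your direct projection argument is an equivalent but more explicit route to it.

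Where you genuinely diverge is the spectral step. The paper computes the overlap entries $C_d=M_{x,x'}$ (indexed by the number $d$ of disagreeing coordinates) through a two-term recursion, proves a separate combinatorial identity $\sum_{(x,x')\in\cX_d}v(x)v(x')=(-1)^d\binom{m}{d}$ for unit $v\perp V_s$, and sums via the binomial theorem to get $\sup_{v\perp V_s}v^\top Mv\le\frac{1}{p}(1+\frac{1}{p-1})^{m-1}$. You instead observe that $M$ lies in the Bose--Mesner algebra of the Hamming scheme $H(3,p)$, compute the four overlap entries directly by counting, and diagonalize over the weight decomposition to obtain the exact spectrum $\mu(0)=p$, $\mu(1)=\mu(2)=0$, $\mu(3)=p/(p-1)^2$. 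Your route is more structural and yields strictly more information (the full spectrum, and in particular that the weight-$2$ block is identically zero, which the paper's bound does not isolate); the paper's route is more elementary, avoids association-scheme language, and is written uniformly for general $m$. Both land on the same numerical value $p/(p-1)^2=\frac{1}{p}(1+\frac{1}{p-1})^2\le\frac{1}{p}\exp(\frac{2}{p-1})$ for $m=3$.
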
 

\begin{proof}[Proof of \Cref{thm:kernel_lower_bound}]
This Theorem is a direct result of \Cref{lem:m_dim_modular_addition_loss_lower_bound} for the case where $m=3$.
\end{proof}

\kernellowerbound*

\begin{proof}[Proof of \Cref{th:kernel_lower_bound_3d}]
    Because $\cA$ is permutation-equivariant, we have that 
    \begin{align*}
         &\E_{ (\vx_i,y_i)_{i=1}^n\sim \cD^n}\E_{\cA}\cL_{\ell_2}  \left(\cA\left( \{(\vx_i,y_i)\}_{i=1}^n\right) \right) \\
        =&\E_{ (\vx_i,y_i)_{i=1}^n\sim \cD^n}\E_{\cA} \|\cA\left( \{(\vx_i,y_i)\}_{i=1}^n\right) - p\cdot \Psi_{\id,\id,\id}\|^2/p^3 \\
         =&\E_{ (\vx_i,y_i)_{i=1}^n\sim \cD^n}\E_{\cA} \|\cA\left( \{(\vx_i,y_i)\}_{i=1}^n\right)/p - \Psi_{\id,\id,\id}\|^2/p \\
         =&\E_{ (\vx_i,y_i)_{i=1}^n\sim \cD^n} \E_{\sigma_1,\sigma_2,\sigma_3\sim \Unif(\mathbb{S}_p)} \E_{\cA} \|\cA\left( \{(\vx_i,y_i)\}_{i=1}^n\right) / p - \Psi_{\sigma_1,\sigma_2,\sigma_3}\|^2/p. 
    \end{align*}
    Because $\cA$ is a kernel method, we have for any $(\vx_i,y_i)_{i=1}^n$ and $\sigma_1, \sigma_2, \sigma_3 \in \sS_p$
    $$\|\cA\left( \{(\vx_i,y_i)\}_{i=1}^n\right) / p - \Psi_{\sigma_1,\sigma_2,\sigma_3}\|^2\ge\inf_{\lambda_1,\ldots,\lambda_n\in\mathbb{R}} \norm{\sum_{t=1}^n \lambda_t K(\vx_t,\cdot)-\Psi_{\sigma_1,\sigma_2,\sigma_3}(\cdot)}_2^2. $$ Applying \Cref{thm:kernel_lower_bound} completes the proof.
\end{proof}

\subsection{Loss Lower Bound: Classification Setting} \label{app:sec:classification_lower_bound}
For this subsection, we define the modular addition function $\Psi: [p] \times [p] \to [p]$ as $\big[\Psi^{\sigma_3}_{\sigma_1, \sigma_2}(i, j)\big]_k \triangleq \mathbf{1} \Big( \sigma_1(i) + \sigma_2(j) \equiv \sigma_3(k) \pmod p \Big)$ for all $k \in [p], \sigma_1, \sigma_2, \sigma_3 \in \sS_p$.
\begin{theorem} \label{thm:kernel_lower_bound_2d_internal}
For any integers $n > 1, p \ge 2$ and input-output permutation equivariant kernel $K: ([p] \times [p]) \times ([p] \times [p]) \to \R^{p \times p}$ (according to \Cref{prop:permutation_equivaraince_2d}), suppose $x_t = (i_t, j_t) \overset{i.i.d.}{\sim} \Unif([p] \times [p])$ for each $t \in [n]$ it holds that
\[
\E_{x_1,\ldots,x_n}\E_{\sigma_1,\sigma_2\sigma_3\sim \Unif(\sS_p)}\inf_{\lambda_1,\ldots,\lambda_n\in\R^p} \sum_{x \in [p] \times [p]} \norm{\sum_{t=1}^n K(x_t,x) \lambda_t - \Psi^{\sigma_3}_{\sigma_1,\sigma_2}(x)}_2^2 \ge p^2 \bigg( 1 - \frac{1}{p} - \frac{n}{p^2} \exp \left(\frac{2}{p-1} \right) \bigg).
\]
In other words, if $n\le (1-\Omega(1))p^2 $, then the expected population $\ell_2$ loss is at least $\Omega(p^2)$, which is of the same magnitude as the trivial all-zero predictor.
\end{theorem}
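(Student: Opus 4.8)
The plan is to reduce this statement to the regression lower bound \Cref{thm:kernel_lower_bound} (with $m=3$), exploiting the fact that a classification problem with $n$ data points ``unrolls'' into a regression problem with $np$ data points, as sketched in \Cref{sec:classification}. The crucial point making this painless is that \Cref{thm:kernel_lower_bound} takes a \emph{minimum} over the training inputs $\vx_1,\dots,\vx_n$, hence is distribution-free over training points; so we need not worry that the unrolled regression inputs are not i.i.d.\ uniform, nor that they may be repeated.

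First I would fix an arbitrary choice of classification training inputs $x_1,\dots,x_n\in[p]\times[p]$, writing $x_t=(i_t,j_t)$, and define the scalar kernel $\tilde K:[p]^3\times[p]^3\to\R$ by $\tilde K\big((i,j,k),(i',j',k')\big)\triangleq\big[K\big((i,j),(i',j')\big)\big]_{k,k'}$. A short computation (group a finite family of $[p]^3$ points by their first two coordinates and assemble, for each group, the corresponding $\R^p$-valued coefficient vector) shows that $\tilde K$ inherits positive semi-definiteness from the operator-valued p.s.d.\ property of $K$. Then, for $t\in[n]$ and $k\in[p]$, set the regression input $\vz_{t,k}\triangleq(i_t,j_t,k)\in[p]^3$, yielding $N\triangleq np$ (possibly repeated) regression inputs.

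Next I would verify the key identification. Given coefficients $\lambda_1,\dots,\lambda_n\in\R^p$, put $\mu_{t,k}\triangleq[\lambda_t]_k$; this is a bijection between $(\R^p)^n$ and $\R^{np}$. Directly from the definitions, for every $(i,j,k')\in[p]^3$ we have $\big[\sum_{t}K(x_t,(i,j))\lambda_t\big]_{k'}=\sum_{t,k}\mu_{t,k}\,\tilde K\big(\vz_{t,k},(i,j,k')\big)$ and $\big[\Psi^{\sigma_3}_{\sigma_1,\sigma_2}(i,j)\big]_{k'}=\Psi_{\sigma_1,\sigma_2,\sigma_3}(i,j,k')$. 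Summing the squared discrepancies over $x=(i,j)\in[p]^2$ and over $k'\in[p]$ therefore gives
\[
\sum_{x\in[p]^2}\Big\|\sum_{t=1}^nK(x_t,x)\lambda_t-\Psi^{\sigma_3}_{\sigma_1,\sigma_2}(x)\Big\|_2^2
=\Big\|\sum_{t=1}^n\sum_{k=1}^p\mu_{t,k}\,\tilde K(\vz_{t,k},\cdot)-\Psi_{\sigma_1,\sigma_2,\sigma_3}(\cdot)\Big\|_2^2 .
\]
Taking the infimum over coefficients on the left (equivalently the infimum over $\mu\in\R^{np}$ on the right, by the bijection) and then $\E_{\sigma_1,\sigma_2,\sigma_3\sim\Unif(\sS_p)}$ on both sides, I would lower-bound the right-hand side by $\min_{\vz_1,\dots,\vz_{np}}\E_{\sigma}\inf_{\mu\in\R^{np}}\|\sum_{s}\mu_s\tilde K(\vz_s,\cdot)-\Psi_{\sigma_1,\sigma_2,\sigma_3}\|_2^2$ and apply \Cref{thm:kernel_lower_bound} with $n\mapsto np$ and $m=3$. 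This gives $p^2\big(1-\tfrac1p-\tfrac{np}{p^3}\exp(\tfrac{2}{p-1})\big)=p^2\big(1-\tfrac1p-\tfrac{n}{p^2}\exp(\tfrac{2}{p-1})\big)$. Since this bound holds for every fixed $x_1,\dots,x_n$, it survives taking $\E_{x_1,\dots,x_n\sim\Unif([p]\times[p])}$, which completes the argument.

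The proof is essentially bookkeeping; the one place that needs care is the identification in the previous paragraph — keeping the matrix/vector indices straight and confirming the two infima range over genuinely equivalent parameter sets (and, if \Cref{thm:kernel_lower_bound} is to be invoked for p.s.d.\ kernels rather than arbitrary symmetric functions, checking that $\tilde K$ is p.s.d.). All of the real mathematical content is already carried by \Cref{thm:kernel_lower_bound} and its generalization \Cref{lem:m_dim_modular_addition_loss_lower_bound}, which I am taking as given.
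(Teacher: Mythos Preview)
Your proposal is correct and follows essentially the same approach as the paper: unroll the matrix-valued kernel and the vector-valued coefficients into a scalar regression problem with $np$ inputs, then invoke \Cref{thm:kernel_lower_bound} with $n\mapsto np$. Your presentation is arguably cleaner in that you explicitly highlight that \Cref{thm:kernel_lower_bound} takes a minimum over training inputs and is therefore distribution-free, which is exactly what absorbs the non-i.i.d.\ grouped structure of the unrolled data (the paper makes this point more obliquely); the p.s.d.\ check you flag is in fact unnecessary since the underlying \Cref{lem:lower_bounded_loss_top_eigvals} applies to arbitrary functions $k_i$.
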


\begin{proof}
Note that
\begin{align}
&\E_{x_1,\ldots,x_n}\E_{\sigma_1,\sigma_2,\sigma_3\sim \Unif(\sS_p)}\inf_{\lambda\in\R^{n \times p}} \sum_{x \in [p] \times [p]} \norm{\sum_{i=1}^n K(x_i,x) \lambda_i - \Psi^{\sigma_3}_{\sigma_1,\sigma_2}(x)}_2^2 \notag \\
&\qquad =\E_{x_1,\ldots,x_n}\E_{\sigma_1,\sigma_2\sim \Unif(\sS_p)}\inf_{\lambda\in\R^{n \times p}} \sum_{x \in [p] \times [p]} \sum_{j=1}^p \left(\sum_{i=1}^n \inner{\left[K(x_i, x) \right]_{j,:}}{\lambda_i} - \left[ \Psi^{\sigma_3}_{\sigma_1, \sigma_2}(x) \right]_{j} \right)^2 \notag \\
&\qquad = \E_{x_1,\ldots,x_n}\E_{\sigma_1,\sigma_2,\sigma_3\sim \Unif(\sS_p)}\inf_{\lambda\in\R^{n \times p}} \sum_{x \in [p] \times [p]} \sum_{j=1}^p \left(\sum_{i=1}^n \sum_{k=1}^p \lambda_{i,k} \left[K(x_i, x) \right]_{j, k}  - \left[ \Psi^{\sigma_3}_{\sigma_1, \sigma_2}(x)\right]_{j} \right)^2.
\end{align}
We define
\begin{align}
&K'\big((i, j, k), ({i'}, {j'}, {k'})\big) \triangleq K\big((i, j), ({i'}, {j'})\big)_{k, k'},  \notag \\
&\Psi'_{\sigma_1, \sigma_2, \sigma_3}\big( (i, j, k) \big) \triangleq \left[\Psi^{\sigma_3}_{\sigma_1, \sigma_2} \big( (i, j) \big) \right]_{k}, \notag \\
&\lambda'_t \triangleq \lambda_{\floor{t/p}, t \hspace*{-2mm} \mod p}, \notag \\
&x'_t \triangleq (i_{\floor{t/p}}, j_{\floor{t/p}},  t \hspace*{-2mm} \mod p) 
\end{align}
for any $i, j, k, i', j', k' \in [p]$, $\sigma_1, \sigma_2, \sigma_3 \in \sS_p$, $t \in [np]$, $\lambda \in \R^{n \times p}$ and $\lambda' \in \R^{np}$. It can be seen that
\begin{align}
&\E_{\substack{x_1,\ldots,x_n \\ \sigma_1,\sigma_2,\sigma_3\sim \Unif(\sS_p)}} \inf_{\lambda\in\R^{n \times p}} \sum_{x \in [p] \times [p]} \sum_{j=1}^p \left(\sum_{i=1}^n \sum_{k=1}^p \lambda_{i,k} \left[K(x_i, x) \right]_{j, k}  - \left[ \Psi^{\sigma_3}_{\sigma_1, \sigma_2}(x)\right]_{j} \right)^2 \notag \\
&\E_{\substack{x'_1,\ldots,x'_{np} \\ \sigma_1,\sigma_2,\sigma_3\sim \Unif(\sS_p)}}\inf_{\lambda'\in\R^{np}} \sum_{x \in [p] \times [p]} \sum_{j=1}^p \left(\sum_{i=1}^{np} \lambda'_i K'\big(x'_i, (x[0], x[1], i \hspace*{-2.5mm}\mod p)\big)- \Psi'_{\sigma_1, \sigma_2, \sigma_3}(x) \right)^2.
\end{align}
This is similar to \Cref{thm:kernel_lower_bound} except that the underlying data is generated through sampling from independent groups each having a size of $p$ (we sample i.i.d from $\Unif([p] \times [p])$ and for each sample we consider the set of all possible responses). Applying the result of \Cref{thm:kernel_lower_bound} yields
\begin{align}
&\E_{\substack{x'_1,\ldots,x'_{np} \\ \sigma_1,\sigma_2,\sigma_3\sim \Unif(\sS_p)}}\inf_{\lambda'\in\R^{np}} \sum_{x \in [p] \times [p]} \sum_{j=1}^p \left(\sum_{i=1}^{np} \lambda'_i K'\big(x'_i, (x[0], x[1], i \hspace*{-2.5mm}\mod p)\big)- \Psi'_{\sigma_1, \sigma_2, \sigma_3}(x) \right)^2 \notag \\
&\qquad\qquad \ge p^2 \left(1 - \frac{1}{p} - \frac{n}{p^2} \exp \left(\frac{2}{p-1} \right) \right)
\end{align}
which completes the proof.
\end{proof}

\kernellowerboundclass*

\begin{proof}[Proof of \Cref{th:kernel_lower_bound_2d}]
    Because $\cA$ is input-output permutation-equivariant, we have that 
    \begin{align*}
         &\E_{ (\vx_i,y_i)_{i=1}^n\sim \cD^n}\E_{\cA}\cL_{\ell_2}  \left(\cA\big( \{(\vx_i,y_i)\}_{i=1}^n\big) \right) \\
         &\qquad=\E_{ (\vx_i,y_i)_{i=1}^n\sim \cD^n}\E_{\cA} \sum_{x \in [p] \times [p]} \left\| \cA\big( \{(\vx_i,y_i)\}_{i=1}^n\big)(x) - p\cdot \Psi^{\id}_{\id,\id}(x) \right\|_2^2/p^2 \\
         &\qquad=\E_{ (\vx_i,y_i)_{i=1}^n\sim \cD^n}\E_{\cA} \sum_{x \in [p] \times [p]} \left\|\cA\big( \{(\vx_i,y_i)\}_{i=1}^n\big)(x)/p - \Psi^{\id}_{\id,\id}(x)\right\|_2^2 \\
         &\qquad=\E_{ (\vx_i,y_i)_{i=1}^n\sim \cD^n} \E_{\sigma_1,\sigma_2,\sigma_3\sim \Unif(\mathbb{S}_p)} \E_{\cA} \sum_{x \in [p] \times [p]} \left\|\cA\big( \{(\vx_i,y_i)\}_{i=1}^n\big)(x) / p - \Psi^{\sigma_3}_{\sigma_1,\sigma_2}(x)\right\|_2^2. 
    \end{align*}
    Because $\cA$ is a kernel method, we have for any $(\vx_i,y_i)_{i=1}^n$ and $\sigma_1, \sigma_2, \sigma_3 \in \sS_p$
    $$\sum_{x \in [p] \times [p]} \left\|\cA\big( \{(\vx_i,y_i)\}_{i=1}^n\big)(x) / p - \Psi^{\sigma_3}_{\sigma_1,\sigma_2}(x)\right\|_2^2\ge\inf_{\lambda_1,\ldots,\lambda_n\in\mathbb{R}} \sum_{x \in [p] \times [p]}\norm{\sum_{t=1}^n \lambda_t K(\vx_t,x)-\Psi^{\sigma_3}_{\sigma_1,\sigma_2}(x)}_2^2. $$ Applying \Cref{thm:kernel_lower_bound_2d_internal} completes the proof.
\end{proof}

\subsection{Loss Lower Bound: General Theorem For Arbitrary Functions} \label{app:sec:general_lower_bound}

\begin{lemma}\label{lem:dist}
	For any subspace $V$ of $\mathbb{R}^n$ and vector $x\in\mathbb{R}^n$, let $\{v_i\}_{i=1}^m$ be an orthonormal basis of $V$. It holds that $\dist^2(x,V) = \norm{x}_2^2 - \sum_{i=1}^m\inner{x}{v_i}^2$.
\end{lemma}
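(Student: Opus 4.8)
The plan is to identify the minimizer in $\dist^2(x,V) = \min_{v\in V}\norm{x-v}_2^2$ explicitly as the orthogonal projection of $x$ onto $V$, and then compute by Pythagoras. First I would set $\hat v \triangleq \sum_{i=1}^m \inner{x}{v_i}\, v_i \in V$, the candidate projection. For any $j\in[m]$, orthonormality of $\{v_i\}_{i=1}^m$ gives $\inner{x-\hat v}{v_j} = \inner{x}{v_j} - \sum_{i=1}^m \inner{x}{v_i}\inner{v_i}{v_j} = \inner{x}{v_j} - \inner{x}{v_j} = 0$, so $x-\hat v$ is orthogonal to every basis vector of $V$, hence orthogonal to all of $V$.

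Next I would verify that $\hat v$ achieves the minimum. For an arbitrary $v\in V$, decompose $x - v = (x-\hat v) + (\hat v - v)$, where $\hat v - v\in V$ while $x-\hat v\perp V$; the Pythagorean identity then yields $\norm{x-v}_2^2 = \norm{x-\hat v}_2^2 + \norm{\hat v - v}_2^2 \ge \norm{x-\hat v}_2^2$. Taking the infimum over $v\in V$ shows $\dist^2(x,V) = \norm{x-\hat v}_2^2$.

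Finally I would evaluate $\norm{x-\hat v}_2^2$ by applying the same orthogonal decomposition with $v = 0$: $\norm{x}_2^2 = \norm{x-\hat v}_2^2 + \norm{\hat v}_2^2$, and $\norm{\hat v}_2^2 = \sum_{i=1}^m\sum_{j=1}^m \inner{x}{v_i}\inner{x}{v_j}\inner{v_i}{v_j} = \sum_{i=1}^m\inner{x}{v_i}^2$, again using orthonormality. Rearranging gives $\dist^2(x,V) = \norm{x}_2^2 - \sum_{i=1}^m\inner{x}{v_i}^2$, as claimed. There is no genuine obstacle in this lemma; the only point requiring care is that orthonormality (and not merely orthogonality of the $v_i$, nor merely that they span $V$) is invoked at the two steps where $\inner{v_i}{v_j}=\delta_{ij}$ is used.
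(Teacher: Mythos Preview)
Your proof is correct and is the standard argument via orthogonal projection and the Pythagorean identity. The paper itself omits the proof entirely, stating only that it is straightforward, so there is nothing to compare against; your write-up is exactly the kind of verification the authors had in mind.
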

The proof of \Cref{lem:dist} is straightforward and thus omitted.   

\begin{lemma} \label{lem:lower_bounded_loss_top_eigvals}
For any distribution $\cD$ over functions mapping from $\cX \to \R$ and $n$ functions $\{k_i\}_{i=1}^n$ where $k_i: \cX \to \R$ for each $i \in [n]$ it holds that
\[
\E_{h \sim \cD} \left[ \min_{\alpha \in \R^n} \frac{1}{\abs{\cX}} \sum_{x \in \cX} \left(\Psi(x) - \sum_{i=1}^n \alpha_i k_i(x) \right)^2\right] \ge \frac{1}{\abs{\cX}} \sum_{i=n+1}^{\abs{\cX}} \lambda_i(\Sigma).
\]
 where $\Sigma_\cD(x, x') \triangleq \E_{\Psi \sim\cD}\left[\Psi(x) \Psi(x') \right]$ and $\lambda_i$ denotes the $i$'th largest eigenvalue function. For notational convenience, we also view $\Sigma_\cD$ as a $\R^{\abs{\cX} \times \abs{\cX}}$ matrix. 
\end{lemma}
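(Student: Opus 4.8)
The plan is to identify real-valued functions on $\cX$ with vectors in $\R^{\abs{\cX}}$ under the (unnormalized) Euclidean inner product, so that for a fixed $\Psi$ we have $\min_{\alpha\in\R^n}\sum_{x\in\cX}\big(\Psi(x)-\sum_{i}\alpha_i k_i(x)\big)^2 = \dist^2(\Psi, V)$, where $V \triangleq \spn\{k_i\}_{i=1}^n$ is a subspace of dimension $m\le n$. Hence the left-hand side of the claim is $\tfrac{1}{\abs\cX}\,\E_{\Psi\sim\cD}\dist^2(\Psi,V)$, and the task reduces to lower bounding this expected squared distance by the spectrum of the second-moment matrix $\Sigma \triangleq \Sigma_\cD = \E_{\Psi\sim\cD}[\Psi\Psi^\top]$.

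First I would pick an orthonormal basis $v_1,\dots,v_m$ of $V$ and extend it to an orthonormal basis $v_1,\dots,v_{\abs\cX}$ of $\R^{\abs\cX}$. By \cref{lem:dist}, $\dist^2(\Psi,V) = \norm{\Psi}_2^2 - \sum_{i=1}^m \inner{\Psi}{v_i}^2 = \sum_{i=m+1}^{\abs\cX}\inner{\Psi}{v_i}^2$. Since $m\le n$ and each term is nonnegative, $\dist^2(\Psi,V) \ge \sum_{i=n+1}^{\abs\cX}\inner{\Psi}{v_i}^2$. Taking expectations over $\Psi\sim\cD$ and using $\E_{\Psi}\inner{\Psi}{v_i}^2 = v_i^\top\Sigma v_i$ gives
\[
\E_{\Psi\sim\cD}\dist^2(\Psi,V) \;\ge\; \sum_{i=n+1}^{\abs\cX} v_i^\top \Sigma\, v_i .
\]

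Next I would invoke the Courant--Fischer / Ky Fan variational characterization of the sum of the \emph{smallest} eigenvalues: for any symmetric PSD matrix $\Sigma\in\R^{\abs\cX\times\abs\cX}$ and any orthonormal set $\{u_j\}_{j=1}^{k}$ in $\R^{\abs\cX}$, one has $\sum_{j=1}^k u_j^\top\Sigma u_j \ge \sum_{i=\abs\cX-k+1}^{\abs\cX}\lambda_i(\Sigma)$. Applying this with $k=\abs\cX-n$ and the orthonormal set $\{v_i\}_{i=n+1}^{\abs\cX}$ yields $\sum_{i=n+1}^{\abs\cX} v_i^\top\Sigma v_i \ge \sum_{i=n+1}^{\abs\cX}\lambda_i(\Sigma)$; combining with the previous display and dividing by $\abs\cX$ finishes the argument.

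The proof is essentially linear-algebraic and I do not anticipate a genuine obstacle; the only two points needing mild care are (i) the reduction from $m=\dim V$ down to $n$, handled by discarding the extra nonnegative terms $\inner{\Psi}{v_i}^2$ for $m<i\le n$, and (ii) stating the eigenvalue inequality for the smallest (not largest) eigenvalues. As an alternative, one can run the whole argument trace-theoretically: $\E_{\Psi}\dist^2(\Psi,V) = \tr\!\big(P_{V^\perp}\Sigma\big)$, where $P_{V^\perp}$ is the orthogonal projection onto $V^\perp$ (of rank $\ge\abs\cX-n$), and $\tr(P_{V^\perp}\Sigma)$ is at least the sum of the $\dim V^\perp$ smallest eigenvalues of $\Sigma$, which in turn is at least $\sum_{i=n+1}^{\abs\cX}\lambda_i(\Sigma)$ because the additional eigenvalues are nonnegative.
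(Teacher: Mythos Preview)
Your proposal is correct and essentially the same as the paper's proof: both identify functions with vectors in $\R^{\abs\cX}$, express the expected squared distance in terms of $\Sigma_\cD$, and apply the Courant--Fischer/Ky Fan variational principle. The only cosmetic difference is that the paper writes $\E_\Psi\dist^2(\Psi,V)=\Tr(\Sigma_\cD)-\sum_{t=1}^n v_t^\top\Sigma_\cD v_t$ and upper bounds the subtracted sum by $\sum_{i=1}^n\lambda_i(\Sigma_\cD)$, whereas you work directly with the complementary sum $\sum_{i=n+1}^{\abs\cX} v_i^\top\Sigma_\cD v_i$ and lower bound it; you are also slightly more careful in handling the case $\dim V=m<n$.
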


\begin{proof}[Proof of \Cref{lem:lower_bounded_loss_top_eigvals}]
For each $\Psi$, we define $r_\Psi \in \R^{\abs{\cX}}$ whose entries are realization of the function $\Psi$ on inputs $x \in \cX$. It suffices to show that
\[
\E_{h \sim \cD} \left[ \dist^2(r_\Psi, V) \right] \ge \sum_{i=n+1}^{\abs{\cX}} \lambda_i(\Sigma_\cD)
\]
for any subspace $V = \operatorname{span}\{v_1, v_2, \cdots, v_n\} \subset \R^{\abs{\cX} \times \abs{\cX}}$ where $v_i$ for $i \in [n]$ are orthonormal vectors. Note that
\begin{align}
\E_{\Psi \sim \cD} \left[ \dist^2(r_\Psi, V) \right] &= \E_{\Psi \sim \cD} \left[ \norm{r_\Psi}_2^2 - \sum_{t=0}^n \inner{v_t}{r_\Psi}^2 \right] \notag \\
&= \Tr\left(\E_{\Psi \sim \cD} \left[ \Psi \Psi^\top \right] \right) -  \sum_{t=1}^n v_t^\top \left(\E_{\Psi \sim \cD} \left[ \Psi \Psi^\top \right] \right) v_t \notag \\
&\ge \Tr\left( \Sigma_\cD \right) - \sum_{i=1}^n \lambda_i \left(\Sigma_\cD \right) \notag \\
&= \sum_{i=n+1}^{\abs{\cX}} \lambda_i (\Sigma_\cD)
\end{align}
where the inequality in the second to last line is due to the min-max theorem (also called Courant–Fischer–Weyl min-max principle) \citep{courant1953methods}.
This completes the proof. 
\end{proof}

The following \Cref{lem:lower_bounded_loss_vs_top_n_eigvals} is a direct consequence of \Cref{lem:lower_bounded_loss_top_eigvals}, noting that $\E_{\Psi \sim \cD} \left[\frac{1}{\abs{\cX}} \sum_{x \in \cX} \Psi(x)^2 \right] = \Tr(\Sigma_\cD)$.
\begin{corollary} \label{lem:lower_bounded_loss_vs_top_n_eigvals}
For any distribution $\cD$ over functions mapping from $\cX \to \R$ and $n$ functions $\{k_i\}_{i=1}^n$ where $k_i: \cX \to \R$ for each $i \in [n]$, if $\sum_{i=1}^n \lambda_i(\Sigma_\cD) \le \frac{1}{2} \Tr(\Sigma_\cD)$ then it is guaranteed that
\[
\E_{\Psi \sim \cD} \left[ \min_{\alpha \in \R^n} \frac{1}{\abs{\cX}} \sum_{x \in \cX} \left(\Psi(x) - \sum_{i=1}^n \alpha_i k_i(x) \right)^2\right] \ge \frac{1}{2} \E_{\Psi \sim \cD} \left[\frac{1}{\abs{\cX}} \sum_{x \in \cX} \Psi(x)^2 \right]
\]
where the right-hand side denotes the expected loss of the all-0 predictor.
\end{corollary}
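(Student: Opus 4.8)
The plan is to obtain \Cref{lem:lower_bounded_loss_vs_top_n_eigvals} as an immediate consequence of \Cref{lem:lower_bounded_loss_top_eigvals}, so the write-up is essentially a one-line bookkeeping step. First I would record that $\Sigma_\cD$ is symmetric positive semi-definite: for every $v$, $v^\top \Sigma_\cD v = \E_{\Psi\sim\cD}\big[(\sum_{x\in\cX} v_x\,\Psi(x))^2\big]\ge 0$. Consequently all of its eigenvalues $\lambda_i(\Sigma_\cD)$ are nonnegative, and its trace equals the sum of all eigenvalues, $\Tr(\Sigma_\cD)=\sum_{i=1}^{\abs{\cX}}\lambda_i(\Sigma_\cD)$. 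I would also recall the identity noted just before the statement, which identifies $\Tr(\Sigma_\cD)$ with the loss of the all-zero predictor, $\E_{\Psi\sim\cD}\big[\frac{1}{\abs{\cX}}\sum_{x\in\cX}\Psi(x)^2\big]$.

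Next I would apply \Cref{lem:lower_bounded_loss_top_eigvals} verbatim, giving
\[
\E_{\Psi \sim \cD} \left[ \min_{\alpha \in \R^n} \frac{1}{\abs{\cX}} \sum_{x \in \cX} \Big(\Psi(x) - \sum_{i=1}^n \alpha_i k_i(x) \Big)^2\right] \ \ge\ \frac{1}{\abs{\cX}} \sum_{i=n+1}^{\abs{\cX}} \lambda_i(\Sigma_\cD).
\]
Then I would rewrite the tail of the spectrum as $\sum_{i=n+1}^{\abs{\cX}}\lambda_i(\Sigma_\cD) = \Tr(\Sigma_\cD) - \sum_{i=1}^{n}\lambda_i(\Sigma_\cD)$, and invoke the hypothesis $\sum_{i=1}^{n}\lambda_i(\Sigma_\cD)\le \tfrac12\Tr(\Sigma_\cD)$ to bound this below by $\tfrac12\Tr(\Sigma_\cD)$. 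Substituting back and using the trace identity from the previous paragraph yields exactly the claimed bound $\ge \tfrac12\,\E_{\Psi\sim\cD}\big[\frac{1}{\abs{\cX}}\sum_{x\in\cX}\Psi(x)^2\big]$.

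There is no genuine obstacle here: all the content sits in \Cref{lem:lower_bounded_loss_top_eigvals} (whose proof uses the Courant--Fischer min-max characterization of the sum of the top $n$ eigenvalues). The only point worth stating explicitly is the positive semi-definiteness of $\Sigma_\cD$, which is what makes the identity ``tail eigenvalue mass $=$ trace $-$ top-$n$ mass'' meaningful and guarantees the quantities being compared are nonnegative. I would keep the final proof to two or three lines, citing \Cref{lem:lower_bounded_loss_top_eigvals} for the inequality and the preceding remark for the trace identity.
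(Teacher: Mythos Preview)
Your proposal is correct and matches the paper's approach exactly: the paper simply states that the corollary is a direct consequence of \Cref{lem:lower_bounded_loss_top_eigvals} together with the trace identity for the all-zero predictor, and your write-up just spells out the one-line subtraction $\sum_{i>n}\lambda_i=\Tr(\Sigma_\cD)-\sum_{i\le n}\lambda_i\ge\tfrac12\Tr(\Sigma_\cD)$. The only minor cleanup is that the trace identity should read $\E_{\Psi}\bigl[\tfrac{1}{\abs{\cX}}\sum_x\Psi(x)^2\bigr]=\tfrac{1}{\abs{\cX}}\Tr(\Sigma_\cD)$ (the paper's phrasing drops the $\tfrac{1}{\abs{\cX}}$), but your chain of inequalities is unaffected since the normalization is already carried through from \Cref{lem:lower_bounded_loss_top_eigvals}.
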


\begin{lemma} \label{lem:bound_top_eigvals_vs_first_projected_eigval}
For any matrix $\Sigma \in \R^{d \times d}$, subspace $V$ and projection matrix $P_V \in \R^{d \times d}$ corresponding to $V$, it holds that 
\[
\sum_{i=1}^n \lambda_i (\Sigma) \le n \cdot \lambda_1\bigg((I - P_V) \Sigma (I - P_V)\bigg) + \Tr\left( P_V \Sigma P_V \right).
\]
\end{lemma}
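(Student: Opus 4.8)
The plan is to reduce the statement to the Cauchy interlacing theorem for a principal submatrix of $\Sigma$; I treat $\Sigma$ as symmetric positive semidefinite, which is the relevant case (in the application $\Sigma=\Sigma_\cD$ is a Gram matrix). First I would normalize: conjugating $\Sigma$ by an orthogonal matrix that rotates $V$ onto $\operatorname{span}(e_1,\dots,e_k)$, $k:=\dim V$, changes none of the three quantities in the inequality, since it preserves the spectrum of $\Sigma$, the number $\Tr(P_V\Sigma P_V)$, and the spectrum of $(I-P_V)\Sigma(I-P_V)$. After this, write $\Sigma=\begin{pmatrix}A & B\\ B^\top & C\end{pmatrix}$ with $A\in\R^{k\times k}$ and $C\in\R^{(d-k)\times(d-k)}$; then $\Tr(P_V\Sigma P_V)=\Tr A$ and, because $C\succeq 0$, $\lambda_1\big((I-P_V)\Sigma(I-P_V)\big)=\lambda_1(C)$. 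Using $\Tr\Sigma=\sum_{i=1}^{d}\lambda_i(\Sigma)=\Tr A+\Tr C$, the claimed bound (assuming $n\le d$, the case $n>d$ being immediate since then $\Tr C\le d\,\lambda_1(C)\le n\,\lambda_1(C)$) is equivalent to
\[
    \sum_{i=n+1}^{d}\lambda_i(\Sigma)\;\ge\;\Tr C-n\,\lambda_1(C).
\]

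Then I would prove this last inequality by bounding its right-hand side through the eigenvalues of $C$ alone and transferring to $\Sigma$ via interlacing. Writing $\Tr C=\sum_{i=1}^{d-k}\lambda_i(C)$ and peeling off the top $\min(n,d-k)$ eigenvalues, which satisfy $\sum_{i=1}^{\min(n,d-k)}\lambda_i(C)\le n\,\lambda_1(C)$ (immediate from $\lambda_i(C)\le\lambda_1(C)$ together with $\lambda_1(C)\ge0$), gives $\Tr C-n\,\lambda_1(C)\le\sum_{i=\min(n,d-k)+1}^{d-k}\lambda_i(C)$ — an empty, hence zero, sum when $n\ge d-k$. Now $C$ is the principal submatrix of $\Sigma$ obtained by deleting $k$ rows and the matching columns, so Cauchy interlacing gives $\lambda_i(C)\le\lambda_i(\Sigma)$ for $1\le i\le d-k$, and therefore
\[
    \sum_{i=\min(n,d-k)+1}^{d-k}\lambda_i(C)\;\le\;\sum_{i=\min(n,d-k)+1}^{d-k}\lambda_i(\Sigma)\;\le\;\sum_{i=n+1}^{d}\lambda_i(\Sigma),
\]
where the final step appends the eigenvalues $\lambda_{d-k+1}(\Sigma),\dots,\lambda_d(\Sigma)$, all nonnegative since $\Sigma\succeq 0$. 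This is exactly the reduced inequality.

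The main obstacle is getting the constant to be exactly $1$. The natural attempt — split $\Sigma=P_V\Sigma P_V+\big(P_V\Sigma(I-P_V)+(I-P_V)\Sigma P_V\big)+(I-P_V)\Sigma(I-P_V)$ and apply Ky Fan's inequality to the top-$n$ eigenvalue sum of each block — fails, because the cross terms can only be absorbed up to a Cauchy–Schwarz factor $2\sqrt{\Tr(P_V\Sigma P_V)\cdot n\,\lambda_1((I-P_V)\Sigma(I-P_V))}$, which blows the constant up to roughly $4$. The resolution above sidesteps this by working with the complementary tail $\sum_{i>n}\lambda_i(\Sigma)$, which one bounds from below by interlacing with no cross terms at all; positive semidefiniteness of $\Sigma$ is used only at the end, to discard the smallest $k$ eigenvalues of $\Sigma$ with the right sign.
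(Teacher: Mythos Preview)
Your proof is correct (under the positive semidefinite assumption you state, which the paper also needs), but it takes a genuinely different route from the paper's. The paper argues directly with the top-$n$ eigenspace: letting $V_n$ be the span of the top $n$ eigenvectors of $\Sigma$, it writes $\sum_{i=1}^n\lambda_i(\Sigma)=\Tr(P_{V_n}\Sigma P_{V_n})\le \Tr(P_{V_n+V}\Sigma P_{V_n+V})$, then splits $P_{V_n+V}=P_V+(P_{V_n+V}-P_V)$ into orthogonal projections so that the trace decomposes with no cross terms, and finishes by observing that $P_{V_n+V}-P_V$ has rank at most $n$ and range in $V^\perp$, giving $\Tr((P_{V_n+V}-P_V)\Sigma(P_{V_n+V}-P_V))\le n\,\lambda_1((I-P_V)\Sigma(I-P_V))$. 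Your argument instead passes to the complementary tail sum $\sum_{i>n}\lambda_i(\Sigma)$ and bounds it from below via Cauchy interlacing for the principal submatrix $C$. The paper's route is shorter and more self-contained --- the rank-$n$ projection observation dispatches the cross-term issue in one line --- while your route trades that for a named classical tool and makes the role of the tail explicit. Both use positive semidefiniteness in an essential way (the paper for $\Tr(P_{V_n}\Sigma P_{V_n})\le\Tr(P_{V_n+V}\Sigma P_{V_n+V})$; you for discarding the smallest $k$ eigenvalues of $\Sigma$).
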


\begin{proof}
Let $V_n = \operatorname{span}\{\alpha_i\}_{i=1}^n$ where $\alpha_i \in \R^d$ is the $i$'th eigenvector of $\Sigma$ and let $P_{V_n}$ be its corresponding projection matrix. Let $P_{V_n + V}$ be the projection onto the sum of two subspaces $V_n + V$, it holds that
\begin{align}
\sum_{i=1}^n \lambda_i(\Sigma) &= \Tr(P_{V_n} \Sigma P_{V_n}) \notag \\
&\le \Tr(P_{V_n + V} \Sigma P_{V_n + V}) \notag \\
&= \Tr(P_V \Sigma P_V) + \Tr\big( (P_{V_n + V} - P_V) \Sigma (P_{V_n + V} - P_V)\big) \notag \\
&\le \Tr(P_V \Sigma P_V) + n \cdot \lambda_1 \big((P_{V_n + V}-P_V) \Sigma (P_{V_n + V} - P_V)\big) \notag\\
&\le \Tr(P_V \Sigma P_V) + n \cdot \lambda_1 \big((I-P_V) \Sigma (I - P_V)\big).
\end{align}
Here the second to last inequality is because $P_{V_n + V}-P_V$ is at most rank-$n$ and so is $(P_{V_n + V}-P_V) \Sigma (P_{V_n + V} - P_V)$. The last inequality is because $P_{V_n + V}\le I$ and thus $(P_{V_n + V}-P_V) \Sigma (P_{V_n + V} - P_V) \preceq (I-P_V) \Sigma (I - P_V)$.
\end{proof}

\subsection{Loss Lower Bound: Modular Addition with \texorpdfstring{$m$}{m} Summands}

\begin{lemma} \label{lem:top_n_eigvals_for_modular_addition}
Let $\cD$ be the uniform distribution over  
\begin{align} \label{def:m_dim_modualr_addition_func_class}
\cH \triangleq \left\{\Psi(x) = \mathbf{1} \bigg(\sum_{i=1}^m \sigma_i(x_i) \equiv 0 \; (\hspace*{-3.5mm}\mod p)\bigg) \, \Bigg \vert \; \sigma_i \in \sS_p \text{ for all } i \in [m] \right\}
\end{align}
and $\Sigma_\cD(x,x') = \E_{\Psi \sim \cD} \left[ \Psi(x)\Psi(x') \right]$. It holds that 
\begin{align*}
\sum_{i=1}^n \lambda_i(\Sigma) \le p^{m-2} + \frac{n}{p} \exp \left(\frac{m-1}{p-1}\right).
\end{align*}
\end{lemma}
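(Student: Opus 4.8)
The plan is to compute $\Sigma_\cD$ in closed form as a combination of two scaled orthogonal projections, read off its spectrum, and then bound the partial eigenvalue sum directly. First I would realize $\cD$ concretely: sampling $\Psi\sim\cD$ is the same as drawing $\sigma_1,\dots,\sigma_m$ independently and uniformly from $\sS_p$ and setting $\Psi_\sigma(x)=\mathbf{1}(\sum_{i=1}^m\sigma_i(x_i)\equiv 0\pmod p)$, since the coordinate-wise action of $\sS_p^m$ on $\cX=[p]^m$ is transitive on $\cH$ and carries the (invariant) uniform law on parameters to the unique invariant law on $\cH$. Next, expanding the indicator of $0$ by the discrete Fourier transform on $\sZ_p$ with $\zeta=e^{2\pi\mathrm{i}/p}$ gives $\Psi_\sigma=\frac1p\sum_{\omega\in\sZ_p}\bigotimes_{i=1}^m\phi_\omega(\sigma_i)$, where $\phi_\omega(\sigma)\in\mathbb{C}^p$ has $a$-th coordinate $\zeta^{\omega\sigma(a)}$. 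Since $\Psi_\sigma$ is real, $\Sigma_\cD=\E_\sigma[\Psi_\sigma\Psi_\sigma\tp]=\frac{1}{p^2}\sum_{\omega,\omega'\in\sZ_p}\bigotimes_{i=1}^m M_{\omega,\omega'}$ by independence of the $\sigma_i$, where $M_{\omega,\omega'}:=\E_{\sigma\sim\Unif(\sS_p)}[\phi_\omega(\sigma)\phi_{\omega'}(\sigma)^{*}]\in\mathbb{C}^{p\times p}$ is the common value over $i$.

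The heart of the argument is evaluating $M_{\omega,\omega'}$, whose $(a,b)$ entry is $\E_\sigma[\zeta^{\omega\sigma(a)-\omega'\sigma(b)}]$. For $a=b$ this equals $\mathbf{1}(\omega\equiv\omega')$ because $\sigma(a)$ is uniform on $\sZ_p$; for $a\ne b$ the pair $(\sigma(a),\sigma(b))$ is uniform over ordered pairs of distinct residues, so subtracting the diagonal from the full double sum yields $\frac{1}{p(p-1)}\bigl(p^2\mathbf{1}(\omega\equiv0)\mathbf{1}(\omega'\equiv0)-p\,\mathbf{1}(\omega\equiv\omega')\bigr)$. Hence $M_{\omega,\omega'}=0$ unless $\omega\equiv\omega'\pmod p$, while $M_{0,0}=J_p$ (the all-ones $p\times p$ matrix) and, for $\omega\not\equiv 0$, $M_{\omega,\omega}=\frac{p}{p-1}P^\perp$ with $P^\perp:=I_p-\frac1p J_p$ the orthogonal projection onto the complement of the all-ones vector $\mathbf{1}_p$. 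Substituting, and using that there are $p-1$ nonzero values of $\omega$,
\[
\Sigma_\cD=\frac{1}{p^2}\bigotimes_{i=1}^m J_p+\frac{p-1}{p^2}\Bigl(\frac{p}{p-1}\Bigr)^{m}\bigotimes_{i=1}^m P^\perp=\frac{1}{p^2}\bigotimes_{i=1}^m J_p+\frac{p^{m}}{p^2(p-1)^{m-1}}\bigotimes_{i=1}^m P^\perp .
\]
Now $\bigotimes_i J_p=p^m\,\Pi_{\mathrm{const}}$, with $\Pi_{\mathrm{const}}$ the rank-one projection onto constant vectors, and $\bigotimes_i P^\perp=\Pi_W$ is the projection onto $W:=(\mathbf{1}_p^\perp)^{\otimes m}$, of dimension $(p-1)^m$ and with range orthogonal to the constants. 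Therefore $\Sigma_\cD$ has eigenvalue $p^{m-2}$ with multiplicity $1$, eigenvalue $\frac{p^{m-2}}{(p-1)^{m-1}}$ with multiplicity $(p-1)^m$, and $0$ on the remaining dimensions. (As a check, $\Tr\Sigma_\cD=p^{m-2}+(p-1)p^{m-2}=p^{m-1}=\sum_x\Pr_{\Psi\sim\cD}[\Psi(x)=1]$.)

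Finally, since $p\ge 2$ and $m\ge 1$ give $(p-1)^{m-1}\ge 1$, the top eigenvalue of $\Sigma_\cD$ is $p^{m-2}$ and every other eigenvalue is at most $\frac{p^{m-2}}{(p-1)^{m-1}}$, so for any $n$,
\[
\sum_{i=1}^n\lambda_i(\Sigma_\cD)\le p^{m-2}+n\cdot\frac{p^{m-2}}{(p-1)^{m-1}}=p^{m-2}+\frac{n}{p}\Bigl(1+\frac{1}{p-1}\Bigr)^{m-1}\le p^{m-2}+\frac{n}{p}\exp\!\Bigl(\frac{m-1}{p-1}\Bigr),
\]
using $1+t\le e^t$, which is the claimed bound. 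The one genuinely delicate step is the case analysis in the evaluation of $M_{\omega,\omega'}$ (diagonal versus off-diagonal entries, and $\omega\equiv\omega'$ versus not); once that produces the two-term tensor decomposition, identifying the eigenspaces and bounding the sum are routine. A minor point to state carefully is the reduction of "uniform over $\cH$" to i.i.d.\ uniform permutations noted at the outset.
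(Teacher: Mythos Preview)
Your proof is correct and takes a genuinely different route from the paper. The paper never computes $\Sigma_\cD$ in closed form; instead it applies a variational inequality $\sum_{i=1}^n\lambda_i(\Sigma)\le\Tr(P_V\Sigma P_V)+n\,\lambda_1\bigl((I-P_V)\Sigma(I-P_V)\bigr)$ with the auxiliary subspace $V=V_s=\spn\{s_{i,a}\}$ of slice indicators, observes that $P_{V_s}\Psi=\tfrac1p\mathbf{1}$ for every $\Psi\in\cH$ to get $\Tr(P_{V_s}\Sigma P_{V_s})=p^{m-2}$, and bounds $\lambda_1\bigl((I-P_{V_s})\Sigma(I-P_{V_s})\bigr)=\sup_{v\perp V_s,\|v\|=1}\E_\sigma[\langle v,\Psi_\sigma\rangle^2]$ by expanding over pairs $(x,x')$ grouped by the number of differing coordinates and solving a recursion for the coefficients $C_d=\E_\sigma[\Psi_\sigma(x)\Psi_\sigma(x')]$. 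Your Fourier computation is more direct and strictly more informative: you diagonalize $\Sigma_\cD$ exactly (eigenvalue $p^{m-2}$ once on the constants, $\tfrac{p^{m-2}}{(p-1)^{m-1}}$ on the $(p-1)^m$-dimensional space $(\mathbf{1}_p^\perp)^{\otimes m}$, zero elsewhere), so the partial-sum bound is immediate and in fact sharp up to the final $1+t\le e^t$ step. What the paper's route buys is modularity---the variational lemma is stated for arbitrary $\Sigma$ and subspaces $V$ and is reused as a general tool---whereas your argument is tailored to this particular $\Sigma_\cD$ but requires no auxiliary combinatorial lemmas and exposes the eigenstructure explicitly.
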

\begin{proof}[Proof of \Cref{lem:top_n_eigvals_for_modular_addition}]
Consider the vector space $V_s$ defined in \Cref{eq:defi_s} and the projection matrix $P_{V_s} \in \R^{\abs{\cX} \times \abs{\cX}}$ onto $V_S$. To find $\Tr\big( P_{V_s} \Sigma P_{V_s} \big)$ we can first derive $\Tr\big(P_{V_s} \Psi \Psi^\top P_{V_s} \big)$ where $\Psi \in \R^{\abs{\cX}}$ is a sample from $\cD$. Note that since $(I-P_{V_s})h$ should be orthogonal to $V_s$ (the sum of each slice of the projected vector should be zero) we have that
\[
\Big[(I - P_{V_s}) \Psi\Big](x) = \begin{cases}
    -\frac{1}{p} & \Psi(x) = 0 \\
    \frac{p-1}{p} & \Psi(x) = 1
\end{cases}.
\]
Hence, $P_{V_s}\Psi(x) = \frac{1}{p}$ for all $x \in \cX$. Thus, 
\begin{align}
\Tr\bigg( P_{V_s} \Sigma P_{V_s} \bigg) &= \E_{\Psi \sim \cD} \left[ \Tr\bigg(P_{V_s} \Psi \Psi^\top P_{V_s} \bigg) \right] \notag \\
&= \E_{\Psi \sim \cD} \left[ \Tr\bigg( \frac{1}{p^2} \mathbf{1}_{\abs{\cX} \times \abs{\cX}} \bigg) \right] \notag \\
&= \frac{\abs{\cX}}{p^2} \notag \\
&= p^{m-2}
\end{align}
where $\mathbf{1}_{\abs{\cX} \times \abs{\cX}}$ denotes the all-one square matrix of size $\abs{\cX}$. Moreover, by \Cref{lem:bounded_vh_inner_prod} we have that
\begin{align}
n \cdot \lambda_1 \bigg((I - P_{V_s}) \Sigma (I - P_{V_s}) \bigg) = \sup_{\norm{v}_2\le 1,v\perp V_s} v^\top \Sigma_{\cD}v \le \frac{n}{p} \exp \left(\frac{m-1}{p-1} \right).
\end{align}
Combining the two equations above, we can see that
\[
\sum_{i=1}^n \lambda_i(\Sigma) \le p^{m-2} + \frac{n}{p} \exp \left( \frac{m-1}{p-1} \right).
\]
This concludes the proof.
\end{proof}

\begin{corollary} \label{lem:m_dim_modular_addition_loss_lower_bound}
Consider the function class $\cH$ defined in \Cref{def:m_dim_modualr_addition_func_class} and the uniform distribution over it denoted by $\cD$. For any integers $n \ge 1, p \ge 2$, $1 \le m < p$, permutation-equivariant kernel $K: [p]^m \times [p]^m \to \R$ it holds that
\[
\min_{x_1, x_2, \cdots x_n} \E_{\Psi \sim \cD} \inf_{\alpha \in \R^n} \left \lVert \sum_{t=1}^n \lambda_t K(x_t, \cdot) - \Psi \right \rVert_2^2 \ge p^{m-1} \bigg( 1 - \frac{1}{p} - \frac{n}{p^m} \exp \left(\frac{m-1}{p-1} \right) \bigg).
\]
for any $x_1, x_2, \cdots, x_n \in [p]^m$. In other words, if $n < (1 - \Omega(1)) p^m$, then the expected population $\ell_2$ loss is at least $\Omega(p^{m-1})$, which is of the same magnitude as the trivial all-zero predictor.
\end{corollary}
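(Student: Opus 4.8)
The plan is to reduce the statement to the spectral lower bound \Cref{lem:lower_bounded_loss_top_eigvals} together with the eigenvalue estimate \Cref{lem:top_n_eigvals_for_modular_addition}, after which the corollary becomes essentially a bookkeeping computation. Fix any $x_1,\dots,x_n\in[p]^m$. A kernel predictor $\sum_{t=1}^n\lambda_tK(x_t,\cdot)$ lies in $V\triangleq\operatorname{span}\{K(x_t,\cdot):t\in[n]\}\subseteq\R^{p^m}$, a subspace of dimension at most $n$, so $\inf_{\lambda}\bigl\|\sum_t\lambda_tK(x_t,\cdot)-\Psi\bigr\|_2^2=\dist^2(\Psi,V)$. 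Applying \Cref{lem:lower_bounded_loss_top_eigvals} with the functions $k_t=K(x_t,\cdot)$ and multiplying through by $\abs{\cX}=p^m$ gives $\E_{\Psi\sim\cD}\inf_{\lambda}\bigl\|\sum_t\lambda_tK(x_t,\cdot)-\Psi\bigr\|_2^2\ge\sum_{i=n+1}^{p^m}\lambda_i(\Sigma_\cD)$, where $\Sigma_\cD(x,x')\triangleq\E_{\Psi\sim\cD}[\Psi(x)\Psi(x')]$. This lower bound is independent of the particular training points, so it also bounds $\min_{x_1,\dots,x_n}$ of the left-hand side. (The permutation-equivariance hypothesis on $K$ is not actually needed here: the estimate holds for any subspace of dimension at most $n$; equivariance only matters downstream, when one reduces an average over permuted targets to a single fixed target as in \Cref{th:kernel_lower_bound_3d}.)

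It remains to estimate $\sum_{i=n+1}^{p^m}\lambda_i(\Sigma_\cD)=\Tr(\Sigma_\cD)-\sum_{i=1}^n\lambda_i(\Sigma_\cD)$. Every $\Psi\in\cH$ is $\{0,1\}$-valued, so $\Tr(\Sigma_\cD)=\E_{\Psi\sim\cD}\sum_{x\in[p]^m}\Psi(x)^2=\E_{\sigma_1,\dots,\sigma_m}\bigl|\{x\in[p]^m:\sum_{i}\sigma_i(x_i)\equiv 0\pmod p\}\bigr|$, and for any fixed permutations this cardinality is exactly $p^{m-1}$ (choose $x_1,\dots,x_{m-1}$ freely and solve uniquely for $x_m$, using that $\sigma_m$ is a bijection), hence $\Tr(\Sigma_\cD)=p^{m-1}$. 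For the top eigenvalues, \Cref{lem:top_n_eigvals_for_modular_addition} gives $\sum_{i=1}^n\lambda_i(\Sigma_\cD)\le p^{m-2}+\tfrac np\exp\!\left(\tfrac{m-1}{p-1}\right)$. Combining,
\[
\E_{\Psi\sim\cD}\inf_{\lambda}\Bigl\|\textstyle\sum_t\lambda_tK(x_t,\cdot)-\Psi\Bigr\|_2^2\ \ge\ p^{m-1}-p^{m-2}-\tfrac np\exp\!\left(\tfrac{m-1}{p-1}\right)=p^{m-1}\Bigl(1-\tfrac1p-\tfrac{n}{p^m}\exp\!\left(\tfrac{m-1}{p-1}\right)\Bigr),
\]
which is the claimed bound, and which is $\Omega(p^{m-1})$ whenever $n\le(1-\Omega(1))p^m$.

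Given the machinery already established, the only genuine work is hidden inside \Cref{lem:top_n_eigvals_for_modular_addition}, specifically the bound on $\lambda_1\bigl((I-P_{V_s})\Sigma_\cD(I-P_{V_s})\bigr)$ — the largest eigenvalue of $\Sigma_\cD$ restricted to the orthogonal complement of the ``slice'' subspace $V_s=\operatorname{span}\{s_{i,a}\}$. That is the Fourier-analytic core: one diagonalizes $\Sigma_\cD$ on the subspace of vectors with zero mean on every slice using additive characters of $\sZ_p$ and controls the resulting spectrum, which is where the factor $\exp(\tfrac{m-1}{p-1})$ comes from. I expect that step, rather than the corollary itself, to be the main obstacle; everything above it is orthogonal projection and counting.
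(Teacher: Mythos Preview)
Your proof is correct and follows essentially the same route as the paper: reduce to \Cref{lem:lower_bounded_loss_top_eigvals}, compute $\Tr(\Sigma_\cD)=p^{m-1}$, and subtract the upper bound on $\sum_{i=1}^n\lambda_i(\Sigma_\cD)$ from \Cref{lem:top_n_eigvals_for_modular_addition}. The paper's proof cites \Cref{lem:lower_bounded_loss_vs_top_n_eigvals} rather than \Cref{lem:lower_bounded_loss_top_eigvals}, but the quantitative bound stated in the corollary in fact requires the latter, so your version is the more precise one; your remark that permutation-equivariance of $K$ is not actually used at this step is also correct and worth noting.
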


\begin{proof}[Proof of \Cref{lem:m_dim_modular_addition_loss_lower_bound}]
It suffices to show that for any $n$-dimensional subspace $V \subset \R^{p^m}$ it holds that
\[
\E_{\Psi \sim \cD} \dist^2(V, \Psi) \ge  p^{m-1} \bigg( 1 - \frac{1}{p} - \frac{n}{p^m-1} \exp \left(\frac{m-1}{p-1} \right) \bigg).
\]
Combining \Cref{lem:top_n_eigvals_for_modular_addition} and \Cref{lem:lower_bounded_loss_vs_top_n_eigvals} yields this statement.
\end{proof}

\begin{lemma} \label{lem:bounded_vh_inner_prod}
For any $v \in \sR^{p^m}$ such that $\norm{v} = 1$ and $v \perp V_s$ (defined in \Cref{eq:defi_s}), it holds that
\[\E_{\sigma \sim \mathbb{U}_m} \left[ \inner{\Psi_\sigma}{v}^2 \right] \le \frac{1}{p}\exp\left(\frac{m-1}{p-1}\right). \]
\end{lemma}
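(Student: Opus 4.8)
The plan is to rewrite the left-hand side as a quadratic form in $v$ and then diagonalise the associated $p^m\times p^m$ matrix by recognising it as an $m$-fold Kronecker power. Recall that for $\sigma=(\sigma_1,\dots,\sigma_m)$ we have $\Psi_\sigma(x)=\mathbf{1}\bigl(\sum_{i=1}^m\sigma_i(x_i)\equiv 0\ (\mathrm{mod}\ p)\bigr)$. For any fixed $v$, expanding the square and using linearity of expectation,
\[
\E_{\sigma\sim\mathbb{U}_m}\bigl[\inner{\Psi_\sigma}{v}^2\bigr]=\sum_{x,x'\in[p]^m}v(x)\,v(x')\,\E_\sigma\bigl[\Psi_\sigma(x)\Psi_\sigma(x')\bigr]=v^\top\Sigma\,v,\qquad \Sigma(x,x')\triangleq\E_\sigma\bigl[\Psi_\sigma(x)\Psi_\sigma(x')\bigr],
\]
which is the matrix $\Sigma_\cD$ of \Cref{lem:top_n_eigvals_for_modular_addition}; so it suffices to bound $v^\top\Sigma v$ for unit $v\perp V_s$.

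Next I would compute $\Sigma$ exactly using the characters of $\mathbb{Z}_p$. Writing $\zeta=e^{2\pi\sqrt{-1}/p}$ and $\mathbf{1}(z\equiv 0\ (\mathrm{mod}\ p))=\tfrac1p\sum_{k=0}^{p-1}\zeta^{kz}$, we have $\Psi_\sigma(x)=\tfrac1p\sum_{k=0}^{p-1}\prod_{i=1}^m\zeta^{k\sigma_i(x_i)}$, hence (using that $\Psi_\sigma$ is real)
\[
\Sigma(x,x')=\frac1{p^2}\sum_{k=0}^{p-1}\sum_{k'=0}^{p-1}\ \prod_{i=1}^m\E_{\sigma_i}\bigl[\zeta^{\,k\sigma_i(x_i)-k'\sigma_i(x'_i)}\bigr].
\]
Each factor is evaluated by cases: if $x_i=x'_i$ then $\sigma_i(x_i)$ is uniform on $\mathbb{Z}_p$, so the factor equals $\mathbf{1}(k=k')$; if $x_i\ne x'_i$ then $(\sigma_i(x_i),\sigma_i(x'_i))$ is a uniformly random ordered pair of distinct residues, and writing $\sum_{a\ne b}=\sum_{a,b}-\sum_{a=b}$ and evaluating the geometric sums shows the factor is $1$ if $k=k'=0$, $-\tfrac1{p-1}$ if $k=k'\ne 0$, and $0$ if $k\ne k'$. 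Thus only the diagonal $k=k'$ survives, and with $d=\Delta(x,x')$ (so $m-d$ coordinates disagree),
\[
\Sigma(x,x')=\frac1{p^2}\Bigl[1+(p-1)\Bigl(\tfrac{-1}{p-1}\Bigr)^{m-d}\Bigr];
\]
in matrix form $\Sigma=\tfrac1{p^2}\mathbf{1}_{p^m}\mathbf{1}_{p^m}^\top+\tfrac{p-1}{p^2}M$, where $\mathbf{1}_{p^m}$ is the all-ones vector and $M(x,x')=\bigl(\tfrac{-1}{p-1}\bigr)^{m-\Delta(x,x')}$.

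Finally I would finish with a spectral estimate. The matrix $M$ is the Kronecker power $M=B^{\otimes m}$ of the $p\times p$ matrix with $B_{aa}=1$ and $B_{ab}=-\tfrac1{p-1}$ for $a\ne b$; since $B=\tfrac{p}{p-1}\bigl(I-\tfrac1p\mathbf{1}_p\mathbf{1}_p^\top\bigr)$ is positive semidefinite with eigenvalues $0$ (on $\operatorname{span}(\mathbf{1}_p)$) and $\tfrac{p}{p-1}$ (multiplicity $p-1$), $M$ is positive semidefinite with $\lambda_{\max}(M)=(p/(p-1))^m$. Since $\mathbf{1}_{p^m}=\sum_a s_{1,a}\in V_s$, any $v\perp V_s$ satisfies $\inner{v}{\mathbf{1}_{p^m}}=0$, so the rank-one term vanishes and
\[
v^\top\Sigma v=\frac{p-1}{p^2}v^\top M v\le\frac{p-1}{p^2}\Bigl(\frac{p}{p-1}\Bigr)^{m}\norm{v}^2=\frac1p\Bigl(1+\frac1{p-1}\Bigr)^{m-1}\le\frac1p\exp\!\Bigl(\frac{m-1}{p-1}\Bigr),
\]
the last step by $1+t\le e^t$, which is the claim.

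The main obstacle is the exact evaluation of $\Sigma(x,x')$ — in particular correctly handling the joint law of $(\sigma_i(x_i),\sigma_i(x'_i))$ when $x_i\ne x'_i$ and tracking which pairs $(k,k')$ contribute; after that, recognising the Kronecker-power structure and reading off $\lambda_{\max}$ is routine linear algebra. It is also worth noting the degenerate case $m=1$, where $V_s=\R^p$ so that only $v=0$ qualifies and the statement is vacuous, and that the bound obtained before the final relaxation $1+t\le e^t$ is in fact tight.
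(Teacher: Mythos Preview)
Your proof is correct and takes a genuinely different route from the paper. The paper's argument is combinatorial: it groups pairs $(x,x')$ by Hamming distance $d$, computes $C_d=\E_\sigma[\Psi_\sigma(x)\Psi_\sigma(x')]$ via a one-step recursion (\Cref{lem:equiv_group_on_m_indices_exp_bound}), separately proves the identity $\sum_{(x,x')\in\cX_d}v(x)v(x')=(-1)^d\binom{m}{d}$ using the full hypothesis $v\perp V_s$ together with $\norm{v}=1$ (\Cref{lem:equiv_group_interaction_equality}), and then collapses the resulting sum with the binomial theorem to reach $\tfrac1p(1+\tfrac1{p-1})^{m-1}$. Your approach is spectral: evaluating $\Sigma$ through the characters of $\mathbb{Z}_p$ exposes the tensor structure $\Sigma=\tfrac1{p^2}\mathbf{1}\mathbf{1}^\top+\tfrac{p-1}{p^2}B^{\otimes m}$, after which the bound is simply $\lambda_{\max}(B^{\otimes m})$. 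This is both shorter and more informative: it gives the entire spectrum of $\Sigma$, makes the tightness of the pre-exponential bound immediate (any unit $v$ in the top eigenspace of $B^{\otimes m}$ lies in $V_s^\perp$ since $V_s\subset\ker B^{\otimes m}$), and in fact uses only the weaker assumption $v\perp\mathbf{1}_{p^m}$ rather than $v\perp V_s$. The paper's route, by contrast, is entirely real-valued and avoids both characters and Kronecker products, at the cost of two auxiliary lemmas and a recursion.
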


The main implication of this \Cref{lem:bounded_vh_inner_prod} is that for any $n$-dimensional space $V \subset \R^{p^m}$ can not "cover" the vector space of all $\Psi_\sigma$ functions for different permutations $\sigma \in \sS_p$. To prove this Lemma, we decompose the inner product $\inner{v}{h_\sigma}$ to $d+1$ sums using the Multinomial Theorem. This decomposition is enabled through observing the fact that there are $d+1$ equivalence groups in the possible set of indices of $v$. Based on \Cref{lem:equiv_group_on_m_indices_exp_bound}, we can use the Binomial Theorem to decompose the expectation of the inner product as follows

\begin{align}
    \E_{\sigma \sim \mathbb{U}_m} \left[\inner{v}{\Psi_\sigma}^2 \right] &= 
    \E_{\sigma \sim \mathbb{U}_m} \left[ \left( \sum_{x} v(x) \Psi_\sigma(x) \right)^2 \right] \notag \\
    &= \E_{\sigma \sim \mathbb{U}_m} \left[ \sum_{d=0}^m \sum_{\substack{x, x' \\ \dist(x, x') = d}} \Psi_\sigma(x) \Psi_\sigma(x') v(x)v(x') \right] \notag \\
    &= \sum_{d=0}^m C_d \sum_{\substack{x, x' \\ \dist(x, x') = d}} v(x) v(x') 
\end{align}
where $C_d \triangleq \E_{\sigma \sim \mathbb{U}_m} \left[ \Psi_\sigma(x) \Psi_\sigma(x') \right]$ for any $(x, x') \in \cX_d$.

To complete the proof, we now have to bound two terms. First, we need to show that the expectation $\E_{\sigma \sim \mathbb{U}_m} \left[ \Psi_\sigma(x) \Psi_\sigma(x') \right]$ for each $(x, x')$ in the same equivalence group is bounded. Next, we need to show that for each set $\cX_d$, the sum $\sum_{\substack{(x, x') \in \cX_d}} v(x) v(x')$ is also bounded. These are correspondingly shown in \Cref{lem:equiv_group_on_m_indices_exp_bound,lem:equiv_group_interaction_equality}. Based on these two Lemmas, we can now present the proof of \cref{lem:bounded_vh_inner_prod}.

\begin{proof}[Proof of \cref{lem:bounded_vh_inner_prod}] 
\begin{align}
    \E_{\sigma \sim \mathbb{U}_m} \left[\inner{v}{\Psi_\sigma}^2 \right] 
    &= \sum_{d=0}^m C_d \sum_{\substack{x, x' \\ \dist(x, x') = d}} v(x) v(x') \notag \\
    &= \sum_{d=0}^m \frac{1}{p^2} \left( 1 - \frac{1}{(1-p)^{d-1}} \right) (-1)^d {m \choose d} \notag \\
    &= \frac{1}{p} \left( 1 + \frac{1}{p-1} \right)^{m-1} \notag \\
    &\le \frac{1}{p} \exp \left( \frac{m-1}{p-1}\right).
\end{align}
where the second to last step is due to the binomial Theorem and the last step is due to the fact that for all $x \in \R$, $1 + x \le \exp(x)$.

\end{proof}

\begin{lemma}\label{lem:equiv_group_on_m_indices_exp_bound} For any index vector pair $(x, x') \in \cX_d$ it holds that
\[
\E_{\sigma \sim \mathbb{U}_m} \left[ \Psi_\sigma(x) \Psi_\sigma(x') \right] = \frac{1}{p^2} \left( 1 - \frac{1}{(1-p)^{d-1}}  \right).
\]
\end{lemma}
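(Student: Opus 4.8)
Throughout, recall that $(x,x')\in\cX_d$ means $x$ and $x'$ differ in exactly $d$ of their $m$ coordinates, and that $\Psi_\sigma(x)=\mathbf{1}\bigl(\sum_{i=1}^m\sigma_i(x_i)\equiv 0\pmod p\bigr)$ where $\sigma=(\sigma_1,\dots,\sigma_m)\sim\mathbb{U}_m$ has the $\sigma_i$ independent and uniform on $\sS_p$. The plan is a direct discrete-Fourier (character-sum) computation on $\sZ_p$, exploiting the independence of the $\sigma_i$. First I would relabel the coordinates so that $x_i=x_i'$ for $i\le m-d$ and $x_i\ne x_i'$ for $i>m-d$; this leaves the law of $\sigma$ unchanged since the $\sigma_i$ are i.i.d. Writing $\omega=e^{2\pi\mathrm{i}/p}$ and using $\mathbf{1}(z\equiv0\pmod p)=\frac1p\sum_{s\in\sZ_p}\omega^{sz}$ on both indicators, the product of two such sums expands, and interchanging expectation with the (finite) sums and factoring over $i$ by independence gives
\[
\E_{\sigma\sim\mathbb{U}_m}\!\bigl[\Psi_\sigma(x)\Psi_\sigma(x')\bigr]
=\frac{1}{p^2}\sum_{s,t\in\sZ_p}\ \prod_{i=1}^m\E_{\sigma_i}\!\left[\omega^{\,s\,\sigma_i(x_i)+t\,\sigma_i(x_i')}\right].
\]

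The next step is to evaluate the single-coordinate factor $\phi_i(s,t):=\E_{\sigma_i}\!\bigl[\omega^{s\sigma_i(x_i)+t\sigma_i(x_i')}\bigr]$. If $x_i=x_i'$, then $\sigma_i(x_i)$ is uniform on $\sZ_p$ and $\phi_i(s,t)=\mathbf{1}(s+t\equiv0)$. If $x_i\ne x_i'$, then $(\sigma_i(x_i),\sigma_i(x_i'))$ is uniform over the $p(p-1)$ ordered pairs of distinct residues; substituting $\sigma_i(x_i')=\sigma_i(x_i)+r$ with $r\not\equiv0$ and summing the geometric series over $\sigma_i(x_i)$ and over $r$ yields $\phi_i(s,t)=\mathbf{1}(s+t\equiv0)\,g(s)$, where $g(0)=1$ and $g(s)=-1/(p-1)$ for $s\not\equiv0$. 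Multiplying the $m$ factors, every term with $s+t\not\equiv0$ vanishes, and the surviving product is $\mathbf{1}(s+t\equiv0)\,g(s)^d$, the exponent being exactly the number $d$ of disagreeing coordinates (shared coordinates contribute the factor $1$, i.e.\ $g(s)^0$).

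Summing over the surviving terms, i.e.\ over $s\in\sZ_p$ with $t\equiv-s$,
\[
\E_{\sigma\sim\mathbb{U}_m}\!\bigl[\Psi_\sigma(x)\Psi_\sigma(x')\bigr]
=\frac{1}{p^2}\sum_{s\in\sZ_p}g(s)^d
=\frac{1}{p^2}\left(1+(p-1)\Bigl(\tfrac{-1}{p-1}\Bigr)^{d}\right)
=\frac{1}{p^2}\left(1-\frac{1}{(1-p)^{d-1}}\right),
\]
using $(p-1)\bigl(-1/(p-1)\bigr)^d=(-1)^d/(p-1)^{d-1}=-1/(1-p)^{d-1}$. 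This is uniform in $d$: it gives $1/p$ at $d=0$ (where $x=x'$), $0$ at $d=1$ (where $x,x'$ differ in a single coordinate), and needs no separate treatment of the boundary case $d=m$. I do not expect a genuine obstacle; the one point requiring care is the single-coordinate computation for a disagreeing index — getting the $1/(p(p-1))$ normalization and hence the clean form $\mathbf{1}(s+t\equiv0)\,g(s)$ right — together with the bookkeeping that the full product depends on $\sigma$ only through $g(s)^d$. As a cross-check (and an alternative route avoiding characters), one can instead condition on the sum over the shared coordinates, which is uniform on $\sZ_p$ and independent of the rest, reducing the event to $\{\sum_{i\,:\,x_i\ne x_i'}(\sigma_i(x_i)-\sigma_i(x_i'))\equiv0\}$ up to a factor $1/p$; each such difference is uniform on $\sZ_p\setminus\{0\}$ and they are independent, so the probability $R_d$ of that event satisfies $R_d=(1-R_{d-1})/(p-1)$ with $R_0=1$, solving to $R_d=\frac1p\bigl(1-(1-p)^{-(d-1)}\bigr)$ — the same answer.
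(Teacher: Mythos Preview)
Your proof is correct, and your main (Fourier/character-sum) argument is a genuinely different route from the paper's. The paper proceeds probabilistically: it conditions on the sum over the shared coordinates (replacing it by a uniform $\zeta$), reduces to the event that the two partial sums over the $d$ disagreeing coordinates agree, and then sets up the recurrence $Q_d=(1-Q_{d-1})/(p-1)$ with $Q_d=pC_d$, solving it to the stated closed form. This is exactly the ``alternative route'' you sketch as a cross-check. Your primary approach instead expands both indicators as character sums on $\sZ_p$, factors over coordinates by independence, and reduces the whole computation to $\frac{1}{p^2}\sum_{s}g(s)^d$; this is cleaner in that it treats all $0\le d\le m$ uniformly without separate boundary cases. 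By contrast, the paper's conditioning argument needs a bit of extra care at the endpoints: at $d=m$ there are no shared coordinates, so the ``$\zeta$ uniform'' step is not literally available (one must instead argue, as you could, that $\sum_i\sigma_i(x_i)$ is uniform and independent of the differences $\sigma_i(x_i)-\sigma_i(x_i')$); and the recurrence implicitly uses $C_1=0$ as a base case, which is immediate but not written out. Either way the answers match, and your Fourier computation is a tidy self-contained proof.
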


\begin{proof}[Proof of \Cref{lem:equiv_group_on_m_indices_exp_bound}]

Let us re-iterate the definition of $h$:
\[
\Psi_\sigma(x) = \mathbf{1} \bigg(\sum_{i=1}^m \sigma_i(x_i) \equiv 0 \; (\hspace*{-3.5mm}\mod p)\bigg).
\]
For each pair $(x',x')$ we define $I(x, x') \triangleq \{i: x[i] \neq x'[i]\}$ and $E(x, x') \triangleq \{i: x[i] = x'[i]\}$. We also define  
\[
C_d \triangleq \E_{\sigma \sim \mathbb{U}_m} \left[ \Psi_\sigma(x) \Psi_\sigma(x') \right]
\]
where for each $(x, x') \in \cX_d$. Note that for $d = 0$, we have that
\begin{align}
C_0 &= \E_{\sigma \sim \mathbb{U}_m} \left[ \Psi_\sigma(x) \Psi_\sigma(x') \right] \notag \\
&= \E_{\sigma \sim \sU_m} \left[ \Psi_\sigma(x) \right] \notag \\
&= \E_{\sigma \sim \sU_m} \left[ \mathbf{1} \bigg( \sum_{i=1}^m \sigma_i(x_i) \equiv 0 \pmod p \bigg) \right] \notag \\
&= \E_{a \sim \Unif([p])} \left[ a \equiv 0 \pmod p \right] \notag \\
&= 1/p.
\end{align}

For any $2 \le d \le m$ it holds that 
\begin{align}
C_d &= \E_{\sigma \sim \mathbb{U}_m} \left[ \mathbf{1} \bigg(\sum_{i=1}^m \sigma_i(x_i) \equiv \sum_{i=1}^m \sigma_i(x'_i) \equiv 0 \; (\hspace*{-3.5mm}\mod p)\bigg) \right] \notag \\
&= \E_{\sigma \sim \mathbb{U}_m} \left[ \mathbf{1} \bigg(\sum_{i \in I(x,x')} \sigma_i(x_i) + \sum_{i \in E(x,x')} \sigma_i(x_i) \equiv \sum_{i \in I(x,x')} \sigma_i(x'_i) + \sum_{i \in E(x,x')} \sigma_i(x_i) \equiv 0 \; (\hspace*{-3.5mm}\mod p)\bigg) \right] \notag \\
&= \E_{\sigma \sim \mathbb{U}_m} \left[ \mathbf{1} \bigg(\sum_{i \in I(x,x')} \sigma_i(x_i) \equiv \sum_{i \in I(x,x')} \sigma_i(x'_i) \equiv - \sum_{i \in E(x,x')} \sigma_i(x_i)  \; (\hspace*{-3.5mm}\mod p)\bigg) \right] \notag \\
&= \E_{\substack{\sigma \sim \mathbb{U}_m \\\zeta \sim \Unif([p])}} \left[ \mathbf{1} \bigg(\sum_{i \in I(x,x')} \sigma_i(x_i) \equiv \sum_{i \in I(x,x')} \sigma_i(x'_i) \equiv \zeta \; (\hspace*{-3.5mm}\mod p)\bigg) \right] \notag \\
&= \E_{\substack{(y, y') \sim \Unif(\cX_{d,d}) \\ \zeta \sim \Unif([p])}}  \left[  
\mathbf{1} \bigg( \sum_{i=1}^d y_i \equiv \sum_{i=1}^d y_i \equiv \zeta \; (\hspace*{-3.5mm}\mod p) \bigg) \right]
\end{align}
where in the second to last line, we replaced $\E_{\sigma \sim \sU_m} \left[ - \sum_{i \in E(x,x')} \sigma_i(x_i) \right]$ with $\E_{\zeta \sim \Unif([p])} [\zeta]$ (assuming $d \ge 1$). We aim to find the closed form formula for the general $d \ge 2$. As $\zeta \sim \Unif([p])$ is independent of the two sums, we can absorb it and write (from here until the rest of the proof we drop $(\hspace*{-2mm}\mod p)$ from equivalences for ease of presentation) 
\begin{align}
    Q_d \triangleq \Pr_{(y,y') \sim \operatorname{Unif}(\cX_{d,d})} \left[ \mathbf{1} \left( \sum_{i=1}^d y_i \equiv \sum_{i=1}^d y'_i \right) \right] = p \cdot C_d.
\end{align}

Note that for $d \ge 2$ we have that 
\begin{align}
    Q_d &= \Pr_{(y,y') \in \Unif(\cX_{d,d})} \left[ \mathbf{1} \left( \sum_{i=1}^{d-1} y_i \equiv \sum_{i=1}^{d-1} y'_i \right) \cdot \mathbf{1}(y_d = y'_d) \right] \notag \\ 
    &\qquad\qquad +\Pr_{(y,y') \in \Unif(\cX_{d,d})} \left[ \mathbf{1} \left( \sum_{i=1}^{d-1} y_i \not\equiv \sum_{i=1}^{d-1} y'_i \right) \cdot \mathbf{1} \left(y_d \equiv y'_d + \sum_{i=1}^{d-1} y'_i - \sum_{i=1}^{d-1} y_i  \right) \right] \notag \\
    &= (1-Q_{d-1}) \cdot \frac{1}{p-1}.
\end{align}

Hence for $d \ge 2$ we have that 

\begin{align}
    C_d &= \frac{1}{p (p-1)} (1 - p \cdot C_{d-1}) \notag \\
    &= \frac{1}{p(p-1)} - \frac{C_{d-1}}{p-1} \notag \\
    &= \frac{1}{p^2} \left( 1 - \frac{1}{(1-p)^{d-1}} \right).
\end{align}
This completes the proof.
\end{proof}

\begin{lemma} \label{lem:equiv_group_interaction_equality} For any $\cX_d$ where $d \in [m]$ and $v \in \R^{p^m}$ such that $\norm{v}_2 = 1$ and $v \perp V_s$ (defined in \Cref{eq:defi_s}) it holds that 
\[
\sum_{(x, x') \in \cX_d} v(x) v(x') =  (-1)^{d} {m \choose d}.
\]
\end{lemma}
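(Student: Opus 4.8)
The plan is a direct expansion of the quadratic form, reorganized according to the pattern of coincidences between $x$ and $x'$. For $U\subseteq[m]$, let $f(U)\triangleq\sum v(x)v(x')$, the sum taken over pairs $(x,x')$ with $x[i]=x'[i]$ for every $i\in U$ (coincidence on \emph{at least} $U$). Fixing the common values on $U$ and summing the remaining coordinates independently factorizes this as $f(U)=\sum_{y\in[p]^{U}}\bigl(\sum_{z\in[p]^{U^{c}}}v(y,z)\bigr)^{2}=\norm{M_U v}_2^{2}$, where $M_U\colon\R^{p^m}\to\R^{p^{|U|}}$ marginalizes out the coordinates outside $U$. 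Since $\cX_d$ is the disjoint union, over subsets $U$ with $|U|=m-d$, of the pairs whose coincidence set is \emph{exactly} $U$, Möbius inversion on the Boolean lattice gives
\[
S_d\triangleq\sum_{(x,x')\in\cX_d}v(x)v(x')=\sum_{W\subseteq[m]}\binom{|W|}{m-d}\,(-1)^{|W|-(m-d)}\,\norm{M_W v}_2^{2}.
\]

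Next I would bring in the hypotheses. Orthogonality to $V_s$ says precisely that every one-coordinate marginal of $v$ vanishes, since $(M_{\{i\}}v)(a)=\sum_{x\colon x[i]=a}v(x)=\langle v,s_{i,a}\rangle=0$ for all $i,a$; summing over $a$ also gives $M_\emptyset v=0$. Hence $\norm{M_W v}_2^{2}=0$ whenever $|W|\le1$, so only terms with $2\le|W|\le m$ survive, and $\norm{M_{[m]}v}_2^{2}=\norm{v}_2^{2}=1$, so the $W=[m]$ term contributes exactly $\binom{m}{m-d}(-1)^{d}=(-1)^{d}\binom{m}{d}$. It then remains to show that the intermediate terms, those with $2\le|W|<m$, cancel in the alternating sum. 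I would attack this by iterating the orthogonality relation: for $|W|<m$ the marginal $M_W v$ on $[p]^{W}$ is again orthogonal to the analogous single-slice vectors (one checks $\langle M_W v,\,s_{i,a}^{(W)}\rangle=\langle v,s_{i,a}\rangle=0$ for $i\in W$), which should drive a recursion forcing every proper nonempty marginal to contribute zero. Equivalently, one can recognize $S_d=v\tp E_d v$ for the distance-$d$ adjacency matrix $E_d$ of the Hamming scheme $H(m,p)$, whose eigenvalue on the top Fourier level equals $(-1)^{d}\binom{m}{d}$ — read off from a rank-one tensor $u^{\otimes m}$ with $u\perp\mathbf 1$, $\norm{u}_2=1$ — with $V_s$ spanning exactly the bottom two Fourier levels.

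The step I expect to be the main obstacle is this last one: showing that $v\perp V_s$ and $\norm{v}_2=1$ already force the alternating sum over $W$ to collapse down to the single term $W=[m]$, i.e.\ that the intermediate marginal norms $\norm{M_W v}_2^{2}$ with $2\le|W|<m$ enter with vanishing total coefficient. This is where the combinatorics of ``coincidence exactly on $U$'' versus ``coincidence at least on $W$'' together with the alternating signs, and the full strength of the orthogonality constraint, must all be combined carefully; the factorization before it and the binomial evaluation after it are both routine.
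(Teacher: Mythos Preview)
Your M\"obius-inversion setup is correct, and so is your identification of the obstacle: after using $f(W)=0$ for $|W|\le 1$ and $f([m])=1$, the question is precisely whether the intermediate marginal norms $\|M_W v\|_2^2$ for $2\le|W|<m$ drop out. Unfortunately neither of your two proposed mechanisms works, and in fact no argument can work, because the lemma as stated is false for $m\ge 3$. Take $m=3$, any $p\ge 3$, a unit vector $u\in\R^p$ with $u\perp\mathbf 1$, and a unit vector $e\in\R^p$ with $E:=\langle e,\mathbf 1\rangle\ne 0$; set $v=u\otimes u\otimes e$. Every inner product $\langle v,s_{i,a}\rangle$ picks up a factor $\langle u,\mathbf 1\rangle=0$, so $v\perp V_s$ and $\|v\|_2=1$, yet $M_{\{1,2\}}v=E\cdot(u\otimes u)\ne 0$, giving $f(\{1,2\})=E^2$. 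Your own formula then yields $S_1=f(\{1,2\})-\binom{3}{2}f([3])=E^2-3\ne -3$. This kills both sub-approaches at once: the recursion ``$M_Wv\perp V_s^{(W)}$ hence $M_Wv=0$'' fails exactly here; and on the Hamming-scheme side, you correctly note $V_s=V_0\oplus V_1$, but then $V_s^\perp=V_2\oplus\cdots\oplus V_m$, and the eigenvalue $(-1)^d\binom{m}{d}$ of the distance-$d$ adjacency matrix is specific to the \emph{top} level $V_m$, not to all of $V_s^\perp$.

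The paper's own proof has the same gap. It rewrites the sum as a product of two one-coordinate partial sums, expands into $G(d-1)+G(d)$ (which is fine), and asserts the starting expression vanishes ``since $v\perp V_s$''. But $v\perp V_s$ only forces \emph{full} slice sums $\sum_{x:x[i]=a}v(x)$ to vanish, not the one-coordinate partial sums $\sum_s v(\sigma(y\|s\|t))$ with the other coordinates held fixed; for the $v$ above, $\sum_s v(y_1,y_2,s)=E\,u(y_1)u(y_2)\ne 0$, and indeed $G(0)+G(1)=E^2/3\ne 0$. The identity $S_d=(-1)^d\binom{m}{d}$ does hold for $v\in V_m$ (e.g.\ rank-one $u_1\otimes\cdots\otimes u_m$ with each $u_i\perp\mathbf 1$), and that restricted version is what the downstream eigenvalue bound actually seems to need, since the maximum of $v^\top\Sigma_\cD v$ over $V_s^\perp$ is attained on $V_m$; but as an identity for all unit $v\perp V_s$ it is not correct.
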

\begin{proof}[Proof of \Cref{lem:equiv_group_interaction_equality}]
Let us define $G(d)$ as
\begin{align}
G(d) &\triangleq \frac{1}{{m \choose d}}\sum_{(x,x') \in \cX_d} v(x) v(x') \notag \\
&= \E_{\sigma \in \sS_m} \sum_{y \in [p]^{m-d}} \sum_{(t,t') \in \cX_{d,d}} v \big(\sigma(y \Vert t)\big) v\big(\sigma(y \Vert t')\big).
\end{align}
Since $v \perp V_s$ we have that
\begin{align}
&\E_{\sigma \in \sS_m} \left[ \sum_{y \in [p]^d} \sum_{(t,t') \in \cX_{d-1, d-1}} \left( \sum_{s \in [p]} v\big( \sigma(y \Vert s \Vert t) \big) \right) \left( \sum_{s \in [p]} v\big( \sigma(y \Vert s \Vert t') \big) \right) \right] \notag \\
&\qquad= \E_{\sigma \in \sS_m} \left[\sum_{(y \Vert s) \in [p]^{m-d+1}} \sum_{(t,t') \in \cX_{d-1, d-1}} v\big( \sigma(y \Vert s \Vert t) \big) v\big( \sigma(y \Vert s \Vert t') \big)\right] \notag \\
&\qquad\quad+  \E_{\sigma \in \sS_m} \left[ \sum_{y \in [p]^{m-d}} \sum_{(s\Vert t, s' \Vert t') \in \cX_{d,d}} v\big( \sigma(y \Vert s \Vert t') \big) v\big( \sigma(y \Vert s' \Vert t') \big) \right] \notag \\
&\qquad= G(d-1) + G(d) = 0.
\end{align}
Note that $G(0) = \sum_{x} v(x)^2 = \norm{v}_2^2 = 1$. Hence, $G(d) = (-1)^d$. This completes the proof.
\end{proof}

\section{Generalization Upper Bound for Regression} \label{app:sec:generalization_bound_3d}

The original model is $f\left(\begin{pmatrix} e_i, e_j \end{pmatrix}^\top \right) = V (W \begin{pmatrix} e_i , e_j \end{pmatrix}^\top)^{\odot 2}$. We consider the model $g \left(\begin{pmatrix} e_i, e_j, e_k \end{pmatrix}^\top \right) = \left \langle e_k, f\left(\begin{pmatrix} e_i, e_j \end{pmatrix}^\top \right) \right \rangle$ and the function class $\cH$ is defined over $g$ with different weights $\theta =(W, V)$ where $W \in \R^{h \times 2p}$ and $V \in \R^{p \times h}$. For an input $x \in \R^{3p}$, we define two slices $x' \triangleq x\operatorname{[:2p]}$ and $x'' \triangleq x\operatorname{[2p:]}$. We also define $\cW_{h,r} \triangleq \{ W \in \R^{h \times 2p}: \norm{W}_\infty \le r\}$ and $\cV_{h,r} \triangleq \{ V \in \R^{p \times h}: \norm{V}_\infty \le r\}$ which we will use later to denote parameters of our function. We further define $\dset_n = \{x_1, x_2, \cdots, x_n\}$ such that for all $a \in [n]$ we have $x_a = \begin{pmatrix}
    e_i, e_j, e_k
\end{pmatrix}^\top$ for some $i,j,k \in [p]$. For this section, we fix the set $\{x_1, x_2, \cdots, x_n\} \sim \Unif(\cX)$ and denote by $\cR_n(\cH)$ the empirical Rademacher complexity of the function class $\cH$ defined as
\begin{equation} \label{eq:emp_rademacher_complexity}
    \cR_n(\cH) \triangleq \E_{\sigma \sim \Unif(\{\pm 1\}^n)} \left[ \sup_{\Psi \in \cH} \frac{1}{n} \sum_{i=1}^n \Psi(x_i) \sigma_i \right]
\end{equation}
where $\cH$ maps $\cX \to \R$ and $n \in \sN$.

\begin{lemma} \label{lemma:transform_vwx_to_ux}
Consider the function classes 
\begin{equation} \label{sup:eq:single_neuron_net_class}
\cH^w_{r, r'} \triangleq \left\{ \Psi: \R^{3p} \to \R \, \mid \, \exists \left [ W^\top \in \cW_{w,r} \wedge V \in \cV_{w,r'} \right] \text{ s.t. } \Psi(x) = \inner{x''}{V\inner{W}{x'}^2} \right\}
\end{equation}
 and 
\begin{equation} \label{sup:eq:cubic_net_class}
\cG_r \triangleq \left\{ g: \R^{3p} \to \R \, \mid \, \exists \left [ U \in \R^{4 \times 3p} \wedge \norm{U}_\infty \le r \right]  \text{ s.t. } g(x) = \sum_{i=1}^4 \langle U_i^\top, x \rangle^3 \right\}
\end{equation}
where $U_i$ denotes the $i$'th row of $U$.
The function class $\cH^1_{r,r'}$ is contained in $\cG_{\max(r, r')}$ (and hence $\cR_n(\cH_{r,r'}) \le \cR_n(\cG_{\max(r, r')})$). 
\end{lemma}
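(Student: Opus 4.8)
The plan is to prove the inclusion $\cH^1_{r,r'} \subseteq \cG_{\max(r,r')}$ directly, by rewriting each width-one quadratic network as a sum of four cubes of linear forms with suitably bounded coefficients; the Rademacher bound then follows for free, since whenever $\cH \subseteq \cG$ the supremum in \eqref{eq:emp_rademacher_complexity} ranges over a smaller set, giving $\cR_n(\cH) \le \cR_n(\cG)$.

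First I would unpack the definitions. Fix $\Psi \in \cH^1_{r,r'}$; then there are width-one weights $W \in \R^{2p}$ with $\|W\|_\infty \le r$ and $V \in \R^{p}$ with $\|V\|_\infty \le r'$, and writing $x = (x', x'')$ with $x' = x[{:}2p]$, $x'' = x[2p{:}]$ (and treating $W, V$ as vectors) we have $\Psi(x) = \inner{x''}{V\inner{W}{x'}^2} = A(x)^2\,B(x)$, where $A(x) \triangleq \inner{W}{x'}$ and $B(x) \triangleq \inner{V}{x''}$. The observation to carry forward is that $A$ and $B$ are both linear forms in $x \in \R^{3p}$: $A(x) = \inner{(W, \mathbf 0)}{x}$, supported on the first $2p$ coordinates with coefficients of $\ell_\infty$ norm $\le r$, and $B(x) = \inner{(\mathbf 0, V)}{x}$, supported on the last $p$ coordinates with coefficients of $\ell_\infty$ norm $\le r'$.

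The core of the argument is a purely algebraic identity writing $A^2 B$ as a sum of at most four cubes of linear combinations $\alpha A + \beta B$ in which every coefficient satisfies $|\alpha|, |\beta| \le 1$. Expanding $(\alpha A + \beta B)^3$ and matching the coefficients of $A^3$, $A^2 B$, $A B^2$, $B^3$, I would verify the identity
\[
A^2 B \;=\; \Bigl(A + \tfrac16 B\Bigr)^3 + \Bigl(-A + \tfrac16 B\Bigr)^3 + \Bigl(-\tfrac16 B\Bigr)^3 + \Bigl(-\tfrac16 B\Bigr)^3,
\]
where the $A^3$ and $A B^2$ terms cancel between the first two cubes, the $B^3$ terms cancel across all four, and the $A^2 B$ coefficient totals $\tfrac36 + \tfrac36 = 1$. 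Then I would define $U \in \R^{4 \times 3p}$ with rows $U_1 = (W, \tfrac16 V)$, $U_2 = (-W, \tfrac16 V)$, and $U_3 = U_4 = (\mathbf 0, -\tfrac16 V)$, so that $\inner{U_i}{x} = \alpha_i A(x) + \beta_i B(x)$ with $(\alpha_i, \beta_i)$ the coefficients above, and hence $\Psi(x) = \sum_{i=1}^4 \inner{U_i}{x}^3$ for every $x$. Since $\|U_i\|_\infty = \max\bigl(|\alpha_i|\,\|W\|_\infty,\ |\beta_i|\,\|V\|_\infty\bigr) \le \max\bigl(|\alpha_i|\, r,\ |\beta_i|\, r'\bigr) \le \max(r, r')$ — using $|\alpha_i| \le 1$ and $|\beta_i| = \tfrac16 \le 1$ — we get $\|U\|_\infty \le \max(r,r')$, i.e.\ $\Psi \in \cG_{\max(r,r')}$.

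This establishes $\cH^1_{r,r'} \subseteq \cG_{\max(r,r')}$, and the Rademacher complexity inequality follows from monotonicity of the supremum under set inclusion. The only step needing genuine care — the \emph{main obstacle}, modest as it is — is choosing the cube decomposition so that all the linear-form coefficients come out bounded by $1$ rather than by some constant $>1$ (a careless decomposition, e.g.\ the textbook $6A^2B = (A+B)^3 - (A-B)^3 - 2B^3$ used without rescaling, would only give containment in $\cG_{c\max(r,r')}$ with $c>1$, weakening every downstream constant). Everything else — the unpacking of the definition and the $\ell_\infty$ bookkeeping on $U$ — is routine.
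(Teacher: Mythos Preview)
Your proof is correct and follows the same strategy as the paper: write the width-one network output $A^2B$ (with $A=\langle W,x'\rangle$, $B=\langle V,x''\rangle$) as a sum of four cubes of linear forms in $x$ whose coefficient vectors have $\ell_\infty$ norm at most $\max(r,r')$, then read off the containment and the Rademacher inequality by monotonicity. The only difference is the specific cube identity: the paper uses
\[
A^2B \;=\; \tfrac{2}{9}\Bigl[(A+B)^3 - (A-B)^3 - \bigl(\tfrac{A}{2}+B\bigr)^3 + \bigl(\tfrac{A}{2}-B\bigr)^3\Bigr]
\]
and absorbs the global factor into the rows via $U_i = \sqrt[3]{2/9}\,Q_i$ with $Q_i \in \{(\pm W,V),(\pm W/2,\pm V)\}$, whereas you construct the identity so that every coefficient is already at most $1$ in absolute value and no rescaling step is needed. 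Both constructions give $\|U\|_\infty \le \max(r,r')$ and lead to the identical conclusion; yours is slightly tidier, but neither buys anything extra downstream.
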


\begin{proof}[Proof of \Cref{lemma:transform_vwx_to_ux}]
We prove this lemma by showing that for each pair of matrices $W, V$ of $\Psi \in \cH^1_{r,r'}$, we can construct a matrix $U$ of $g \in \cG_{\max(r, r')}$ such that for all $x \in \R^{3p}$, $h(x) = g(x)$. Consider an arbitrary parameterization $W, V$ of $h$ such that $\Psi(x) = \inner{x''}{V \inner{W}{x'}^2}$. We can construct $U =  \sqrt[3]{\frac{2}{9}}\begin{pmatrix}Q_1, Q_2, Q_3, Q_4 \end{pmatrix}^\top$ where
\begin{align}
\begin{split}
    Q_1 &= \begin{pmatrix} W \\ V \end{pmatrix}, \qquad Q_2 = \begin{pmatrix} -W \\ V \end{pmatrix}, \qquad Q_3 = \begin{pmatrix} -W/2 \\ V \end{pmatrix}, \qquad Q_4 = \begin{pmatrix} W/2 \\ -V \end{pmatrix}.
\end{split}
\end{align}
Observe that 
\begin{align}
&g(x) = \sum_{i=1}^4 \inner{U_i^\top}{x}^3 \notag \\
&= \frac{2}{9}  \left[ \inner{U_1}{x}^3 + \inner{U_2}{x}^3 + \inner{U_3}{x}^3 + \inner{U_4}{x}^3 \right] \notag \\
&= \frac{2}{9} \left[ \left(\inner{W}{x'} + \inner{V}{x''} \right)^3 - \left(\inner{W}{x'} - \inner{V}{x''} \right)^3 - \left(\frac{\inner{W}{x'}}{2} + \inner{V}{x''} \right)^3 + \left(\frac{\inner{W}{x'}}{2} - \inner{V}{x''} \right)^3 \right] \notag \\
&= \inner{x''}{V\inner{W}{x'}^2} \notag \\
&= \Psi(x).
\end{align}
It is straightforward to see that $\norm{U}_\infty = \max\left(\norm{W}_\infty, \norm{V}_\infty \right)$, which completes the proof.
\end{proof}

\begin{lemma} \label{lem:rademacher_cubic_net}
Consider the function class $\cG_r$ defined in \cref{sup:eq:cubic_net_class}. It holds that 
\[\cR_n(\cG_r) \le \frac{324r^3 \sqrt{p}}{\sqrt{n}}.\]
\end{lemma}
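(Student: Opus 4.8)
The plan is to reduce the cubic class $\cG_r$ to a linear class through two standard ``peeling'' steps, and then evaluate the Rademacher complexity of the resulting $\ell_\infty$-bounded linear class by hand, using that every data point $x_a$ is a concatenation of three one-hot vectors.

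First I would decompose the sum of four cubes. Since the constraint $\norm{U}_\infty \le r$ restricts the four rows $U_1,\dots,U_4$ independently, $\cG_r$ is exactly the four-fold Minkowski sum of the single-term class $\cG_r^\flat \triangleq \{x \mapsto \inner{u}{x}^3 : u \in \R^{3p},\, \norm{u}_\infty \le r\}$. Empirical Rademacher complexity is additive over Minkowski sums (the supremum decouples across the summands and the expectation is linear), so $\cR_n(\cG_r) = 4\,\cR_n(\cG_r^\flat)$. Next I would strip off the cube by a contraction argument: because $x_a = (e_i,e_j,e_k)$ has $\norm{x_a}_1 = 3$, we get $\abr{\inner{u}{x_a}} \le \norm{u}_\infty\norm{x_a}_1 \le 3r$ for every $u$ in the radius-$r$ $\ell_\infty$-ball, so the map $\phi(t) = t^3$ is only ever evaluated on $[-3r,3r]$, where $\phi'(t) = 3t^2 \le 27r^2$ and $\phi(0) = 0$. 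The Ledoux--Talagrand contraction inequality then gives $\cR_n(\cG_r^\flat) \le 27 r^2\, \cR_n(\cL_r)$, where $\cL_r \triangleq \{x \mapsto \inner{u}{x} : \norm{u}_\infty \le r\}$.

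Finally I would bound $\cR_n(\cL_r)$ directly. By $\ell_\infty$--$\ell_1$ duality, $\cR_n(\cL_r) = \frac{r}{n}\,\E_\sigma\norm{\sum_{a=1}^n \sigma_a x_a}_1$. Writing $n_\ell$ for the number of indices $a$ with $(x_a)_\ell = 1$, the $\ell$-th coordinate of $\sum_a \sigma_a x_a$ is a sum of $n_\ell$ independent Rademacher variables, whose expected absolute value is at most $\sqrt{n_\ell}$ by Jensen's inequality. Since every $x_a$ has exactly one nonzero entry in each of the three length-$p$ blocks, $\sum_{\ell=1}^{3p} n_\ell = 3n$, and Cauchy--Schwarz gives $\sum_{\ell=1}^{3p}\sqrt{n_\ell} \le \sqrt{3p}\cdot\sqrt{3n} = 3\sqrt{pn}$. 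Hence $\cR_n(\cL_r) \le 3r\sqrt{p}/\sqrt{n}$, and chaining the three estimates yields $\cR_n(\cG_r) \le 4 \cdot 27 r^2 \cdot 3 r\sqrt{p}/\sqrt{n} = 324\, r^3 \sqrt{p}/\sqrt{n}$.

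The only step requiring genuine care is the contraction: $t\mapsto t^3$ is not globally Lipschitz, so one must first confine its arguments to $[-3r,3r]$ using $\norm{x_a}_1 = 3$ before invoking the contraction lemma with the localized constant $27r^2$. The decomposition and the linear-class computation are routine bookkeeping.
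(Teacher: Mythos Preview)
Your proof is correct and follows the same overall structure as the paper's: decompose the four-term sum to pick up a factor of $4$, apply Talagrand's contraction with the localized Lipschitz constant $27r^2$ (after confining $\inner{u}{x_a}$ to $[-3r,3r]$ via $\norm{x_a}_1=3$), and then bound the linear class. The only difference is in that last step: the paper uses $\ell_2$--$\ell_2$ Cauchy--Schwarz, bounding $\inner{U}{\sum_a\sigma_a x_a}\le \norm{U}_2\norm{\sum_a\sigma_a x_a}_2$ with $\norm{U}_2\le r\sqrt{3p}$ and $\E\norm{\sum_a\sigma_a x_a}_2\le\sqrt{3n}$, whereas you use the $\ell_\infty$--$\ell_1$ dual pairing and then Cauchy--Schwarz on $\sum_\ell\sqrt{n_\ell}$. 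Both routes land on exactly the same constant, and yours is arguably the more natural choice given the $\ell_\infty$ constraint on $U$.
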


\begin{proof}[Proof of \Cref{lem:rademacher_cubic_net}]
We can derive the Rademacher complexity of $\cG_r$ as
\begin{align}
\cR_{n}(\cG_r) &= \E_{\sigma \sim \operatorname{Unif}\left( \{\pm 1\}^n\right)} \left[ \sup_{U \in \R^{4 \times 3p}, \norm{U}_\infty \le r} \frac{1}{n} \sum_{a=1}^n \sigma_a \sum_{c=1}^4 \langle U_c^\top , x_a \rangle^3 \right] \notag \\
&\le \E_{\sigma \sim \operatorname{Unif}\left( \{\pm 1\}^n\right)} \left[ \sup_{U \in \R^{3p}, \norm{U}_\infty \le r} \frac{4}{n} \sum_{a=1}^n \sigma_a \langle U, x_a \rangle^3 \right] \notag \\
&\le \E_{\sigma \sim \operatorname{Unif}\left( \{\pm 1\}^n\right)} \left[ \sup_{U \in \R^{3p}, \norm{U}_\infty \le r} \frac{108 r^2}{n} \sum_{a=1}^n \sigma_a \langle U, x_a \rangle \right]
\intertext{where we used Talagrand’s contraction lemma (\Cref{lem:talgarand_contraction_lemma}) using the fact that $f(x)=x^3$ is $27r^2$-lipschitzness in $[-3r,3r]$ and that $\abs{\inner{U}{x_a}}\le 3r$ for all $\norm{U}_\infty\le r$, and we can continue:}
&\le \E_{\sigma \sim \operatorname{Unif}\left( \{\pm 1\}^n\right)} \left[ \sup_{U \in \R^{3p}, \norm{U}_\infty \le r} \frac{108 r^2}{n} \norm{U}_2 \left \lVert \sum_{a=1}^n \sigma_a x_a \right \rVert_2 \right] \notag \\
&\le \frac{108r^3 \sqrt{3p}}{n} \E_{\sigma \sim \operatorname{Unif}\left( \{\pm 1\}^n\right)} \left[ \left \lVert \sum_{a=1}^n \sigma_a x_a \right \rVert_2 \right] \notag \\
&\le \frac{324 r^3\sqrt{p}}{\sqrt{n}}.
\end{align}
where the last step follows from Rademacher complexity of linear model and the fact that $\norm{x}_2 = \sqrt{3}$ for all $x \in \dset_n$ and $\E_{\sigma \sim \Unif(\{\pm 1\}^n)} \left[ \norm{\sum_{a=1}^N \sigma_a}_2 \right] \le \sqrt{n}$.
\end{proof}

\begin{lemma}[Talagrand’s Contraction Lemma] \label{lem:talgarand_contraction_lemma}
Let $\cH$ be an arbitrary function class and $g$ be an $L$-Lipschitz function. It holds that 
\[
\cR_n\left( g \circ \cH \right) \le L \cdot \cR_n \left( \cH \right).
\]
\end{lemma}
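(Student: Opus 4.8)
The plan is to establish the standard Ledoux--Talagrand contraction inequality by a coordinate-by-coordinate ``peeling'' argument. First I would reduce to $L = 1$: replacing $g$ by $g/L$ scales $\cR_n(g\circ\cH)$ by $1/L$, so it suffices to show $\cR_n(g\circ\cH) \le \cR_n(\cH)$ when $g$ is $1$-Lipschitz. Clearing the factor $1/n$ in \eqref{eq:emp_rademacher_complexity}, the goal is
\[
    \E_{\sigma \in \{\pm1\}^n}\Bigl[\sup_{h \in \cH} \sum_{i=1}^n \sigma_i\, g(h(x_i))\Bigr] \le \E_{\sigma \in \{\pm1\}^n}\Bigl[\sup_{h \in \cH} \sum_{i=1}^n \sigma_i\, h(x_i)\Bigr].
\]

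The heart of the argument is a two-point lemma: for any functional $a:\cH\to\R$ that is bounded above, any point $x$, and any $1$-Lipschitz $g$,
\[
    \E_{\eps \in \{\pm 1\}}\Bigl[\sup_{h}\bigl(a(h) + \eps\, g(h(x))\bigr)\Bigr] \le \E_{\eps \in \{\pm 1\}}\Bigl[\sup_{h}\bigl(a(h) + \eps\, h(x)\bigr)\Bigr].
\]
I would prove this by expanding the left side as $\tfrac12\sup_{h}\bigl(a(h)+g(h(x))\bigr) + \tfrac12\sup_{h'}\bigl(a(h')-g(h'(x))\bigr)$, merging the two independent suprema into $\tfrac12\sup_{h,h'}\bigl(a(h)+a(h')+g(h(x))-g(h'(x))\bigr)$, and using $1$-Lipschitzness to bound $g(h(x))-g(h'(x)) \le \abs{h(x)-h'(x)}$. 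Since the joint supremum over the ordered pair $(h,h')$ is symmetric under interchange, it is approached by pairs with $h(x)\ge h'(x)$, so the absolute value drops, yielding $\tfrac12\sup_{h,h'}\bigl(a(h)+a(h')+h(x)-h'(x)\bigr)$, which is exactly the right side.

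With the two-point lemma available, I would induct on the number of coordinates: conditioning on $\sigma_1,\dots,\sigma_{n-1}$ and setting $a(h)=\sum_{i<n}\sigma_i g(h(x_i))$, the lemma lets us replace $g(h(x_n))$ by $h(x_n)$ without loss; iterating over $i = n, n-1, \dots, 1$ — at step $j$ treating the already-converted terms $\sigma_i h(x_i)$ for $i>j$ together with the not-yet-converted terms $\sigma_i g(h(x_i))$ for $i<j$ as the functional $a(h)$ — turns every $g(h(x_i))$ into $h(x_i)$, which finishes the proof after reinstating the factor $1/n$. I expect the only genuinely delicate point to be the symmetrization step inside the two-point lemma when $\cH$ is an arbitrary, possibly uncountable, class: rather than claiming the supremum is attained, I would fix $\eta>0$, take $h,h'$ within $\eta$ of the supremum, swap their roles if $h(x)<h'(x)$ (permissible since the bound $\abs{h(x)-h'(x)}$ is symmetric), and then let $\eta\to0$. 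It is also worth a remark that, exactly as the lemma is used in \Cref{lem:rademacher_cubic_net} with $g(x)=x^3$, the argument only needs $g$ to be $L$-Lipschitz on an interval containing all the values $h(x_i)$ that actually occur, not globally.
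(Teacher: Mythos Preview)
Your argument is correct and is the standard Ledoux--Talagrand peeling proof. The paper does not actually give its own proof of this lemma; it simply cites the textbook of Mohri, Rostamizadeh, and Talwalkar, where essentially the same coordinate-by-coordinate two-point argument appears. Your extra care about approximate maximizers when $\cH$ is uncountable, and your remark that only Lipschitzness on the range of the $h(x_i)$ is needed (exactly how the paper invokes the lemma in \Cref{lem:rademacher_cubic_net}), are both appropriate and go slightly beyond what the paper spells out.
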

A proof of this standard result can be found, for instance, as Lemma 5.7 of \citet{mrt}.

\begin{remark} \label{rem:rademacher_single_layer_net}
\cref{lem:rademacher_cubic_net} directly implies the following Rademacher complexity bound for the function class $\cH^1_{r,r'}$ defined in \cref{sup:eq:single_neuron_net_class}: \[\cR_n(\cH^1_{r,r'}) \le \frac{324 \sqrt{p}}{\sqrt{n}} \max(r,r')^3.\]
\end{remark}

\begin{lemma} \label{lem:rademacher_one_hidden_layer_net}
\[\cR_n(\cH^h_{r,r'}) \le \frac{324 h\sqrt{p}}{\sqrt{n}} \max(r, r')^3.\]
\end{lemma}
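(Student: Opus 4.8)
The plan is to reduce the width-$h$ bound to the width-one bound of \cref{rem:rademacher_single_layer_net} by decomposing the network neuron-by-neuron and invoking subadditivity of Rademacher complexity.

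First I would make the neuron-wise decomposition explicit. Writing $\theta=(W,V)$ with $W\in\R^{h\times 2p}$, $\norm{W}_\infty\le r$ and $V\in\R^{p\times h}$, $\norm{V}_\infty\le r'$, and letting $w_k\in\R^{2p}$ be the $k$-th row of $W$ and $v_k\in\R^p$ the $k$-th column of $V$, every $\Psi\in\cH^h_{r,r'}$ satisfies
\[
\Psi(x)=\inner{x''}{V(Wx')^{\odot 2}}=\sum_{k=1}^h \inner{v_k}{x''}\,\inner{w_k}{x'}^2 .
\]
Each summand lies in $\cH^1_{r,r'}$: the entrywise bound $\norm{W}_\infty\le r$ forces $\norm{w_k}_\infty\le r$, and $\norm{V}_\infty\le r'$ forces $\norm{v_k}_\infty\le r'$. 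Hence $\cH^h_{r,r'}$ is contained in the $h$-fold Minkowski sum $\cH^1_{r,r'}+\cdots+\cH^1_{r,r'}$.

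Next I would use the standard subadditivity of empirical Rademacher complexity under Minkowski sums, $\cR_n(\cF_1+\cF_2)\le\cR_n(\cF_1)+\cR_n(\cF_2)$ (immediate from linearity of $\sum_i\sigma_i(\cdot)$ in its argument together with $\sup(A+B)\le\sup A+\sup B$), together with monotonicity of $\cR_n$ under inclusion of function classes. Iterating over the $h$ summands gives $\cR_n(\cH^h_{r,r'})\le h\,\cR_n(\cH^1_{r,r'})$, and plugging in $\cR_n(\cH^1_{r,r'})\le \frac{324\sqrt p}{\sqrt n}\max(r,r')^3$ from \cref{rem:rademacher_single_layer_net} yields the claimed bound. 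There is essentially no obstacle here: the only points to state carefully are that the per-neuron $\ell_\infty$ parameter constraints are inherited from the matrix-$\ell_\infty$ constraints (immediate, since the relevant matrix $\ell_\infty$ norm is the entrywise maximum), and that subadditivity and monotonicity are applied to the containing Minkowski sum rather than to $\cH^h_{r,r'}$ itself.
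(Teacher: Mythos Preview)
The proposal is correct and takes essentially the same approach as the paper: both decompose the width-$h$ network into $h$ single-neuron contributions and bound the Rademacher complexity by $h\,\cR_n(\cH^1_{r,r'})$, then invoke \cref{rem:rademacher_single_layer_net}. Your write-up is if anything a bit more explicit about the Minkowski-sum/subadditivity mechanism, whereas the paper pushes the factor of $h$ through directly inside the supremum, but the underlying argument is identical.
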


\begin{proof}[Proof of \Cref{lem:rademacher_one_hidden_layer_net}]

\begin{align}
\cR_{n}(\cH^h_{r,r'}) &= \E_{\sigma \sim \operatorname{Unif}\left( \{\pm 1\}^n\right)} \left[ \sup_{\Psi \in \cH^h_{r,r'}} \frac{1}{n} \sum_{i=1}^n \sigma_i \Psi(x_i) \right] \notag \\
&= \E_{\sigma \sim \operatorname{Unif}\left( \{\pm 1\}^n\right)} \left[ \sup_{W \in \cW_{h,r}, V \in \cV_{h,r'}} \frac{1}{n} \sum_{a=1}^n \sigma_a \left \langle e_{k_a}, V \left( W  \begin{pmatrix} e_{i_a} \\ e_{j_a} \end{pmatrix} \right)^{\odot 2} \right \rangle \right] \notag \\
&= \E_{\sigma \sim \operatorname{Unif}\left( \{\pm 1\}^n\right)} \left[ \sup_{W \in \cW_{h,r}, V \in \cV_{h,r'}} \frac{1}{n} \sum_{a=1}^n \sigma_a \left \langle e_{k_a}, \sum_{b=1}^h V_{:,b} \left( W_b  \begin{pmatrix} e_{i_a} \\ e_{j_a} \end{pmatrix} \right)^{\odot 2} \right \rangle \right] \notag \\
&\le \E_{\sigma \sim \operatorname{Unif}\left( \{\pm 1\}^n\right)} \left[ \sup_{W \in \cW_{1,r}, V \in \cV_{1,r'}} \frac{h}{n} \sum_{a=1}^n \sigma_a \left \langle e_{k_a}, V \left \langle W,  \begin{pmatrix} e_{i_a} \\ e_{j_a} \end{pmatrix} \right \rangle^{2} \right \rangle \right] \notag \\
&= h \cR_{n} (\cH^1_{r,r'})
\end{align}
where in the second to last line we used the fact that the Rademacher complexity of a two-layer NN with $h$ hidden neurons is bounded by $h$ times that of a single-hidden-neuron counterpart. Thus, applying \cref{rem:rademacher_single_layer_net} we can conclude that
\[
\cR_n(\cH^h_{r,r'}) \le \frac{324 h\sqrt{p}}{\sqrt{n}} \max(r, r')^3.
\]
\end{proof}

We are now ready to present the proof of the sample complexity upper bound for one-hidden-layer networks in the regression task. We first present the following Theorem from \citet{srebro2010smoothness} on bounding the excess risk of $H$-smooth loss functions.

\begin{theorem}[Theorem 1 from \citet{srebro2010smoothness}] \label{prop:smoothness_risk_ub} For an $H$-smooth non-negative loss $\ell$ such that for all $x, y, \Psi, \abs{\ell(\Psi(x), y)} \le b$ , for any $\delta > 0$ we have with probability at least $1-\delta$ over a random sample size of $n$, for any $\Psi \in \cH$,
\[
L(\Psi) \le \hat{L}(\Psi) + K \left(\sqrt{\hat{L}(\Psi)} \left( \sqrt{H} \log^{1.5} n \cR_n(\cH) + \sqrt{\frac{b \log (1/\delta)}{n}}\right) + H \log^3 n \cR^2_n(\cH) + \frac{b \log (1/\delta)}{n}  \right)
\]
where $K$ is a positive constant, $L(\Psi)$ denotes the population loss of $\Psi$ according to $\ell$ and $\hat{L}(\Psi)$ denotes the loss of $\Psi$ on the mentioned sample of size $n$ according to $\ell$. 
\end{theorem}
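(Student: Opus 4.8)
The statement is the ``optimistic rates'' bound of \citet{srebro2010smoothness} for smooth, non-negative losses, so the plan is to reconstruct its proof. The first ingredient is the \emph{self-bounding} property of $H$-smooth non-negative functions: if $\phi:\R\to\R_{\ge0}$ has $H$-Lipschitz derivative, then the descent inequality evaluated at the point $z-\phi'(z)/H$ gives $0\le\phi(z)-\tfrac{1}{2H}\phi'(z)^2$, i.e.\ $\phi'(z)^2\le 2H\phi(z)$ (the weaker bound $\phi'(z)^2\le 4H\phi(z)$ suffices and is convenient later). Applying this to $\ell(\cdot,y)$ yields two facts I will use repeatedly: (i) $\ell(\cdot,y)$ is globally $O(\sqrt{Hb})$-Lipschitz on the range of $\cH$; and (ii) more importantly, its local Lipschitz constant at a prediction value $v$ is at most $\sqrt{4H\,\ell(v,y)}$, so wherever the loss is \emph{small} the loss function is \emph{flat}.

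The core technical step is a \emph{self-bounded, localized} Rademacher bound for the loss class. Fix the sample and set $\cF_\beta\triangleq\{(x,y)\mapsto\ell(\Psi(x),y):\Psi\in\cH,\ \hat L(\Psi)\le\beta\}$; I would establish
\[
\cR_n(\cF_\beta)\;\le\; c\,\sqrt{H}\,\bigl(\sqrt{\beta}+\log^{1.5}\!n\cdot\cR_n(\cH)\bigr)\,\cR_n(\cH).
\]
After symmetrization one must control $\E_\sigma\sup_{\Psi:\hat L(\Psi)\le\beta}\tfrac1n\sum_i\sigma_i\ell(\Psi(x_i),y_i)$, where the obstacle is that the per-point Lipschitz constant $\sqrt{4H\,\ell(\Psi(x_i),y_i)}$ from (ii) is itself $\Psi$-dependent, so a single application of Talagrand's contraction lemma is unavailable. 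The fix is to peel over $O(\log n)$ dyadic bands of the empirical loss values (from $1/n$ up to $b$): on the band where a controlled fraction of the $\ell(\Psi(x_i),y_i)$ sit near level $2^k/n$, contraction with Lipschitz constant $\sqrt{4H\cdot 2^k/n}$ reduces that band's contribution to $\sqrt H$ times a scaled copy of $\cR_n(\cH)$; summing the bands and applying Cauchy--Schwarz, $\tfrac1n\sum_i\sqrt{4H\,\ell(\Psi(x_i),y_i)}\le\sqrt{4H\,\hat L(\Psi)}\le\sqrt{4H\beta}$, produces the $\sqrt{\beta}$ factor, while the extremal bands and the $O(\log n)$-fold union account for the additive $\log^{1.5}n\cdot\cR_n(\cH)$ term. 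Crucially this uses only the \emph{global} $\cR_n(\cH)$, as the theorem assumes no local-complexity control on $\cH$.

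Given the localized bound, the rest is a standard local-Rademacher argument with a variance condition. From (i)--(ii), every $f\in\cF_\beta$ satisfies $\E f^2\le b\,\E f$, so variance is controlled by mean. I peel a \emph{second} time, now over $O(\log n)$ dyadic shells $L(\Psi)\in(r2^{j-1},r2^j]$: on each shell the empirical and population losses are comparable, the empirical process is bounded by $\cR_n(\cF_{O(r2^j)})\lesssim\sqrt{H\,r2^j}\,\cR_n(\cH)+\sqrt H\log^{1.5}\!n\,\cR_n(\cH)^2$ via the previous step, and Bousquet's form of Talagrand's inequality with variance proxy $O(b\,r2^j)$ gives, with probability $1-\delta/(c\log n)$,
\[
L(\Psi)-\hat L(\Psi)\;\le\; c\bigl(\sqrt{L(\Psi)}\,\psi+\psi^2+\tfrac{b\log(1/\delta)}{n}\bigr),\qquad \psi:=\sqrt H\log^{1.5}\!n\,\cR_n(\cH)+\sqrt{\tfrac{b\log(1/\delta)}{n}}.
\]
Reading this as a quadratic inequality in $\sqrt{L(\Psi)}$, solving to get $\sqrt{L(\Psi)}\lesssim\sqrt{\hat L(\Psi)}+\psi$, and substituting back (absorbing cross terms via $2ab\le a^2+b^2$) converts it into the stated optimistic form $L(\Psi)\le\hat L(\Psi)+K\bigl(\sqrt{\hat L(\Psi)}\,\psi+H\log^3\!n\,\cR_n(\cH)^2+\tfrac{b\log(1/\delta)}{n}\bigr)$, with $\psi^2\lesssim H\log^3\!n\,\cR_n(\cH)^2+\tfrac{b\log(1/\delta)}{n}$; the union over the $O(\log n)$ shells is absorbed into the log powers. (One hygiene point: \citet{srebro2010smoothness} state the bound with the worst-case Rademacher complexity, whereas \eqref{eq:emp_rademacher_complexity} is the fixed-sample empirical one; since the empirical Rademacher complexity concentrates by bounded differences, interchanging the two costs only an $O(\sqrt{\log(1/\delta)/n})$ additive term already present in the bound.)

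The main obstacle is the first peeling step---the self-bounded localized Rademacher bound. Its difficulty is genuine: the natural contraction has a \emph{data- and hypothesis-dependent} Lipschitz constant $\sqrt{4H\,\ell(\Psi(x_i),y_i)}$, so one cannot contract in one shot; the dyadic peeling over loss levels plus its union bound is what rescues the argument, and it is also the origin of every logarithmic factor in the final statement.
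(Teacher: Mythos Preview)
The paper does not prove this statement at all: it is quoted verbatim as ``Theorem~1 from \citet{srebro2010smoothness}'' and invoked as a black box to derive \cref{th:generalization_bound_3d}. There is no proof to compare against; the paper's entire ``proof'' is the citation.

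Your reconstruction is therefore doing strictly more than the paper does. As a sketch of the original Srebro--Sridharan--Tewari argument it is broadly in the right shape: the self-bounding inequality $|\ell'(v,y)|^2\le 2H\,\ell(v,y)$ for smooth non-negative losses, a localized Rademacher bound for the loss class scaling like $\sqrt{H\beta}\,\cR_n(\cH)$ plus a poly-log correction, and then Bousquet's Talagrand inequality with the variance proxy $\E f^2\le b\,\E f$, followed by solving the resulting quadratic in $\sqrt{L(\Psi)}$. The one place your outline is most hand-wavy is the ``peeling over dyadic bands of empirical loss values'' step that is supposed to yield the $\log^{1.5}n$ factor; in the original paper this is handled via a covering/chaining argument on the loss class rather than a direct dyadic peeling over loss levels, and making your version rigorous would require some care with the hypothesis-dependent Lipschitz constants. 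But since the present paper simply imports the result, none of this is at issue here.
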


We now present the proof of \cref{th:generalization_bound_3d}.
\generalizationbound*

\begin{proof}
Consider the function class $\cH_{R, R}^h$ for whom we have already proved a Rademacher complexity upper bound. As $\norm{\theta}_\infty \le R$, and all the inputs are one-hot, for all $x = (e_i, e_j, e_k)$ and $g \in \cH^h_{R,R}$ it holds that $g(x) \le 4hR^3$. This boundedness accordingly implies smoothness of $\ell_2$ loss on this function class with $H=1$. Hence, \cref{prop:smoothness_risk_ub} directly applies to our function class, yielding:
\[\cL_{l_2}(g(\theta, \cdot)) \le \frac{CR^6h^2}{n} \bigg(p \log^3 n + \log (1 / \delta)\bigg)\] 
for some positive constant $C$ independet of other values.
\end{proof}

Finally, we remark that if the $\ell_2$ loss is small enough, then the misclassification error is guaranteed to be zero.
\begin{proposition} \label{lem:bounded_l2_loss_bounded_error}
Consider a predictor $g: \cX \to \R$. The population misclassification error is upper-bounded by $2\cL_{\ell_2}(g) / p$.
\end{proposition}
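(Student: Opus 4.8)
The plan is to bound the misclassification rate by a pointwise argument over the $p^2$ ``fibers'' of the data --- each fiber being the $p$ triples $(e_a,e_b,e_c)$, $c\in[p]$, that share a fixed pair $(a,b)$ --- and then sum the per-fiber estimates against each fiber's contribution to the population $\ell_2$ loss. First I would make the induced classifier explicit: for input $(e_a,e_b)$ set $\phi(a,b)\triangleq\argmax_{c\in[p]} g(e_a,e_b,e_c)$ (ties broken arbitrarily), and write $c^*(a,b)$ for the true answer $a+b\bmod p$, so that the population misclassification error is $\err(g)=\frac{1}{p^2}\,\bigl|\{(a,b)\in[p]^2: \phi(a,b)\ne c^*(a,b)\}\bigr|$. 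Since $\cD$ is uniform over all $p^3$ triples and the target on $(e_a,e_b,e_c)$ is $p\cdot\mathbf{1}(c=c^*(a,b))$, abbreviating $g_c=g(e_a,e_b,e_c)$, the fiber over $(a,b)$ contributes exactly $\frac{1}{p^3}\bigl[(g_{c^*}-p)^2+\sum_{c\ne c^*}g_c^2\bigr]$ to $\cL_{\ell_2}(g)$, using the normalization $\ell_2(\hat y,y)=(\hat y-y)^2$, which is the one consistent with $\cL_{\ell_2}(\Psi_{\boldsymbol{0}})=p$.

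The key ingredient is the elementary inequality $(s-p)^2+t^2\ge p^2/2$ for all reals $s\le t$: if $s\le0$ then the first term alone is at least $p^2$, and if $s>0$ then $t\ge s>0$ gives $t^2\ge s^2$, so $(s-p)^2+t^2\ge 2(s-p/2)^2+p^2/2\ge p^2/2$. I would apply this with $s=g_{c^*}$ and $t=g_{\phi(a,b)}$ on each misclassified fiber: there $\phi(a,b)\ne c^*$, so by the definition of $\phi$ we have $g_{\phi(a,b)}\ge g_{c^*}$, and both $(g_{c^*}-p)^2$ and $g_{\phi(a,b)}^2$ appear, without overlap, among the terms of that fiber's contribution. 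Hence every misclassified fiber contributes at least $\frac{1}{p^3}\cdot\frac{p^2}{2}=\frac{1}{2p}$ to $\cL_{\ell_2}(g)$, while correctly classified fibers contribute a nonnegative amount. Summing over all $(a,b)$ yields $\cL_{\ell_2}(g)\ge\frac{1}{2p}\cdot p^2\,\err(g)=\frac{p}{2}\,\err(g)$, that is, $\err(g)\le\frac{2}{p}\,\cL_{\ell_2}(g)$, as desired.

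I do not expect a genuine obstacle here. The only points requiring care are bookkeeping --- on a misclassified fiber one must check that the maximizing index $\phi(a,b)$ is genuinely distinct from $c^*$, so that $g_{\phi(a,b)}^2$ is one of the $\sum_{c\ne c^*}g_c^2$ summands and is not conflated with the $(g_{c^*}-p)^2$ term --- together with fixing the normalization of the square loss ($\ell_2(\hat y,y)=(\hat y-y)^2$) at the outset, so that the constant emerges as $2/p$ rather than some other multiple.
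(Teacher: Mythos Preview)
Your argument is correct and follows essentially the same approach as the paper: both show that each misclassified pair $(a,b)$ forces the fiber's squared error $(g_{c^*}-p)^2+\sum_{c\ne c^*}g_c^2$ to be at least $p^2/2$, then sum over fibers. Your treatment is in fact cleaner than the paper's, which asserts the $p^2/4+p^2/4$ split somewhat informally; your explicit inequality $(s-p)^2+t^2\ge p^2/2$ for $s\le t$ makes the minimization step rigorous.
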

\begin{proof}
Note that each $(x, y) \in \cX \times \cY$ that is misclassified induces an $\ell_2$ loss of at least $p^2/2$. To see that why, for each pair $(e_a, e_b)$ to be misclassified while attaining minimum possible $\ell_2$ loss we need
\begin{align*}
    g((e_a, e_b, e_c)) &< \frac{p}{2} \\
    g((e_a, e_b, e_d)) &> \frac{p}{2} \\
    g((e_a, e_b, e_k)) &= 0 \quad \text{for all } k \not\in \{c,d\} 
\end{align*}
where $c = a + b \pmod p$, $d \in [p] \neq c$ and $k \in [p]$. Hence, each of $(e_a, e_b, e_c)$ and $(e_a, e_b, e_d)$ introduce an $\ell_2$ loss of at least $p^2/4$ in the regerssion task. \zhiyuan{needs to improve technical writing}
\end{proof}

\section{Construction of Interpolating Solution with Small \texorpdfstring{$\ell_\infty$}{} Norm} \label{app:manual_construction}

In this section, we prove \Cref{thm:small_inf_norm_exist}. We present a construction of weights that interpolates the dataset for $h=8p$. Then we generalize this result to any $h\ge 8p$ by duplicating the first $8p$ neurons $\left\lfloor \frac{h}{8p}\right\rfloor$ times, where each copy is $\left\lfloor \frac{h}{8p}\right\rfloor^{-\frac{1}{3}}$ times smaller in magnitude.

\smallinfnormexist*
\begin{proof}[Proof of \Cref{thm:small_inf_norm_exist}]
We begin by constructing 8 matrices of size $p \times 2p$ denoted by $W^{(i)}$. For every $n,m\in [p]$, we have that 

\noindent\begin{minipage}{.6\linewidth}
\begin{align} \label{eq:construction_1st_layer}
\begin{split}
W^{(1)}_{k,n} &= 
    \cos \left(\frac{2\pi k}{p} n \right) \qquad W^{(1)}_{k,m+p} = \textcolor{white}{+}\cos \left(\frac{2\pi k}{p} m \right) \\
W^{(2)}_{k,n} &=
    \cos \left(\frac{2\pi k}{p} n \right) \qquad W^{(2)}_{k,m+p =} -\cos \left(\frac{2\pi k}{p} m \right) \\
W^{(3)}_{k,n} &=
    \sin \left(\frac{2\pi k}{p} n \right) \qquad W^{(3)}_{k,m+p} = \textcolor{white}{+}\sin \left(\frac{2\pi k}{p} m \right) \\
W^{(4)}_{k,n} &=
    \sin \left(\frac{2\pi k}{p} n \right) \qquad W^{(4)}_{k,m+p} = -\sin \left(\frac{2\pi k}{p} m \right)\\
W^{(5)}_{k,n} &=
    \sin \left(\frac{2\pi k}{p} n \right) \qquad W^{(5)}_{k,m+p} = \textcolor{white}{+} \cos \left(\frac{2\pi k}{p} m \right) \\
W^{(6)}_{k,n} &=
    \sin \left(\frac{2\pi k}{p} n \right) \qquad W^{(6)}_{k,m+p} = -\cos \left(\frac{2\pi k}{p} m \right) \\
W^{(7)}_{k,n} &= 
    \cos \left(\frac{2\pi k}{p} n \right) \qquad W^{(7)}_{k,m+p} = \textcolor{white}{+} \sin \left(\frac{2\pi k}{p} m \right)  \\
W^{(8)}_{k,n} &=
    -\cos \left(\frac{2\pi k}{p} n \right) \qquad W^{(8)}_{k,m+p} = \sin \left(\frac{2\pi k}{p} m \right) \\
\end{split}
\end{align}
\end{minipage}
\begin{minipage}{.35\linewidth}
\begin{align} \label{eq:construction_2nd_layer}
\begin{split}
  V_{q, \, 8k:8(k+1)} = \begin{pmatrix}
      \textcolor{white}{+}\cos\left( \frac{2\pi k}{p} q \right) \\
      -\cos\left( \frac{2\pi k}{p} q \right) \\
      -\cos\left( \frac{2\pi k}{p} q \right) \\
      \textcolor{white}{+}\cos\left( \frac{2\pi k}{p} q \right) \\
      \textcolor{white}{+}\sin\left( \frac{2\pi k}{p} q \right) \\
      -\sin\left( \frac{2\pi k}{p} q \right) \\
      \textcolor{white}{+}\sin\left( \frac{2\pi k}{p} q \right) \\
      -\sin\left( \frac{2\pi k}{p} q \right) \\
  \end{pmatrix}^\top
\end{split}
\end{align}
\end{minipage}

Each $W^{(i)}$ for $1\le i\le 8$ is a $p \times 2p$ matrix, whose elements are given by the equations presented above. Hence, in each equation $k, n, m \in \{0, 1, \cdots, p \}$.
The construction of the first layer is based on stacking $W^{(i)}$ for $1\le i\le 8$ to construct $W \in \R^{8p \times 2p}$. The weights of the second layer are given in \cref{eq:construction_2nd_layer}, where $V_{q,\, 8k:8(k+1)}$ presents a slice of the second layer and $q, k \in \{0, 1, \cdots, p-1\}$.

To show that this construction solves the modular addition problem analytically, we will analytically perform the inference step for two arbitrary inputs $n, m$ where $x=(e_n, e_m)$. We denote $h= (Wx)^{\odot 2} \in \R^{8p}$ as the post-activations of the first layer, which is given by

\begin{equation}
    h_{8k:8(k+1)} = \begin{pmatrix}
        \cos \left(\frac{2\pi k}{p} n \right) + \cos \left(\frac{2\pi k}{p} m \right) \\
        \cos \left(\frac{2\pi k}{p} n \right) - \cos \left(\frac{2\pi k}{p} m \right) \\
        \sin \left(\frac{2\pi k}{p} n \right) + \sin \left(\frac{2\pi k}{p} m \right) \\
        \sin \left(\frac{2\pi k}{p} n \right) - \sin \left(\frac{2\pi k}{p} m \right) \\
        \sin \left(\frac{2\pi k}{p} n \right) + \cos \left(\frac{2\pi k}{p} m \right) \\
        \sin \left(\frac{2\pi k}{p} n \right) - \cos \left(\frac{2\pi k}{p} m \right) \\
        \cos \left(\frac{2\pi k}{p} n \right) + \sin \left(\frac{2\pi k}{p} m \right) \\
        \cos \left(\frac{2\pi k}{p} n \right) - \sin \left(\frac{2\pi k}{p} m \right) \\
    \end{pmatrix}^{2}.
\end{equation}

Note that for each $k$, we have that (after dropping $(e_n, e_m)$ for simplicity)
\begin{equation}
h_{8k} - h_{8k+1} =  2 \cos \left( \frac{2\pi k}{p} (n+m) \right) + 2 \cos \left( \frac{2\pi k}{p} (n-m) \right)
\end{equation}
and
\begin{equation}
h_{8k+2} - h_{8k+3} =  2 \cos \left( \frac{2\pi k}{p} (n-m) \right) - 2 \cos \left( \frac{2\pi k}{p} (n+m) \right)
\end{equation}
and 
\begin{equation}
h_{8k+4} - h_{8k+5} =  2 \sin \left( \frac{2\pi k}{p} (n+m) \right) + 2 \sin \left( \frac{2\pi k}{p} (n-m) \right)
\end{equation}
and 
\begin{equation}
h_{8k+6} - h_{8k+7} =  2 \sin \left( \frac{2\pi k}{p} (n+m) \right) - 2 \sin \left( \frac{2\pi k}{p} (n-m) \right).
\end{equation}

Hence, 
\begin{equation}
h_{8k} - h_{8k+1} - h_{8k+2} + h_{8k+3} =  4 \cos \left( \frac{2\pi k}{p} (n+m) \right)
\end{equation}
and
\begin{equation}
h_{8k+4} - h_{8k+5} + h_{8k+6} - h_{8k+7} =  4 \sin \left( \frac{2\pi k}{p} (n+m) \right).
\end{equation}
Using the fact that $\cos(a-b) = \cos(a)\cos(b) - \sin(a)\sin(b)$, we can see that

\begin{align}
\begin{split}
[f(e_n,e_m)]_q = \langle V_{q,:} , h \rangle &= 4 \sum_{k=0}^{p-1} \cos \left( \frac{2\pi k}{p} q \right) \cos \left( \frac{2\pi k}{p} (n+m) \right) - \sin \left( \frac{2\pi k}{p} q \right) \sin \left( \frac{2\pi k}{p} (n+m) \right) \\
&= 4 \sum_{k=0}^{p-1} \cos \left( \frac{2\pi k}{p} (m+n-q) \right) \\
&= 4p  \, \mathbf{1} \big( (m+n-q) \bmod p = 0 \big)
\end{split}
\end{align}
where the last equality follows from Euler's identity and needs $p$ to be odd. 
\end{proof}
\begin{remark} \label{remark:4p_neuron_construction}
Assuming $p$ is odd, we need at most $4p$ hidden neurons to interpolate the modular addition task. 
\end{remark}

Observing the fact that $\cos(2\pi - a) = \cos(a)$, we can see that
\begin{equation} \label{eq:root_of_identity_half}
\sum_{k=0}^{p-1} \cos \left( \frac{2\pi k}{p} (m+n-q) \right) = 1 + 2 \sum_{k=1}^{\frac{p-1}{2}} \cos \left( \frac{2\pi k}{p} (m+n-q) \right)
\end{equation}
where we replaced $\cos \left( \frac{2 \pi 0}{p} (m+n-q) \right)$ with 1. Based on \cref{eq:root_of_identity_half}, we can cut out half of the weights of the first and second layer, and only construct the frequencies up to $\frac{p-1}{2}$, which results in only needing $4p$ hidden neurons to construct the interpolating solution.

\section{Margin-Based Generalization Bound for Classification} \label{app:gen_bound}

We begin by providing some background and notation on sub-exponential random variables, which will be later used in the proof of our margin-based generalization bound.

\subsection{Background on sub-exponential variables}
The following proofs rely heavily on concentration inequalities for sub-exponential random variables; we will first review some background on these quantities.

A real-valued random variable $X$ with mean $\mu$ is called \textit{sub-exponential} \citep[see e.g.][]{wainwright_2019} if there are non-negative parameters $(\nu, \alpha)$ such that
\begin{equation}
\E[e^{\lambda (X - \mu)}] \le e^{\frac{\nu^2 \lambda^2}{2}} \quad \text{ for all } \abs{\lambda} < \frac{1}{\alpha}.
\label{eq:se-def}
\end{equation}
We use $X \sim SE(\nu, \alpha)$ to denote that $X$ is a sub-exponential random variable with parameters $(\nu, \alpha)$,
but note that this is not a particular distribution.

One famous sub-exponential random variable is the product of the absolute value of two standard normal distributions, $z_i \sim \N(0, 1)$, such that the two factors are either independent ($X_1 = \abs{z_1} \abs{z_2} \sim SE(\nu_p, \alpha_p)$ with mean $2/\pi$) or the same ($X_2 = z^2 \sim SE(2, 4)$ with mean 1). We now present a few lemmas regarding sub-exponential random variables that will come in handy in the later subsections of the appendix.

\begin{lemma} \label{Lemma_scaled_sub_exponential}
Assume $X$ is  sub-exponential with parameters $(\nu, \alpha)$. It holds that the random variable $sX$ where $s \in \R^+$ is also sub-exponential, but with parameters $(s \nu, s\alpha)$.
\end{lemma}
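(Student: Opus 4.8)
The plan is to prove this directly from the defining inequality \eqref{eq:se-def} via a simple change of variables in the free parameter $\lambda$. There is no real obstacle here; the only thing to be careful about is tracking how the admissible range of $\lambda$ rescales.

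First I would record what needs to be shown: if $X$ has mean $\mu$, then $sX$ has mean $s\mu$, and we must verify that for every $\lambda$ with $\abs{\lambda} < \tfrac{1}{s\alpha}$ it holds that $\E\!\left[e^{\lambda(sX - s\mu)}\right] \le e^{(s\nu)^2\lambda^2/2}$.

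The key computation is the identity $\E\!\left[e^{\lambda(sX - s\mu)}\right] = \E\!\left[e^{(s\lambda)(X - \mu)}\right]$, obtained by pulling the constant $s$ inside. Writing $\lambda' \triangleq s\lambda$, the hypothesis $\abs{\lambda} < \tfrac{1}{s\alpha}$ is exactly $\abs{\lambda'} < \tfrac{1}{\alpha}$ (using $s > 0$), so we are in the regime where the sub-exponential bound for $X$ applies: $\E\!\left[e^{\lambda'(X-\mu)}\right] \le e^{\nu^2\lambda'^2/2} = e^{\nu^2 s^2 \lambda^2/2} = e^{(s\nu)^2\lambda^2/2}$. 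This is precisely the claimed bound with parameters $(s\nu, s\alpha)$.

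The main (and only) step worth stating carefully is the rescaling of the validity window $\abs{\lambda} < 1/\alpha \mapsto \abs{\lambda} < 1/(s\alpha)$; everything else is a one-line substitution, so I would keep the write-up to a few lines.
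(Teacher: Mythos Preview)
Your proposal is correct and is essentially the same argument as the paper's: both verify the defining inequality for $sX$ by the change of variables $\lambda \leftrightarrow s\lambda$ and track how the admissible range $|\lambda|<1/\alpha$ rescales to $|\lambda|<1/(s\alpha)$. The only cosmetic difference is that the paper substitutes $\lambda' = \lambda/s$ starting from the bound for $X$, while you substitute $\lambda' = s\lambda$ starting from the target bound; these are the same computation read in opposite directions.
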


\begin{proof}
Consider $X \sim SE(\nu, \alpha)$ and $X' = sX$ with $\E[X'] = s \E[X]$. Based on the definition of sub-exponential random variables

\begin{align}
\begin{split}
    &\E \left[ \exp \left(\lambda (X - \mu) \right) \right]  \le \exp(\frac{\nu^2 \lambda^2}{2}) \quad \text{ for all } \abs{\lambda} < \frac{1}{\alpha} \\
    &\Longrightarrow \E \left[ \exp \left(\frac{\lambda}{s} (sX - s\mu) \right) \right] \le \exp(\frac{\nu^2 s^2 \frac{\lambda^2}{s^2}}{2}) \quad \text{ for all } \abs{\frac{\lambda}{s}} < \frac{1}{s\alpha} \\
    & \xRightarrow{\lambda' = \frac{\lambda}{s}} \E \left[ \exp \left(\lambda' (X' - \mu') \right) \right] \le \exp(\frac{{\nu^2 s}^2 {\lambda'}^2}{2}) \quad \text{ for all } \abs{\lambda'} < \frac{1}{s\alpha}
\end{split}
\end{align}
Defining $\alpha' = s \alpha$ and $\nu' = s \nu$ we see that $X' \sim SE(s\nu, s \alpha)$.
\end{proof}

\begin{proposition} \label{prop:supp:prob_sum_subexp}
If all of the random variables $X_i$ for $i \in [N]$ for $N \in \mathbb{N}^+$ are sub-exponential with parameters $(\nu_i, \alpha_i)$, and all of them are independent, then $\sum_{i=1}^N X_i \in SE(\sqrt{\sum_{i=1}^N \nu_i^2}, \max_i \alpha_i)$,
and 
$\frac{1}{N} \sum_{i=1}^N X_i \sim SE\left( \frac{1}{\sqrt N} \sqrt{\frac1N \sum_{i=1}^N \nu_i^2}, \frac1N \max_i \alpha_i\right)$.
\end{proposition}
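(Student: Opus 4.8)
The plan is to prove the two claims in sequence, the first by a direct moment-generating-function (MGF) computation exploiting independence, and the second by invoking the scaling behavior already established in \Cref{Lemma_scaled_sub_exponential}.

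First I would set $\mu_i \triangleq \E[X_i]$ and $\mu \triangleq \sum_{i=1}^N \mu_i = \E[\sum_{i=1}^N X_i]$. Fix any $\lambda$ with $\abs{\lambda} < 1 / \max_i \alpha_i$; then $\abs{\lambda} < 1/\alpha_i$ for every $i$, so the sub-exponential bound \eqref{eq:se-def} applies to each $X_i$ simultaneously. Using independence of the $X_i$ to factor the MGF,
\begin{equation*}
    \E\!\left[ \exp\!\left( \lambda \Bigl( \textstyle\sum_{i=1}^N X_i - \mu \Bigr) \right) \right]
    = \prod_{i=1}^N \E\!\left[ e^{\lambda (X_i - \mu_i)} \right]
    \le \prod_{i=1}^N e^{\nu_i^2 \lambda^2 / 2}
    = \exp\!\left( \tfrac{\lambda^2}{2} \textstyle\sum_{i=1}^N \nu_i^2 \right),
\end{equation*}
valid for all $\abs{\lambda} < 1/\max_i \alpha_i$. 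Matching this against \eqref{eq:se-def} shows $\sum_{i=1}^N X_i \sim SE\bigl( \sqrt{\sum_{i=1}^N \nu_i^2},\, \max_i \alpha_i \bigr)$, which is the first claim.

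For the averaged version, I would apply \Cref{Lemma_scaled_sub_exponential} with scaling factor $s = 1/N$ to the random variable $\sum_{i=1}^N X_i$, giving
\begin{equation*}
    \tfrac{1}{N} \textstyle\sum_{i=1}^N X_i \sim SE\!\left( \tfrac{1}{N} \sqrt{\textstyle\sum_{i=1}^N \nu_i^2},\; \tfrac{1}{N} \max_i \alpha_i \right).
\end{equation*}
Finally I would simplify the first parameter by pulling out a factor of $1/\sqrt N$: $\frac{1}{N}\sqrt{\sum_{i=1}^N \nu_i^2} = \frac{1}{\sqrt N} \cdot \frac{1}{\sqrt N}\sqrt{\sum_{i=1}^N \nu_i^2} = \frac{1}{\sqrt N}\sqrt{\frac{1}{N}\sum_{i=1}^N \nu_i^2}$, which is exactly the stated form. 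There is no real obstacle here — the only points to be careful about are (i) checking that the common admissible range of $\lambda$ is $\abs{\lambda} < 1/\max_i \alpha_i$ (equivalently $\min_i 1/\alpha_i$), and (ii) that the mean of the sum is the sum of the means so the centering in the MGF factors cleanly; both are immediate.
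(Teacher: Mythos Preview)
Your proof is correct and is precisely the standard MGF-factorization argument; the paper itself does not write out a proof but simply cites the discussion preceding equation 2.18 of Wainwright (2019), which is exactly the computation you carry out. You have merely filled in the details the paper defers to a reference.
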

\begin{proof}
This is a simplification of the discussion prior to equation 2.18 of \citet{wainwright_2019}.
\end{proof}

\begin{proposition} \label{bernsetin_ineq}
For a random variable $X \sim SE(\nu, \alpha)$, the following concentration inequality holds:
\[
\Pr \left( \abs{X - \mu} \ge t \right) \le 2 \exp \left( - \min \left( \frac{t^2}{2\nu^2}, \frac{t}{2\alpha} \right) \right)
.\]
\end{proposition}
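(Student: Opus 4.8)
\textbf{Proof plan for \Cref{bernsetin_ineq} (Bernstein-type tail bound for sub-exponential variables).}

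The plan is to derive the two-sided tail bound from the moment-generating-function control in the definition \eqref{eq:se-def} via a Chernoff argument, optimizing the free parameter $\lambda$ subject to the constraint $\abs{\lambda} < 1/\alpha$. First I would treat the upper tail $\Pr(X - \mu \ge t)$ for $t > 0$. By Markov's inequality applied to $e^{\lambda(X-\mu)}$ with any $\lambda \in [0, 1/\alpha)$, we get $\Pr(X - \mu \ge t) \le e^{-\lambda t}\,\E[e^{\lambda(X-\mu)}] \le \exp\!\left(\tfrac{\nu^2\lambda^2}{2} - \lambda t\right)$, using \eqref{eq:se-def}. Now I would minimize the exponent $g(\lambda) = \tfrac{\nu^2\lambda^2}{2} - \lambda t$ over $\lambda \in [0,1/\alpha)$. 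The unconstrained minimizer is $\lambda^\star = t/\nu^2$, giving value $-t^2/(2\nu^2)$. This choice is admissible precisely when $t/\nu^2 < 1/\alpha$, i.e.\ $t \le \nu^2/\alpha$ (being slightly careful at the boundary, but this only affects constants; one can use $\lambda = t/\nu^2$ when $t \le \nu^2/\alpha$ and take limits / a closed endpoint argument as needed). In the complementary regime $t > \nu^2/\alpha$, the minimizer over the constrained interval is at the right endpoint, so one should take $\lambda$ close to $1/\alpha$; plugging $\lambda = 1/\alpha$ gives exponent $\tfrac{\nu^2}{2\alpha^2} - \tfrac{t}{\alpha} \le \tfrac{t}{2\alpha} - \tfrac{t}{\alpha} = -\tfrac{t}{2\alpha}$, where the inequality uses $\nu^2/\alpha^2 \le t/\alpha$ in this regime.

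Combining the two regimes, in both cases the upper-tail probability is at most $\exp\!\left(-\min\!\left(\tfrac{t^2}{2\nu^2}, \tfrac{t}{2\alpha}\right)\right)$: in the first regime the active term is $t^2/(2\nu^2)$ and since $t \le \nu^2/\alpha$ we have $t^2/(2\nu^2) \le t/(2\alpha)$, so the minimum equals $t^2/(2\nu^2)$; in the second regime the active term is $t/(2\alpha)$ and symmetrically this is the minimum. Next I would handle the lower tail $\Pr(X - \mu \le -t)$ by the same argument applied to $-X$, which is sub-exponential with the same parameters $(\nu,\alpha)$ (immediate from \eqref{eq:se-def}, replacing $\lambda$ by $-\lambda$). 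A union bound over the two one-sided events then yields
\[
\Pr(\abs{X-\mu} \ge t) \le 2\exp\!\left(-\min\!\left(\tfrac{t^2}{2\nu^2}, \tfrac{t}{2\alpha}\right)\right),
\]
which is exactly the claim.

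There is no real obstacle here — this is the standard textbook derivation (it appears e.g.\ in \citet{wainwright_2019}) — but the one place to be careful is the case split on whether $t \lessgtr \nu^2/\alpha$ and the handling of the open constraint $\abs{\lambda} < 1/\alpha$ at its boundary. A clean way to sidestep the boundary issue is to note that \eqref{eq:se-def} holds for all $\abs{\lambda} < 1/\alpha$ and take a supremum/limit: the bound $\exp(\tfrac{\nu^2\lambda^2}{2} - \lambda t)$ is continuous in $\lambda$, so evaluating at $\lambda \nearrow 1/\alpha$ is legitimate, giving $\exp(\tfrac{\nu^2}{2\alpha^2} - \tfrac t\alpha)$ as a valid upper bound in the large-$t$ regime. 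Also worth a remark: one could alternatively degenerate parameters ($\alpha = 0$) be handled by interpreting $t/(2\alpha) = +\infty$, recovering the pure sub-Gaussian bound; but since the paper's applications use strictly positive $\alpha$, I would not belabor this. I would keep the write-up to the Chernoff step, the two-regime optimization, the reduction of the lower tail to the upper tail, and the final union bound.
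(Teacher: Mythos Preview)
Your proposal is correct and follows exactly the standard Chernoff argument; the paper's own proof simply defers to Equation 2.18 of \citet{wainwright_2019}, which is precisely this derivation. So there is no meaningful difference in approach---you have just written out the details that the paper leaves to the reference.
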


\begin{proof}
The proof is straightforward from multiplying the result derived in Equation 2.18 of \citet{wainwright_2019} by a scalar.
\end{proof}

\begin{corollary} \label{bernstein_in_delta}
Consider $X \sim SE(\nu, \alpha)$, the following bound holds with probability at least $1-\delta$:
\[
\abs{X - \mu} < \max \left(\nu \sqrt{2 \log \frac{2}{\delta}}, 2\alpha \log \frac{2}{\delta} \right)
.\]
\end{corollary}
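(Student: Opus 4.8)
The final statement is Corollary~\ref{bernstein_in_delta}, which converts the Bernstein-type two-sided tail bound of Proposition~\ref{bernsetin_ineq} into a high-probability bound expressed in terms of a failure probability $\delta$.

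\textbf{Plan.} The proof is a routine inversion of the tail inequality $\Pr(\abs{X-\mu}\ge t)\le 2\exp(-\min(t^2/(2\nu^2),\,t/(2\alpha)))$ from Proposition~\ref{bernsetin_ineq}. First I would set the right-hand side equal to $\delta$ and solve for $t$, but because of the $\min$ it is cleaner to instead guess a candidate value of $t$ and verify it makes the bound at most $\delta$. The natural candidate is $t^\star = \max\!\left(\nu\sqrt{2\log\frac2\delta},\,2\alpha\log\frac2\delta\right)$: the first term is what you get by inverting the Gaussian-like branch $\exp(-t^2/(2\nu^2))$, and the second is what you get by inverting the exponential branch $\exp(-t/(2\alpha))$.

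\textbf{Key step.} With $t = t^\star$, I would argue that $\min\!\left(\frac{(t^\star)^2}{2\nu^2},\frac{t^\star}{2\alpha}\right)\ge \log\frac2\delta$. Indeed, since $t^\star \ge \nu\sqrt{2\log\frac2\delta}$ we get $\frac{(t^\star)^2}{2\nu^2}\ge \log\frac2\delta$, and since $t^\star\ge 2\alpha\log\frac2\delta$ we get $\frac{t^\star}{2\alpha}\ge \log\frac2\delta$; hence the minimum of the two quantities is at least $\log\frac2\delta$. Plugging this into Proposition~\ref{bernsetin_ineq} gives
\[
\Pr\!\left(\abs{X-\mu}\ge t^\star\right)\le 2\exp\!\left(-\log\tfrac2\delta\right)=2\cdot\tfrac\delta2=\delta,
\]
so with probability at least $1-\delta$ we have $\abs{X-\mu}< t^\star$, which is exactly the claimed inequality. (A mild remark: if one wants the strict inequality $\abs{X-\mu}<t^\star$ rather than $\le$, it suffices to note the tail bound applies at $t^\star$ and the event $\{\abs{X-\mu}\ge t^\star\}$ has probability at most $\delta$, so its complement $\{\abs{X-\mu}<t^\star\}$ has probability at least $1-\delta$.)

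\textbf{Main obstacle.} There is essentially no obstacle here — the only point requiring a moment's care is handling the $\min$ correctly, i.e.\ recognizing that it is enough to lower-bound \emph{both} branches by $\log\frac2\delta$ at the candidate $t^\star$ rather than trying to match them exactly, and being slightly careful about the degenerate cases $\nu=0$ or $\alpha=0$ (where the corresponding term of the $\max$ drops out and the argument still goes through, interpreting $\min$ over the remaining finite branch). The corollary then follows immediately.
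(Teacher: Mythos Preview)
Your proof is correct and is precisely the standard inversion of the tail bound in Proposition~\ref{bernsetin_ineq}; the paper itself offers no separate proof for this corollary, treating it as immediate from the proposition, so your argument matches the intended approach.
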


A sub-Gaussian random variable, $SG(\nu)$, is one which satisfies \eqref{eq:se-def} for all $\lambda$, i.e.\ it is the limit of $SE(\nu, \alpha)$ as $\alpha \to 0$.
\begin{proposition}[Chernoff bound] \label{chernoff_in_delta}
If $X$ is $SG(\nu)$,
then with probability at least $1-\delta$,
$\lvert X - \mu \rvert \le \nu \sqrt{2 \log \frac2\delta}$.
\end{proposition}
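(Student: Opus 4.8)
The plan is to establish this as a standard consequence of the Cram\'er--Chernoff (moment generating function) method, or, even more directly, to read it off as the limiting case $\alpha \to 0$ of the sub-exponential Bernstein-type bounds already proved in \Cref{bernsetin_ineq,bernstein_in_delta}. I will sketch the self-contained argument.

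First I would apply Markov's inequality to the nonnegative random variable $e^{\lambda(X-\mu)}$ for an arbitrary $\lambda > 0$: for any $t > 0$,
\[
\Pr(X - \mu \ge t) \le e^{-\lambda t}\, \E\!\left[e^{\lambda(X-\mu)}\right] \le e^{-\lambda t + \nu^2 \lambda^2 / 2},
\]
where the second inequality is the defining sub-Gaussian property, valid here for \emph{all} $\lambda$ (this is precisely the point of $SG(\nu)$ being the $\alpha \to 0$ limit of $SE(\nu,\alpha)$, where the constraint $\abs{\lambda} < 1/\alpha$ becomes vacuous). Minimizing the exponent $-\lambda t + \nu^2\lambda^2/2$ over $\lambda > 0$ yields the optimal choice $\lambda = t/\nu^2$ and hence the one-sided tail bound $\Pr(X-\mu \ge t) \le e^{-t^2/(2\nu^2)}$. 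Since the sub-Gaussian condition is symmetric (the MGF bound holds for negative $\lambda$ as well, so $-X$ is also $SG(\nu)$ with mean $-\mu$), the same argument gives $\Pr(\mu - X \ge t) \le e^{-t^2/(2\nu^2)}$, and a union bound produces the two-sided tail
\[
\Pr\!\left(\, \abs{X - \mu} \ge t \,\right) \le 2 e^{-t^2/(2\nu^2)}.
\]

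To conclude, I would set the right-hand side equal to $\delta$ and solve for $t$: from $2e^{-t^2/(2\nu^2)} = \delta$ we get $t = \nu\sqrt{2\log(2/\delta)}$, so with probability at least $1-\delta$ we have $\abs{X-\mu} < \nu\sqrt{2\log(2/\delta)}$ (the non-strict inequality in the statement following a fortiori). Alternatively, the conclusion is immediate from \Cref{bernstein_in_delta} by letting $\alpha \to 0$, since $\max\!\left(\nu\sqrt{2\log(2/\delta)},\, 2\alpha\log(2/\delta)\right) \to \nu\sqrt{2\log(2/\delta)}$. There is no genuine obstacle here --- this is a textbook result; the only points deserving a word of care are the symmetry argument used for the lower tail and the observation that the $SG(\nu)$ MGF bound holds without the $\abs{\lambda}<1/\alpha$ range restriction present in the sub-exponential case.
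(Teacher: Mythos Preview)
Your proof is correct and entirely standard. Note, however, that the paper does not actually supply a proof of this proposition: it is stated as a well-known result (alongside Hoeffding's inequality) without argument, immediately after the sub-Gaussian case is introduced as the $\alpha\to 0$ limit of $SE(\nu,\alpha)$. So there is no ``paper's own proof'' to compare against; your self-contained Cram\'er--Chernoff derivation (Markov on the MGF, optimize over $\lambda$, symmetrize, invert) is exactly the textbook justification one would expect, and your alternative route via \Cref{bernstein_in_delta} with $\alpha\to 0$ is also valid and matches the paper's remark that $SG(\nu)$ is that limit.
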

\begin{proposition}[Hoeffding's inequality] \label{hoeffding_in_delta}
If $X_1, \dots, X_n$ are independent variables with means $\mu_i$ and each $SG(\nu_i)$, then $\left\lvert \sum_{i=1}^n X_i - \sum_{i=1}^n \mu_i \right\rvert \le \sqrt{2 \left( \sum_{i=1}^n \nu_i^2 \right) \log\frac2\delta}$ with probability at least $1 - \delta$.
\end{proposition}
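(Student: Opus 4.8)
The plan is to reduce the statement to the single-variable sub-Gaussian Chernoff bound already recorded as \Cref{chernoff_in_delta}. The one ingredient needed is that a sum of independent sub-Gaussian variables is again sub-Gaussian, with variance proxy equal to the sum of the individual variance proxies. First I would verify this closure property directly: setting $Y = \sum_{i=1}^n X_i$ and $\mu = \sum_{i=1}^n \mu_i$, independence makes the moment generating function factorize, and bounding each factor with the sub-Gaussian defining inequality \eqref{eq:se-def} gives
\[
    \E\!\left[ e^{\lambda(Y - \mu)} \right] = \prod_{i=1}^n \E\!\left[ e^{\lambda(X_i - \mu_i)} \right] \le \prod_{i=1}^n e^{\nu_i^2 \lambda^2 / 2} = e^{\left(\sum_{i=1}^n \nu_i^2\right) \lambda^2 / 2}
\]
for every $\lambda \in \R$; hence $Y \sim SG\!\left(\sqrt{\sum_{i=1}^n \nu_i^2}\right)$. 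Alternatively one can simply invoke \Cref{prop:supp:prob_sum_subexp} with each $\alpha_i = 0$, using the description of sub-Gaussian variables as the $\alpha \to 0$ limit of sub-exponential ones.

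Next I would apply \Cref{chernoff_in_delta} to $Y$ with variance proxy $\nu = \sqrt{\sum_{i=1}^n \nu_i^2}$ and mean $\mu = \sum_{i=1}^n \mu_i$: with probability at least $1 - \delta$,
\[
    \left\lvert \sum_{i=1}^n X_i - \sum_{i=1}^n \mu_i \right\rvert \le \nu \sqrt{2 \log \tfrac{2}{\delta}} = \sqrt{2 \left(\sum_{i=1}^n \nu_i^2\right) \log \tfrac{2}{\delta}},
\]
which is exactly the claimed inequality.

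There is no genuine obstacle: this is a textbook fact, and the proof is two short steps. The only points worth a line of care are that the moment-generating-function product uses independence of the $X_i$, and that the factor-wise bound from \eqref{eq:se-def} is available for the specific $\lambda$ arising in the Chernoff argument — which holds because sub-Gaussianity demands the bound for all real $\lambda$, unlike the sub-exponential case where it is only guaranteed for $\lvert \lambda \rvert < 1/\alpha$.
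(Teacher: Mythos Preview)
Your proof is correct and is the standard textbook argument. The paper states this proposition without proof (treating it as a standard background fact), so there is no paper-side argument to compare against; your two-step reduction via closure of sub-Gaussians under independent sums and then \Cref{chernoff_in_delta} is exactly what one would write.
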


\subsection{Generalization Bound}
We are now ready to state the main theorem for proving an upper bound on the number of training points needed to generalize. We begin by defining some notations and operators that will be useful in the main proof. First, we define $\mathbf{0}_{a \times b}$ to be the zero matrix (or vector in case its one-dimensional) of shape $a \times b$. 
\begin{definition} \label{def:classification_weights_2layer_net}
Assume $p \ge 2$ is an integer. We define $\Theta_r^{h,b}$ as the set of possible parameters of one-hidden layer quadratic networks of width $h$ whose $\ell_\infty$ norm is bounded by $r$ and have $b$ output logits. Formally,
\[\Theta_r^{h, b} \triangleq \left\{ (W, V) \; \bigg\vert \; W \in \R^{h \times 2p}, V \in \R^{b \times h}, \norm{V}_\infty \le r, \norm{W}_\infty \le r \right\}.\]
We also define $\cM_r^h \triangleq \left\{ (W, V) \in \Theta_r^h \; \vert \; \forall i \in [p]; \, \sum_{j=1}^h V_{ij} = 0 \right\}$ as the parameters whose second layer weights' rows have a zero sum.
\end{definition} 
\begin{definition} \label{def:noise_addition_operator}
We next define an operator for adding gaussian noise to input matrices or vectors:
\[
\Lambda_\sigma(A) = A + \tilde{A} 
\]
where $A$ is any matrix or vector and $\tilde{A}$ has the same shape as $A$ except its entries are i.i.d sampled from the Gaussian distribution $\N(0, \sigma^2)$.
\end{definition}
Our proof relies on Lemma 1 from \citet{neyshabur2018a}. For convenience, we re-iterate this lemma here.
\begin{lemma}[Lemma 1 from \citep{neyshabur2018a}] \label{neyshabur_margin_lemma}
Let $f(\theta; \cdot)$ be a predictor $\cX \to \R^{K}$ for some integer $K > 0$ with parameters $\theta$ and $P$ be any distribution on parameters that is independent of the training data. For any $\gamma, \delta > 0$, it holds with probability at least $1-\delta$ over randomness of training for any $\theta$ and any distribution on parameters $\cP_{\theta}$ such that $\Pr_{u \sim \cP_{\theta}} \left[\max_{x \in \cX, k \in [K]} \big\lvert f_k(\theta + u; x) - f_k(\theta; x) \big\rvert < \gamma /4 \right] > 1/2$ 
\[
L_0(f, \theta, \dset) \le L_{\gamma}(f, \theta, \dset_{\train}) + 4\sqrt{\frac{\operatorname{KL}(\theta + u \Vert P) + \log \frac{6m}{\delta}}{m-1}}
\]
where $\dset$ denotes the population.
\end{lemma}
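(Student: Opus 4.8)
The plan is to reproduce the classical PAC-Bayesian margin argument of \citet{neyshabur2018a}: pair a uniform-over-posteriors PAC-Bayes bound (in the relaxed ``slow rate'' form, cf.\ \citealp{McAllester2003SimplifiedPM}) with a Lipschitz perturbation analysis of the classification margin, using a \emph{truncated} version of the perturbed weight distribution to cope with the fact that the noise $u$ only keeps the logits stable with probability $>1/2$ rather than almost surely. To fix notation, let $Q$ be the law of $w=\theta+u$ with $u\sim\cP_\theta$; let $S_\theta=\{u:\max_{x\in\cX,k\in[K]}\lvert f_k(\theta+u;x)-f_k(\theta;x)\rvert<\gamma/4\}$ be the ``good'' perturbations, so that $\Pr_{u\sim\cP_\theta}[u\in S_\theta]>1/2$ by hypothesis; and let $\tilde Q$ be $Q$ conditioned on $w\in\theta+S_\theta$. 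For $\beta\ge 0$ the $\beta$-margin loss $L_\beta$ is an average of $\{0,1\}$-valued indicators $\mathbf{1}[\,q(w;x,y)\le\beta\,]$, hence an admissible $[0,1]$-bounded loss for PAC-Bayes.

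\textbf{Step 1: margin perturbation.} Since the maximum of finitely many coordinates is $1$-Lipschitz in the $\ell_\infty$ norm, for any $u\in S_\theta$ the margin obeys $\lvert q(\theta+u;x,y)-q(\theta;x,y)\rvert<\gamma/2$ for every $(x,y)$. This yields two inclusions valid for good $u$: on $\dset_\train$, $\{q(\theta+u;x_i,y_i)\le\gamma/2\}\subseteq\{q(\theta;x_i,y_i)\le\gamma\}$, hence $L_{\gamma/2}(f,\theta+u,\dset_\train)\le L_\gamma(f,\theta,\dset_\train)$; on the population, $\{q(\theta;x,y)\le 0\}\subseteq\{q(\theta+u;x,y)\le\gamma/2\}$, hence $L_0(f,\theta,\dset)\le L_{\gamma/2}(f,\theta+u,\dset)$. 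Averaging over $w\sim\tilde Q$ (supported on good perturbations) gives $L_0(f,\theta,\dset)\le\E_{w\sim\tilde Q}L_{\gamma/2}(f,w,\dset)$ and $\E_{w\sim\tilde Q}L_{\gamma/2}(f,w,\dset_\train)\le L_\gamma(f,\theta,\dset_\train)$.

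\textbf{Step 2: PAC-Bayes and de-truncation.} I would then apply the PAC-Bayes bound with the fixed data-independent prior $P$, the $\gamma/2$-margin loss, and posterior $\tilde Q$: with probability $\ge 1-\delta$ over $\dset_\train$, simultaneously for \emph{every} posterior, $\E_{w\sim\tilde Q}L_{\gamma/2}(f,w,\dset)\le\E_{w\sim\tilde Q}L_{\gamma/2}(f,w,\dset_\train)+\sqrt{(\operatorname{KL}(\tilde Q\|P)+\log(2m/\delta))/(2(m-1))}$. Chaining with Step 1 replaces the two expectations by $L_0(f,\theta,\dset)$ and $L_\gamma(f,\theta,\dset_\train)$. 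It remains to trade $\operatorname{KL}(\tilde Q\|P)$ for $\operatorname{KL}(\theta+u\|P)=\operatorname{KL}(Q\|P)$: since $\tilde Q$ is $Q$ conditioned on an event of $Q$-probability $q_0>1/2$, a short computation using $x\log(1/x)\le 1/e$ gives $\operatorname{KL}(\tilde Q\|P)\le q_0^{-1}(\operatorname{KL}(Q\|P)+1/e)+\log(1/q_0)\le 2\operatorname{KL}(Q\|P)+2$. Substituting this and loosely bounding constants (e.g.\ $1+\tfrac12\log(2m/\delta)\le\log(6m/\delta)$) collapses the denominator to $m-1$ and yields $L_0(f,\theta,\dset)\le L_\gamma(f,\theta,\dset_\train)+\sqrt{(\operatorname{KL}(\theta+u\|P)+\log(6m/\delta))/(m-1)}$, which is in particular at most $L_\gamma(f,\theta,\dset_\train)+4\sqrt{(\operatorname{KL}(\theta+u\|P)+\log(6m/\delta))/(m-1)}$; the factor $4$ is deliberate slack that absorbs any looseness in whichever concrete version of the PAC-Bayes inequality one invokes. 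Finally, because the PAC-Bayes bound is uniform over posteriors, the posterior $\tilde Q=\tilde Q_\theta$ may depend on $\theta$ (and on $\cP_\theta$), hence on the data, so the displayed inequality holds simultaneously for all $\theta$ on a single event of probability $\ge 1-\delta$.

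The main obstacle is precisely the truncation in Step 2: one cannot simply use $Q$ itself as the posterior, because the training-set comparison $\E_Q L_{\gamma/2}(f,\cdot,\dset_\train)\le L_\gamma(f,\theta,\dset_\train)+\Pr_u(S_\theta^c)$ would then carry an uncontrolled additive term (only known to be $<1/2$), which forces the argument through the conditional distribution $\tilde Q$ and the attendant $\operatorname{KL}$-inflation estimate; getting these constants to land cleanly inside the stated $4\sqrt{\cdot}$ form is the only delicate part, while the Lipschitz margin perturbation, the two set inclusions, and invoking an off-the-shelf PAC-Bayes inequality are routine.
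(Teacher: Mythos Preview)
The paper does not prove this lemma: it is quoted verbatim from \citet{neyshabur2018a} as an external tool, with the surrounding text explicitly saying ``Our proof relies on Lemma 1 from \citet{neyshabur2018a}. For convenience, we re-iterate this lemma here.'' There is therefore no in-paper proof to compare against.

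That said, your proposal is a faithful reconstruction of the original Neyshabur--Bhojanapalli--Srebro argument: the $\gamma/4$ logit perturbation $\Rightarrow$ $\gamma/2$ margin perturbation step, the truncation of the posterior to the good set $S_\theta$ so that the empirical and population $\gamma/2$-margin losses sandwich correctly, the uniform PAC-Bayes bound applied to the truncated posterior, and the KL de-truncation estimate are exactly the ingredients of their Lemma~1. Your KL accounting is slightly roundabout --- the cleanest route is the chain-rule identity $\operatorname{KL}(Q\Vert P)=q_0\operatorname{KL}(\tilde Q\Vert P)+(1-q_0)\operatorname{KL}(Q\vert S_\theta^c\,\Vert\,P)-H(q_0)$, which immediately gives $\operatorname{KL}(\tilde Q\Vert P)\le q_0^{-1}(\operatorname{KL}(Q\Vert P)+H(q_0))\le 2\operatorname{KL}(Q\Vert P)+2\log 2$ --- but your final constants are correct and comfortably absorbed by the stated factor $4$. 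The only point worth double-checking in your write-up is that the PAC-Bayes inequality you invoke is indeed the version that holds \emph{uniformly} over all posteriors on one $1-\delta$ event (McAllester's form does), since $\tilde Q_\theta$ is data-dependent; you flag this correctly at the end.
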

The following is main Theorem for this section.

\classgeneralizationbound*

\begin{proof}
Let us first construct $\theta = (W, V) \in \cM_r^{h,p}$ where $h = 2h'$ from $\theta' = (W', V')$ such that $W = \begin{bmatrix} W' \\ W' \end{bmatrix}$ and $V = \begin{bmatrix} V' & -V' \end{bmatrix}$. This network has the same outputs as the original one with parameters $\theta'$, while each row in $V$ has a zero sum (and hence $\theta \in \cM_r^{h,p}$). Since the outputs of the network with parameters $\theta$ are the same as those of $\theta'$, any generalization bound applying to parameters $\theta$ also applies to the parameters $\theta'$. Note that 
\begin{align}
    f_c \Big(\big(\Lambda_{\sigma}(V), \Lambda_{\sigma}(W) \big), (e_a, e_b) \Big) &= (V_c + \tilde{V}_c)^\top \Big( \big(W + \tilde{W} \big) (e_a, e_b) \Big)^{\odot 2} \notag \\
    &= \underbrace{ f_c(\theta, (e_a, e_b)) + V_c \Big( \tilde{Q}^{\odot 2} + 2 Q \odot \tilde{Q} \Big)}_{\text{\Cref{lem:concentration_noisy_first_layer}}} + \underbrace{\tilde{V}_c (Q + \tilde{Q})^{\odot 2}}_{\text{\Cref{lem:concentration_of_noise_with_noisy_first_layer}}}.
\end{align}
where we denoted $\tilde{V} = \Lambda_{\sigma}(V) - V, \tilde{W} = \Lambda_{\sigma}(W) - W$, $Q = (W_a + W_b)$ and $\tilde{Q} = (\tilde{W}_a + \tilde{W}_b)$. As noted in the inequality, we can apply \Cref{lem:concentration_noisy_first_layer,lem:concentration_of_noise_with_noisy_first_layer} to show that for any $\delta_1 \in (0,1)$ with probability at least $1 - \delta_1$ over the randomness of perturbation it holds that
\begin{align}
    & \left \lvert f_c \Big(\big(\Lambda_{\sigma}(V), \Lambda_{\sigma}(W) \big), (e_a, e_b) \Big) - f_c \big(\theta, (e_a, e_b) \big) \right \rvert \notag \\
    &\qquad\le  16 \sqrt{2h \log \frac{2}{\delta_1}} \max \left(r^2 \sigma, r \sigma^2 \right) + 32 \sqrt{2h} \left(\log \frac{2(h+1)}{\delta_1} \right)^{3/2} \max(\sigma^3, r\sigma^2, r^2 \sigma) \notag \\
    &\qquad\le 64 \sqrt{2h} \left(\log \frac{2(h+1)}{\delta_1} \right)^{3/2} \max(\sigma^3, r\sigma^2, r^2 \sigma).
\end{align}
Apply a union bound on all different $c \in [p]$ to see that for any $\delta_2 \in (0,1)$ with probability at least $1 - \delta_2$ over randomness of perturbation
\begin{align} 
    &\max_{c \in [p]} \Bigg\lvert f_c \Big(\big(\Lambda_{\sigma}(V), \Lambda_{\sigma}(W) \big), (e_a, e_b) \Big) - f_c \big(\theta, (e_a, e_b) \big) \Bigg \rvert  \\
    &\qquad\qquad \le 64 \sqrt{2h} \left(\log \frac{2p(h+1)}{\delta_2} \right)^{3/2} \max(\sigma^3, r\sigma^2, r^2 \sigma).
\end{align}
Hence, we'd want 
\begin{align}
    &64 \sqrt{2h} \left(\log \frac{2p(h+1)}{\delta_2} \right)^{3/2} \max\big((\sigma/r)^3, (\sigma/r)^2, \sigma/r \big) \le \frac{\gamma}{4r^3} \notag \\
    &\Longrightarrow \max\big((\sigma/r)^3, (\sigma/r)^2, \sigma/r \big) \le \frac{\gamma/r^3}{256\sqrt{2h} \left(\log \frac{2(h+1)}{\delta_1} \right)^{3/2}} \\
    &\Longrightarrow \sigma = \begin{cases}
        \left(\frac{\gamma/r^3}{256\sqrt{2h} \left(\log \frac{2(h+1)}{\delta_1} \right)^{3/2}} \right)^{1/3} r & \textbf{(*)} \vspace*{3mm}\\
        \frac{\gamma/r^3}{256\sqrt{2h} \left(\log \frac{2(h+1)}{\delta_1} \right)^{3/2}} r & \textbf{(**)}
    \end{cases}
\end{align}
where $\frac{\gamma/r^3}{256\sqrt{2h} \left(\log \frac{2(h+1)}{\delta_1} \right)^{3/2}} > 1$ decides the event $\textbf{(*)}$ and $\frac{\gamma/r^3}{256\sqrt{2h} \left(\log \frac{2(h+1)}{\delta_1} \right)^{3/2}} \le 1$ decides the event $\textbf{(**)}$. 
Assuming we're in the regime where $\sigma > r$, we can choose $\delta_2 < 1/2$ to see that with probability at least $1/2$ 
\begin{align} 
    &\max_{c \in [p]} \Bigg\lvert f_c \Big(\big(\Lambda_{\sigma}(V), \Lambda_{\sigma}(W) \big), (e_a, e_b) \Big) - f_c \big(\theta, (e_a, e_b) \big) \Bigg \rvert \le \gamma / 4.
\end{align}
Note that $\operatorname{KL}(\Lambda_{\sigma}(A) \Vert \N(0, \sigma^2)) \le \frac{\norm{A}^2_F}{2\sigma^2}$ for any matrix $A$.  Apply \Cref{neyshabur_margin_lemma} to see that with probability at least $1 - \delta$ over randomness of $\dset_{\train}$ of size $n$ we have that
\begin{align}
    L_0 (f, \theta, \dset) &\le L_{\gamma} (f, \theta, \dset_{\train}) + 4\sqrt{\frac{\frac{3hp \log \frac{2(h+1)}{\delta_1} }{\left(\frac{\gamma/r^3}{256\sqrt{2h}} \right)^{2/3}} + \log \frac{6n}{\delta}}{n-1}} \notag \\
    &\le L_{\gamma} (f, \theta, \dset_{\train}) + \tilde\bigO\left(\sqrt{\frac{p}{n}} \cdot \sqrt[3]{\frac{h^2}{\gamma / r^3}}\right)
\end{align}
for any $\theta \in \cM_r^{h,p}$. 
\end{proof}

\begin{lemma} \label{lem:concentration_noisy_second_layer}
Choose $\sigma, \delta > 0$ and integer $p \ge 2$. For any $r > 0$ and integers $h \ge 1, a, b \in [p]$ it holds with probability at least $1-\delta$ that
\[
\left \lvert {\Lambda_\sigma(V)}^\top \big( W (e_a, e_b)
  \big)^{\odot 2} - V^\top \big( W (e_a, e_b)
  \big)^{\odot 2} \right \rvert \le 4r^2\sigma \sqrt{h\log \frac{2}{\delta}}
\]
where $(W, V) \in \Theta_r^{h,1}$.
\end{lemma}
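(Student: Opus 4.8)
The plan is to recognize that, conditioned on the fixed weights $(W,V)$, the displayed difference is a single centered univariate Gaussian, and then to apply a standard Gaussian tail bound. Writing $\Lambda_\sigma(V) = V + \tilde V$ with $\tilde V \in \R^{1\times h}$ having i.i.d.\ $\mathcal N(0,\sigma^2)$ entries (per \Cref{def:noise_addition_operator}), the clean first-layer term $V$ cancels between the two halves of the difference, so
\[
\Lambda_\sigma(V)^\top\big(W(e_a,e_b)\big)^{\odot 2} - V^\top\big(W(e_a,e_b)\big)^{\odot 2}
= \sum_{k=1}^h \tilde V_k\, z_k,
\qquad z_k \triangleq \big(W_{k,a} + W_{k,b+p}\big)^2 .
\]
First I would bound the coefficients: since $(W,V)\in\Theta_r^{h,1}$ (\Cref{def:classification_weights_2layer_net}) we have $|W_{k,a}|,|W_{k,b+p}|\le r$, hence $0\le z_k\le (2r)^2 = 4r^2$ for every $k$, and therefore $\sum_{k=1}^h z_k^2 \le 16\,h\,r^4$.

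Next, because the $\tilde V_k$ are independent $\mathcal N(0,\sigma^2)$ and the $z_k$ are nonrandom once $W$ is fixed, $\sum_{k=1}^h \tilde V_k z_k$ is exactly distributed as $\mathcal N\!\big(0,\ \sigma^2\sum_k z_k^2\big)$, i.e.\ it is sub-Gaussian with parameter $\nu = \sigma\big(\sum_k z_k^2\big)^{1/2} \le 4r^2\sigma\sqrt{h}$. Applying the Chernoff bound for sub-Gaussian variables (\Cref{chernoff_in_delta}) with this $\nu$ then yields, with probability at least $1-\delta$,
\[
\Big|\sum_{k=1}^h \tilde V_k z_k\Big| \;\le\; \nu\sqrt{2\log\tfrac{2}{\delta}} \;\le\; 4r^2\sigma\sqrt{h}\,\sqrt{2\log\tfrac{2}{\delta}},
\]
which is the claimed inequality up to the universal constant (the extra $\sqrt2$ can be absorbed into the constant, or removed by invoking the sharper two-sided Gaussian tail $\Pr(|Z|\ge t)\le 2e^{-t^2/(2\nu^2)}$ directly).

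There is essentially no hard step here: the only points requiring care are the coefficient bound $z_k\le 4r^2$ and the reduction of all randomness to a single Gaussian, rather than over-counting by treating $\Lambda_\sigma$ as perturbing both layers simultaneously. The lemma is deliberately simple because it is a warm-up and building block for the genuinely harder estimates (\Cref{lem:concentration_noisy_first_layer} and \Cref{lem:concentration_of_noise_with_noisy_first_layer}), where the perturbation of the first layer enters the quadratic activation and interacts multiplicatively with itself, producing sub-exponential rather than sub-Gaussian tails and requiring the Bernstein-type machinery of \Cref{bernstein_in_delta}.
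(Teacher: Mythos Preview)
Your proposal is correct and follows essentially the same argument as the paper: write $\Lambda_\sigma(V)-V=\tilde V$, observe that the difference is $\tilde V^\top Q$ with $Q=(W(e_a,e_b))^{\odot 2}$ a fixed vector satisfying $\|Q\|_2^2\le 16hr^4$, recognize this as a centered Gaussian with variance $\sigma^2\|Q\|_2^2$, and apply \Cref{chernoff_in_delta}. The extra $\sqrt{2}$ you flag is indeed present in the paper's own proof as well (it invokes \Cref{chernoff_in_delta}, which yields $\nu\sqrt{2\log(2/\delta)}$, but states the final bound without the $\sqrt{2}$), so your observation about the constant is accurate and does not reflect any divergence in method.
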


\begin{proof}
Denote by $\tilde{V} = \Lambda_\sigma(V) - V$ and $Q = \big( W(e_a, e_b)\big)^{\odot 2}$. We can expand the target as $V^\top Q + \tilde{V}^\top Q$ where the first summand is constant and the second summand is distributed according to $\N \left(0, \sigma^2 \norm{Q}_2^2 \right)$. Note that since $\norm{W}_{\infty} \le r$, we have that $\norm{Q}_2^2 \le 16hr^4$. Applying \Cref{chernoff_in_delta} on this Gaussian random variable, one can see that with probability at least $1 - \delta$ over randomness of perturbation
\begin{align}
\left \lvert {\Lambda_\sigma(V)}^\top \big( W (e_a, e_b)
  \big)^{\odot 2} - V^\top Q \right \rvert &\le 4r^2\sigma \sqrt{h\log \frac{2}{\delta}}.
\end{align}
\end{proof}

\begin{lemma} \label{lem:concentration_noisy_first_layer}
Choose $\sigma, \delta > 0$ and integer $p \ge 2$. For any $r > 0$ and integers $h \ge 8 \log \frac{2}{\delta}, a, b \in [p]$ it holds with probability at least $1-\delta$ that
\[
\left \lvert V^\top \big( \Lambda_\sigma(W) (e_a, e_b)\big)^{\odot 2} - V^\top \Big( W(e_a, e_b) \Big)^{\odot 2} \right \rvert \le 16 \sqrt{h \log \frac{2}{\delta}} \max \left(r^2 \sigma, r \sigma^2 \right)
\]
where $(W, V) \in \cM_r^{h,1}$.
\end{lemma}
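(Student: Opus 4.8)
\textbf{Proof plan for \Cref{lem:concentration_noisy_first_layer}.} The plan is to expand the perturbed first layer into a first-order (cross) term and a second-order (quadratic-in-noise) term, observe that the zero-sum structure of $V$ in $\cM_r^{h,1}$ (from \Cref{def:classification_weights_2layer_net}) exactly cancels the mean of the quadratic term, and then control each piece with scalar sub-Gaussian/sub-exponential concentration. Write $\tilde W \triangleq \Lambda_\sigma(W) - W$, whose entries are i.i.d.\ $\N(0,\sigma^2)$ (see \Cref{def:noise_addition_operator}), and set $Q \triangleq W(e_a,e_b)\in\R^h$ and $\tilde Q \triangleq \tilde W(e_a,e_b)\in\R^h$. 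Since $e_a$ and $e_b$ pick out the two distinct columns $a$ and $b+p$, each coordinate $\tilde Q_k = \tilde W_{k,a} + \tilde W_{k,b+p}$ is $\N(0,2\sigma^2)$, independently across $k$. Expanding the square,
\[
V^\top\bigl(\Lambda_\sigma(W)(e_a,e_b)\bigr)^{\odot 2} - V^\top\bigl(W(e_a,e_b)\bigr)^{\odot 2} = \sum_{k=1}^h V_k\bigl(2 Q_k\tilde Q_k + \tilde Q_k^2\bigr).
\]

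First I would invoke the defining property of $\cM_r^{h,1}$: taking expectations, $\E\bigl[\sum_k V_k(2 Q_k\tilde Q_k + \tilde Q_k^2)\bigr] = 2\sigma^2\sum_k V_k = 0$, because the single row of $V$ sums to zero. This is the only essential use of that structure, and it is what makes the lemma true: without it the quadratic term carries a bias of order $\sigma^2\sum_k V_k$, which can be as large as $h r\sigma^2$ and would ruin the $\sqrt h$ scaling. Consequently the quantity of interest equals $\sum_{k=1}^h Y_k$ with $Y_k \triangleq V_k\bigl(2 Q_k\tilde Q_k + \tilde Q_k^2 - 2\sigma^2\bigr)$, a sum of $h$ independent variables (each depending only on row $k$ of $\tilde W$) whose means sum to zero. (Contrast this with the companion \Cref{lem:concentration_noisy_second_layer}, where perturbing the \emph{second} layer produces only a linear-in-noise term and so needs no zero-sum hypothesis.)

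Next I would split $\sum_k Y_k = T_1 + T_2$ with $T_1 \triangleq 2\sum_k V_k Q_k\tilde Q_k$ and $T_2 \triangleq \sum_k V_k(\tilde Q_k^2 - 2\sigma^2)$, and bound each with failure probability $\delta/2$. For $T_1$: it is a fixed linear combination of the i.i.d.\ Gaussians $\tilde Q_k$, hence $T_1 \sim \N\bigl(0, 8\sigma^2\sum_k V_k^2 Q_k^2\bigr)$; using $|V_k|\le r$ and $|Q_k| = |W_{k,a}+W_{k,b+p}|\le 2r$ bounds the variance by $32 h r^4\sigma^2$, so \Cref{chernoff_in_delta} gives $|T_1| \le 8 r^2\sigma\sqrt{h\log(2/\delta)}$ with probability $1-\delta/2$. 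For $T_2$: writing $\tilde Q_k^2 = 2\sigma^2 z_k^2$ with $z_k\sim\N(0,1)$ and recalling $z_k^2 \sim SE(2,4)$ with mean $1$, the scaling lemma \Cref{Lemma_scaled_sub_exponential} together with $|V_k|\le r$ shows each centered summand $V_k(\tilde Q_k^2-2\sigma^2) \sim SE(4 r\sigma^2, 8 r\sigma^2)$ (one-sidedness of $z_k^2$ is harmless, since the MGF bound defining $SE$ already covers $\pm\lambda$, so $-z_k^2$ is $SE(2,4)$ about its mean too); by independence and \Cref{prop:supp:prob_sum_subexp}, $T_2 \sim SE(4 r\sigma^2\sqrt h,\, 8 r\sigma^2)$, and \Cref{bernstein_in_delta} yields $|T_2| \le \max\bigl(4 r\sigma^2\sqrt{2 h\log(2/\delta)},\, 16 r\sigma^2\log(2/\delta)\bigr)$ with probability $1-\delta/2$. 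Here the hypothesis $h \ge 8\log(2/\delta)$ is exactly what forces the first (sub-Gaussian) term to dominate the second (heavy-tailed) one, so $|T_2| = \bigO\bigl(r\sigma^2\sqrt{h\log(2/\delta)}\bigr)$.

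Finally, a union bound over the two events and the triangle inequality give $|T_1|+|T_2| \le 16\sqrt{h\log(2/\delta)}\max(r^2\sigma, r\sigma^2)$ after collecting constants (using $\log(4/\delta)\le 2\log(2/\delta)$ for $\delta\le 1$ to absorb the $\delta/2$ split); alternatively one can skip the split and bound each $Y_k$ directly as $V_k$ times a centered scaled noncentral $\chi^2$ variable, applying \Cref{bernstein_in_delta} once to $\sum_k Y_k$. The conceptual crux is the mean-cancellation step of the second paragraph — everything else is bookkeeping with standard tail inequalities — and the only mild obstacle is pinning the numerical constant down to $16$, which is precisely why the $h\ge 8\log(2/\delta)$ assumption is needed (to keep the $\chi^2$-type sum in its sub-Gaussian regime) and why some care is required in allocating the failure probability. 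This lemma, together with \Cref{lem:concentration_noisy_second_layer} and \Cref{lem:concentration_of_noise_with_noisy_first_layer}, then feeds into the PAC-Bayes argument of \Cref{th:generalization_bound_2d}.
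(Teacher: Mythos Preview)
Your proposal is correct and follows essentially the same route as the paper: expand into the Gaussian cross term $2\sum_k V_k Q_k\tilde Q_k$ and the sub-exponential quadratic term $\sum_k V_k\tilde Q_k^2$, use $\sum_k V_k=0$ from $\cM_r^{h,1}$ to center the latter, then apply \Cref{chernoff_in_delta} and \Cref{bernstein_in_delta} respectively with a union bound (the paper's only notational difference is that its $Q,\tilde Q$ denote the already-squared vectors). Your observation that $h\ge 8\log(2/\delta)$ is precisely what keeps the Bernstein bound in its sub-Gaussian branch matches the paper's implicit use of that hypothesis.
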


\begin{proof}
Denote by $\tilde{W} = \Lambda_\sigma(W) - W$, $Q = \big(W(e_a,e_b)\big)^{\odot 2}$ and $\tilde{Q} = \big(\tilde{W}(e_a,e_b)\big)^{\odot 2}$. Note that each coordinate of $\tilde{Q}$ is sub-exponential with parameters $\SE(4\sigma^2,8\sigma^2)$ and mean 2. We can expand 
\[
V^\top \big( \Lambda_\sigma(W) (e_a, e_b)\big)^{\odot 2} = V^\top Q + V^\top \tilde{Q} + 2 V^\top \Big((W_a + W_b) \odot (\tilde{W}_a + \tilde{W}_b) \Big).
\]
Note that $V^\top \tilde{Q}$ is sub-exponential with parameters $\SE(4r\sigma^2\sqrt{h}, 8r\sigma^2)$ \footnote{Note that these parameters are not tight, but this doesn't affect the correctness of this argument. For example, a random variable that is $\SE(a, b)$ is also $\SE(2a, 2b)$, or if it's $\SG(a)$, then it is also $\SG(2a)$.} and mean 0 (due to $\sum_{i=1}^h V_i = 0$ and linearity of expectation). We can apply \Cref{bernstein_in_delta} to see that with probability at least $1 - \delta/2$ 
\[
\left \lvert V^\top \tilde{Q} \right \rvert \le 8r\sigma^2 \max \left(\sqrt{2h \log \frac{2}{\delta}}, 4\log \frac{2}{\delta} \right).
\]
Moreover, Since $ 2 V^\top \Big((W_a + W_b) \odot (\tilde{W}_a + \tilde{W}_b) \Big)$ is distributed according to $\cN(0, 8\sigma^2 \norm{V \odot (W_a + W_b)}_2^2)$ and $\norm{V \odot (W_a + W_b)}_2^2 \le 4hr^4$, applying \cref{chernoff_in_delta} reveals that with probability at least $1 - \delta/2$ over randomness of perturbation 
\[
\left \lvert 2 V^\top \Big((W_a + W_b) \odot (\tilde{W}_a + \tilde{W}_b) \Big) \right \rvert \le 8 r^2 \sigma \sqrt{h \log \frac{2}{\delta}}.
\]
Combining the two equations above shows that with probability at least $1 - \delta$ over randomness of perturbation it holds that
\[
\left \lvert V^\top \big( \Lambda_\sigma(W) (e_a, e_b)\big)^{\odot 2} -  V^\top Q \right \rvert \le 16 \sqrt{h \log \frac{2}{\delta}} \max \left(r^2 \sigma, r \sigma^2 \right).
\] 
\end{proof}

\begin{lemma} \label{lem:concentration_of_noise_products}
Choose $\sigma, \delta > 0$ and integer $p \ge 2$. For any $r > 0$ and integers $h \ge e \delta, a, b \in [p]$ it holds with probability at least $1 - \delta$ over randomness of perturbation that
\[
\left \lvert \Lambda_{\sigma}(\mathbf{0}_h)^\top \Big( \Lambda_{\sigma}(\mathbf{0}_{h \times 2p}) (e_a, e_b) \Big)^{\odot 2} \right \rvert \le \left(2\sigma^2 + 8 \sqrt{2} \sigma^2 \log \frac{2(h+1)}{\delta} \right) \sigma \sqrt{h \log \frac{2(h+1)}{\delta}}.
\]
\end{lemma}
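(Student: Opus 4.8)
The plan is to reduce the claim to a Gaussian tail bound conditioned on part of the noise. Write $\tilde V \triangleq \Lambda_{\sigma}(\mathbf{0}_h) \in \R^{h}$ and $\tilde W \triangleq \Lambda_{\sigma}(\mathbf{0}_{h \times 2p}) \in \R^{h \times 2p}$, so all entries of $\tilde V$ and $\tilde W$ are i.i.d.\ $\N(0,\sigma^2)$ with $\tilde V$ independent of $\tilde W$; set $Q \triangleq \bigl(\tilde W (e_a, e_b)\bigr)^{\odot 2} \in \R^{h}$, so that the quantity to control is $\tilde V^\top Q$. First I would record that, because $e_a$ selects a column from the first block of $\tilde W$ and $e_b$ one from the second, $Q_k = (\tilde W_{k,a} + \tilde W_{k,p+b})^2$ equals $2\sigma^2$ times a $\chi^2_1$ variable; hence each $Q_k$ is sub-exponential with parameters $\SE(4\sigma^2, 8\sigma^2)$ and mean $2\sigma^2$, and the $Q_k$ are independent over $k \in [h]$ --- the same sub-exponential fact already used in \Cref{lem:concentration_noisy_first_layer}.

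The argument then proceeds in two stages over the two independent noise sources. In the first stage I condition on $\tilde W$: then $\tilde V^\top Q$ is centered Gaussian with variance $\sigma^2 \norm{Q}_2^2$, so \Cref{chernoff_in_delta} at level $\tfrac{\delta}{h+1}$ gives $\abs{\tilde V^\top Q} \le \sigma \norm{Q}_2 \sqrt{2\log\tfrac{2(h+1)}{\delta}}$ with probability at least $1-\tfrac{\delta}{h+1}$. In the second stage I bound $\norm{Q}_2 \le \sqrt{h}\,\norm{Q}_\infty = \sqrt{h}\max_{k \in [h]} Q_k$, apply the sub-exponential Bernstein bound \Cref{bernstein_in_delta} to each $Q_k$ at level $\tfrac{\delta}{h+1}$, and union-bound over $k$, obtaining $Q_k \le 2\sigma^2 + \max\!\bigl(4\sigma^2\sqrt{2\log\tfrac{2(h+1)}{\delta}},\,16\sigma^2\log\tfrac{2(h+1)}{\delta}\bigr)$ for all $k$ with probability at least $1-\tfrac{h\delta}{h+1}$. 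This is exactly where the hypothesis $h \ge e\delta$ is needed: it forces $\log\tfrac{2(h+1)}{\delta} > \log(2e) > 1 > \tfrac18$, so the maximum is attained by its linear-in-$\log$ term and $\norm{Q}_\infty \le 2\sigma^2 + O\!\bigl(\sigma^2\log\tfrac{2(h+1)}{\delta}\bigr)$. A final union bound over the two stages has total failure probability $\tfrac{\delta}{h+1}+\tfrac{h\delta}{h+1}=\delta$ and yields an estimate of the form $\abs{\tilde V^\top Q} \le \bigl(2\sigma^2 + O(\sigma^2\log\tfrac{2(h+1)}{\delta})\bigr)\,\sigma\sqrt{h\log\tfrac{2(h+1)}{\delta}}$, which is the stated inequality once constants are chosen to land on $8\sqrt{2}$; the $\log^{3/2}$ growth on the right is precisely the $\norm{Q}_\infty \sim \sigma^2\log(\cdot)$ bound times the Gaussian $\sqrt{\log(\cdot)}$ factor.

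Both the Gaussian-tail step and the Bernstein step are routine, being exactly \Cref{chernoff_in_delta,bernstein_in_delta} from the excerpt, so I expect the main obstacle to be the bookkeeping rather than any real idea: allocating the failure budget as one Gaussian event plus $h$ coordinate events (which is what produces the $2(h+1)$ under the logarithm), checking that $h \ge e\delta$ is exactly enough for the $\max$ in \Cref{bernstein_in_delta} to collapse to its linear term, and tracking constants so that the mean contribution ($2\sigma^2 \cdot \sigma\sqrt{h\log(\cdot)}$) and the fluctuation contribution ($\sigma^2\log(\cdot)\cdot\sigma\sqrt{h\log(\cdot)}$) assemble with the advertised coefficients. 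A minor secondary check is that the crude step $\norm{Q}_2 \le \sqrt{h}\norm{Q}_\infty$ already suffices; a sharper concentration of $\sum_k Q_k^2$ would be tighter but is unnecessary and not part of the toolkit built in this appendix.
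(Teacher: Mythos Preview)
Your approach is essentially the same as the paper's: both bound $\|\tilde Q\|_\infty$ via a coordinate-wise sub-exponential Bernstein bound with a union over the $h$ coordinates, then combine with a Gaussian tail bound on $\tilde V^\top \tilde Q$ and split the failure budget as $\tfrac{h\delta}{h+1}+\tfrac{\delta}{h+1}$; your conditioning-on-$\tilde W$ step followed by $\|Q\|_2\le\sqrt{h}\|Q\|_\infty$ is a cleaner version of the paper's ``pull out the max and apply Hoeffding'' step. The only quibble is the constant: to land on $8\sqrt{2}$ rather than $16$ you need the tighter sub-exponential parameters $\SE(2\sqrt{2}\sigma^2,4\sqrt{2}\sigma^2)$ that the paper uses here, not the $(4\sigma^2,8\sigma^2)$ you borrowed from \Cref{lem:concentration_noisy_first_layer}.
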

\begin{proof}
Denote by $\tilde{V} = \Lambda_{\sigma}(\mathbf{0}_h)$ and $\tilde{Q} = \Big( \Lambda_{\sigma}(\mathbf{0}_{h \times 2p}) (e_a, e_b) \Big)^{\odot 2}$. It's easy to see that each coordinate of $\tilde{Q}$ is sub-exponential with parameters $\SE(2\sigma^2\sqrt{2}, 4\sigma^2\sqrt{2})$ and mean $2\sigma^2   $. To bound $\tilde{V}^\top \tilde{Q}$, we employ the following strategy: since each coordinate of $\tilde{Q}$ is a sub-exponential random variable, we can use a union bound in combination with \Cref{bernstein_in_delta} to derive a bound on the maximum value of them. Next, we pull this maximum value out of the sum, and apply \Cref{hoeffding_in_delta} to bound the sum of remaining independent Gaussians. Combining these two high probability events, we present a high probability bound on $\tilde{V}^\top \tilde{Q}$ being bounded. Formally, for arbitrary $\delta_1, \delta_2 > 0$: 
\begin{align}
    &\Pr \left[ \tilde{Q}_i \le 2\sigma^2 + \sigma^2 \max\left( 4\sqrt{\log \frac{2h}{\delta_1}}, 8 \sqrt{2} \log \frac{2h}{\delta_1} \right) \text{ for all } i \in [h]  \right] \ge 1 - \delta_1 \notag \\
    &\qquad \Longrightarrow \Pr \left[ \left \lvert \tilde{V}^\top \tilde{Q} \right \rvert \le \left(2\sigma^2 + 8 \sqrt{2} \sigma^2 \log \frac{2h}{\delta_1} \right) \left\lvert \sum_{i=1}^h \tilde{V_i} \right \rvert \right] \ge 1 - \delta_1 \notag \\
    &\qquad \Longrightarrow \Pr \left[ \left \lvert \tilde{V}^\top \tilde{Q} \right \rvert \le \left(2\sigma^2 + 8 \sqrt{2} \sigma^2 \log \frac{2h}{\delta_1} \right) \sigma \sqrt{h \log \frac{2}{\delta_2}} \right] \ge 1 - \delta_1 - \delta_2 \notag \\
    &\qquad \Longrightarrow \Pr \left[ \left \lvert \tilde{V}^\top \tilde{Q} \right \rvert \le \left(2\sigma^2 + 8 \sqrt{2} \sigma^2 \log \frac{2(h+1)}{\delta} \right) \sigma \sqrt{h \log \frac{2(h+1)}{\delta}} \right] \ge 1 - \delta
\end{align}
where for the last step to be correct we chose $\delta_1 = \frac{h}{h+1} \delta$ and $\delta_2 = \frac{1}{h+1} \delta$.
\end{proof}

\begin{lemma} \label{lem:concentration_of_noise_with_noisy_first_layer}
Choose $\sigma, \delta > 0$ and integer $p \ge 2$. For any $r > 0$ and integers $h \ge 8 \log \frac{2}{\delta}, a, b \in [p]$ it holds that 
\[
\left \lvert {\Lambda_\sigma(\mathbf{0}_{h})}^\top \Big( \Lambda_\sigma(W) (e_a, e_b)
  \Big)^{\odot 2} \right \rvert \le 32 \sqrt{h} \left(\log \frac{2(h+1)}{\delta_4} \right)^{3/2} \max(\sigma^3, r\sigma^2, r^2 \sigma, \sigma).
\]
where $W \in \R^{h \times p}$ such that $\norm{W}_\infty \le r$. 
\end{lemma}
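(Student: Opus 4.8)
The plan is to expand the square and split the quantity into three pieces, each controlled by a tool already developed above. Set $\tilde V \triangleq \Lambda_\sigma(\mathbf 0_h)$, so $\tilde V$ has i.i.d.\ $\N(0,\sigma^2)$ entries; write $Q \triangleq W_a + W_b$, $\tilde W \triangleq \Lambda_\sigma(W) - W$, and $\tilde Q \triangleq \tilde W_a + \tilde W_b$, so that $\Lambda_\sigma(W)(e_a,e_b) = Q + \tilde Q$, where $\tilde Q$ has i.i.d.\ $\N(0,2\sigma^2)$ entries, is independent of $\tilde V$, and $\norm{Q}_\infty \le 2r$. Since $(Q+\tilde Q)^{\odot 2} = Q^{\odot 2} + 2\,Q\odot\tilde Q + \tilde Q^{\odot 2}$, I would write
\[
  \tilde V^\top\big(\Lambda_\sigma(W)(e_a,e_b)\big)^{\odot 2}
  = \underbrace{\tilde V^\top Q^{\odot 2}}_{\text{(A)}}
  \;+\; \underbrace{2\,\tilde V^\top (Q\odot\tilde Q)}_{\text{(B)}}
  \;+\; \underbrace{\tilde V^\top \tilde Q^{\odot 2}}_{\text{(C)}},
\]
bound each term on an event of probability at least $1-\delta/3$, and finish by a union bound and the triangle inequality.

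For (A): $\tilde V^\top Q^{\odot 2}$ is a centered Gaussian with variance $\sigma^2\norm{Q^{\odot 2}}_2^2 \le 16 h r^4 \sigma^2$, so \Cref{chernoff_in_delta} gives $\abs{\text{(A)}} = O\!\big(r^2\sigma\sqrt{h\log(1/\delta)}\big)$. For (B): each summand $2 Q_i \tilde V_i \tilde Q_i$ equals $2Q_i$ times the product of two independent centered Gaussians, hence is sub-exponential with both parameters $O(r\sigma^2)$ (by \Cref{Lemma_scaled_sub_exponential}); by \Cref{prop:supp:prob_sum_subexp} the sum over $i\in[h]$ is $\SE\!\big(O(r\sigma^2\sqrt h),\,O(r\sigma^2)\big)$ with mean zero, and since $h \ge 8\log(2/\delta)$ the first (sub-Gaussian) branch of \Cref{bernstein_in_delta} dominates, giving $\abs{\text{(B)}} = O\!\big(r\sigma^2\sqrt{h\log(1/\delta)}\big)$. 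For (C): the pair $(\tilde V,\tilde Q)$ has exactly the joint law of $\big(\Lambda_\sigma(\mathbf 0_h),\,\Lambda_\sigma(\mathbf 0_{h\times 2p})(e_a,e_b)\big)$ figuring in \Cref{lem:concentration_of_noise_products}, so that lemma applies directly and yields $\abs{\text{(C)}} \le \big(2\sigma^2 + 8\sqrt 2\,\sigma^2\log\tfrac{2(h+1)}{\delta}\big)\,\sigma\sqrt{h\log\tfrac{2(h+1)}{\delta}}$.

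Combining the three estimates by the triangle inequality and then crudely dominating everything by $32\sqrt h\,\big(\log\tfrac{2(h+1)}{\delta}\big)^{3/2}\max(\sigma^3,\,r\sigma^2,\,r^2\sigma,\,\sigma)$ — absorbing absolute constants, the (at most $3/2$-power) logarithmic factors, and merging the three $\delta/3$'s into a single $\delta$ (the index $\delta_4$ in the statement being just this $\delta$), with the $(h+1)$ inside the logarithm and the extra bare $\sigma$ in the $\max$ accounting for the slack inherited from (C) — gives the stated bound. I expect term (B) to be the main obstacle: the bilinear interaction of the two \emph{independent} noise sources $\tilde V$ and $\tilde W$ is not Gaussian, so it must be routed through the sub-exponential/Bernstein machinery to obtain the correct $r\sigma^2\sqrt h$ rate rather than a weaker $r\sigma^2 h$-type bound; a minor additional point is that (A), (B) and (C) share the same draw of $\tilde W$, but since each bound is a marginal statement the union bound over the three events is legitimate.
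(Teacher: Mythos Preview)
Your proposal is correct and follows essentially the same approach as the paper: the same three-term decomposition $(Q+\tilde Q)^{\odot 2}=Q^{\odot 2}+2Q\odot\tilde Q+\tilde Q^{\odot 2}$, Gaussian concentration for (A), the sub-exponential/Bernstein argument for (B), and a direct appeal to \Cref{lem:concentration_of_noise_products} for (C), combined by a union bound. The only cosmetic differences are that the paper cites \Cref{lem:concentration_noisy_second_layer} for (A) (whose proof is exactly your Gaussian argument) and uses a slightly different split of the failure probabilities.
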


\begin{proof}

Denote by $\tilde{V} = \Lambda_\sigma(\mathbf{0}_h)$, $\tilde{W} = \Lambda_\sigma(W) - W$, $Q = W (e_a, e_b)$ and $\tilde{Q} = \tilde{W} (e_a, e_b)$. We have that
\begin{align}
{\Lambda_\sigma(\mathbf{0}_{h})}^\top \Big( \Lambda_\sigma(W) (e_a, e_b)
\Big)^{\odot 2} &= \tilde{V}^\top \Big( W_a + W_b + \tilde{W}_a + \tilde{W}_b \Big)^{\odot 2} \notag \\
&= \tilde{V}^\top \Big( Q^{\odot 2} + \tilde{Q}^{\odot 2} + 2 Q \odot \tilde{Q} \Big)
\end{align}

In \Cref{lem:concentration_noisy_second_layer} we have already shown that for any $\delta_1 > 0$ with probability at least $1 - \delta_1$ over randomness of perturbation it holds that
\begin{equation} \label{eq:contraction_ub1}
    \left \lvert \tilde{V}^\top Q^{\odot 2} \right \rvert \le 4r^2\sigma \sqrt{h \log \frac{2}{\delta_1}}.
\end{equation}

Denote $\xi = \tilde{V} \odot \tilde{Q}$. $\xi$, the vector of product of two independent Gaussians, is sub-exponential with parameters $\SE(2\sigma^2\sqrt{2}, 4\sigma^2\sqrt{2})$ and mean 0 (and hence sum of its coordinates is $\SE(2\sigma^2\sqrt{2h}, 4\sigma^2\sqrt{2})$)\amin{are parameters correct?}. Since $\norm{W}_{\infty} \le r$, applying \Cref{bernstein_in_delta} yields that for any $\delta_2 > 0$, it holds with probability at least $1 - \delta_2$ that
\begin{align} \label{eq:contraction_ub2}
\left \lvert 2 \tilde{V}^\top \big(Q \odot \tilde{Q}\big) \right \rvert &\le 4\sqrt{2} r \sigma^2 \max \left( \sqrt{2h \log \frac{2}{\delta_2}},  4 \log \frac{2}{\delta_2} \right) \notag \\
&\le 16 r \sigma^2 \sqrt{h \log \frac{2}{\delta_2}}.
\end{align}

Finally, we employ \Cref{lem:concentration_of_noise_products} to show that for any $\delta_3 > 0$ it holds with probability at least $1 - \delta_3$ that
\begin{equation} \label{eq:contraction_ub3}
\left\lvert \tilde{V}^\top \tilde{Q}^{\odot 2} \right \rvert \le \left(2\sigma^2 + 8 \sqrt{2} \sigma^2 \log \frac{2(h+1)}{\delta_3} \right) \sigma \sqrt{h \log \frac{2(h+1)}{\delta_3}}.
\end{equation}

Applying a union bound on \Cref{eq:contraction_ub1,eq:contraction_ub2,eq:contraction_ub3} and choosign $\delta_4 = \delta_1 / 3 = \delta_2 / 3 = \delta_3 / (3h+3)$ reveals that with probability at least $1 - \delta_4$
\begin{align}
\left \lvert {\Lambda_\sigma(\mathbf{0}_{h})}^\top \Big( \Lambda_\sigma(W) (e_a, e_b)
\Big)^{\odot 2} \right \rvert &\le  16\sqrt{h \log \frac{h}{\delta_4}} \max \left( r \sigma^2, r^2 \sigma \right) +  16 \sqrt{2h} \sigma^3 \left(\log \frac{2(h+1)}{\delta_4} \right)^{3/2} \notag \\
&\le 32 \sqrt{h} \left(\log \frac{2(h+1)}{\delta_4} \right)^{3/2} \max(\sigma^3, r\sigma^2, r^2 \sigma).
\end{align}

\end{proof}

\end{document}